\DeclareMathOperator*{\argmin}{arg\,min}
\newcommand{\reg}{\psi}
\newcommand{\mtd}{\mathcal{M}}
\newcommand{\eg}{{\it e.g.}}
\newcommand{\R}{\mathbb{R}}
\newcommand{\E}{\mathbb{E}}
\newcommand{\C}{\mathcal{C}}
\renewcommand\leqslant\leq
\renewcommand\geqslant\geq
\renewcommand\epsilon\varepsilon
\renewcommand\ln\log
\newcommand\st{~~~{\text{s.t.}}~~~}
\renewcommand\star{*}
\def\defin{\triangleq}
\def\Real{{\mathbb{R}}}
\newcommand{\qning}{QNing}
\newcommand{\qningsp}{QNing~}
\newcommand{\gradest}[1]{\text{\textsf{ApproxGradient}}\left({#1}\right)}
\newcommand{\gradestb}{\text{\textsf{ApproxGradient}}}
 \newtheorem{example}{Example}
 \newtheorem{definition}{Definition}
\providecommand\phantomcaption{\caption@refstepcounter\@captype}
\newtheorem{proposition}{Proposition}
\newtheorem{lemma}{Lemma}
\newtheorem{cor}{Corollary}
\newtheorem{rem}{Remark}
\newcommand{\TheTitle}{An Inexact Variable Metric Proximal Point Algorithm \\ for Generic Quasi-Newton Acceleration} 
\title{{\TheTitle}}
\author{
   Hongzhou Lin \\
   MIT\thanks{Computer Science and Artificial Intelligence Laboratory
   Cambridge, MA 02139, USA. This work was performed in large parts while Hongzhou Lin was at Inria.}\\
   \texttt{hongzhou@mit.edu} \\
   \and
   Julien Mairal \\ 
   Inria\thanks{Univ. Grenoble Alpes, Inria, CNRS, Grenoble INP, LJK, Grenoble, 38000,
   France.}\\
   \texttt{julien.mairal@inria.fr} \\
   \and
   Zaid Harchaoui \\
   University of Washington\thanks{Department of Statistics Seattle, WA 98195,
   USA.} \\
   \texttt{zaid@uw.edu} \\
}
\begin{document}
\maketitle

\begin{abstract}
   We propose an inexact variable metric proximal point algorithm to accelerate
gradient-based optimization algorithms.
The proposed scheme, called \qning,
can be notably applied to incremental first-order methods such as the stochastic
variance-reduced gradient descent algorithm (SVRG) and other randomized incremental optimization algorithms. 
\qningsp is also compatible with composite
objectives, meaning that it has the ability to provide exactly sparse solutions
when the objective involves a sparsity-inducing regularization. When combined with limited-memory BFGS rules,
 \qningsp is particularly effective to solve high-dimensional optimization problems, while enjoying a worst-case linear
convergence rate for strongly convex problems. We present experimental results
where \qningsp gives significant improvements over competing methods for 
training machine learning methods on large samples and in high dimensions. 

\end{abstract}


\section{Introduction}
Convex composite optimization arises in many scientific fields, such as image
and signal processing or machine learning. It consists of minimizing a real-valued 
function composed of two convex terms:
\begin{equation} \label{eq:general}
   \min_{x \in \R^d} \left\{ f(x) \defin f_0(x) + \reg(x) \right\},
\end{equation} 
where~$f_0$ is smooth with Lipschitz continuous derivatives, and $\reg$
is a regularization function which is not necessarily differentiable.
A typical example from the signal and image processing literature is the $\ell_1$-norm $\reg(x) =
\|x\|_1$, which encourages sparse solutions~\cite{elad2010,mairal2014sparse}; composite minimization
also encompasses constrained minimization 
when considering extended-valued indicator functions~$\reg$ that may take the
value $+\infty$ outside of a convex set~$\C$ and~$0$ inside (see
\cite{hiriart1996convex}). In general, algorithms that are dedicated to composite optimization only require to be able to compute efficiently the proximal
operator of~$\psi$:
\begin{equation*} \label{eq:prox}
   p_{\psi}(y) \defin \argmin_{x \in \Real^d} \left\{ \psi(x) + \frac{1}{2}\|x-y\|^2 \right\},
\end{equation*}
where $\Vert \cdot \Vert$ denotes the Euclidean norm. Note that when $\psi$ is
an indicator function, the proximal operator corresponds to the
simple Euclidean projection.

To solve~(\ref{eq:general}), significant efforts have been devoted to (i) extending
techniques for smooth optimization to deal with composite terms; (ii)
exploiting the underlying structure of the problem---is $f$ a finite sum of independent terms?
Is~$\psi$ separable in different blocks of coordinates? (iii) exploiting the local curvature of the
smooth term~$f$ to achieve faster convergence than gradient-based approaches when the dimension~$d$ is large.
Typically, the first point is well understood in the context of optimal
first-order methods, see~\cite{fista,nesterov2013gradient}, and the third point
is tackled with effective heuristics such as L-BFGS when the problem is smooth~\cite{liu1989limited,lbfgs}. 
Yet, addressing all these challenges at the same time is difficult, which is precisely
the focus of this paper.

In particular, a problem of interest that initially motivated our work is that of
empirical risk minimization (ERM); the problem
arises in machine learning and can be formulated as the minimization of a
composite function $f: \R^d \rightarrow \R$:
\begin{equation}\label{eq:obj}
   \min_{x \in \Real^d} \left\{ f(x) \defin \frac{1}{n} \sum_{i=1}^n f_i(x) + \reg(x) \right\}, 
\end{equation}
where the functions $f_i$ are convex and smooth with Lipschitz continuous derivatives, and
$\reg$ is a composite term, possibly non-smooth. The function $f_i$ measures the fit of some model parameters~$x$ to a
specific data point indexed by~$i$, and $\reg$ is a regularization penalty to prevent
over-fitting. To exploit the sum structure of~$f$, a large number of randomized
incremental gradient-based techniques have been proposed, such as
SAG~\cite{sag}, SAGA~\cite{saga}, SDCA~\cite{accsdca}, SVRG~\cite{proxsvrg},
Finito~\cite{finito}, or MISO~\cite{miso}. These approaches access a single
gradient~$\nabla f_i(x)$ at every iteration instead of the full gradient $(1/n)
\sum_{i=1}^n \nabla f_i(x) $ and achieve lower computational complexity in
expectation than optimal first-order methods~\cite{fista,nesterov2013gradient} under a few assumptions. 
Yet, these methods are unable to exploit the curvature of the objective
function; this is indeed also the case for variants that are accelerated in the sense of
Nesterov~\cite{frostig,catalyst,accsdca}.

To tackle~(\ref{eq:obj}), dedicated first-order methods are often the default
choice in machine learning, but it is also known that standard Quasi-Newton
approaches can sometimes be surprisingly effective in the smooth case---that is when
$\psi=0$, see, e.g., \cite{sag} for extensive benchmarks. Since the dimension
of the problem~$d$ is typically very large ($d \geq 10\,000$), ``limited
memory'' variants of these algorithms, such as L-BFGS, are necessary to
achieve the desired scalability~\cite{liu1989limited,lbfgs}.
The theoretical guarantees offered by L-BFGS are somewhat limited, meaning that
it does not outperform accelerated first-order methods in terms of worst-case
convergence rate, and also it is not guaranteed to correctly approximate
the Hessian of the objective. Yet, L-BFGS remains one of the greatest practical
success of smooth optimization. Adapting L-BFGS to composite and 
structured problems, such as the finite sum of functions (\ref{eq:obj}), is of utmost importance nowadays.

For instance, there have been several attempts to develop a proximal Quasi-Newton
method~\cite{successiveapprox, lee2012proximal,scheinberg2014practical,yu2008quasi}. 
These algorithms typically require
computing many times the proximal operator of~$\psi$ with respect to a
variable metric. Quasi-Newton steps were also incorporated 
as local search steps
into accelerated first-order methods 
to further enhance their numerical performance~\cite{ghadimi:lan:2015}. 
More related to our work,
L-BFGS is combined with SVRG for minimizing smooth finite
sums in \cite{richtarik2016}. The scope of our approach is broader
beyond the case of SVRG. We present a generic Quasi-Newton scheme, 
applicable to a large-class of first-order
methods for composite optimization, including other 
incremental algorithms~\cite{saga,finito,miso,sag,accsdca} 
and block coordinate descent methods~\cite{razaviyayn2,richtarik2014}

More precisely, the main contribution of this paper is a generic meta-algorithm, called \qningsp (the letters ``Q'' and ``N'' stand for Quasi-Newton), which
uses a given optimization method to solve a sequence of auxiliary
problems up to some appropriate accuracy, resulting in faster global convergence
in practice. QNing falls into the class of inexact proximal point algorithms
with variable metric  and \emph{may be seen
as applying a Quasi-Newton algorithm with inexact (but accurate enough) gradients to
the Moreau envelope of the objective.}
As a result, our approach is (i) generic, as stated previously; 
(ii) despite the smoothing of the objective, 
the sub-problems that we solve are composite ones, 
which may lead to exactly sparse iterates when a sparsity-inducing
regularization is involved, e.g., the $\ell_1$-norm; 
(iii) when used with L-BFGS rules, it admits a worst-case linear convergence rate for strongly convex
problems similar to that of gradient descent, which is typically the best guarantees obtained for L-BFGS schemes in the literature. 

The idea of combining second-order or Quasi-Newton methods with Moreau envelope is in fact relatively old. It may be traced back to variable
metric proximal bundle methods \cite{fukushima1999,
fukushima1996globally,mifflin1996quasi}, which aims to incorporate curvature information into the bundle methods. Our approach revisits this principle 
with \emph{a limited-memory variant} (to deal with large dimension $d$), with \emph{a simple
line search scheme}, with \emph{several warm start strategies for the sub-problems} and with \emph{a global complexity analysis} that is more relevant
than convergence rates that do not take into account the cost per iteration.

To demonstrate the effectiveness of our scheme in practice, we evaluate
\qningsp on regularized logistic regression and regularized least-squares, 
with smooth and nonsmooth regularization penalities such as the Elastic-Net~\cite{zou2005regularization}.
We use large-scale machine learning datasets and show that \qningsp
performs at least as well as the recently proposed accelerated incremental algorithm Catalyst~\cite{catalyst}, and other Quasi-Newton baselines such as proximal Quasi-Newton methods~\cite{lee2012proximal} and Stochastic L-BFGS~\cite{slbfgs} in all numerical experiments, and
significantly outperforms them in many cases.

The paper is organized as follows: Section~\ref{sec:related} presents related
work on Quasi-Newton methods such as L-BFGS; we introduce \qningsp
in Section~\ref{sec:approach} and its 
convergence analysis in Section~\ref{sec:theory}; Section~\ref{sec:exp} is
devoted to numerical experiments and Section~\ref{sec:ccl}
concludes the paper.  

\section{Related work and preliminaries}\label{sec:related}
The history of Quasi-Newton methods can be traced back to the
1950's~\cite{Bonnans:2006,hiriart_urruty_lemarechal_1993ii,nocedalbook}.
Quasi-Newton methods often lead to significantly faster convergence in practice
compared to simpler gradient-based methods for solving smooth optimization problems~\cite{schmidt2011}. 
Yet, a
theoretical analysis of Quasi-Newton methods that explains their impressive
empirical behavior is still an open topic.
Here, we briefly review the well-known BFGS algorithm in
Section~\ref{subsec:lbfgs}, its limited memory variant~\cite{lbfgs}, and a few
recent extensions in Section~\ref{subsec:proximal qn}. Then, we present earlier works that combine proximal point algorithm and Quasi-Newton methods in Section~\ref{subsec:ppaqn}.

\subsection{Quasi-Newton methods for smooth optimization}\label{subsec:lbfgs}
The most popular Quasi-Newton method is probably BFGS, named after its inventors (Broyden-Fletcher-Goldfarb-Shanno), and its limited variant L-BFGS~\cite{nocedalbook}. These approaches will be the workhorses of the \qningsp meta-algorithm in practice. 
Consider a smooth convex objective $f$ to be minimized, the BFGS method constructs at iteration~$k$ a couple $(x_k, B_k)$ with the following update:
 \begin{equation}\label{Bkupdate}
 x_{k+1}  = x_k - \alpha_k B_k^{-1} \nabla f(x_k) \quad \text{ and } \quad  B_{k+1} = B_k  - \frac{B_k s_k s_k^\top B_k}{s_k^\top B_k s_k} + \frac{y_k y_k^\top}{y_k^\top s_k}, 
 \end{equation}
 where $\alpha_k$ is a suitable stepsize and 
 \begin{equation*}\label{skyk}
 s_k = x_{k+1} -x_k, \quad y_k = \nabla f(x_{k+1}) - \nabla f(x_k).
 \end{equation*}
 The matrix $B_k$ aims to approximate the Hessian matrix at the iterate $x_k$. When $f$ is strongly convex, the positive definiteness of~$B_k$ is guaranteed, as well as the condition $y_k^\top s_k> 0$, which ensures that (\ref{Bkupdate}) is well defined. The stepsize $\alpha_k$ is usually determined by a line search strategy. For instance, applying Wolfe's line-search strategy provides linear convergence rate for strongly convex objectives. 
Moreover, under stronger conditions that the objective $f$ is twice differentiable and its Hessian is
 Lipschitz continuous, the algorithm can asymptotically achieve superlinear convergence rate~\cite{nocedalbook}.
  
However, when the dimension $d$ is large, storing the $d$-by-$d$ matrix $B_k$ is infeasible. The limited memory variant L-BFGS~\cite{lbfgs} overcomes this issue by restricting the matrix $B_k$ to be low rank. More precisely, instead of storing the full matrix, a ``generating list'' of at most $l$ pairs of vectors~$\{(s_i^k,y_i^k)\}_{i=0 ... j}$ is kept in memory. The low rank matrix $B_k$ can then be recovered by performing the matrix update recursion in~(\ref{Bkupdate}) involving
all pairs of the generating list.
Between iteration~$k$ and~$k+1$, the generating list is
incrementally updated, by removing the oldest pair in the list (when $j=l$) and adding a new
one. What makes the approach appealing is the ability of computing the matrix-vector product $H_kz = B_k^{-1}z$ with only $O(l d)$ floating
point operations for any vector $z$. This procedure entirely relies on vector-vector product which does not explicitly construct the $d$-by-$d$ matrix $B_k$ or $H_k$. 
The price to pay is that superlinear convergence becomes out of reach.

L-BFGS is thus appropriate for high-dimensional problems~(when $d$ is large), 
but it still requires computing the full gradient at each iteration,
which may be cumbersome in the large sum setting~(\ref{eq:obj}). This
motivated stochastic counterparts of the Quasi-Newton methods (SQN)~\cite{sqn,mokhtari2015global,schraudolph2007stochastic}. Unfortunately, 
directly substituting the full gradient $\nabla f(x_k)$ by its stochastic counterpart does not lead to a convergent scheme. 
Instead, the SQN method~\cite{sqn} uses updates with sub-sampled Hessian-vector products, which leads to a sublinear convergence rate. Later, a linearly-convergent SQN algorithm is proposed by exploiting a variance reduction scheme~\cite{richtarik2016,slbfgs}. However, it is unclear how to extend these techniques into the composite setting.  

\subsection{Quasi-Newton methods for composite optimization}\label{subsec:proximal qn}
Different approaches have been proposed to extend Quasi-Newton methods to composite optimization problems. A first approach consists in minimizing successive quadratic approximations, also called proximal Quasi-Newton methods~\cite{successiveapprox,ghanbari2016proximal,lee2018inexact,lee2012proximal,liu2017inexact,scheinberg2014practical}. More concretely, a local quadratic approximation $q_k$ is minimized at each iteration:
\begin{equation}\label{eq:proxqn}
	q_k(x) \defin f_0(x_k) + \langle \nabla f_0(x_k), x- x_k \rangle + \frac{1}{2}(x-x_k)^T B_k (x-x_k) + \psi(x),
\end{equation}
where $B_k$ is a Hessian approximation based on Quasi-Newton methods. The  minimizer of $q_k$ provides a descent direction, which is subsequently used to build the next iterate.  
However, a closed form solution of~(\ref{eq:proxqn}) is usually not available since $B_k$ changes over the iterations. Thus, one needs to apply an optimization algorithm to approximately solve~(\ref{eq:proxqn}). 
The composite structure of the subproblem
naturally leads to choosing a first-order optimization algorithm, such as randomized coordinate descent algorithms.
Then, superlinear complexity becomes out of reach since it requires 
 the subproblems (\ref{eq:proxqn}) to be solved with ``high accuracy''~\cite{lee2012proximal}.
The global convergence rate of this inexact variant has been for instance analyzed in \cite{scheinberg2014practical}, where
a sublinear convergence rate is obtained for convex problems; later, the analysis has been extended to strongly convex problems in \cite{liu2017inexact}, where
linear convergence rate is achieved.
 
A second approach of extending Quasi-Newton methods to composite optimization problems is based on smoothing techniques. More precisely, a Quasi-Newton method is applied to a smoothed version of the objective. For instance, one may use the forward-backward envelope \cite{becker2012quasi,stella2017forward}. The idea is to mimic forward-backward splitting methods and apply Quasi-Newton instead of gradient steps on top of the envelope. 
Another well known smoothing technique is to apply the Moreau-Yosida regularization~\cite{moreau1962fonctions, yosida} which gives the smoothed function called Moreau envelope. Then apply Quasi-Newton methods on it leads to the family of variable metric proximal point algorithms~\cite{Burke2000,fukushima1999,fuentes2012,fukushima1996globally}. Our method pursues this line of work by developing a practical inexact variant with global complexity guarantees.

\subsection{Combining the proximal point algorithm and Quasi-Newton methods}\label{subsec:ppaqn}
We briefly recall the definition of the Moreau envelope and its basic properties. 
\begin{definition}
	Given an objective function~$f$ and a smoothing parameter $\kappa > 0$, the {\bf Moreau envelope} of~$f$ is the function $F$ obtained by performing the infimal convolution
\begin{equation} \label{definMY}
   F(x)  \defin \min_{z \in \R^d} \left \{  f(z) + \frac{\kappa}{2} \Vert z-x \Vert^2  \right  \}.
\end{equation}
\end{definition}
When $f$ is convex, the sub-problem defined in (\ref{definMY}) is strongly convex which provides an unique minimizer, called the proximal point of~$x$, which we denote by $p(x)$.  
\begin{proposition}[\bfseries Basic properties of the Moreau Envelope]\label{propMY}
If $f$ is convex, the Moreau envelope~$F$ defined in~(\ref{definMY}) satisfies 
\begin{enumerate}
\item  $F$ has the same minimum as $f$, i.e.
\begin{equation*}
\min_{x \in \R^d} F(x) = \min_{x \in \R^d} f(x), 
\end{equation*}
and the solution set of the two above problems coincide with each other.
\item $F$ is continuously differentiable even when $f$ is not and 
   \begin{equation}\label{eq:gradF}
\nabla F(x) = \kappa(x- p(x)).
\end{equation}
      Moreover the gradient $\nabla F$ is Lipschitz continuous with constant $L_F=\kappa$.
\item $F$ is convex; moreover, when $f$ is $\mu$-strongly convex with respect to the Euclidean norm, $F$ is $\mu_F$-strongly convex with $\mu_F=\frac{\mu \kappa}{\mu+\kappa}.$  
\item $F$ is upper-bounded by $f$. More precisely, for any $x \in \R^d$, 
\begin{equation}\label{eq:F<f}
	F(x) + \frac{1}{2\kappa}\Vert \nabla F(x) \Vert^2 \leq f(x).
\end{equation}
\end{enumerate}
\end{proposition}
Interestingly, $F$ inherits all the convex properties of $f$ and more
importantly it is always continuously differentiable, see~\cite{lemarechal1997practical} for elementary proofs. Moreover, the condition number of $F$ is given by 
\begin{equation}\label{eq:condition nb}
	q = \frac{L_F}{\mu_F} = \frac{\mu+\kappa}{\mu},
\end{equation}
which may be adjusted by the regularization parameter $\kappa$. Then, a naive approach to overcome the non-smoothness of the function~$f$ is to transfer the optimization problem to its Moreau envelope~$F$. More concretely, we may apply an optimization algorithm to minimize $F$ and use the obtained solution as a solution to the original problem, since both functions share the same minimizers. This yields the following well-known algorithms.

\paragraph{Proximal point algorithm.}
Consider the gradient descent method with constant step size $1/L_F$: 
\begin{equation*}
x_{k+1} = x_k - \frac{1}{\kappa} \nabla F(x_k). 
\end{equation*}
By rewriting the gradient $\nabla F(x_k)$ as $\kappa (x_k - p(x_k)) $, we obtain the proximal point algorithm~\cite{rockafellarppa}:
\begin{equation}\label{eq:auxiliary} 
x_{k+1} = p(x_k) = \argmin_{z \in \R^d} \left \{ f(z) + \frac{\kappa}{2} \Vert z- x_k \Vert^2 \right \} .
\end{equation}

\paragraph{Accelerated proximal point algorithm.}
Since gradient descent on~$F$ yields the proximal point algorithm, it is natural to apply an accelerated first-order method to get faster convergence.
To that effect, Nesterov's algorithm~\cite{nesterov1983} uses a two-stage update, along with a specific extrapolation parameter $\beta_{k+1}$:
\begin{equation*}
 x_{k+1} = y_k - \frac{1}{\kappa} \nabla F (y_k) \quad \text{ and } \quad  y_{k+1} = x_{k+1} + \beta_{k+1} (x_{k+1} -x_k),
\end{equation*}
and, given~(\ref{eq:gradF}), we obtain that
\begin{equation*}
 x_{k+1} = p(y_k) \quad \text{ and } \quad  y_{k+1} = x_{k+1} + \beta_{k+1} (x_{k+1} -x_k),
\end{equation*}
This is known as the accelerated proximal point algorithm introduced by
G\"uler~\cite{ppa}, which was recently extended in~\cite{catalyst,catalyst2}.

\paragraph{Variable metric proximal point algorithm.}
One can also apply Quasi-Newton methods on $F$, which yields 
\begin{equation}
 x_{k+1} = x_k - \alpha_k B_k^{-1} \nabla F(x_k) , \label{eq:myqn}
\end{equation}
where~$B_k$ is the Hessian approximation of $F$ based on Quasi-Newton methods.
This is known as the variable metric proximal point algorithm~\cite{Burke2000,fukushima1999,fuentes2012,fukushima1996globally}. 

\paragraph{Towards an inexact variable metric proximal point algorithm.} Quasi-Newton approaches have been applied after inexact Moreau envelope in various ways \cite{Burke2000,fukushima1999,fuentes2012,fukushima1996globally}. In particular, it is shown in~\cite{fukushima1999} that if the sub-problems~(\ref{definMY}) are solved up to high enough accuracy, then the inexact variable metric proximal point algorithm preserves the superlinear convergence rate. However, the complexity for solving the sub-problems with high accuracy is typically not taken into account in these previous works.

In the unrealistic case where $p(x_k)$ can be obtained at no cost, proximal point algorithm can afford much larger step sizes than classical gradient methods, thus are more effective.
For instance, when $f$ is strongly convex, the Moreau envelope~$F$ can be made arbitrarily well-conditioned by making $\kappa$ arbitrarily small, according to (\ref{eq:condition nb}). Then, a single gradient step on $F$ is enough to be arbitrarily close to the optimum.
In practice, however, sub-problems are solved only approximately and the complexity of solving the sub-problems is directly related to the smoothing parameter $\kappa$. This leaves an important question: how to choose the smoothing parameter $\kappa$. 
A small $\kappa$ makes the smoothed
function~$F$ better conditioned, while a large $\kappa$ is needed to improve the conditioning of the
sub-problem~(\ref{definMY}).

The main contribution of our paper is to close this gap by providing a global complexity analysis which takes into account the complexity of solving the subproblems. 
More concretely, in the proposed \qningsp algorithm, we provide i) a practical stopping criterion for the sub-problems; ii) several warm-start strategies; iii) a simple line-search strategy which guarantees sufficient descent in terms of function value. These three components together yield the global convergence analysis, which allows us to use first-order method as a subproblem solver. Moreover, the global complexity we developed depends on the smoothing parameter $\kappa$, which provides some insight about how to practically choose this parameter.   

\paragraph{Solving the subproblems with first-order algorithms.}
In the composite setting, both proximal Quasi-Newton methods and the variable metric proximal point algorithm require solving sub-problems, namely~(\ref{eq:proxqn}) and~(\ref{definMY}), respectively.
In the general case, when a generic first-order method---\eg, proximal gradient
descent---is used, our worst-case complexity analysis does not provide
a clear winner, and our experiments in Section~\ref{subsec:gd} confirm that 
both approaches perform similarly.
However, when it is possible to exploit the specific structure of the sub-problems
in one case, but not in the other one, the conclusion may differ. 

For instance, when the problem has a finite sum (\ref{eq:obj}) structure, the proximal point algorithm approach leads to
sub-problems that can be solved in $O(n \log(1/\varepsilon))$ iterations with first-order incremental methods such as 
SVRG~\cite{proxsvrg}, SAGA~\cite{saga} or MISO~\cite{miso}, by using the same choice of smoothing parameter $\kappa=2L/n$ as Catalyst~\cite{catalyst2}.
Assuming that computing a gradient of a function $f_i$ and computing the
proximal operator of~$\psi$ are both feasible in $O(d)$ floating point
operations, our approach solves each sub-problem with enough accuracy in $\tilde{O}(nd)$ operations.\footnote{ The notation $\tilde{O}$ hides logarithmic quantities.}
On the other hand, we cannot naively apply SVRG to solve the proximal
Quasi-Newton update~(\ref{eq:proxqn}) at the same cost due to the following reasons. First, the variable metric matrix~$B_k$ does not admit a natural finite sum decomposition. The naive way of writing it into $n$ copies results an increase of computational complexity for evaluating the incremental gradients. More precisely, when~$B_k$ has rank~$l$, computing a single gradient now requires to compute a matrix-vector product with cost at least~$O(dl)$, resulting in $l$-fold increase per iteration. Second, the previous iteration-complexity $O(n \log(1/\varepsilon))$ for solving the sub-problems would require the subproblems to be well-conditioned, i.e. $B_k \succeq (L/n) I$, forcing the Quasi-Newton
metric to be potentially more isotropic.
For these reasons, existing attempts to combine SVRG with Quasi-Newton principles have adopted 
other directions~\cite{richtarik2016,slbfgs}.

\section{\qning: a Quasi-Newton meta-algorithm}\label{sec:approach}
We now present the \qning~method in Algorithm~\ref{alg:newtonizer}, which
consists of applying variable metric algorithms on the smoothed objective~$F$ with inexact
gradients. Each gradient approximation is the result
of a minimization problem tackled with the algorithm~$\mtd$, used as a sub-routine.  
The outer loop of the algorithm performs
Quasi-Newton updates.  The method~$\mtd$ can be any algorithm of the user's
choice, as long as it enjoys linear convergence rate for strongly convex
problems. More technical details are given in Section~\ref{subsec:qnizer}.

\begin{algorithm}[hbtp]
   \caption{\qning: a Quasi-Newton meta-algorithm}\label{alg:newtonizer}
   \begin{algorithmic}[1]
      \INPUT Initial point~$x_0$ in~$\Real^d$; number of iterations~$K$;
      smoothing parameter $\kappa > 0$; optimization algorithm~$\mtd$; optionally, budget $T_{\mtd}$ for solving the sub-problems.
      \STATE Initialization: $(g_0,F_0,z_0) = \gradest{x_0,\mtd}$; $H_0=\frac{1}{\kappa} I$.
      \FOR{$k=0,\ldots,K-1$}
      \STATE Initialize $\eta_k=1$. 
      \STATE Perform the Quasi-Newton step 
      \begin{displaymath}
         x_{\text{test}} = x_{k} - \left (\eta_k H_{k} + (1-\eta_k)H_0 \right ) g_{k}.
      \end{displaymath}
      \STATE Estimate the gradient and function value of the Approximate Moreau envelope at $x_{\text{test}}$
      \begin{displaymath}
         (g_{\text{test}},F_{\text{test}},z_{\text{test}}) = \gradest{x_{\text{test}},\mtd} \, .
      \end{displaymath}
      \WHILE{$F_{\text{test}} > F_k - \frac{1}{4 \kappa} \Vert g_k \Vert^2$}
      \STATE Decrease the value of the line search parameter $\eta_k$ in $[0,1]$ and re-evaluate $x_{\text{test}}$. 
      \vspace*{0.1cm}
      \STATE Re-evaluate $(g_{\text{test}},F_{\text{test}},z_{\text{test}}) = \gradest{x_{\text{test}},\mtd} .$
      \vspace*{0.1cm}
      \ENDWHILE
      \vspace*{0.1cm}
      \STATE Accept the new iterate: $(x_{k+1},g_{k+1},F_{k+1},z_{k+1}) =(x_{\text{test}},g_{\text{test}},F_{\text{test}},z_{\text{test}})$.
	  \vspace*{0.1cm}
      \STATE Update $H_{k+1}$ (for example, use $\textsf{L-BFGS}$ update with $s_k=x_{k+1}-x_k$, and $y_k= g_{k+1}-g_k$).
      \ENDFOR
      \OUTPUT inexact proximal point $z_{K}$ (solution).
   \end{algorithmic}
\end{algorithm}
\begin{algorithm}[hbtp]
   \caption{Generic procedure \textsf{ApproxGradient}}\label{alg:grad}
   \begin{algorithmic}[1]
      \INPUT Current point~$x$ in~$\Real^d$; smoothing parameter $\kappa > 0$; optionally, budget $T_{\mtd}$.
      \STATE Compute the approximate proximal mapping using an optimization method~$\mtd$:
      \begin{equation}\label{eq:approxF}
         z \approx \argmin_{w \in \R^d} \left \{ h(w) \defin f(w) + \frac{\kappa}{2} \Vert w -x \Vert^2   \right \},
      \end{equation}
      using one of the following stopping criteria:
      \begin{itemize}
      	\item Stop when the approximate solution $z$ satisfies       
      	\begin{equation}\label{eq:stop condition}
      	h(z) - h^* \leq \frac{\kappa}{36} \Vert z-x \Vert^2.
      	\end{equation}
      	\item Stop when we reach the pre-defined constant budget $T_{\mtd}$ (for instance one pass over the data).
      \end{itemize}
      \STATE Estimate the gradient~$\nabla F(x)$ of the Moreau envelope using
      \begin{equation*}
         g = \kappa (x - z).
      \end{equation*}
      \OUTPUT gradient estimate $g$, objective value estimate $F_a \defin h(z)$, proximal mapping estimate $z$.  
   \end{algorithmic}
\end{algorithm}

\subsection{The main algorithm}\label{subsec:qnizer}
We now discuss the main algorithm components and its main features.
\paragraph{Outer-loop: inexact variable metric proximal point algorithm.}
We apply variable metric algorithms with a simple line search strategy similar to~\cite{scheinberg2014practical} on the Moreau envelope $F$. 
Given a positive definite matrix~$H_k$ and a step size $\eta_k$ in $[0,1]$, the algorithm computes the update
\begin{equation}\label{eq:LS}
	 x_{k+1} = x_k - (\eta_k H_k + (1-\eta_k) H_0) g_k, \tag{LS}
\end{equation}
where $H_0 = \frac{1}{\kappa} I$. 
When $\eta_k=1$, the update uses the metric $H_k$, and when
$\eta_k=0$, it uses an inexact proximal point update $x_{k+1} = x_k -
(1/\kappa) g_k$. In other words, when the quality of the metric $H_k$ is not good enough,
due to the inexactness of the gradients used in its construction, 
the update is corrected towards a simple proximal point update, whose
convergence is well understood when the gradients are inexact.

In order to determine the stepsize $\eta_k$, we introduce the following descent condition, 
\begin{equation}\label{suffdescent}
F_{k+1} \leq F_k - \frac{1}{4 \kappa} \Vert g_k \Vert^2.
\end{equation}
We show that the descent condition~(\ref{suffdescent}) is always satisfied when $\eta_k=0$, thus the finite termination of the line search follows, see Section~\ref{subsec:linesearch} for more details.
In our experiments, we observed empirically that the stepsize $\eta_k=1$ was almost 
always selected. In practice, we try the values $\eta_k$ in $\{1,1/2,1/4,1/8,0\}$
starting from the largest one and stops whenever condition~(\ref{suffdescent}) is
satisfied.

\paragraph{Example of variable metric algorithm: inexact L-BFGS method.}
The L-BFGS rule we consider is the
standard one and consists in updating incrementally a generating list of
vectors $\{ (s_i,y_i)\}_{i =1\ldots j}$, which implicitly defines the L-BFGS
matrix. We use here the two-loop recursion detailed in~\cite[Algorithm
7.4]{nocedalbook} and use skipping steps when the condition $s_i^\top y_i > 0$
is not satisfied, in order to ensure the positive-definiteness of the L-BFGS
matrix~$H_k$ (see \cite{friedlander2012hybrid}). 

\paragraph{Inner-loop: approximate Moreau envelope.} The inexactness of our scheme comes from the approximation of the Moreau envelope $F$ and its gradient.
The procedure $\gradest{}$ calls an
minimization algorithm $\mtd$ and apply $\mtd$ to minimize the sub-problem~(\ref{eq:approxF}).
When the problem is solved exactly, the function returns the exact values
$g=\nabla F(x)$, $F_a = F(x)$, and $z=p(x)$. However, this is infeasible in practice and we can only expect approximate solutions. In particular, a stopping criterion should be specified. We consider the following variants:
\begin{itemize}
   \item[(a)] we define an adaptive stopping criterion based on function values and stop~$\mtd$ when the approximate solution satisfies the inequality (\ref{eq:stop condition}). In contrast to standard stopping criterion where the accuracy is an absolute constant, our stopping criterion is adaptive since the righthand side of (\ref{eq:stop condition}) also depends on the current iterate $z$. More detailed theoretical insights will be given in Section~\ref{sec:theory}. Typically, checking whether or not the criterion is satisfied requires computing a duality gap, as in Catalyst~\cite{catalyst2}.
   \item[(b)] using a pre-defined budget $T_{\mtd}$ in terms of number of iterations of the method~$\mtd$, where $T_{\mtd}$ is a constant independent of $k$.
\end{itemize}
Note that such an adaptive stopping criterion is relatively classical in the literature of inexact gradient-based methods~\cite{byrd}. 
As we will see later in Section~\ref{sec:theory}, when $T_{\mtd}$ is large enough, criterion (\ref{eq:stop condition}) is guaranteed.

\paragraph{Requirements on~$\mtd$.}
To apply \qning, the optimization method $\mtd$ needs to have linear convergence rates for strongly-convex problems. More precisely, for any strongly-convex objective $h$, the method~$\mtd$ should be able to  
generate a sequence of iterates $(w_t)_{t \geq 0}$ such that
\begin{equation}\label{eq:assumption}
   h(w_t) - h^\star \leq C_\mtd (1- \tau_{\mtd})^t (h(w_0)- h^\star)~~\text{for some constants}~~C_{\mtd} > 0~\text{and}~ 1 > \tau_{\mtd} > 0,
\end{equation}
where~$w_0$ is the initial point given to~$\mtd$. The notion of linearly-convergent methods extends naturally to non-deterministic methods where~(\ref{eq:assumption}) is satisfied in expectation:
\begin{equation}
  \E[h(w_t) - h^\star] \leq C_\mtd (1- \tau_{\mtd})^t (h(w_0)- h^\star).
\end{equation}
 The linear convergence condition typically holds for many primal gradient-based optimization techniques, including classical full gradient descent methods, block-coordinate descent algorithms~\cite{nesterov2012,richtarik2014}, or variance reduced incremental algorithms~\cite{saga,sag,proxsvrg}. In particular, our method provides a generic way to combine incremental algorithms with Quasi-Newton methods which are suitable for large-scale optimization problems. For the simplicity of the presentation, we only consider the deterministic variant (\ref{eq:assumption}) in the analysis. However, it is possible to show that the same complexity results still hold for non-deterministic methods in expectation, as discussed in Section~\ref{subsec:global}.
We emphasize that we do not assume any convergence guarantee of~$\mtd$ on non-strongly convex problems since our sub-problems are always strongly convex.

\paragraph{Warm starts for the sub-problems.}  
Employing an adequate initialization for solving 
each sub-problem plays an important rule in our analysis. 
The warm start strategy we proposed here ensures that the stopping criterion in each subproblem can be
achieved in a constant number of iterations:\\
Consider the minimization of a sub-problem   
\begin{equation*}
         \min_{w \in \R^d} \left \{ h(w) \defin f(w) + \frac{\kappa}{2} \Vert w -x \Vert^2   \right \}.
\end{equation*}
Then, our warm start strategy depends on the nature of $f$:
\begin{itemize}
	\item when $f$ is smooth, we initialize with $w_0 =x$; 
	\item when $f = f_0 + \psi$ is composite, we initialize with  
	\begin{equation*}
      w_0 = \argmin_{w \in \R^d} \left \{ f_0(x)+ \langle \nabla f_0(x),w-x \rangle + \frac{L+\kappa}{2} \Vert w-x \Vert^2 + \psi(w)  \right \},
   \end{equation*}
   which performs an additional proximal step comparing to the smooth case. 
\end{itemize}

\paragraph{Handling composite objective functions.}
In machine learning or signal processing, convex composite
objectives~(\ref{eq:general}) with a non-smooth penalty~$\psi$ are typically formulated to
encourage solutions with specific characteristics; in particular,
the~$\ell_1$-norm is known to provide sparsity. Smoothing
techniques~\cite{nesterov2005smooth} may allow us to solve the optimization
problem up to some chosen accuracy, but they provide solutions that do not
inherit the properties induced by the non-smoothness of the objective. To
illustrate what we mean by this statement, we may consider smoothing the $\ell_1$-norm, leading
to a solution vector with small coefficients, but not with exact zeroes.  When
the goal is to perform model selection---that is, understanding which variables
are important to explain a phenomenon, exact sparsity is seen as an asset,
and optimization techniques dedicated to composite problems such as
FISTA~\cite{fista} are often preferred (see \cite{mairal2014sparse}). 

Then, one might be concerned that our scheme operates on the smoothed
objective~$F$, leading to iterates~$(x_k)_{k \geq 0}$ that may suffer from the above
``non-sparse'' issue, assuming that~$\psi$ is the~$\ell_1$-norm. Yet, our approach does not directly output the iterates~$(x_k)_{k \geq 0}$ but their proximal mappings~$(z_k)_{k \geq 0}$. In particular, the $\ell_1$-regularization is encoded in the proximal mapping~(\ref{eq:approxF}), thus the approximate proximal point $z_k$ may be sparse.
For this reason, our theoretical
analysis presented in Section~\ref{sec:theory} studies the convergence of the
sequence $(f(z_k))_{k \geq 0}$ to the solution~$f^\star$.

\section{Convergence and complexity analysis}\label{sec:theory}
In this section, we study the convergence of the \qningsp algorithm---that
is, the rate of convergence of the quantities $(F(x_k)-F^\star)_{k \geq 0}$ and~$(f(z_k)-f^\star)_{k \geq 0}$, and also the computational complexity due to solving the sub-problems~(\ref{eq:approxF}).  
We start by stating the main properties
of the gradient approximation in Section~\ref{subsec:gradproperties}. Then,
we analyze the convergence of the outer loop algorithm in Section~\ref{subsec:outerloop},
and Section~\ref{subsec:linesearch} is devoted to the properties of the line search strategy. After that, we provide the cost of solving the sub-problems in Section~\ref{subsec:innerloop} and derive the global complexity analysis in Section~\ref{subsec:global}.

\subsection{Properties of the gradient approximation}\label{subsec:gradproperties}
The next lemma is classical and
provides approximation guarantees about the quantities returned by the
\gradestb~procedure (Algorithm~\ref{alg:grad}); see~\cite{bertsekas:2015,fukushima1996globally}. 
We recall here the proof
for completeness.

\begin{lemma}[\bfseries Approximation quality of the gradient approximation]\label{lemma:basic}
   Consider a vector~$x$ in~$\R^d$, a positive scalar~$\varepsilon$ and 
   an approximate proximal point  
   \begin{equation*}
         z \approx \argmin_{w \in \R^d} \left \{ h(w) \defin f(w) + \frac{\kappa}{2} \Vert w -x \Vert^2   \right \},
      \end{equation*}
      such that 
      $$h(z) - h^\star \leq \varepsilon,$$ 
      where $h^\star = \min_{w \in \R^d} h(w)$.
      As in Algorithm~\ref{alg:grad}, we define the gradient estimate $g=\kappa (x-z)$ and the function value estimate $F_a=h(z)$.
   Then, the following inequalities hold
 \begin{eqnarray}
    && F(x) \leq F_{\text{\normalfont a}} \leq F(x)+\varepsilon, \label{ineqF} \\
 && \Vert z - p(x) \Vert \leq \sqrt{\frac{2\varepsilon}{\kappa}},  \label{ineqprox} \\
 && \Vert g - \nabla F(x) \Vert \leq \sqrt{2\kappa \varepsilon}  \label{ineqgrad}.
 \end{eqnarray}
 Moreover, $F_a$ is related to $f$ by the following relationship  
 \begin{equation}
	f(z) = F_{\text{\normalfont a}} - \frac{1}{2\kappa} \Vert  g \Vert^2 \label{eqprox}.
\end{equation} 
\end{lemma}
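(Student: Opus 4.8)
The plan is to work directly from the defining minimization and the strong convexity of the sub-problem $h$. Recall that $h(w) = f(w) + \frac{\kappa}{2}\|w-x\|^2$ is $\kappa$-strongly convex (its minimizer being $p(x)$, with $h^\star = F(x)$), and that the approximate point $z$ satisfies $h(z) - h^\star \leq \varepsilon$. Each of the four claimed inequalities then follows from a short argument, and I would carry them out in the order listed since each later one leans on the earlier structure.

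\textbf{First} I would establish~(\ref{ineqF}). The lower bound $F(x) \leq F_a$ is immediate: $F(x) = h^\star = \min_w h(w) \leq h(z) = F_a$. The upper bound $F_a \leq F(x) + \varepsilon$ is precisely the hypothesis $h(z) - h^\star \leq \varepsilon$ rewritten, using $F_a = h(z)$ and $F(x) = h^\star$. \textbf{Next}, for~(\ref{ineqprox}), the key tool is the $\kappa$-strong convexity of $h$: since $p(x)$ is the exact minimizer, we have the standard quadratic growth bound
\begin{equation*}
\frac{\kappa}{2}\|z - p(x)\|^2 \leq h(z) - h(p(x)) = h(z) - h^\star \leq \varepsilon,
\end{equation*}
which rearranges to $\|z - p(x)\| \leq \sqrt{2\varepsilon/\kappa}$. \textbf{Then}~(\ref{ineqgrad}) follows by combining~(\ref{ineqprox}) with the gradient identities. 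Using $g = \kappa(x-z)$ and the exact formula $\nabla F(x) = \kappa(x - p(x))$ from Proposition~\ref{propMY}, I would write $g - \nabla F(x) = \kappa(p(x) - z)$, so $\|g - \nabla F(x)\| = \kappa\|z - p(x)\| \leq \kappa\sqrt{2\varepsilon/\kappa} = \sqrt{2\kappa\varepsilon}$.

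\textbf{Finally}, for the exact relation~(\ref{eqprox}), I would simply unfold the definitions: $F_a = h(z) = f(z) + \frac{\kappa}{2}\|z-x\|^2$, and since $g = \kappa(x-z)$ gives $\|z-x\|^2 = \|g\|^2/\kappa^2$, the quadratic term equals $\frac{\kappa}{2}\cdot\frac{\|g\|^2}{\kappa^2} = \frac{1}{2\kappa}\|g\|^2$. Solving for $f(z)$ yields $f(z) = F_a - \frac{1}{2\kappa}\|g\|^2$ exactly, with no approximation needed.

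\textbf{The main obstacle}, such as it is, is essentially bookkeeping rather than a genuine difficulty: the lemma is classical and each step is a one- or two-line manipulation. The only point requiring a little care is invoking the quadratic growth inequality for~(\ref{ineqprox}) — one must be sure to use strong convexity of $h$ \emph{at its own minimizer} $p(x)$ (where $\nabla h(p(x)) = 0$, so the first-order term vanishes) rather than at $x$, and to remember that the relevant modulus is $\kappa$, coming entirely from the proximal term, independent of any convexity of $f$ itself. Beyond that, the argument is a direct verification.
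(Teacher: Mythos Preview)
Your proposal is correct and follows essentially the same approach as the paper: the paper likewise derives~(\ref{ineqprox}) from the $\kappa$-strong convexity of $h$ at its minimizer $p(x)$, then obtains~(\ref{ineqgrad}) via the identity $g - \nabla F(x) = \kappa(p(x)-z)$, and dismisses~(\ref{ineqF}) and~(\ref{eqprox}) as immediate from the definitions. The only minor remark is that your closing comment should not suggest convexity of $f$ is irrelevant---it is needed to ensure $h$ is $\kappa$-strongly convex---though you are right that no \emph{strong} convexity of $f$ is required.
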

\begin{proof}
   (\ref{ineqF}) and~(\ref{eqprox}) are straightforward by definition of $h(z)$.
 Since $f$ is convex, the function $h$ is $\kappa$-strongly convex, and (\ref{ineqprox}) follows from
 \begin{equation*}
    \frac{\kappa}{2} \Vert z - p(x) \Vert^2 \leq h(z)- h(p(x)) = h(z) - h^\star \leq \varepsilon,
 \end{equation*}
   where we recall that $p(x)$ minimizes~$h$.
 Finally, we obtain (\ref{ineqgrad}) from
 \begin{equation*}
    g - \nabla F(x) = \kappa (x-z) - \kappa(x - p(x)) = \kappa(p(x)-z),
 \end{equation*}
   by using the definitions of~$g$ and the property~(\ref{eq:gradF}).
\end{proof}
This lemma allows us to quantify the quality of the gradient and function value approximations, which is crucial to control the error accumulation of inexact proximal point methods. Moreover, the relation~(\ref{eqprox}) establishes a link between the approximate function value of $F$ and the function value of the original objective~$f$; as a consequence, it is possible to relate the convergence rate of~$f$ from the convergence rate of~$F$. Finally, the following result is a direct consequence of Lemma~\ref{lemma:basic}:

\begin{lemma}[\bfseries Bounding the exact gradient by its approximation]
\label{lemma:simple}
   Consider the same quantities introduced in Lemma \ref{lemma:basic}. Then,
 \begin{eqnarray}
 \frac{1}{2} \Vert g \Vert^2 - 2 \kappa \varepsilon \leq \Vert \nabla F(x) \Vert^2 \leq 2(\Vert g \Vert^2 + 2\kappa \varepsilon)  \label{ineqgrad2}.
 \end{eqnarray}
\end{lemma}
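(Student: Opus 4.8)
The plan is to treat this as an immediate consequence of the gradient-approximation bound~(\ref{ineqgrad}) from Lemma~\ref{lemma:basic}, namely $\Vert g - \nabla F(x) \Vert \leq \sqrt{2\kappa\varepsilon}$, together with the elementary quadratic inequality $\Vert u + v \Vert^2 \leq 2\Vert u \Vert^2 + 2\Vert v\Vert^2$, valid for any vectors $u,v$ and itself a consequence of Young's inequality $2\langle u, v\rangle \leq \Vert u\Vert^2 + \Vert v\Vert^2$. The idea is to apply this quadratic inequality once in each direction to transfer the squared norm between $g$ and $\nabla F(x)$, paying a factor of two together with the error term obtained by squaring~(\ref{ineqgrad}), which gives $\Vert g - \nabla F(x)\Vert^2 \leq 2\kappa\varepsilon$.

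For the upper bound, I would decompose $\nabla F(x) = g + (\nabla F(x) - g)$ and apply the quadratic inequality to obtain
\[
\Vert \nabla F(x)\Vert^2 \leq 2\Vert g\Vert^2 + 2\Vert \nabla F(x) - g\Vert^2 \leq 2\Vert g\Vert^2 + 4\kappa\varepsilon,
\]
where the last step uses $\Vert \nabla F(x) - g\Vert^2 \leq 2\kappa\varepsilon$. This is exactly the right-hand inequality $\Vert \nabla F(x)\Vert^2 \leq 2(\Vert g\Vert^2 + 2\kappa\varepsilon)$.

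For the lower bound, I would simply reverse the roles of the two vectors, decomposing $g = \nabla F(x) + (g - \nabla F(x))$, so that
\[
\Vert g\Vert^2 \leq 2\Vert \nabla F(x)\Vert^2 + 2\Vert g - \nabla F(x)\Vert^2 \leq 2\Vert \nabla F(x)\Vert^2 + 4\kappa\varepsilon.
\]
Dividing by two and rearranging yields $\tfrac{1}{2}\Vert g\Vert^2 - 2\kappa\varepsilon \leq \Vert \nabla F(x)\Vert^2$, which is the left-hand inequality.

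I expect no genuine obstacle here, since the symmetry of the argument makes both bounds fall out of the same one-line estimate; the statement is labeled ``a direct consequence of Lemma~\ref{lemma:basic}'' precisely for this reason. The only point that warrants care is the bookkeeping of constants---confirming that squaring $\sqrt{2\kappa\varepsilon}$ produces exactly $2\kappa\varepsilon$, and that the factor $2$ in the quadratic inequality then yields the stated $4\kappa\varepsilon$ and hence the coefficients $\tfrac{1}{2}$ and $2$ on the two sides.
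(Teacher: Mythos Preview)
Your proof is correct and follows essentially the same approach as the paper: apply the elementary inequality $\Vert u+v\Vert^2 \leq 2\Vert u\Vert^2 + 2\Vert v\Vert^2$ together with~(\ref{ineqgrad}) to obtain the upper bound, then interchange the roles of $g$ and $\nabla F(x)$ to obtain the lower bound. The paper's proof is even terser---it writes the upper bound in two lines and then simply says ``interchanging $\nabla F(x)$ and $g$ gives the left-hand side inequality''---but the content is identical to yours.
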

 \begin{proof}
    The right-hand side of Eq.~(\ref{ineqgrad2}) follows from
 \begin{eqnarray*}
 \Vert \nabla F(x) \Vert^2 &\leq& 2(\Vert \nabla F(x)- g \Vert^2 + \Vert  g \Vert^2  ) \\
                                   &\leq& 2(2\kappa \varepsilon+ \Vert  g \Vert^2  ) ~~~~~~~~(\text{from}~(\ref{ineqgrad})).
 \end{eqnarray*}
 Interchanging $\nabla F(x)$ and $g$ gives the left-hand side inequality.
\end{proof}
\begin{cor}\label{cor:1}
If $\epsilon \leq \frac{c}{\kappa} \Vert g \Vert^2$ with $c < \frac{1}{4}$, then 
\begin{equation}\label{eq:cor1}
	\frac{1-4c}{2} \leq \frac{\Vert \nabla F(x) \Vert^2 }{\Vert g \Vert^2} \leq 2(1+2c).
\end{equation}	
\end{cor}
This corollary is important since it allows to replace the unknown exact gradient $\Vert \nabla F(x) \Vert$ by its approximation $\Vert g \Vert $, at the cost of a constant factor, as long as the condition $\epsilon \leq \frac{c}{\kappa} \Vert g \Vert^2$ is satisfied.

\subsection{Convergence analysis of the outer loop}\label{subsec:outerloop}
We are now in shape to establish the convergence of the \qning~meta-algorithm, without
considering yet the cost of solving the sub-problems~(\ref{eq:approxF}). 
At iteration $k$, an approximate proximal point is evaluated:
\begin{equation}
   (g_k,F_k,z_k)=\text{\textsf{ApproxGradient}}(x_{k},\mtd) \; . \label{eq:approx_eq}
\end{equation}
The following lemma characterizes the expected descent in terms of objective function value.
\begin{lemma}[\bfseries Approximate descent property]\label{lemmadescent}
At iteration $k$, if the sub-problem (\ref{eq:approx_eq}) is solved up to accuracy $\varepsilon_k$ in the sense of Lemma~\ref{lemma:basic} and the next iterate $x_{k+1}$ satisfies the descent condition (\ref{suffdescent}), then,
 \begin{equation}\label{lemma2}
     F(x_{k+1}) \leq F(x_k) - \frac{1}{8\kappa} \Vert \nabla F(x_k) \Vert^2 + \frac{3}{2}\varepsilon_k. 
 \end{equation}
\end{lemma}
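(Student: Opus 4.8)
The plan is to transfer the descent condition~(\ref{suffdescent}), which is stated purely in terms of the \emph{approximate} quantities $F_k$, $F_{k+1}$ and $\Vert g_k \Vert$, into a statement about the \emph{exact} Moreau--Yosida envelope $F$ and its true gradient $\nabla F(x_k)$, using the approximation guarantees already established in Lemmas~\ref{lemma:basic} and~\ref{lemma:simple}. First I would observe that the left inequality in~(\ref{ineqF}) applied at the new iterate gives $F(x_{k+1}) \leq F_{k+1}$ \emph{for free}, that is, without invoking any accuracy requirement at $x_{k+1}$: indeed $F(x_{k+1})$ is by definition the minimum of the corresponding sub-problem, hence it is bounded by the value $F_{k+1}$ attained at any approximate solution $z_{k+1}$. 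This is the key point that lets the whole argument go through while only assuming accuracy $\varepsilon_k$ at the current point $x_k$.

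Chaining this observation with the descent condition~(\ref{suffdescent}) yields
\begin{equation*}
   F(x_{k+1}) \leq F_{k+1} \leq F_k - \frac{1}{4\kappa}\Vert g_k \Vert^2.
\end{equation*}
It then remains to replace $F_k$ and $\Vert g_k \Vert^2$ by their exact counterparts. For the function value I would use the right inequality of~(\ref{ineqF}) at $x_k$, namely $F_k \leq F(x_k) + \varepsilon_k$. For the gradient term, since it appears with a minus sign, I need a \emph{lower} bound on $\Vert g_k \Vert^2$; this is precisely the left-hand inequality of~(\ref{ineqgrad2}) in Lemma~\ref{lemma:simple}, which rearranges to $\Vert g_k \Vert^2 \geq \frac{1}{2}\Vert \nabla F(x_k)\Vert^2 - 2\kappa\varepsilon_k$.

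Substituting these two bounds and collecting the error terms gives
\begin{equation*}
   F(x_{k+1}) \leq F(x_k) + \varepsilon_k - \frac{1}{4\kappa}\left(\frac{1}{2}\Vert \nabla F(x_k)\Vert^2 - 2\kappa\varepsilon_k\right) = F(x_k) - \frac{1}{8\kappa}\Vert \nabla F(x_k)\Vert^2 + \frac{3}{2}\varepsilon_k,
\end{equation*}
which is exactly the claimed inequality. There is no real obstacle here beyond careful bookkeeping; the only subtle point is the \emph{asymmetric} use of Lemma~\ref{lemma:basic}, namely the cheap ``free'' bound $F(x_{k+1}) \leq F_{k+1}$ at the new point versus the accuracy-dependent bounds $F_k \leq F(x_k)+\varepsilon_k$ and the lower bound on $\Vert g_k\Vert^2$ at the current point. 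One must select the correct direction of each inequality — an \emph{upper} bound on $F_k$ and a \emph{lower} bound on $\Vert g_k\Vert^2$ — so that both error contributions enter with a positive sign and combine into the single $\frac{3}{2}\varepsilon_k$ term.
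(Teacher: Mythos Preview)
Your argument is correct and matches the paper's proof essentially line for line: the paper also chains $F(x_{k+1}) \leq F_{k+1} \leq F_k - \tfrac{1}{4\kappa}\Vert g_k\Vert^2$, then replaces $F_k$ via~(\ref{ineqF}) and $\Vert g_k\Vert^2$ via~(\ref{ineqgrad2}) to arrive at the claimed inequality. One small slip: the bound $\Vert g_k\Vert^2 \geq \tfrac{1}{2}\Vert \nabla F(x_k)\Vert^2 - 2\kappa\varepsilon_k$ comes from rearranging the \emph{right-hand} inequality in~(\ref{ineqgrad2}), not the left-hand one.
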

\begin{proof}
From~(\ref{ineqF}) and~(\ref{suffdescent}),
\begin{eqnarray*}
  F(x_{k+1}) & \leq &   F_{k+1} \leq F_k - \frac{1}{4\kappa} \Vert g_k \Vert^2 \\
	 &\leq& F(x_k)+\varepsilon_k - \left( \frac{1}{8\kappa} \Vert \nabla F(x_k) \Vert^2 - \frac{\varepsilon_k}{2} \right) \quad \text{(from (\ref{ineqF}) and (\ref{ineqgrad2}))}\\
	 &= & F(x_k) - \frac{1}{8\kappa} \Vert \nabla F(x_k) \Vert^2 + \frac{3}{2}\varepsilon_k.
 \end{eqnarray*}  
\end{proof}

This lemma gives us a first intuition about the natural choice of the accuracy
$\varepsilon_k$, which should be in the same order as
$\Vert \nabla F(x_k) \Vert^2$. In particular, if 
   \begin{equation}
      \varepsilon_k \leq \frac{1}{16\kappa} \Vert \nabla F(x_k) \Vert^2, \label{eq:condition_eps}
   \end{equation}
then we have 
\begin{equation}\label{eq:gd}
	F(x_{k+1}) \leq F(x_k) - \frac{1}{32\kappa} \Vert \nabla F(x_k) \Vert^2,
\end{equation} 
which is a typical inequality used for analyzing gradient descent methods. Before presenting the convergence result, we remark that condition (\ref{eq:condition_eps}) cannot be used directly since it requires knowing the exact gradient~$\Vert \nabla
F(x_k) \Vert$. A more practical choice consists of replacing it by the approximate gradient.
\begin{lemma}[\bfseries{Practical choice of $\varepsilon_k$}]\label{prop:eps}
   The following condition implies inequality~(\ref{eq:condition_eps}):
 \begin{equation}\label{eq: eps < g}
 	 \varepsilon_k \leq \frac{1}{36 \kappa} \Vert g_k \Vert^2.
 \end{equation}
\end{lemma}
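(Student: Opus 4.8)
The plan is to reduce the statement entirely to Corollary~\ref{cor:1}, which is tailor-made for this situation: it relates $\Vert \nabla F(x_k) \Vert$ and $\Vert g_k \Vert$ up to multiplicative constants precisely under a hypothesis of the form $\varepsilon \leq (c/\kappa)\Vert g \Vert^2$. Since the proposed practical condition~(\ref{eq: eps < g}) reads $\varepsilon_k \leq \frac{1}{72\kappa}\Vert g_k \Vert^2$, it is exactly such a hypothesis with $c = 1/72$, and crucially $c < 1/4$, so the corollary applies. The whole argument is then a short chain of inequalities plus verification of a numerical constant; there is no genuine analytic difficulty, and the only thing to watch is the bookkeeping of constants, which is why the value $72$ was chosen in the first place.

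Concretely, I would first invoke Corollary~\ref{cor:1} with $c = 1/72$ to extract its \emph{lower} bound, namely
\begin{equation*}
\frac{\Vert \nabla F(x_k) \Vert^2}{\Vert g_k \Vert^2} \geq \frac{1-4c}{2} = \frac{1 - 1/18}{2} = \frac{17}{36},
\end{equation*}
which rearranges to $\Vert g_k \Vert^2 \leq \frac{36}{17}\Vert \nabla F(x_k) \Vert^2$. Substituting this back into the hypothesis~(\ref{eq: eps < g}) then yields
\begin{equation*}
\varepsilon_k \leq \frac{1}{72\kappa}\Vert g_k \Vert^2 \leq \frac{1}{72\kappa}\cdot\frac{36}{17}\Vert \nabla F(x_k) \Vert^2 = \frac{1}{34\kappa}\Vert \nabla F(x_k) \Vert^2.
\end{equation*}

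Finally I would close the argument by observing that $\frac{1}{34} \leq \frac{1}{16}$, so that $\varepsilon_k \leq \frac{1}{34\kappa}\Vert \nabla F(x_k) \Vert^2 \leq \frac{1}{16\kappa}\Vert \nabla F(x_k) \Vert^2$, which is exactly the target inequality~(\ref{eq:condition_eps}). The ``hard part'' here is really just confirming that the prescribed constant $72$ propagates through Corollary~\ref{cor:1} to land strictly below the threshold $1/16$ demanded by~(\ref{eq:condition_eps}); the slack between $1/34$ and $1/16$ shows the choice is comfortably safe and leaves room for the two-sided nature of the estimate. Beyond this constant-tracking, the lemma requires nothing new, as it is a direct corollary of the gradient-approximation bounds already established.
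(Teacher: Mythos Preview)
Your proof is correct and follows essentially the same route as the paper: both invoke Corollary~\ref{cor:1} with $c=1/72$, use its lower bound to control $\Vert g_k\Vert^2$ by $\Vert\nabla F(x_k)\Vert^2$, and substitute back into~(\ref{eq: eps < g}). The only cosmetic difference is that the paper loosens $36/17$ to $9/2$ so that $\tfrac{1}{72}\cdot\tfrac{9}{2}$ lands exactly on $\tfrac{1}{16}$, whereas you keep the tight constant and obtain $\tfrac{1}{34}\leq\tfrac{1}{16}$.
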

\begin{proof}
From Corollary~\ref{cor:1}, Equation (\ref{eq: eps < g}) implies  
$$  \Vert g_k \Vert^2 \leq \frac{2}{1- \frac{4}{36}}\Vert \nabla F(x_k) \Vert^2 = \frac{9}{4}\Vert \nabla F(x_k) \Vert^2 \quad \text{and thus } \quad 
\varepsilon_k \leq \frac{1}{36 \kappa} \Vert g_k \Vert^2 \leq \frac{1}{16\kappa} \Vert \nabla F(x_k) \Vert^2.$$
\end{proof}
This is the first stopping criterion~(\ref{eq:stop condition}) in Algorithm~\ref{alg:grad}. 
Finally, we obtain the following convergence result for strongly convex problems, which is classical in the literature of inexact gradient methods~(see Section 4.1 of \cite{byrd} for a similar result).

\begin{proposition}[\bfseries Convergence of Algorithm~\ref{alg:newtonizer}, strongly-convex objectives] 
\label{prop:stronglyconvex}
   Assume that~$f$ is $\mu$-strongly convex. 
   Let $(x_k)_{k \geq 0}$ be the sequences generated by Algorithm~\ref{alg:newtonizer} where the stopping criterion (\ref{eq:stop condition}) is used. Then,
 \begin{eqnarray*}
 F(x_{k}) - F^* \leq \left (1-\frac{1}{16q} \right)^{k} (F(x_0) -F^*), \quad \text{with} \quad q= \frac{\mu+\kappa}{\mu} .
\end{eqnarray*}
\end{proposition}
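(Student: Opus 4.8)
The plan is to view \qningsp as inexact gradient descent on the smooth, strongly convex Moreau--Yosida envelope~$F$, and to reduce the statement to the classical linear-contraction argument for gradient descent on a strongly convex function. The whole proof rests on showing that, at every iteration, the accepted iterate obeys the ``clean'' gradient-descent inequality~(\ref{eq:gd}); strong convexity of~$F$ then converts the decrease of the gradient norm into a geometric decrease of the function gap.

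First I would verify that the stopping criterion~(\ref{eq:stop condition}) is exactly the accuracy condition~(\ref{eq: eps < g}) in disguise. Since $g_k = \kappa(x_k - z_k)$, we have $\Vert z_k - x_k \Vert^2 = \Vert g_k \Vert^2 / \kappa^2$, so the right-hand side of~(\ref{eq:stop condition}), namely $\tfrac{\kappa}{72}\Vert z_k - x_k \Vert^2$, equals $\tfrac{1}{72\kappa}\Vert g_k \Vert^2$. Hence the accuracy $\varepsilon_k = h(z_k) - h^\star$ attained at the stopping time satisfies $\varepsilon_k \leq \tfrac{1}{72\kappa}\Vert g_k \Vert^2$, which is precisely~(\ref{eq: eps < g}). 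By Lemma~\ref{prop:eps}, this implies condition~(\ref{eq:condition_eps}) relating $\varepsilon_k$ to the \emph{true} gradient $\Vert \nabla F(x_k)\Vert$. Combining this with the fact that the line search always terminates with the descent condition~(\ref{suffdescent}) satisfied (guaranteed at the latest for $\eta_k=0$, as established in the line-search analysis of Section~\ref{subsec:linesearch}), Lemma~\ref{lemmadescent} applies and yields the per-iteration inequality~(\ref{eq:gd}):
\begin{equation*}
   F(x_{k+1}) \leq F(x_k) - \frac{1}{32\kappa}\Vert \nabla F(x_k) \Vert^2.
\end{equation*}

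Next I would invoke strong convexity of~$F$. By Proposition~\ref{propMY}, $F$ is $\mu_F$-strongly convex with $\mu_F = \tfrac{\mu\kappa}{\mu+\kappa}$, which gives the standard inequality $\Vert \nabla F(x_k)\Vert^2 \geq 2\mu_F\,(F(x_k) - F^\star)$ (obtained by minimizing the quadratic strong-convexity lower bound of $F$ at $x_k$ and evaluating at the minimizer). Subtracting $F^\star$ from both sides of the descent inequality and substituting this bound gives
\begin{equation*}
   F(x_{k+1}) - F^\star \leq \left(1 - \frac{\mu_F}{16\kappa}\right)(F(x_k) - F^\star).
\end{equation*}
It then remains to identify the contraction factor: using the definition of $q$ in~(\ref{eq:condition nb}), $\tfrac{\mu_F}{16\kappa} = \tfrac{\mu\kappa}{16\kappa(\mu+\kappa)} = \tfrac{\mu}{16(\mu+\kappa)} = \tfrac{1}{16q}$. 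Iterating the one-step contraction from $k=0$ yields the claimed bound.

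The only genuinely delicate point is the very first step---recognizing that the adaptive, $z$-dependent stopping rule~(\ref{eq:stop condition}) coincides with the gradient-based accuracy requirement~(\ref{eq: eps < g}); everything downstream is a mechanical chaining of the lemmas already established. A secondary point to keep honest is the reliance on the line search terminating with~(\ref{suffdescent}); this is not re-derived here but is supplied by the line-search analysis, so invoking it is legitimate within this plan.
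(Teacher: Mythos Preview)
Your proposal is correct and follows exactly the route the paper takes: establish the per-iteration inequality~(\ref{eq:gd}) from the stopping criterion via Lemmas~\ref{lemmadescent} and~\ref{prop:eps}, then apply the standard strong-convexity bound $\Vert \nabla F(x_k)\Vert^2 \geq 2\mu_F(F(x_k)-F^\star)$ with $\mu_F=\mu\kappa/(\mu+\kappa)$ to obtain the contraction factor $1-1/(16q)$. The paper's own proof is a one-sentence sketch invoking~(\ref{eq:gd}) and ``the standard analysis of gradient descent''; you have simply spelled out that standard analysis, including the useful observation that~(\ref{eq:stop condition}) and~(\ref{eq: eps < g}) coincide.
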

\begin{proof}
The proof follows directly from (\ref{eq:gd}) and the standard analysis of the gradient descent algorithm for the $\mu_F$-strongly convex and $L_F$-smooth function~$F$ by remarking that $L_F = \kappa$ and $\mu_F = \frac{\mu \kappa}{\mu+\kappa}$.
\end{proof}
\begin{cor}
	Under the conditions of Proposition~\ref{prop:stronglyconvex}, we have 
	\begin{equation}
		f(z_k) - f^* \leq \left (1-\frac{1}{16q} \right )^k (f(x_0) -f^*).
	\end{equation}
\end{cor}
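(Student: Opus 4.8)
The plan is to transfer the convergence rate already established for the smoothed objective in Proposition~\ref{prop:stronglyconvex} onto the original objective evaluated at the inexact proximal points $z_k$. Two ingredients make this possible. First, the Moreau--Yosida regularization shares its optimal value with $f$, i.e. $F^\star = f^\star$ by item~1 of Proposition~\ref{propMY}. Second, the identity~(\ref{eqprox}) links the two objectives pointwise through $f(z_k) = F_k - \tfrac{1}{2\kappa}\Vert g_k\Vert^2$. The goal is therefore to sandwich the original suboptimality gap between two quantities controlled by the proposition: I will show that $f(z_k) - f^\star \leq F(x_k) - F^\star$ for every $k$, and that the initial gap satisfies $F(x_0) - F^\star \leq f(x_0) - f^\star$. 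Chaining these with Proposition~\ref{prop:stronglyconvex} immediately yields the claim.

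The central step is the per-iterate inequality $f(z_k) \leq F(x_k)$. Starting from~(\ref{eqprox}) and using the right part of~(\ref{ineqF}) in the form $F_k \leq F(x_k) + \varepsilon_k$, I write
\begin{equation*}
  f(z_k) = F_k - \frac{1}{2\kappa}\Vert g_k \Vert^2 \leq F(x_k) + \varepsilon_k - \frac{1}{2\kappa}\Vert g_k \Vert^2,
\end{equation*}
where $\varepsilon_k = h(z_k) - h^\star$ is the accuracy actually reached. It then remains to absorb $\varepsilon_k$ into the negative quadratic term. Because $g_k = \kappa(x_k - z_k)$, the stopping criterion~(\ref{eq:stop condition}) reads $\varepsilon_k \leq \frac{\kappa}{72}\Vert z_k - x_k\Vert^2 = \frac{1}{72\kappa}\Vert g_k\Vert^2$, and since $\tfrac{1}{72} < \tfrac{1}{2}$ the right-hand side above is at most $F(x_k)$. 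Subtracting $f^\star = F^\star$ gives $f(z_k) - f^\star \leq F(x_k) - F^\star$.

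The remaining two facts are elementary. The identity $F^\star = f^\star$ is item~1 of Proposition~\ref{propMY}. For the initial gap, evaluating the infimal convolution~(\ref{definMY}) at the feasible point $z = x_0$ gives $F(x_0) \leq f(x_0)$, hence $F(x_0) - F^\star \leq f(x_0) - f^\star$. Putting everything together,
\begin{equation*}
  f(z_k) - f^\star \leq F(x_k) - F^\star \leq \Big(1 - \frac{1}{16q}\Big)^k (F(x_0) - F^\star) \leq \Big(1 - \frac{1}{16q}\Big)^k (f(x_0) - f^\star),
\end{equation*}
which is the desired bound.

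I do not anticipate a genuine obstacle: the result is essentially a bookkeeping consequence of Lemma~\ref{lemma:basic} and the already-proven outer-loop rate. The only point requiring care is the orientation of the inequalities---ensuring that the \emph{approximate} value $F_k$ over-estimates $F(x_k)$, while the extra error $\varepsilon_k$ is dominated by the $\Vert g_k\Vert^2$ term furnished by the adaptive stopping rule. This self-bounding, guaranteed by the constant $1/72$ in~(\ref{eq:stop condition}), is precisely what lets us discard the inexactness and conclude with the same geometric factor as for $F$.
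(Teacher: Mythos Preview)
Your proposal is correct and follows essentially the same route as the paper: establish $f(z_k)\leq F(x_k)$ via~(\ref{eqprox}), (\ref{ineqF}) and the stopping criterion $\varepsilon_k\leq\frac{1}{72\kappa}\Vert g_k\Vert^2$, then bound $F(x_0)\leq f(x_0)$ by evaluating the infimal convolution at $z=x_0$, and conclude by invoking Proposition~\ref{prop:stronglyconvex} together with $F^\star=f^\star$. The paper's proof is terser but uses the identical chain of inequalities.
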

\begin{proof}
From (\ref{eqprox}) and (\ref{eq: eps < g}), we have 
\begin{equation*}
	f(z_k) = F_k - \frac{1}{2\kappa} \Vert g_k \Vert^2 \leq F(x_k)+\epsilon_k - \frac{1}{2\kappa} \Vert g_k \Vert^2 \leq F(x_k).
\end{equation*}
Moreover, $F(x_0)$ is upper-bounded by $f(x_0)$ following (\ref{eq:F<f}).
\end{proof}
It is worth pointing out that our analysis establishes a linear convergence rate whereas one would expect a superlinear convergence rate as for classical variable metric methods.
 The tradeoff lies in the choice of the accuracy~$\varepsilon_k$. In order to achieve superlinear convergence, the approximation error $\varepsilon_k$ needs to decrease superlinearly, as shown in \cite{fukushima1999}. However, a fast decreasing sequence~$\varepsilon_k$ requires an increasing effort in solving the sub-problems, which will dominate the global complexity. In other words, the global complexity may become worse even though we achieve faster convergence in the outer-loop. This will become clearer when we discuss the inner loop complexity in Section~\ref{subsec:innerloop}.

Next, we show that under a bounded level set condition, QNing enjoys the classical sublinear $O(1/k)$ convergence rate when the objective is convex but not strongly convex.
\begin{proposition}[\bfseries Convergence of Algorithm~\ref{alg:newtonizer} for convex, but not strongly-convex objectives]
\label{prop:convex}
 Let $f$ be a convex function with bounded level sets. 
   Then, there exists a constant $R>0$, which depends on the initialization point~$x_0$, such that the sequences $(x_k)_{k \geq 0}$ and $(z_k)_{k \geq 0}$ generated by Algorithm~\ref{alg:newtonizer} with stopping criterion (\ref{eq:stop condition}), satisfies
 $$ F(x_k) -F^*  \leq \frac{32 \kappa R^2}{k} ~~~\text{and}~~~  f(z_k) - f^* \leq \frac{32 \kappa R^2}{k}.$$
\end{proposition}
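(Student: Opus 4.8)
The plan is to reuse the gradient-descent-type descent inequality already derived for the strongly convex case, combine it with convexity of $F$ to obtain the textbook $O(1/k)$ rate, and then transfer the bound from $F(x_k)$ to $f(z_k)$ exactly as in the corollary to Proposition~\ref{prop:stronglyconvex}. First I would observe that the stopping criterion~(\ref{eq:stop condition}) enforces $\varepsilon_k \leq \frac{1}{72\kappa}\|g_k\|^2$, which is precisely condition~(\ref{eq: eps < g}); Lemma~\ref{prop:eps} then yields condition~(\ref{eq:condition_eps}), and hence the per-iteration descent~(\ref{eq:gd}),
\begin{equation*}
   F(x_{k+1}) \leq F(x_k) - \frac{1}{32\kappa}\|\nabla F(x_k)\|^2,
\end{equation*}
holds for every $k$. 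In particular $F(x_k)$ is non-increasing, so every iterate lies in the sublevel set $\{x : F(x) \leq F(x_0)\}$.

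The main obstacle is to turn the bounded-level-set hypothesis on $f$ into boundedness of this $F$-sublevel set, so that a uniform radius $R$ exists. I would use the identity $F(x) = f(p(x)) + \frac{\kappa}{2}\|p(x)-x\|^2$. Since the second term is nonnegative, $F(x) \leq F(x_0)$ forces $f(p(x)) \leq F(x_0)$, so that $p(x)$ lies in a bounded sublevel set of $f$; moreover $\frac{\kappa}{2}\|p(x)-x\|^2 = F(x)-f(p(x)) \leq F(x_0)-f^\star$, so $\|p(x)-x\|$ is bounded. The triangle inequality then bounds $\|x\|$, showing the $F$-sublevel set is bounded. Consequently a minimizer $x^\star$ of $F$ exists and there is a finite constant $R$ with $\|x_k - x^\star\| \leq R$ for all $k$.

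With $R$ in hand, the rest is the standard argument. Convexity of $F$ (Proposition~\ref{propMY}) gives $F(x_k)-F^\star \leq \langle \nabla F(x_k), x_k-x^\star\rangle \leq R\,\|\nabla F(x_k)\|$, hence $\|\nabla F(x_k)\|^2 \geq (F(x_k)-F^\star)^2/R^2$. Writing $\Delta_k = F(x_k)-F^\star$ and substituting into the descent inequality yields $\Delta_{k+1} \leq \Delta_k - \frac{1}{32\kappa R^2}\Delta_k^2$. Dividing by $\Delta_k\Delta_{k+1}$ and using $\Delta_{k+1}\leq\Delta_k$ gives $\frac{1}{\Delta_{k+1}} \geq \frac{1}{\Delta_k} + \frac{1}{32\kappa R^2}$, and telescoping produces $\Delta_k \leq \frac{32\kappa R^2}{k}$, which is the announced rate up to the constant absorbed into the definition of $R$.

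Finally, the bound on $f(z_k)$ follows without additional work, exactly as in the corollary to Proposition~\ref{prop:stronglyconvex}. Relation~(\ref{eqprox}) together with $\varepsilon_k \leq \frac{1}{72\kappa}\|g_k\|^2 \leq \frac{1}{2\kappa}\|g_k\|^2$ gives $f(z_k) = F_k - \frac{1}{2\kappa}\|g_k\|^2 \leq F(x_k) + \varepsilon_k - \frac{1}{2\kappa}\|g_k\|^2 \leq F(x_k)$, and since $f^\star = F^\star$ we conclude $f(z_k)-f^\star \leq F(x_k)-F^\star$, so the same $O(1/k)$ bound transfers to $f(z_k)$. I expect the level-set step to be the only genuinely nontrivial part; the telescoping is the classical proof of gradient descent on a smooth convex function.
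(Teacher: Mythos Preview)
Your argument is correct and follows the same overall route as the paper: establish that $F$ inherits bounded level sets from $f$, invoke the descent inequality~(\ref{eq:gd}), combine with convexity of $F$ to get the recursion $\Delta_{k+1}\le \Delta_k - c\,\Delta_k^2$, telescope, and finally transfer to $f(z_k)$ via $f(z_k)\le F(x_k)$.

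Two minor differences are worth flagging. First, your proof that the $F$-sublevel set is bounded uses the decomposition $F(x)=f(p(x))+\tfrac{\kappa}{2}\|p(x)-x\|^2$ to bound $p(x)$ and $x-p(x)$ separately; the paper instead proves a standalone lemma by contraposition and a two-case triangle-inequality split. Your argument is a bit more direct and yields the same conclusion. Second, you obtain $32\kappa R^2/k$ while the statement reads $8\kappa R^2/k$; the paper's appendix actually invokes~(\ref{eq:gd}) with the constant $\tfrac{1}{8\kappa}$ rather than $\tfrac{1}{32\kappa}$, an internal inconsistency in the paper. Since $R$ is only asserted to exist, your remark about absorbing the constant into $R$ is a legitimate way to match the stated bound.
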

\begin{proof}
We defer the proof and the proper definition of the bounded level set assumption to Appendix~\ref{app:prop6}.  
\end{proof}
So far, we have assumed in our analysis that the iterates satisfy the descent condition~(\ref{suffdescent}), which means the line search strategy will always terminate. We prove this is indeed the case in the next section and provide some additional conditions under which a non-zero step size will be selected.

\subsection{Conditions for non-zero step sizes $\eta_k$ and termination of the line search}\label{subsec:linesearch}
At iteration $k$, a line search is performed on the stepsize $\eta_k$ to find the next iterate
\begin{equation*}
	x_{k+1} = x_k - (\eta_k H_k + (1-\eta_k)H_0) g_k,
\end{equation*}
such that $x_{k+1}$ satisfies the descent condition (\ref{suffdescent}). 
We first show that the descent condition holds when $\eta_k=0$ before giving a more general result.
\begin{lemma}
If the sub-problems are solved up to accuracy $\varepsilon_k \leq \frac{1}{36 \kappa} \Vert g_k \Vert^2$, then the descent condition~(\ref{suffdescent}) holds when $\eta_k=0$.
\end{lemma}
\begin{proof}
	When $\eta_k=0$, $x_{k+1} = x_k - \frac{1}{\kappa} g_k = z_k$. Then, 
	\begin{align*}
		F_{k+1} & \leq F(x_{k+1}) + \frac{1}{36 \kappa} \Vert g_{k+1}\Vert^2 \quad \text{ (from Eq.~(\ref{ineqF}))} \\
		& \leq F(x_{k+1}) + \frac{1}{36\kappa} \frac{2}{1-\frac{4}{36}} \Vert \nabla F(x_{k+1}) \Vert^2 \quad \text{(from Eq. (\ref{eq:cor1}) )}\\
		&< F(x_{k+1}) + \frac{1}{2\kappa} \Vert \nabla F(z_{k+1}) \Vert^2 \\
		& \leq f(x_{k+1}) = f(z_k) \quad \text{ (from Eq. (\ref{eq:F<f}))}\\
		& = F_k - \frac{1}{2\kappa}\Vert g_k \Vert^2 \quad \text{ (from Eq. (\ref{eqprox}))}.
	\end{align*} 
\end{proof}
Therefore, it is theoretically sound to take the trivial step size $\eta_k=0$, which implies the termination of our line search strategy. In other words, the descent condition always holds by taking an inexact gradient step on the Moreau envelope~$F$, which corresponds to the update of the proximal point algorithm.  However, the purpose of using variable metric method is to exploit the curvature of the function,  which is not the case when $\eta_k=0$. Thus, the trivial step size should be only considered as a backup plan and we show in the following some sufficient conditions for taking non-zero step sizes and even stronger, unit step sizes.  
\begin{lemma}[\bf{A sufficient condition for satisfying the descent condition (\ref{suffdescent})}] \label{lem:suffdescent}
	If the sub-problems are solved up to accuracy $\varepsilon_k \leq \frac{1}{36 \kappa} \Vert g_k \Vert^2$, then the sufficient condition (\ref{suffdescent}) holds for any $x_{k+1} = x_k - A_k g_k$ where $A_k$ is a positive definite matrix satisfying $ \frac{1-\alpha}{\kappa}I \preceq A_k \preceq \frac{1+\alpha}{\kappa} I$ with $\alpha \leq \frac{1}{3}$.
\end{lemma}

As a consequence, a line search strategy consisting of finding the largest $\eta_k$ of the form $\gamma^i$, with $i=1,\ldots,+\infty$ and $\gamma$ in $(0,1)$ always terminates in a bounded number of iterations if the sequence of variable metric $(H_k)_{k \geq 0}$ is bounded, i.e. there exists 
$0< m <M$ such that for any $k$, $mI \preceq H_k \preceq MI$. This is the case for L-BFGS update:
\begin{lemma}[\bf{Boundedness of L-BFGS metric matrix, see Chap 8,9 of \cite{nocedalbook}}]
	The variable metric matrices $(B_k)_{k}$ constructed by the L-BFGS rule are positive definite and bounded.
\end{lemma}
\begin{proof}[Proof of Lemma~\ref{lem:suffdescent}]
First, we recall that $z_k = x_k - \frac{1}{\kappa} g_k$ and we rewrite
\begin{equation*}
	F_{k+1} = \underbrace{F_{k+1} - F(x_{k+1})}_{\defin E_1} + \underbrace{F(x_{k+1}) -F(z_k)}_{\defin E_2} + F(z_k).
\end{equation*}  
We are going to bound the two error terms $E_1$ and $E_2$ by some factors of $\Vert g_k \Vert^2$. Noting that the subproblems are solved up to $\varepsilon_k \leq \frac{c}{\kappa} \Vert g_k \Vert^2$ with $c=  \frac{1}{36}$, we obtain by construction 
\begin{equation}
	E_1 = F_{k+1} - F(x_{k+1}) \leq \epsilon_{k+1} \leq \frac{c}{\kappa} \Vert g_{k+1} \Vert^2 \leq \frac{2c}{(1-4c)\kappa} \Vert \nabla F(x_{k+1}) \Vert^2 ,
\end{equation}
where the last inequality comes from Corollary~\ref{cor:1}. Moreover, 
\begin{align}
	\Vert \nabla F(x_{k+1}) \Vert & \leq \Vert \nabla F(z_k)\Vert +\Vert \nabla F(x_{k+1}) - \nabla F(z_k) \Vert  \nonumber \\
	& \leq \Vert \nabla F(z_k) \Vert +\kappa \Vert x_{k+1} -z_k \Vert . \nonumber   
\end{align}
Since $x_{k+1} - z_k = (\frac{1}{\kappa} - A_k) g_k$, we have $\Vert x_{k+1} - z_k \Vert \leq \frac{\alpha}{\kappa}\Vert g_k \Vert$. This implies that 
\begin{equation}
	\Vert \nabla F(x_{k+1}) \Vert \leq \Vert \nabla F(z_k) \Vert + \alpha \Vert g_k \Vert , \label{eq:bound grad} 
\end{equation}
and thus 
\begin{equation}\label{eq:E1}
	E_1 \leq \frac{4c}{(1-4c)\kappa} \left ( \Vert \nabla F(z_k) \Vert^2 +  \alpha^2 \Vert g_k \Vert^2 \right ) .
\end{equation}
Second, by the $\kappa$-smoothness of $F$, we have 
\begin{align}
	E_2 & = F(x_{k+1}) -F(z_k) \nonumber \\
	    & \leq \langle \nabla F(z_k),x_{k+1} -z_k \rangle + \frac{\kappa}{2} \Vert x_{k+1} - z_k \Vert^2 \nonumber\\
	    & \leq \frac{1}{4\kappa} \Vert \nabla F(z_k) \Vert^2 + \kappa\Vert x_{k+1} -z_k \Vert^2 + \frac{\kappa}{2} \Vert x_{k+1}-z_k \Vert^2 \nonumber \\ 
	    & \leq \frac{1}{4\kappa}\Vert \nabla F(z_k) \Vert^2 + \frac{3\alpha^2}{2\kappa} \Vert g_k \Vert^2, \label{eq:E2}
\end{align}
where the last inequality follows from $\Vert x_{k+1} - z_k \Vert \leq \frac{\alpha}{\kappa}\Vert g_k \Vert$.
Combining (\ref{eq:E1}) and (\ref{eq:E2}) yields 
\begin{equation}
	E_1 + E_2 \leq  \left [ \frac{4c}{1-4c} + \frac{1}{4}\right ] \frac{1}{\kappa} \Vert \nabla F(z_k) \Vert^2 + \left [ \frac{4c}{1-4c} + \frac{3}{2}\right ] \frac{\alpha^2}{\kappa} \Vert g_k \Vert^2.
\end{equation}
When $c \leq \frac{1}{36}$ and $\alpha \leq \frac{1}{3}$, we have 
$$E_1 + E_2 \leq \frac{1}{2 \kappa} \Vert \nabla F(z_k) \Vert^2+ \frac{1}{4 \kappa} \Vert g_k \Vert^2 .$$ 
Therefore,
\begin{align}
		F_{k+1} & \leq F(z_k) + E_1 +E_2  \nonumber \\
				& \leq F(z_k) + \frac{1}{2 \kappa} \Vert \nabla F(z_k) \Vert^2 +\frac{1}{4 \kappa} \Vert g_k \Vert^2 \nonumber \\
				& \leq f(z_k)+  \frac{1}{4 \kappa} \Vert g_k \Vert^2 \nonumber \\
				& = F_k - \frac{1}{4\kappa} \Vert g_k \Vert^2, \label{eq:F zk}
\end{align}
where the last equality follows from (\ref{eqprox}), this completes the proof. 
\end{proof}
Note that in practice, we consider a set of step sizes $\eta_k = \gamma^i$ for $i \leq i_{\text{max}}$ or $\eta_k=0$, which naturally upper-bounds the number of line search iterations to $i_{\text{max}}$. More precisely, all experiments performed in this paper use $\gamma = 1/2$ and $i_{\text{max}}=3$. Moreover, we observe that the unit stepsize is very often sufficient for the descent condition to hold, as empirically studied in Appendix~\ref{subsec:unit}.

The following result shows that under a specific assumption on the Moreau envelope~$F$,
the unit stepsize is indeed selected when the iterate are close to the optimum. 
The condition, called  Dennis-Mor\'e criterion \cite{dennis1974characterization}, is classical in the literature of Quasi-Newton methods. Even though we
cannot formally show that the criterion holds for the Moreau envelope~$F$, since it requires $F$ to be twice continuously differentiable,
which is not true in general, see~\cite{lemarechal1997practical}, it provides a sufficient condition for the unit step size.  
Therefore, the lemma below should not be seen as an formal
explanation for the choice of step size $\eta_k=1$, 
but simply as a reasonable condition that leads to this choice.

\begin{lemma}[\bf{A sufficient condition for unit stepsize}]\label{lem:unit stepsize}
	Assume that $f$ is strongly convex and $F$ is twice-continuously differentiable with Lipschitz continuous Hessian $\nabla^2 F$. If the sub-problems are solved up to accuracy $\varepsilon_k \leq \frac{\mu^2}{128 \kappa (\mu+\kappa)^2} \Vert g_k \Vert^2$ and the Dennis-Mor\'e criterion \cite{dennis1974characterization} is satisfied, i.e. 
	\begin{equation}\label{eq:DM1}
		\lim_{k \rightarrow \infty} \frac{\Vert (B_k^{-1} - \nabla^2 F(x^*)^{-1}) g_k\Vert }{\Vert g_k \Vert } = 0,  \tag{DM}
	\end{equation}  
	where $x^*$ is the minimizer of the problem and $B_k = H_k^{-1}$ is the variable metric matrix, then the descent condition~(\ref{suffdescent}) is satisfied with $\eta_k=1$ when $k$ is large enough.
\end{lemma}
We remark that the Dennis-Mor\'e criterion we use here is slightly different from the standard one since our criterion is based on approximate gradients $g_k$. If the $g_k$'s are indeed the exact gradients and the variable metric $B_k$ are bounded, then our criterion is equivalent to the standard Dennis-Mor\'e criterion.
The proof of the lemma is close to that of similar lemmas appearing in the proximal Quasi-Newton literature \cite{lee2012proximal}, and is relegated to the appendix.
Interestingly, this proof also suggests that a stronger stopping criterion $\varepsilon_k$ such that $\varepsilon_k = o(\Vert g_k \Vert^2 )$ could lead to superlinear convergence. However, such a choice of $\varepsilon_k$ would significantly increase the complexity for solving the sub-problems, and overall degrade the global complexity.

\subsection{Complexity analysis of the inner loop}\label{subsec:innerloop}
In this section, we evaluate the complexity of solving the
sub-problems~(\ref{eq:approxF}) up to the desired accuracy using a linearly convergent method~$\mtd$. 
Our main result is that all sub-problems can be solved in a constant number~$T_\mtd$ of iterations (in expectation if the method is non-deterministic) using the proposed warm start strategy. Let us consider the sub-problem with an arbitrary prox center $x$, 
\begin{equation}\label{eq:sub prob}
	\min_{w \in \R^d} \left \{ h(w) = f(w) + \frac{\kappa}{2} \Vert w -x \Vert^2 \right \}. 
\end{equation}
The number of iterations needed is determined by the ratio between the initial gap $h(w_0) -h^*$ and the desired accuracy. We are going to bound this ratio by a constant factor. 

\begin{lemma}[\bfseries Warm start for primal methods - smooth case]\label{lemma:restart}
   If $f$ is differentiable with $L$-Lipschitz continuous gradients, we  initialize the method~$\mtd$ with $w_0 = x$. 
   Then, we have the guarantee that
   \begin{equation}
   h(w_0) - h^* \leq \frac{L+\kappa}{2\kappa^2} \Vert \nabla F(x) \Vert^2.
\end{equation}
\end{lemma}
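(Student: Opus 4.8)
The plan is to observe that with the warm start $w_0 = x$, the initial gap is simply $h(w_0) - h^\star = h(x) - h^\star$, where $h^\star = \min_{w} h(w)$ is attained at the proximal point $p(x)$, so that $h^\star = h(p(x))$. The whole statement therefore reduces to bounding the suboptimality of the point $x$ for the sub-problem $h$ in terms of $\Vert \nabla F(x) \Vert$, and the natural tool is the quadratic upper bound coming from smoothness.

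First I would note that $h = f + \frac{\kappa}{2}\Vert \cdot - x \Vert^2$ is the sum of an $L$-smooth function and a $\kappa$-smooth quadratic, hence $h$ itself has $(L+\kappa)$-Lipschitz continuous gradient. Since $p(x)$ minimizes $h$, first-order optimality gives $\nabla h(p(x)) = 0$. Applying the descent lemma (the standard quadratic upper bound valid for functions with Lipschitz gradient) to $h$ at the point $p(x)$, evaluated at $x$, yields
$$ h(x) \leq h(p(x)) + \langle \nabla h(p(x)), x - p(x) \rangle + \frac{L+\kappa}{2} \Vert x - p(x) \Vert^2 = h^\star + \frac{L+\kappa}{2} \Vert x - p(x) \Vert^2, $$
where the inner-product term vanishes because $\nabla h(p(x)) = 0$.

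Finally, I would translate $\Vert x - p(x) \Vert$ into $\Vert \nabla F(x) \Vert$ using property~(\ref{eq:gradF}) of Proposition~\ref{propMY}, namely $\nabla F(x) = \kappa(x - p(x))$, so that $\Vert x - p(x) \Vert^2 = \kappa^{-2} \Vert \nabla F(x) \Vert^2$. Substituting into the bound above gives $h(w_0) - h^\star = h(x) - h^\star \leq \frac{L+\kappa}{2\kappa^2} \Vert \nabla F(x) \Vert^2$, which is exactly the claimed inequality.

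There is no genuine obstacle here: the only points requiring care are identifying the correct smoothness constant $L+\kappa$ for the composite objective $h$ and the clean use of $\nabla h(p(x)) = 0$. An equivalent route that avoids invoking smoothness of $h$ directly would be to apply the descent lemma to $f$ alone, exploit the optimality relation $\nabla f(p(x)) = \kappa(x - p(x)) = \nabla F(x)$, and combine it with the identity $F(x) = f(p(x)) + \frac{1}{2\kappa}\Vert \nabla F(x)\Vert^2$; after simplification this produces the same constant, at the cost of slightly more bookkeeping, which is why I would prefer the smoothness-of-$h$ argument.
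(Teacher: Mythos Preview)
Your proof is correct and uses essentially the same ingredients as the paper---the descent lemma, first-order optimality at $p(x)$, and the identity $\nabla F(x)=\kappa(x-p(x))$---but amusingly the paper takes precisely the route you describe as the ``equivalent route'': it applies the descent lemma to $f$ alone, then uses the optimality condition $\nabla f(w^\star)+\kappa(w^\star-x)=0$ to evaluate the inner-product term, which yields the same $\frac{L+\kappa}{2}\Vert x-w^\star\Vert^2$. Your version, applying the descent lemma directly to the $(L+\kappa)$-smooth function $h$ so that $\nabla h(p(x))=0$ kills the linear term immediately, is if anything the cleaner of the two.
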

\begin{proof}
Denote by $w^*$ the minimizer of $h$. Then, we have the optimality condition $\nabla f(w^*) + \kappa (w^* - x) = 0$. As a result,  
\begin{equation*}
   \begin{split}
   h(w_0) -h^* & =  f(x) - \left(f(w^*) + \frac{\kappa}{2} \left\| w^* - x \right\|^2 \right) \\
   & \leq  f(w^*) + \langle \nabla f(w^*), x - w^* \rangle + \frac{L}{2} \| x-w^* \|^2 - \left(f(w^*) + \frac{\kappa}{2} \| w^* - x \|^2 \right) \\
  & =  \frac{L+\kappa}{2} \| w^* - x \|^2 \\
  & =  \frac{L+\kappa}{2\kappa^2} \| \nabla F(x) \|^2. 
   \end{split}
\end{equation*}
\end{proof}

The inequality in the above proof 
relies on the smoothness of~$f$, which does not hold for composite problems.
The next lemma addresses this issue.

\begin{lemma}[\bfseries Warm start for primal methods - composite case]\label{lemma:restart composite}
   Consider the composite optimization problem $f= f_0 +\psi$, where $f_0$ is $L$-smooth.  By initializing  with 
   \begin{equation}
      w_0 = \argmin_{w \in \R^d} \left \{ f_0(x)+ \langle \nabla f_0(x),w-x \rangle + \frac{L+\kappa}{2} \Vert w-x \Vert^2 + \psi(w)  \right \}, \label{eq:restart2}
   \end{equation}
we have, $$ h(w_0) - h^* \leq \frac{L+\kappa}{2\kappa^2} \Vert \nabla F(x) \Vert^2 . $$
\end{lemma}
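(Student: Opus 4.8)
The plan is to recognize the warm-start point $w_0$ as the output of a single proximal-gradient step on the subproblem $h$, and then to invoke the standard one-step progress inequality of proximal gradient descent, taking the minimizer of $h$ as the comparison point. This mirrors the smooth argument of Lemma~\ref{lemma:restart}, replacing the plain descent inequality (which is unavailable because $f$ is nonsmooth) by its composite counterpart.

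First, I would split $h$ into its smooth and nonsmooth parts by writing $h = \tilde f_0 + \psi$ with $\tilde f_0(w) \defin f_0(w) + \frac{\kappa}{2}\|w-x\|^2$. Since $f_0$ is convex and $L$-smooth, $\tilde f_0$ is convex and $(L+\kappa)$-smooth. Because $\nabla \tilde f_0(x) = \nabla f_0(x)$ and $\tilde f_0(x) = f_0(x)$, the definition~(\ref{eq:restart2}) of $w_0$ is exactly the proximal-gradient update on $h$ from $x$ with step size $1/(L+\kappa)$, i.e. $w_0 = \argmin_w \{ \tilde f_0(x) + \langle \nabla \tilde f_0(x), w-x\rangle + \frac{L+\kappa}{2}\|w-x\|^2 + \psi(w)\}$.

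Second, I would establish the fundamental prox-gradient inequality: for the comparison point $u = w^\star \defin p(x)$, the minimizer of $h$,
\[
 h(w_0) \leq h(w^\star) + \frac{L+\kappa}{2}\|w^\star - x\|^2 - \frac{L+\kappa}{2}\|w^\star - w_0\|^2 .
\]
To derive it, I would write the first-order optimality condition for the prox subproblem defining $w_0$, which produces a subgradient $s \in \partial \psi(w_0)$ satisfying $\nabla f_0(x) + (L+\kappa)(w_0 - x) + s = 0$. Then I would combine three ingredients: the descent lemma for the $(L+\kappa)$-smooth $\tilde f_0$ applied between $x$ and $w_0$; convexity of $\tilde f_0$ to lower-bound $\tilde f_0(x)$ against its value at $w^\star$; and convexity of $\psi$ to control $\psi(w_0)$ through the subgradient $s$. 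The linear cross-terms carrying $\nabla f_0(x) + s = -(L+\kappa)(w_0-x)$ then collapse via the three-point identity $-\langle w_0 - x, w_0 - w^\star\rangle + \frac{1}{2}\|w_0 - x\|^2 = \frac{1}{2}\|w^\star - x\|^2 - \frac{1}{2}\|w^\star - w_0\|^2$. Finally, I would drop the nonpositive term $-\frac{L+\kappa}{2}\|w^\star - w_0\|^2$ and substitute $\|w^\star - x\|^2 = \|p(x) - x\|^2 = \frac{1}{\kappa^2}\|\nabla F(x)\|^2$, which follows directly from $\nabla F(x) = \kappa(x - p(x))$ in~(\ref{eq:gradF}), yielding exactly $h(w_0) - h^\star \leq \frac{L+\kappa}{2\kappa^2}\|\nabla F(x)\|^2$.

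The hard part will be deriving the fundamental prox-gradient inequality in a self-contained way: the bookkeeping with the subgradient $s$ and its cancellation against $\nabla f_0(x)$ (so that the combined linear term equals $-(L+\kappa)\langle w_0 - x, w_0 - w^\star\rangle$) is the delicate step, after which the three-point identity finishes the argument mechanically. Everything else reduces to the same prox-center computation $\|p(x)-x\| = \|\nabla F(x)\|/\kappa$ already used in the smooth case.
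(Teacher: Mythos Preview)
Your proposal is correct and takes essentially the same approach as the paper. The only difference is cosmetic: the paper invokes the one-step prox-gradient inequality directly by citing \cite[Lemma~2.3]{fista} in the form $h(w)-h(w_0)\geq \tfrac{L'}{2}\|w_0-x\|^2 + L'\langle w_0-x,x-w\rangle$ and then completes the square, whereas you rederive this same inequality from the subgradient optimality condition, the descent lemma for $\tilde f_0$, and the three-point identity; after that, both proofs drop the nonpositive residual and substitute $\|w^\star-x\|=\|\nabla F(x)\|/\kappa$ identically.
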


\begin{proof}
 We use the inequality corresponding to Lemma 2.3 in \cite{fista}: for any $w$,
 \begin{equation}\label{eqfista}
    h(w) - h(w_0) \geq \frac{L'}{2} \Vert w_0 - x \Vert^2 + L' \langle w_0 -x, x-w \rangle,
 \end{equation}
   with $L'=L+\kappa$. Then, we apply this inequality to $w = w^*$, and
 \begin{displaymath}
 h(w_0) -h^*  \leq - \frac{L'}{2} \Vert w_0 - x \Vert^2 - L' \langle w_0 -x,x-w^* \rangle
 \leq  \frac{L'}{2} \Vert x - w^* \Vert^2 
 =  \frac{L+\kappa}{2\kappa^2}\Vert \nabla F(x) \Vert^2.
 \end{displaymath}
\end{proof} 
We get an initialization of the same quality in the composite case as in the smooth case, by performing an additional proximal step. It is important to remark that the above analysis do not require strong convexity of $f$, which allows us to derive the desired inner-loop complexity.
\begin{proposition}[\bfseries Inner-loop complexity for Algorithm~\ref{alg:newtonizer}]\label{remarkepsilon1}
    Consider Algorithm~\ref{alg:newtonizer} with the warm start strategy described
   in Lemma~\ref{lemma:restart} or in Lemma~\ref{lemma:restart composite}. 
   Assume that the optimization method~$\mtd$ applied in the inner loop produces a sequence $(w_t)_{t \geq 0}$ for
   each sub-problem~(\ref{eq:sub prob}) such that
   \begin{equation}
      h(w_t) - h^\star \leq C_\mtd (1- \tau_{\mtd})^t (h(w_0)- h^\star)~~\text{for some constants}~~C_{\mtd}, \tau_{\mtd} > 0. \label{eq:primal2}
   \end{equation}
   Then, the stopping criterion $\varepsilon \leq \frac{1}{72\kappa} \Vert g \Vert^2$ is achieved 
   in at most $T_{\mtd}$ iterations with
   \begin{displaymath}
     T_{\mtd} = \frac{1}{\tau_{\mtd}} \log \left ( 38 C_\mtd \frac{L+\kappa}{\kappa}  \right ) .
   \end{displaymath}

\end{proposition}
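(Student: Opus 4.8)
The plan is to combine the warm-start guarantee with the linear convergence
assumption on $\mtd$ and the key observation that the target accuracy
$\varepsilon = \frac{1}{72\kappa}\Vert g\Vert^2$ is itself governed by the
\emph{unknown} exact gradient $\nabla F(x)$ through Corollary~\ref{cor:1}.
The central difficulty is that the stopping criterion is adaptive: the
right-hand side $\frac{1}{72\kappa}\Vert g\Vert^2$ depends on the current iterate
$z$ (hence on $g=\kappa(x-z)$), which changes as $\mtd$ runs, so I cannot simply
plug a fixed target accuracy into~(\ref{eq:primal2}). The trick is to bound the
number of iterations sufficient to reach a \emph{related} fixed accuracy
expressed in terms of $\Vert\nabla F(x)\Vert$, and then show that reaching that
accuracy forces the adaptive criterion to be met as well.

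First I would control the initial gap. Both warm-start lemmas
(Lemma~\ref{lemma:restart} and Lemma~\ref{lemma:restart composite}) give the
uniform bound
\begin{equation*}
   h(w_0) - h^\star \leq \frac{L+\kappa}{2\kappa^2}\Vert\nabla F(x)\Vert^2,
\end{equation*}
so the linear rate~(\ref{eq:primal2}) yields, after $t$ iterations,
\begin{equation*}
   h(w_t) - h^\star \leq C_\mtd (1-\tau_\mtd)^t \frac{L+\kappa}{2\kappa^2}
   \Vert\nabla F(x)\Vert^2.
\end{equation*}
Next I would establish that it suffices to drive $h(w_t)-h^\star$ below a fixed
multiple of $\Vert\nabla F(x)\Vert^2$. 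The natural target is
$\varepsilon \leq \frac{1}{36\kappa}\Vert\nabla F(x)\Vert^2$, because
Corollary~\ref{cor:1} applied with this $\varepsilon$ (taking $c=1/36<1/4$)
guarantees $\Vert g\Vert^2 \geq \frac{1-4c}{2}\,\Vert\nabla F(x)\Vert^2
= \frac{7}{18}\,\Vert\nabla F(x)\Vert^2$ on the ``wrong'' side and, more
usefully, that once $h(w_t)-h^\star$ is small relative to
$\Vert\nabla F(x)\Vert^2$ it is automatically small relative to $\Vert g\Vert^2$.
Concretely, if $\varepsilon \leq \frac{1}{36\kappa}\Vert\nabla F(x)\Vert^2$ then
the lower bound on $\Vert g\Vert^2$ from Corollary~\ref{cor:1} gives
$\Vert\nabla F(x)\Vert^2 \leq \frac{18}{7}\Vert g\Vert^2 \leq
\frac{72}{18}\cdot\frac{7}{18}^{-1}\dots$; the point is that a constant-factor
comparison converts the bound in terms of $\Vert\nabla F(x)\Vert^2$ into the
adaptive bound $\varepsilon\leq\frac{1}{72\kappa}\Vert g\Vert^2$ demanded by the
stopping criterion.

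The final step is to read off $T_\mtd$ from the geometric decay. Requiring
\begin{equation*}
   C_\mtd (1-\tau_\mtd)^t \frac{L+\kappa}{2\kappa^2}\Vert\nabla F(x)\Vert^2
   \leq \frac{1}{36\kappa}\Vert\nabla F(x)\Vert^2,
\end{equation*}
the factors $\Vert\nabla F(x)\Vert^2$ cancel, and solving for $t$ using
$\log(1-\tau_\mtd)\leq -\tau_\mtd$ gives
\begin{equation*}
   t \geq \frac{1}{\tau_\mtd}\log\!\left(18\,C_\mtd\,\frac{L+\kappa}{\kappa}\right),
\end{equation*}
which up to the precise constant matches the claimed
$T_\mtd=\frac{1}{\tau_\mtd}\log\!\left(74\,C_\mtd\,\frac{L+\kappa}{\kappa}\right)$;
the slightly larger constant $74$ (rather than $18$ or $36$) is exactly what one
obtains after tracking the two-sided constant-factor conversion between
$\Vert\nabla F(x)\Vert^2$ and $\Vert g\Vert^2$ through Corollary~\ref{cor:1}.
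The main obstacle, and the step I would treat most carefully, is this conversion:
because $\Vert g\Vert$ depends on the running iterate while $\Vert\nabla F(x)\Vert$
is fixed, one must argue that the \emph{fixed-target} iteration count is also a
valid bound for the \emph{adaptive} criterion, ensuring the constants line up so
that hitting the $\Vert\nabla F(x)\Vert^2$-target provably certifies the
$\Vert g\Vert^2$-criterion rather than merely being necessary for it.
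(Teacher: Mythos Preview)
Your overall plan matches the paper's proof: warm-start bound, geometric decay, then convert a fixed target in terms of $\Vert\nabla F(x)\Vert^2$ into the adaptive criterion in terms of $\Vert g\Vert^2$. However, the conversion step as you describe it has a genuine circularity. Corollary~\ref{cor:1} carries the hypothesis $\varepsilon \leq \frac{c}{\kappa}\Vert g\Vert^2$, which is exactly what you are trying to establish, so you cannot invoke it to lower-bound $\Vert g\Vert^2$. (You also read the inequality backwards: the corollary gives $\Vert\nabla F(x)\Vert^2 \geq \frac{1-4c}{2}\Vert g\Vert^2$, not $\Vert g\Vert^2 \geq \frac{1-4c}{2}\Vert\nabla F(x)\Vert^2$.) You correctly flag this as ``the main obstacle'' but do not actually close it.

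The paper avoids the circularity by going back to Lemma~\ref{lemma:simple}, whose right-hand inequality $\Vert\nabla F(x)\Vert^2 \leq 2\Vert g\Vert^2 + 4\kappa\varepsilon$ holds \emph{unconditionally} for any $\varepsilon$-accurate point. After warm start and decay one lands at $\varepsilon \leq \frac{1}{148\kappa}\Vert\nabla F(x)\Vert^2$ (this is why the constant is $74$: one needs $C_\mtd e^{-\tau_\mtd T}\frac{L+\kappa}{2\kappa} \leq \frac{1}{148}$). Substituting the unconditional bound and solving the linear inequality in $\varepsilon$ gives
\[
\varepsilon \leq \frac{1}{74\kappa}\Vert g\Vert^2 + \frac{1}{37}\varepsilon
\ \Longrightarrow\ \varepsilon \leq \frac{37}{36\cdot 74}\cdot\frac{1}{\kappa}\Vert g\Vert^2 = \frac{1}{72\kappa}\Vert g\Vert^2,
\]
which is the stopping criterion. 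So your intermediate target should be $\frac{1}{148\kappa}\Vert\nabla F(x)\Vert^2$ rather than $\frac{1}{36\kappa}$, and the conversion should cite Lemma~\ref{lemma:simple}, not Corollary~\ref{cor:1}.
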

\begin{proof}
   Consider at iteration $k$, we apply $\mtd$ to approximate the proximal mapping according to $x$. With the given $T_{\mtd}$ (which we abbreviate by $T$), we have 
   \begin{equation*}
   \begin{split}
   h(w_{T}) -h^* & \leq C_\mtd (1- \tau_{\mtd})^T (h(w_0)- h^\star) \\
		 & \leq C_\mtd e^{-\tau_{\mtd} T} (h(w_0)- h^\star) \\
		 & \leq C_\mtd e^{-\tau_{\mtd} T} \frac{L+\kappa}{2\kappa^2} \| \nabla F(x) \|^2 \quad (\text{By Lemma~\ref{lemma:restart} and Lemma~\ref{lemma:restart composite}}) \\
		 & = \frac{1}{76 \kappa} \| \nabla F(x) \|^2 \\
		 & \leq \frac{1}{36 \kappa} \| g \|^2,
   \end{split}
  \end{equation*}
  where the last inequality follows from Lemma~\ref{lemma:simple}.
\end{proof}

Next, we extend the previous result obtained with deterministic methods~$\mtd$ to randomized ones, where linear convergence is
only achieved in expectation. The proof is a simple application of Lemma~C.1 in \cite{catalyst} (see also~\cite{cartis2017global} for related results on the expected complexity of randomized algorithms).
\begin{rem}[\bf When $\mtd$ is non-deterministic] 
Assume that the optimization method~$\mtd$ applied to each sub-problem~(\ref{eq:sub prob}) produces a sequence $(w_t)_{t \geq 0}$ such that
   \begin{equation*}
     \E[ h(w_t) - h^\star] \leq C_\mtd (1- \tau_{\mtd})^t (h(w_0)- h^\star)~~\text{for some constants}~~C_{\mtd}, \tau_{\mtd} > 0. 
   \end{equation*}  
   We define the stopping time $T_\mtd$ by  
   \begin{equation}
   	T_\mtd = \inf \left \{ t\geq 1 \,\, | \,\, h(w_t)-h^*\leq \frac{1}{36\kappa} \Vert g_t \Vert^2 \right \}, \quad \text{where} \quad g_t = \kappa (x - w_t),
   \end{equation}
   which is the random variable corresponding to the minimum number of iterations to guarantee the stopping condition (\ref{eq:stop condition}). Then, when the warm start strategy described in Lemma~\ref{lemma:restart} or in Lemma~\ref{lemma:restart composite} is applied, the expected number of iterations satisfies
   \begin{equation}\label{eq:expectedT}
   	\E[T_\mtd] \leq \frac{1}{\tau_\mtd} \log \left ( 76 C_\mtd \frac{L+\kappa}{\tau_\mtd \kappa} \right ) +1.
   \end{equation}
   \end{rem}
   
   \begin{rem} [\bf Checking the stopping criterium]
   	It is worth to notice that the stopping criterium (\ref{eq:stop condition}), i.e. $h(w) - h^* \leq \frac{\kappa}{36} \Vert w-x \Vert^2$, can not be directly checked since $h^*$ is unknown. Nevertheless, an upper bound on the optimality gap $h(w) - h^*$ is usually available. In particular,
   	\begin{itemize}
   		\item When $f$ is smooth, which implies $h$ is smooth, we have 
   		\begin{equation}
   			h(w) - h^* \leq \frac{1}{2(\mu+\kappa)}\Vert \nabla h(w) \Vert^2.
   		\end{equation}
   		\item Otherwise, we can evaluate the Fenchel conjugate function, which is a natural lower bound of $h^*$, see Section D.2.3 in~\cite{mairal2010sparse}.  
   	\end{itemize}
   \end{rem}

\subsection{Global complexity of~\qning}\label{subsec:global}
Finally, we can use the previous results to upper-bound the complexity of the
\qningsp algorithm in terms of iterations of the method~$\mtd$ for minimizing~$f$ up to~$\varepsilon$.

\begin{proposition}[\bfseries Worst-case global complexity for Algorithm~\ref{alg:newtonizer}]\label{global complexity}
Given a linearly-convergent  method~$\mtd$ satisfying (\ref{eq:assumption}), we apply $\mtd$ to solve the sub-problems of Algorithm~\ref{alg:newtonizer} with the warm start strategy given in Lemma~\ref{lemma:restart} or Lemma~\ref{lemma:restart composite} up to accuracy $\varepsilon_k \leq \frac{1}{36\kappa} \Vert g_k \Vert^2$. Then, the number of iterations of the method~$\mtd$ 
   to guarantee the optimality condition $f(z_k)-f^\star \leq \varepsilon$ is
\begin{itemize}
 \item for $\mu$-strongly-convex problems:
 \begin{equation*}
   O \left( T_{\mtd} \times \frac{\mu+\kappa}{\mu} \log \left ( \frac{f(x_0)-f^*}{\varepsilon} \right) \right ) = O \left( \frac{\mu+\kappa}{\tau_\mtd \mu } \log \left ( \frac{f(x_0)-f^*}{\varepsilon} \right)  \log \left ( 38 C_\mtd \frac{L+\kappa}{\kappa}  \right )  \right ).
 \end{equation*}
 \item for convex problems with bounded level sets:
  \begin{equation*}
   O \left ( T_{\mtd} \times \frac{2\kappa R^2}{\varepsilon} \right ) = O\left ( \frac{2\kappa R^2}{\tau_\mtd \varepsilon} \log \left ( 38 C_\mtd \frac{L+\kappa}{\kappa}  \right ) \right ) .
 \end{equation*}
\end{itemize}
\end{proposition}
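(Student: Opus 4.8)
The plan is to combine the outer-loop convergence rates established in Proposition~\ref{prop:stronglyconvex} and Proposition~\ref{prop:convex} with the per-sub-problem iteration bound of Proposition~\ref{remarkepsilon1}. The total number of iterations of $\mtd$ factorizes as the product of three quantities: (i) the number $K$ of outer iterations needed so that $f(z_K)-f^\star \leq \varepsilon$; (ii) the number of line-search trials performed within each outer iteration; and (iii) the number of inner iterations of $\mtd$ spent on each sub-problem. First I would argue that (ii) is bounded by a constant: by the discussion following Lemma~\ref{lem:suffdescent}, the line search tries at most $i_{\max}+1$ values of $\eta_k$ and always terminates with $\eta_k=0$ satisfying the descent condition, so the number of calls to \gradestb{} per outer iteration is $O(1)$. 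Next, for (iii), I would observe that each such call is made at some prox center $x_{\text{test}}$ and is warm-started according to Lemma~\ref{lemma:restart} or Lemma~\ref{lemma:restart composite}; since those warm-start guarantees hold for \emph{any} prox center, Proposition~\ref{remarkepsilon1} applies verbatim to every trial and bounds (iii) by $T_\mtd = \tfrac{1}{\tau_\mtd}\log\!\big(74\,C_\mtd\tfrac{L+\kappa}{\kappa}\big)$. Hence each outer iteration costs $O(T_\mtd)$ iterations of $\mtd$.

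For the strongly-convex case, I would bound (i) using the corollary to Proposition~\ref{prop:stronglyconvex}, which gives $f(z_k)-f^\star \leq (1-\tfrac{1}{16q})^k(f(x_0)-f^\star)$. Applying $1-u\leq e^{-u}$, the right-hand side drops below $\varepsilon$ once
\[
   k \geq 16q\,\log\!\left(\frac{f(x_0)-f^\star}{\varepsilon}\right),
\]
so $K = O\!\big(q\log(\tfrac{f(x_0)-f^\star}{\varepsilon})\big)$ with $q=\tfrac{\mu+\kappa}{\mu}$. Multiplying $K$ by the $O(T_\mtd)$ cost per outer iteration yields the first claimed bound; substituting the explicit value of $T_\mtd$ gives the second, fully expanded expression.

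For the convex case with bounded level sets, I would instead invoke Proposition~\ref{prop:convex}, which guarantees $f(z_k)-f^\star \leq \tfrac{8\kappa R^2}{k}$; this falls below $\varepsilon$ once $k = O(\tfrac{\kappa R^2}{\varepsilon})$, so $K = O(\tfrac{\kappa R^2}{\varepsilon})$. Again multiplying by $O(T_\mtd)$ and substituting $T_\mtd$ produces the stated complexity, the numerical constant being absorbed into the $O(\cdot)$ notation. The main obstacle I anticipate is the bookkeeping for the line search in step (ii): one must be sure that the constant number of extra \gradestb{} calls it triggers---each at a different, a priori uncontrolled prox center $x_{\text{test}}$---still enjoys the warm-start quality of Lemmas~\ref{lemma:restart}--\ref{lemma:restart composite}, so that none of these auxiliary evaluations degrades the per-iteration cost beyond a constant factor. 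Everything else reduces to the routine inequality $1-u\le e^{-u}$ and to the clean separation of the outer rate from the inner cost.
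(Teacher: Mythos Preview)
Your proposal is correct and follows exactly the same decomposition as the paper's own proof, which simply states that the total number of calls of~$\mtd$ is $T_{\mtd}$ times the number of outer-loop iterations times the number of line-search steps, the latter being a constant hidden in the $O(\cdot)$ notation. In fact you supply more detail than the paper does, including the explicit outer-iteration counts from Propositions~\ref{prop:stronglyconvex} and~\ref{prop:convex} and the observation that the warm-start lemmas apply at every trial prox center $x_{\text{test}}$.
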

\begin{proof}
The total number of calls of method $\mtd$ is simply $T_{\mtd}$ times the
   number of outer-loop iterations times the potential number of line search steps at each iteration (which is hidden in the $O(.)$ notation since this number can be made arbitrarily small).
\end{proof}

\begin{rem}
	For non-deterministic methods, applying (\ref{eq:expectedT}) yields a  global complexity in expectation similar to the previous result with additional constant $2/\tau_\mtd$ in the last $\log$ factor.
\end{rem}

As we shall see, the global complexity of our algorithm is mainly controlled by the smoothing parameter~$\kappa$. Unfortunately, under the current analysis, our algorithm \qningsp does not lead to an improved convergence rate in terms of the worst-case complexity bounds. It is worthwhile to underline, though, that this result is not
surprising since it is often the case for L-BFGS-type methods, for which an
important gap remains between theory and practice.  Indeed, 
L-BFGS often outperforms the vanilla gradient descent method in many practical cases, but never in theory, which turns out to be the bottleneck in our analysis. 

We give below the worst-case global complexity of \qningsp when applied to two optimization methods~$\mtd$ of interest. Proposition~\ref{global complexity} and its application to the two examples show that, in terms of worse-case complexity, the \qningsp scheme leaves
the convergence rate almost unchanged. 

\begin{example}
Consider gradient descent with fixed constant step-size $1/L$ as the optimization method $\mtd$. 
Directly applying gradient descent (GD) to minimize $f$ requires 
$$~O(L/\mu \log(1/\epsilon))~$$ 
iterations to  achieve $\varepsilon$ accuracy. 
The complexity to achieve the same accuracy with \qning-GD is in the worst case
$$~\tilde{O}((L+\kappa)/\mu \log(1/\epsilon)).$$ 
\end{example}

\begin{example}
Consider the stochastic variance-reduced gradient (SVRG) as the optimization method $\mtd$. 
SVRG minimizes $f$ to $\varepsilon$ accuracy in 
\begin{equation*}
O \left (\max \left \{n, \frac{L}{\mu} \right \} \log \left ( \frac{1}{\varepsilon} \right ) \right ) 
\end{equation*}
iterations in expectation. 
\qning-SVRG achieves the same result with the worst-case expected complexity
\begin{equation*}
\tilde{O} \left (\max \left \{ \frac{\mu+\kappa}{\mu} n, \frac{L+\kappa}{\mu} \right \} \log \left ( \frac{1}{\varepsilon} \right ) \right ).\; 
\end{equation*}
\end{example}

\paragraph{Choice of $\bm \kappa$.}
Minimizing the above worst-case complexity respect to $\kappa$ suggests that $\kappa$ should be chosen as small as possible. However, such statement is based on the pessimistic theoretical analysis of L-BFGS-type method, which is not better than standard gradient descent methods. Noting that for smooth functions, L-BFGS method often outperforms Nesterov's accelerated gradient method, it is reasonable to expect they achieve a similar complexity bound. In other words, the choice of $\kappa$ may be substantially different if one is able to show that L-BFGS-type method enjoys an accelerated convergence rate.  

In order to illustrate the difference, we heuristically assume that L-BFGS method enjoys a similar convergence rarte as Nesterov's accelerated gradient method. Then, the global complexity of our algorithm \qningsp matches the complexity of the related Catalyst acceleration scheme~\cite{catalyst}, which will be
 \begin{equation*}
   \tilde{O} \left( \frac{1}{\tau_\mtd} \sqrt{\frac{\mu+\kappa}{\mu}} \log \left ( \frac{1}{\varepsilon} \right) \right ) ,
 \end{equation*} 
for $\mu$-strongly-convex problems. In such case, the complexity of \qning-GD and \qning-SVRG will be 
$$~\tilde{O}\left(\frac{L+\kappa}{\sqrt{(\mu+\kappa) \mu}}\log(1/\epsilon) \right ) \quad \text{ and } \quad  \tilde{O} \left (\max \left \{ \sqrt{\frac{\mu+\kappa}{\mu}} n, \frac{L+\kappa}{\sqrt{(\mu+\kappa)\mu}} \right \} \log \left ( \frac{1}{\varepsilon} \right ) \right ),$$ 
which do enjoy acceleration by taking $\kappa = O(L)$ and $\kappa=O(L/n)$ respectively. In the following section, we will experiment with this heuristic, as if L-BFGS method enjoys an accelerated convergence rate. More precisely, we will choose the smoothing parameter $\kappa$ as in the related Catalyst acceleration scheme~\cite{catalyst}, and we present empirical evidence in support of this heuristic.

\section{Experiments and practical details}\label{sec:exp}
In this section, we present the experimental results obtained by applying
\qningsp to several first-order optimization algorithms. We start the section by presenting various benchmarks and practical parameter-tuning choices.
 Then, we study the performance of \qningsp applied to SVRG (Section~\ref{subsec:svrg}) and to the proximal gradient algorithm ISTA (Section~\ref{subsec:gd}), which reduces to gradient descent (GD) in the smooth case. We demonstrate that \qningsp can be viewed as an acceleration scheme: by applying \qning~to an optimization algorithm $\mtd$, we achieve better performance than when applying $\mtd$ directly to the problem. Besides, we also compare \qningsp to existing stochastic variants of L-BFGS algorithm in Section \ref{subsec:svrg}. Finally, we study the behavior of \qningsp under different choice of parameters in 
Section~\ref{subsec:param}. The code used for
      all the experiments is available at
      \url{https://github.com/hongzhoulin89/Catalyst-QNing/}.

\subsection{Formulations and datasets}\label{subsec:data}
We consider three common optimization problems in machine learning
and signal processing, including logistic regression, Lasso and linear regression with Elastic-Net regularization. These three formulations all admit the composite finite-sum structure but differ in terms of smoothness and strong-convexness. 
Specifically, the three formulations are listed below.
\begin{itemize}
   \item {\bfseries $\ell_2^2$-regularized Logistic Regression}:
    \begin{equation*}
    \min_{x \in \R^d} \quad \frac{1}{n} \sum_{i=1}^n \log\left(1+\exp(-b_i \,a_i^{T} x)\right) + \frac{\mu}{2} \Vert x \Vert^2,
    \end{equation*}
      which leads to a $\mu$-strongly convex smooth optimization problem.   \item {\bfseries $\ell_1$-regularized Linear Regression (LASSO)}: 
  \begin{equation*}
    \min_{x \in \R^d} \quad \frac{1}{2n} \sum_{i=1}^n   ( b_i -a_i^{T}x)^2 + \lambda \Vert x \Vert_1,
 \end{equation*}
      which is convex and non-smooth, but not strongly convex.
  \item {\bfseries $\ell_1-\ell_2^2$-regularized Linear Regression (Elastic-Net)}:
 \begin{equation*}
    \min_{x \in \R^d} \quad \frac{1}{2n} \sum_{i=1}^n ( b_i -a_i^{T}x )^2 + \lambda \Vert x \Vert_1+  \frac{\mu}{2} \Vert x \Vert^2,
 \end{equation*}
      which is based on the Elastic-Net regularization~\cite{zou2005regularization} leading to non-smooth strongly-convex problems.
\end{itemize}

For each formulation, 
we consider a training set $(a_i,b_i)_{i=1}^n$ of $n$ data points, where
the $b_i$'s are scalars in $\{-1,+1\}$ and the~$a_i$'s are feature vectors in~$\Real^d$.
Then, the goal is to fit a linear model $x$ in~$\Real^d$ such that the scalar $b_i$ can be well
predicted by the inner-product $ a_i^\top x$, or by its sign. Since we normalize the feature vectors~$a_i$, a natural upper-bound on the
Lipschitz constant~$L$ of the unregularized objective can be easily obtained
with $L_{\text{logistic}} = 1/4$, $L_{\text{elastic-net}} =1$ and
$L_{\text{lasso}} =1$.  

In the experiments, we consider relatively
ill-conditioned problems with the regularization parameter $\mu=1/(100n)$.
The $\ell_1$-regularization parameter is set to $\lambda = 1/n$ for the
Elastic-Net formulation; for the Lasso problem, we consider a  
logarithmic grid $10^i/n$, with $i=-3,-2,\ldots,3$, and we select the
parameter~$\lambda$ that provides a sparse optimal solution closest to $10\%$
non-zero coefficients.

\paragraph{Datasets.}
We consider five standard machine learning datasets with different
characteristics in terms of size and dimension, which are described below:
\vspace*{0.1cm}
 \begin{center}
    \begin{tabular}{|l|c|c|c|c|c|c|}
       \hline
       name & \textsf{covtype} & \textsf{alpha} & \textsf{real-sim}  & \textsf{MNIST-CKN} & \textsf{CIFAR-CKN} \\
       \hline
       $n$ & $581\,012$ & $250\,000$ & $72\,309$ & $60\,000$ & $50\,000$ \\
       \hline
       $d$ & $54$ & $500$ & $20\,958$ & $2\,304$ & $9\,216$ \\
       \hline
       \end{tabular}
 \end{center}
\vspace*{0.1cm}
The first three data sets are standard machine learning data sets from LIBSVM \cite{chang2011libsvm}. We normalize the features, which provides a natural estimate of the Lipschitz constant as mentioned previously. The last two data sets are coming from computer vision applications. MNIST and CIFAR-10 are two image classification data sets involving 10 classes. The feature representation of each image was computed  using an unsupervised convolutional kernel network~\cite{mairal2016end}. We focus here on the task of classifying class \#1 vs. other classes.

\subsection{Choice of hyper-parameters and variants}\label{subsec:variants}
We now discuss the choice of default parameters used in the experiments as
well as the different variants. First, to deal with the high-dimensional nature
of the data, we systematically use the L-BFGS metric~$H_k$ and maintain the
positive definiteness by skipping updates when necessary (see~\cite{friedlander2012hybrid}).

\paragraph{Choice of method~$\mtd$.} We apply \qningsp to 
proximal SVRG algorithm~\cite{proxsvrg} and proximal gradient algorithm. The proximal SVRG algorithm is an incremental algorithm that is able to exploit the finite-sum structure of the
objective and can deal with the composite regularization. We also consider the gradient descent algorithm and its proximal variant ISTA,
which allows us to perform a comparison with the natural baselines FISTA~\cite{fista} and L-BFGS.

\paragraph{Stopping criterion for the inner loop.} 
The default stopping criterion consists of solving each sub-problem with accuracy
$\varepsilon_k \leq \frac{1}{36} \Vert g_k \Vert^2$. 
Although we have shown that such accuracy is attainable in some constant $T=\tilde{O}(n)$ number of
iterations for SVRG with the choice $\kappa = L/2n$,
a natural heuristic proposed in Catalyst~\cite{catalyst2} consists of performing exactly
one pass over the data $T=n$ in the inner loop without checking any stopping
criterion. In particular, for gradient descent or ISTA, one pass over the data
means a single gradient step, because the evaluation of the full gradient
requires passing through the entire dataset.
When applying \qning~to SVRG and ISTA, we call the default algorithm using stopping criterion~(\ref{eq:condition_eps}) \textbf{\qning-SVRG}, \textbf{\qning-ISTA} and the one-pass variant
\textbf{\qning-SVRG1}, \textbf{\qning-ISTA1}, respectively. 

\paragraph{Choice of regularization parameter $\kappa$.}  We choose $\kappa$ as in the Catalyst algorithm~\cite{catalyst2}, which is $\kappa = L$ for gradient descent/ISTA and $\kappa = L/2n$ for SVRG. Indeed, convergence of L-BFGS is hard to characterize and its theoretical rate of convergence can be pessimistic as shown in our theoretical analysis. Noting that for smooth functions, L-BFGS often outperforms Nesterov's accelerated gradient method, it is reasonable to expect \qning~achieves a similar complexity bound as Catalyst. Later in Section~\ref{subsec:param}, we make a comparison between different values of $\kappa$ to demonstrate the effectiveness of this strategy.

\paragraph{Choice of limited memory parameter $l$.} The default setting is
$l=100$. We show later in Section~\ref{subsec:param} a comparison with
different values to study the influence of this parameter.

\paragraph{Implementation of the line search.} As mentioned earlier, we consider the stepsizes $\eta_k$ in the set $\{1,1/2,1/4,1/8,0\}$ and select the largest one that satisfies the descent condition.

\paragraph{Evaluation metric.} For all experiments, we use the number of gradient evaluations as a measure of complexity,
assuming this is the computational bottleneck of all methods considered. This
is indeed the case here since the L-BFGS step cost $O(l d)$ floating-point operations~\cite{nocedalbook},
whereas evaluating the gradient of the full objective costs $O(n d)$, with $l \ll n$. 

\subsection{\qning-SVRG for minimizing large sums of functions}\label{subsec:svrg}
We now apply \qning~to SVRG and compare different variants.
\begin{itemize}
   \item \textbf{SVRG}: the Prox-SVRG algorithm of \cite{proxsvrg} with default parameters $m=1$ and $\eta =1/L$, where~$L$ is the upper-bound on Lipschitz constant of the gradient, as described in the Section~\ref{subsec:data}.
   \item \textbf{Catalyst-SVRG}: The Catalyst meta-algorithm of~\cite{catalyst2} applied to Prox-SVRG, using the variant (C3) that performs best among the different variants of Catalyst.
   \item \textbf{L-BFGS/Orthant}: Since implementing effectively L-BFGS with a line-search algorithm is a bit involved, 
   we use the implementation of Mark Schmidt\footnote{available here \url{http://www.cs.ubc.ca/~schmidtm/Software/minFunc.html}}, 
   which has been widely used in other comparisons~\cite{sag}. In particular, the Orthant-wise method follows the algorithm developed in \cite{andrew2007scalable}.  We use L-BFGS for the logistic regression experiment and the Orthant-wise method \cite{andrew2007scalable} for elastic-net and lasso experiments. The limited memory parameter $l$ is  set to $100$.

\item \textbf{\qning-SVRG}: the algorithm according to the theory by solving the sub-problems until $\varepsilon_k \leq \frac{1}{36} \Vert g_k \Vert^2$.
\item \textbf{\qning-SVRG1}: the one-pass heuristic. 
\end{itemize}

\begin{figure}[hbtp!]
   \centering
   ~~\includegraphics[width=0.30\linewidth]{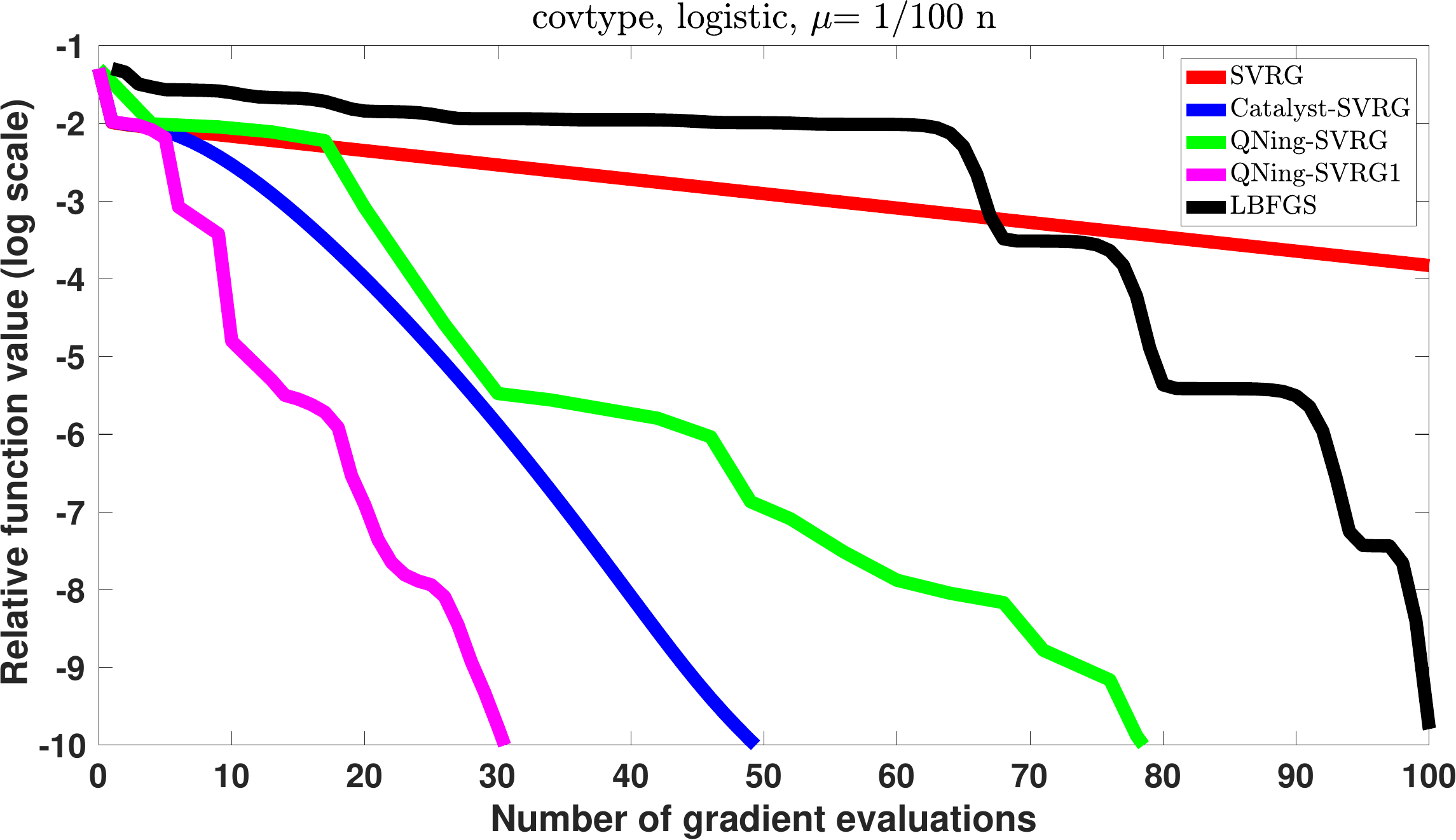} ~ 
   ~~\includegraphics[width=0.30\linewidth]{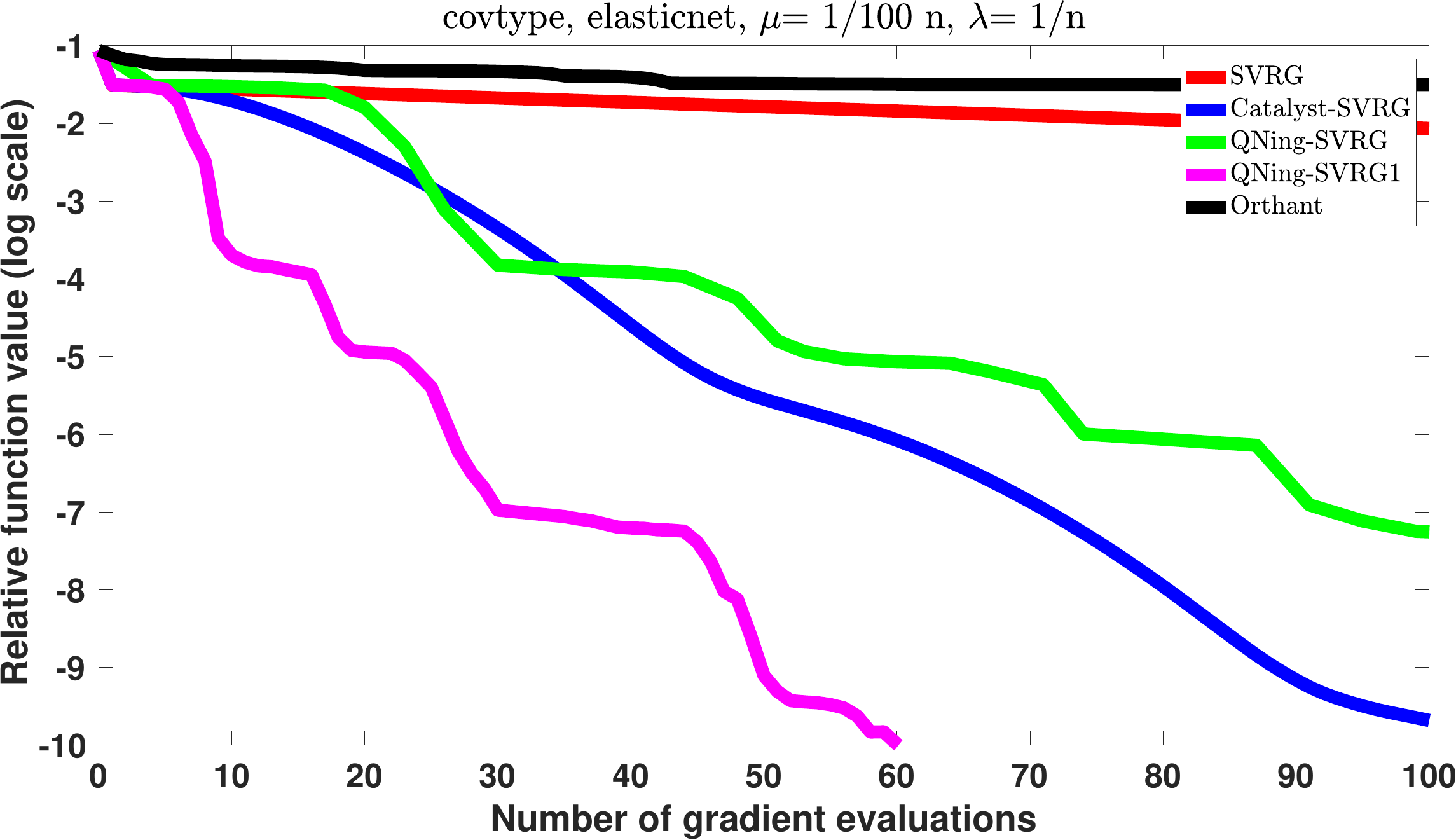} ~ 
   ~~\includegraphics[width=0.30\linewidth]{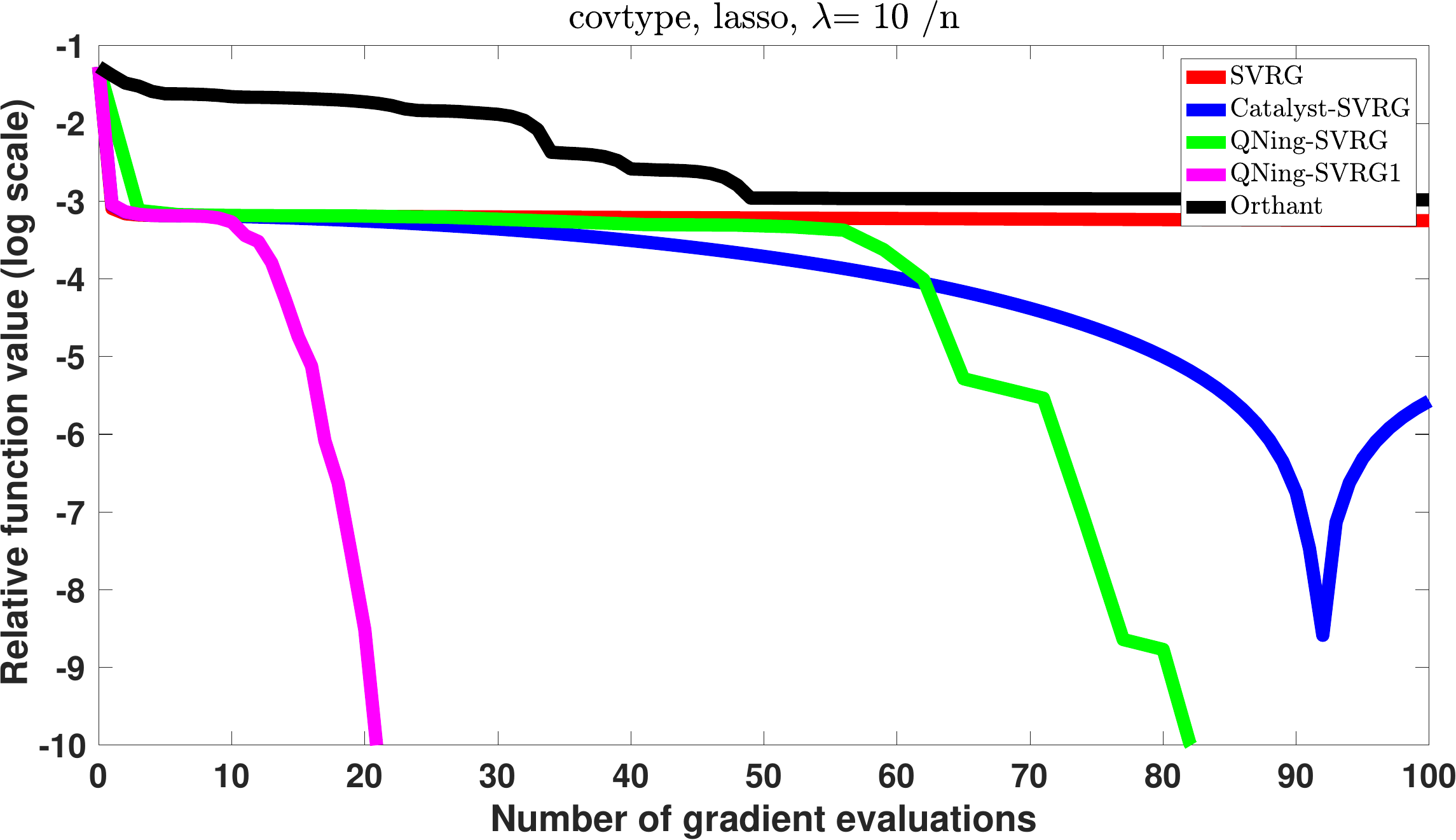} \\
   ~~\includegraphics[width=0.30\linewidth]{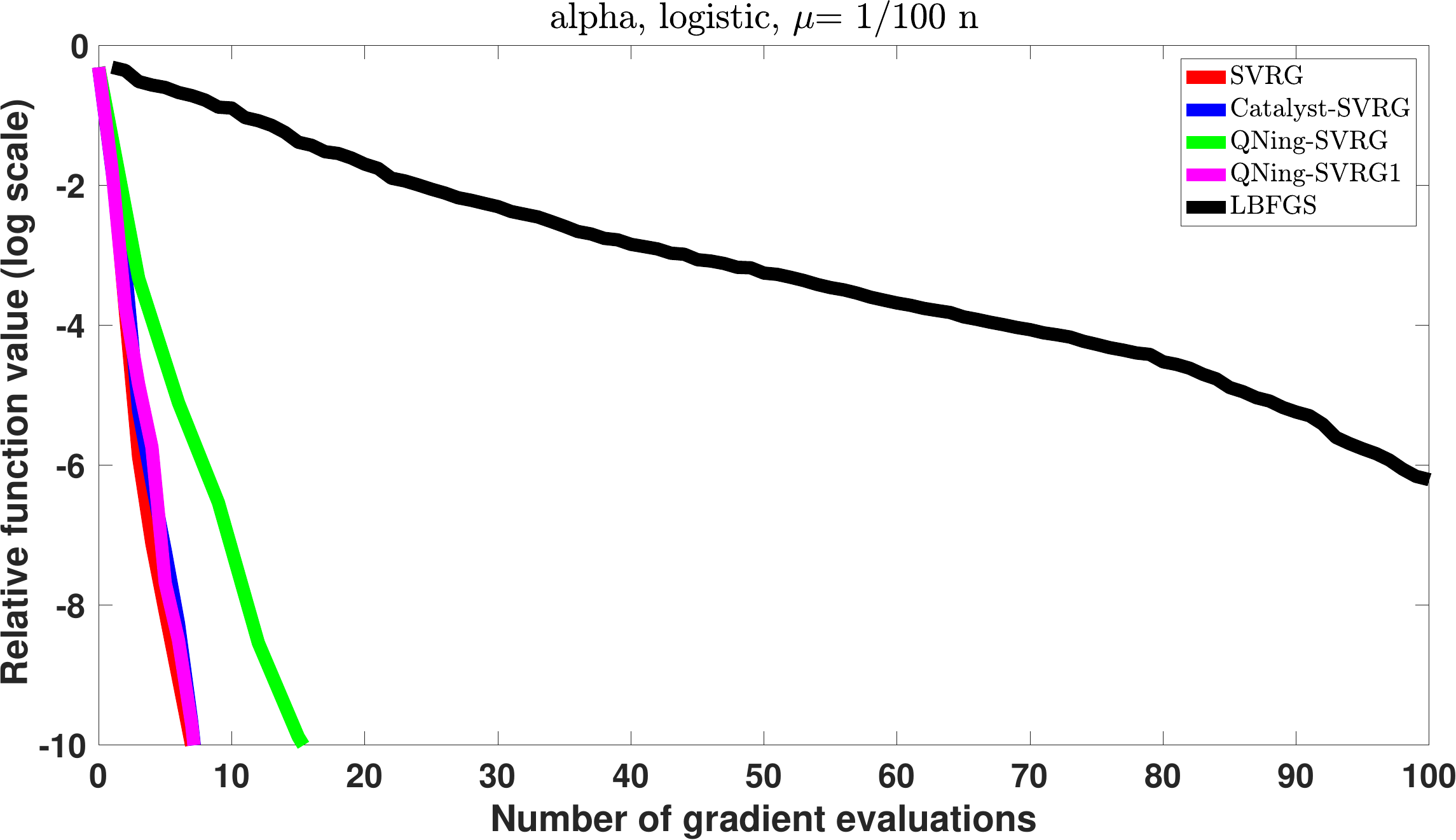} ~ 
   ~~\includegraphics[width=0.30\linewidth]{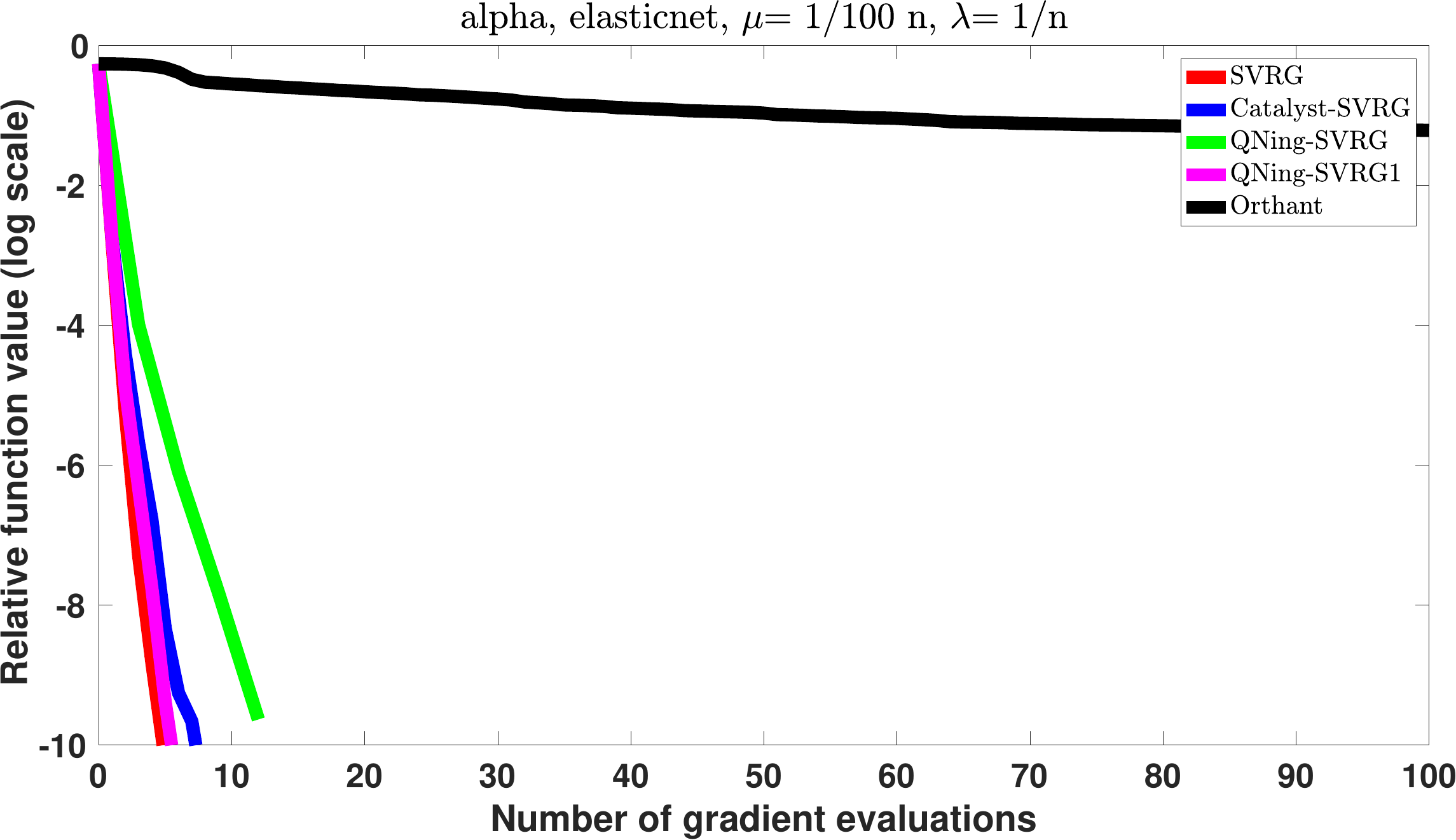} ~ 
   ~~\includegraphics[width=0.30\linewidth]{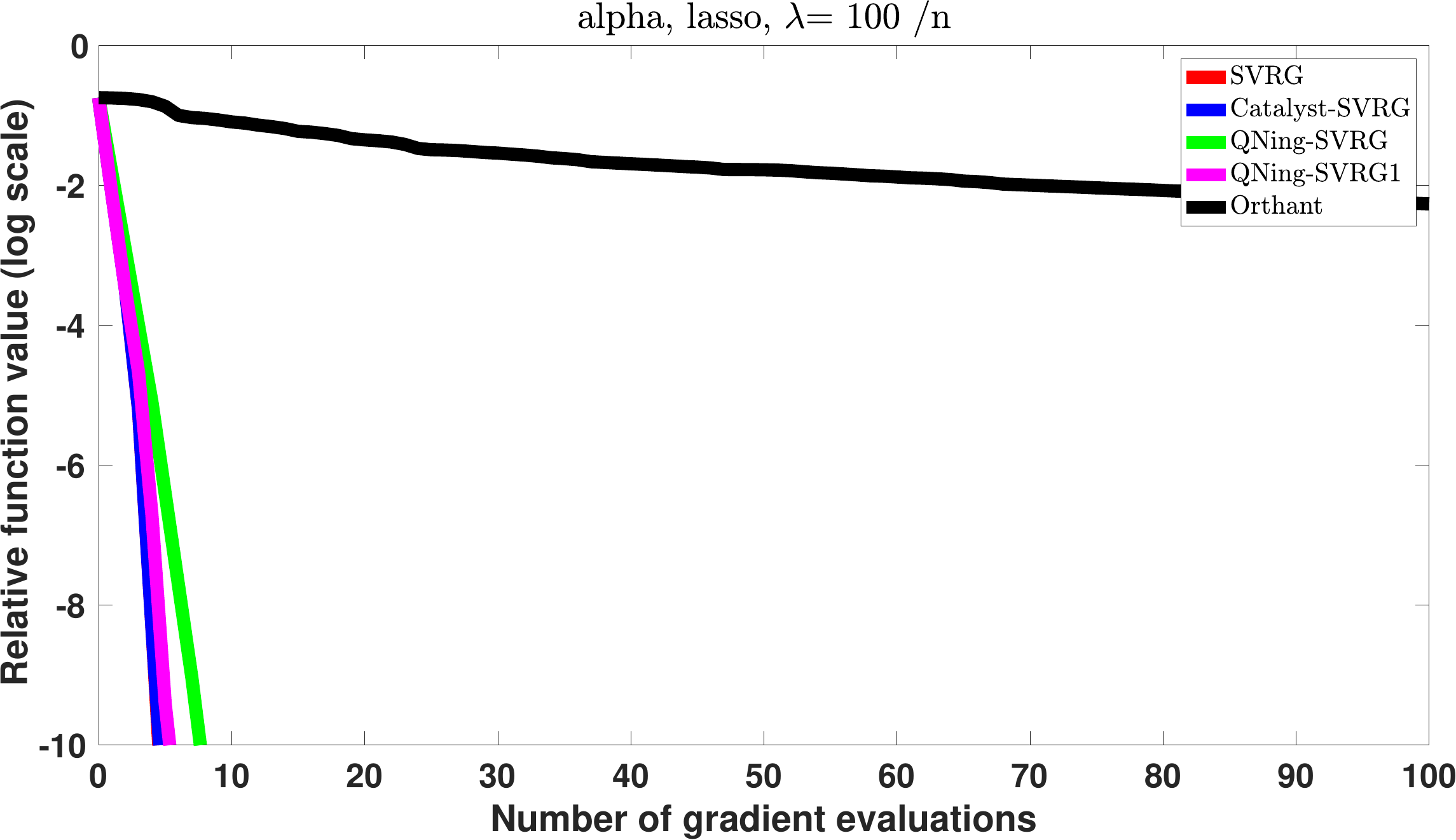}\\
   ~~\includegraphics[width=0.30\linewidth]{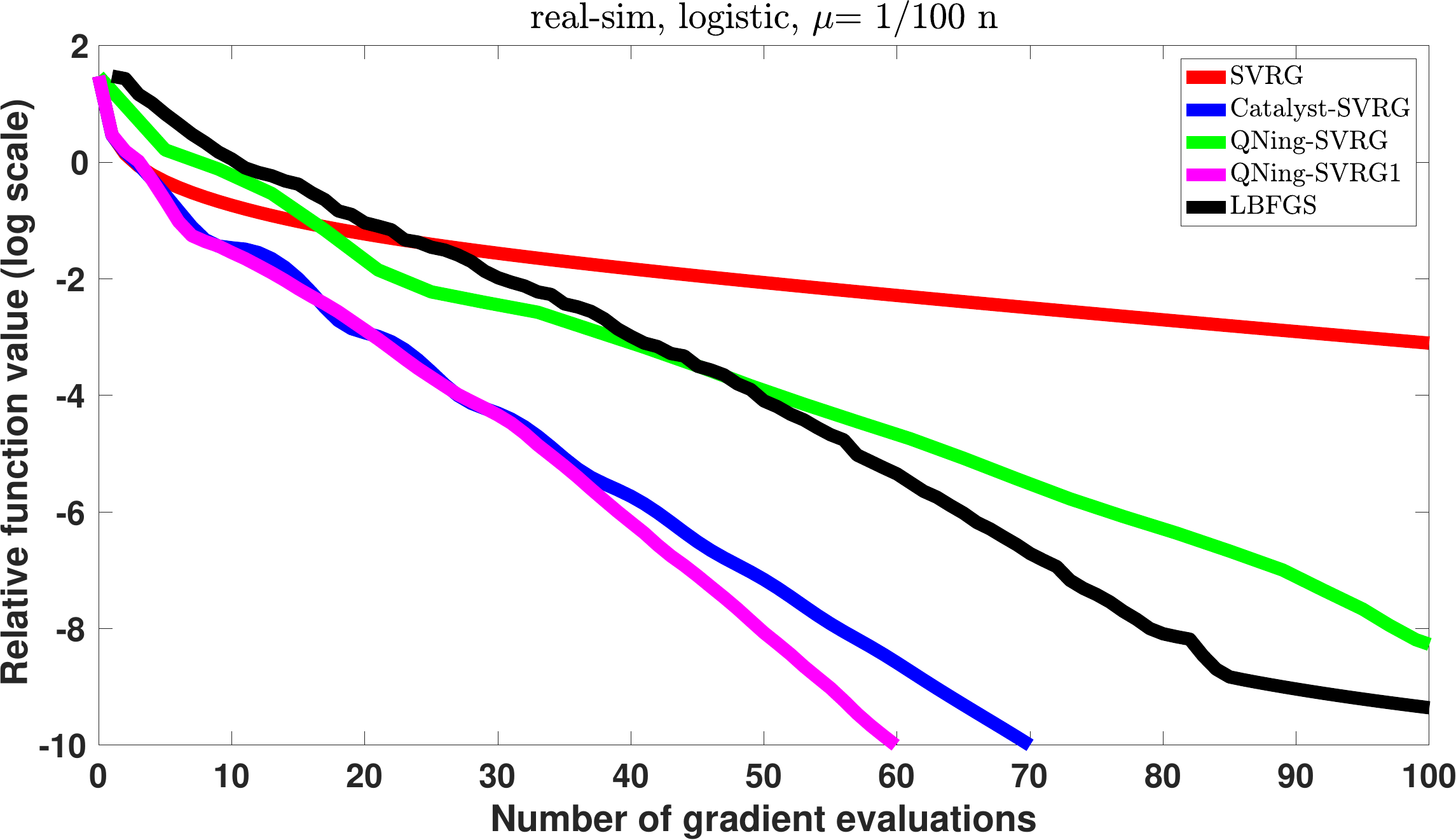} ~ 
   ~~\includegraphics[width=0.30\linewidth]{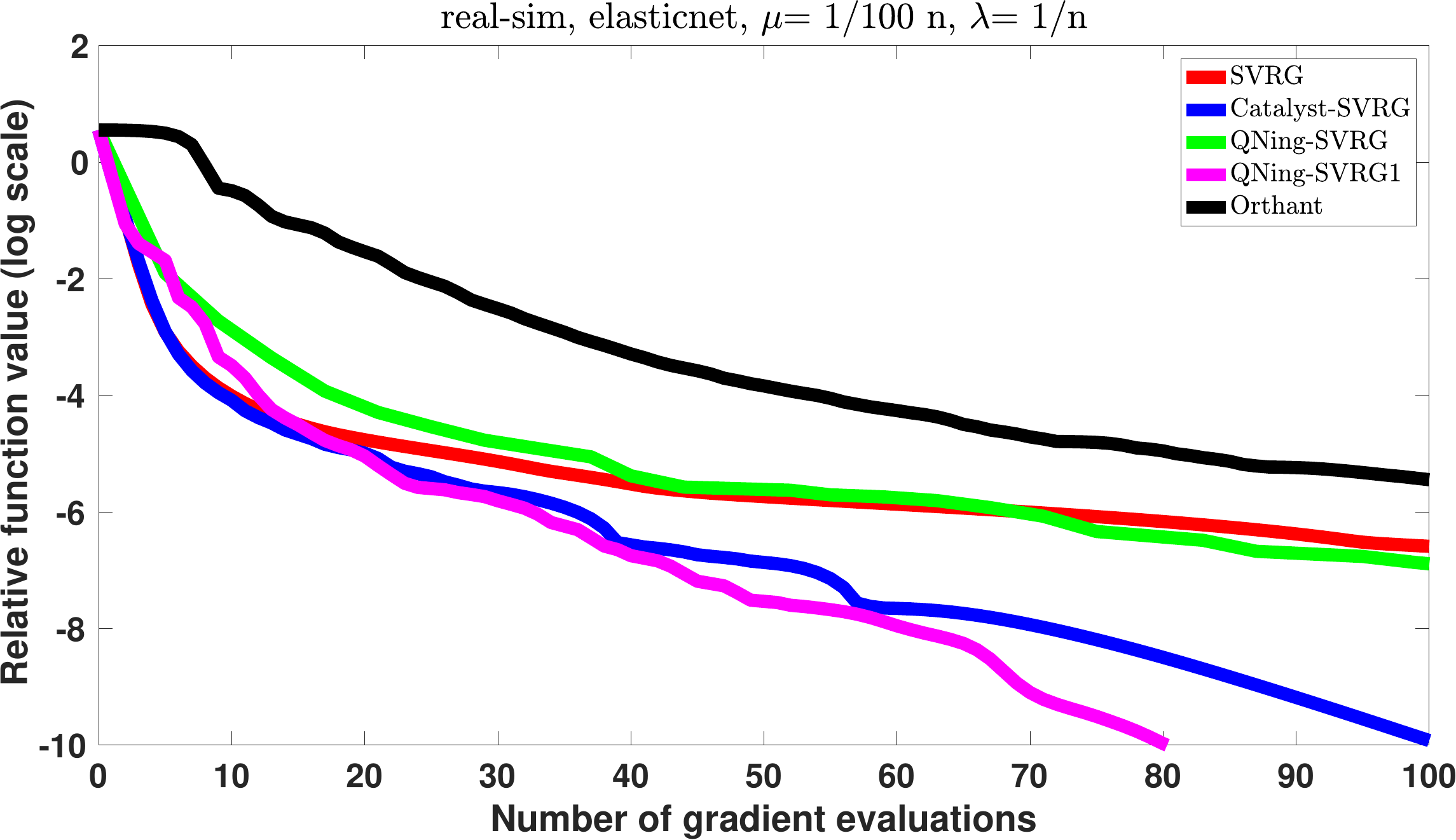} ~ 
   ~~\includegraphics[width=0.30\linewidth]{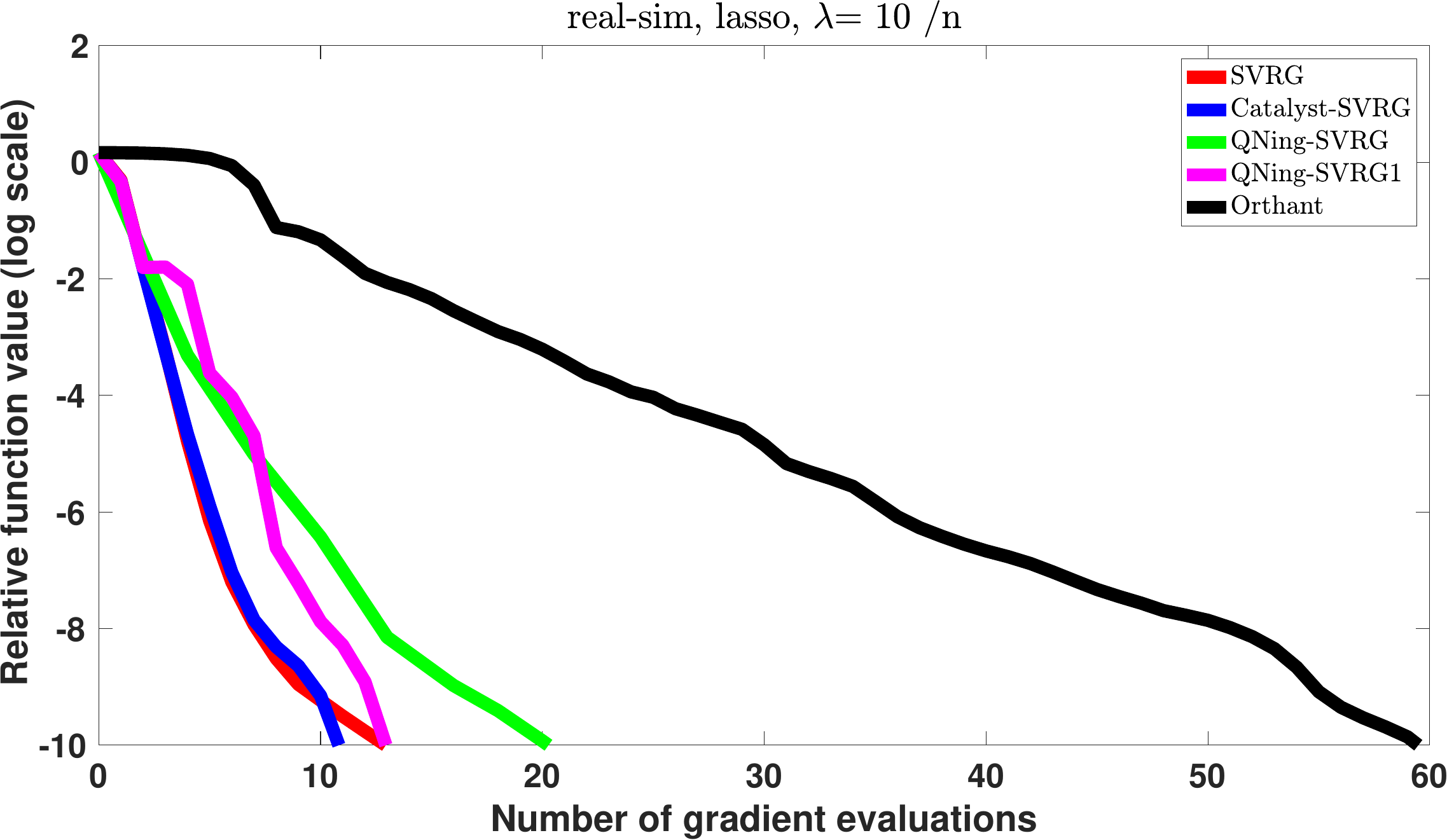} \\ 
   ~~\includegraphics[width=0.30\linewidth]{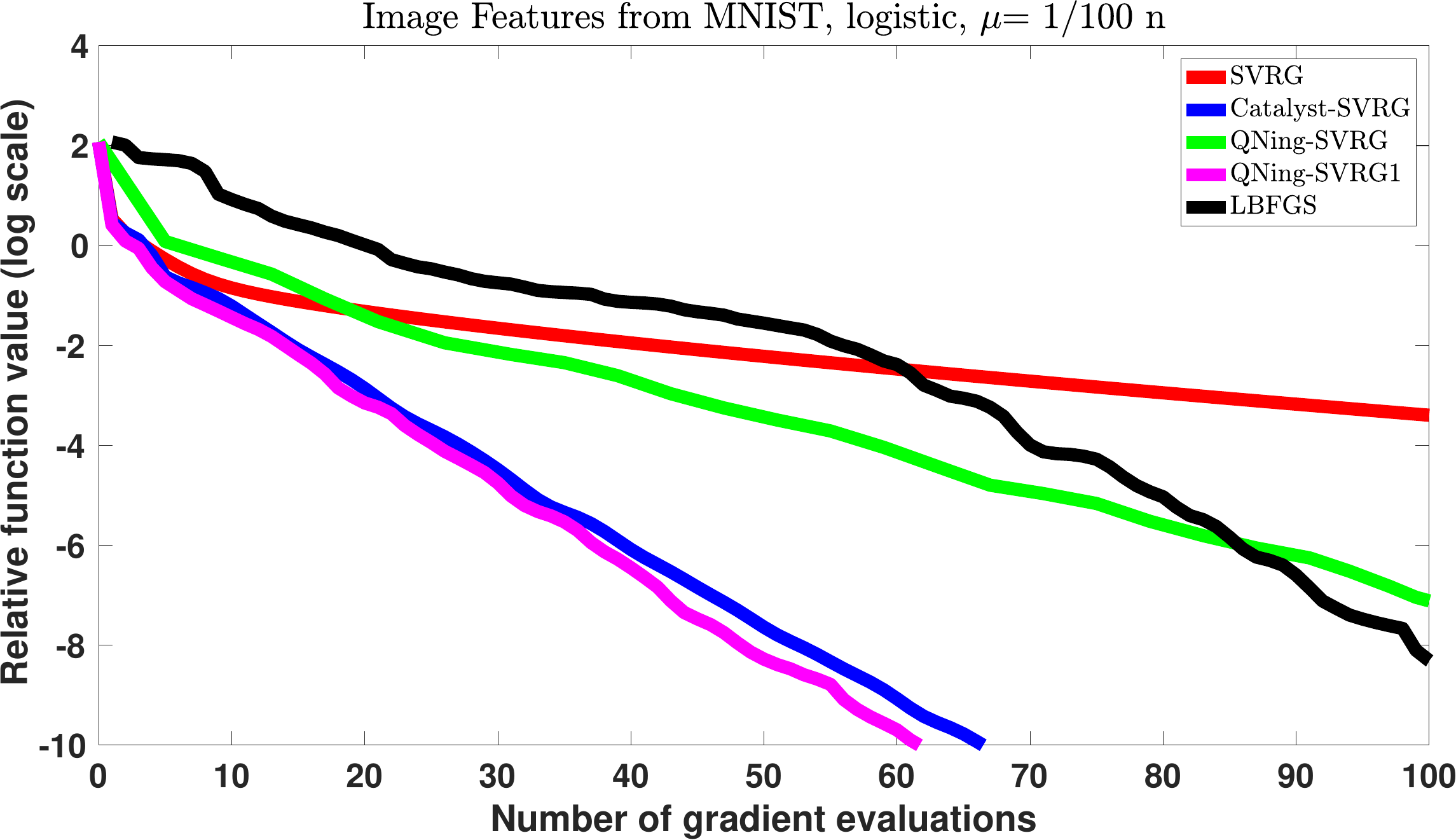} ~ 
   ~~\includegraphics[width=0.30\linewidth]{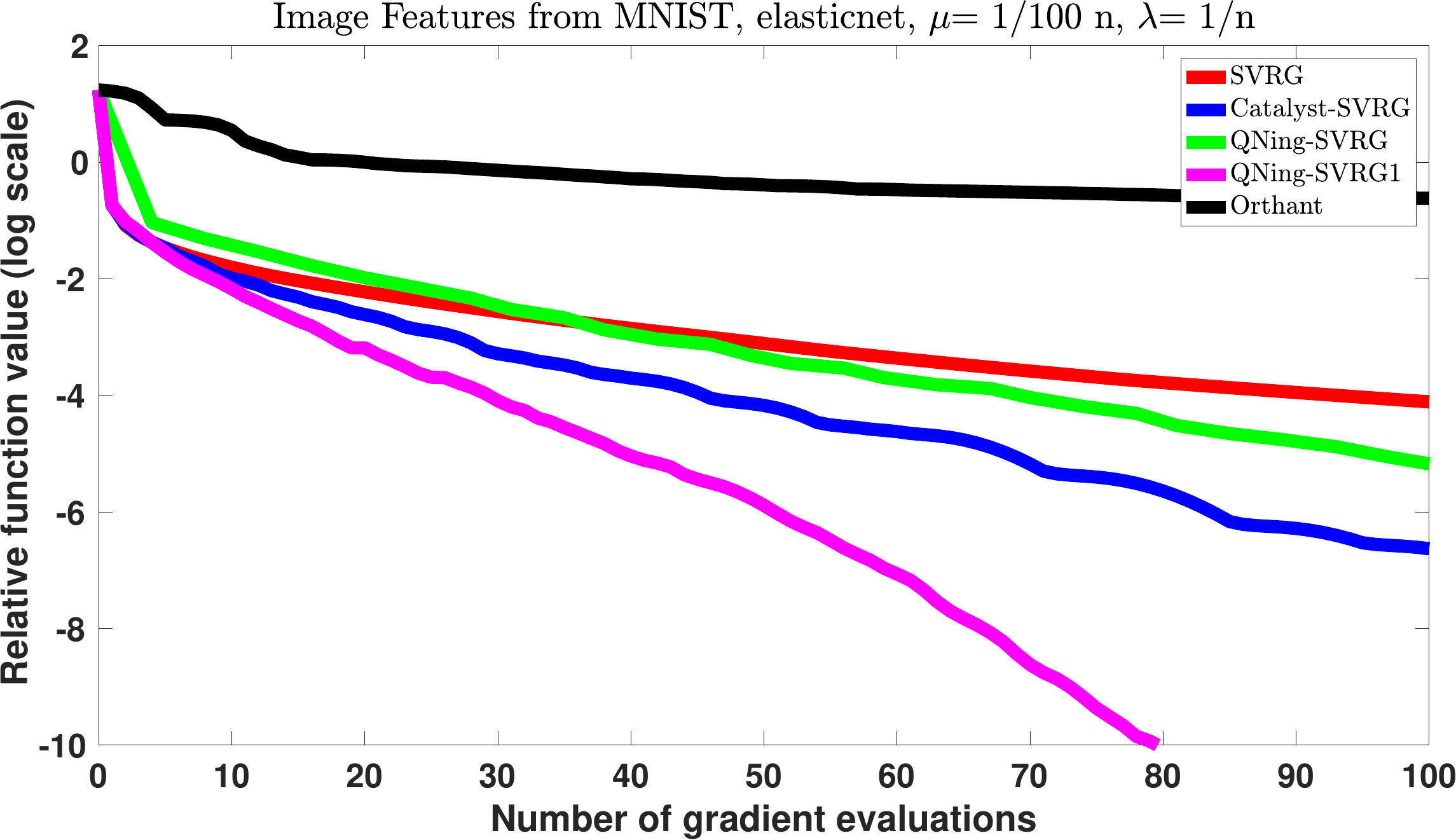} ~ 
   ~~\includegraphics[width=0.30\linewidth]{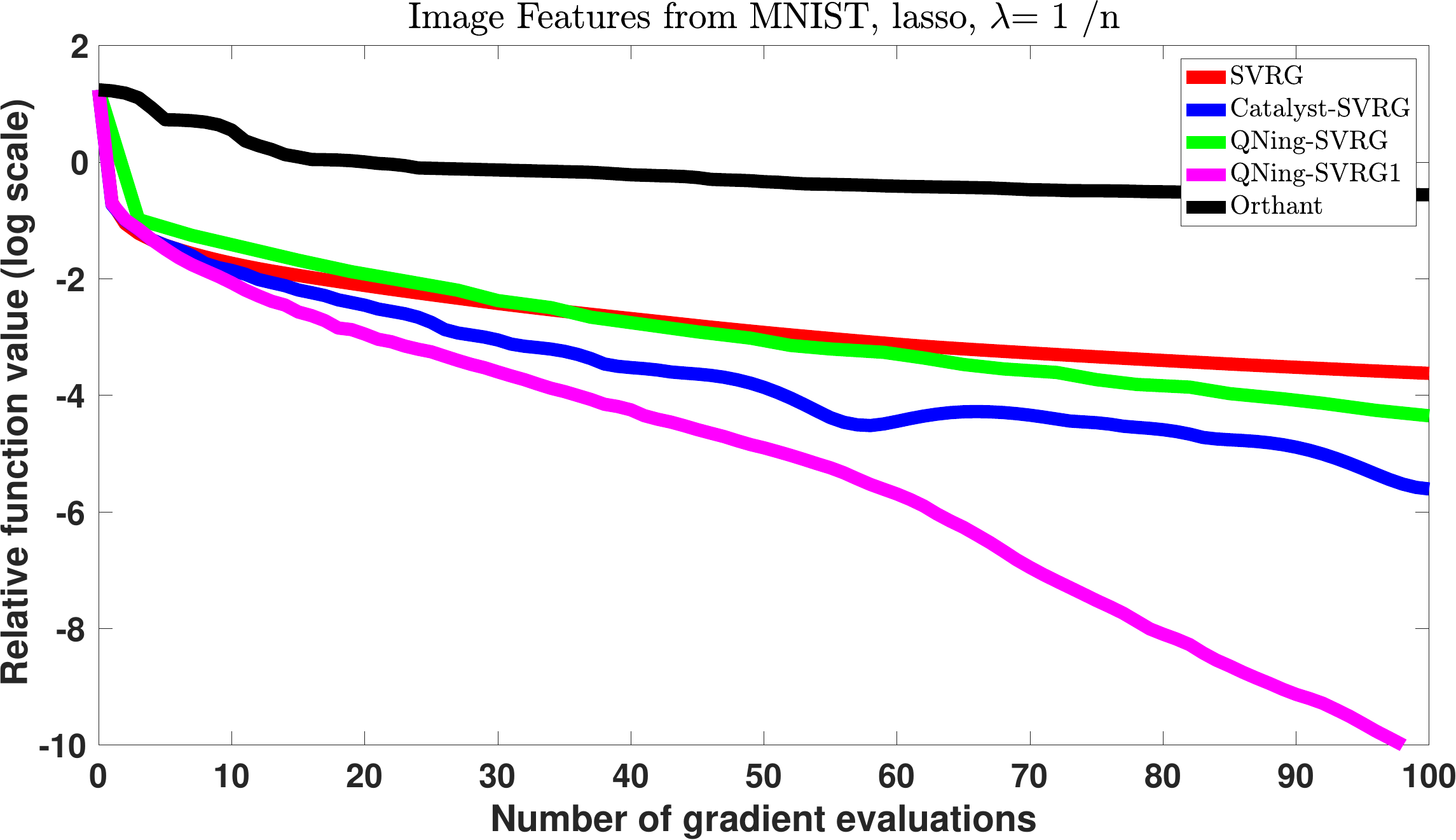} \\
   ~~\includegraphics[width=0.30\linewidth]{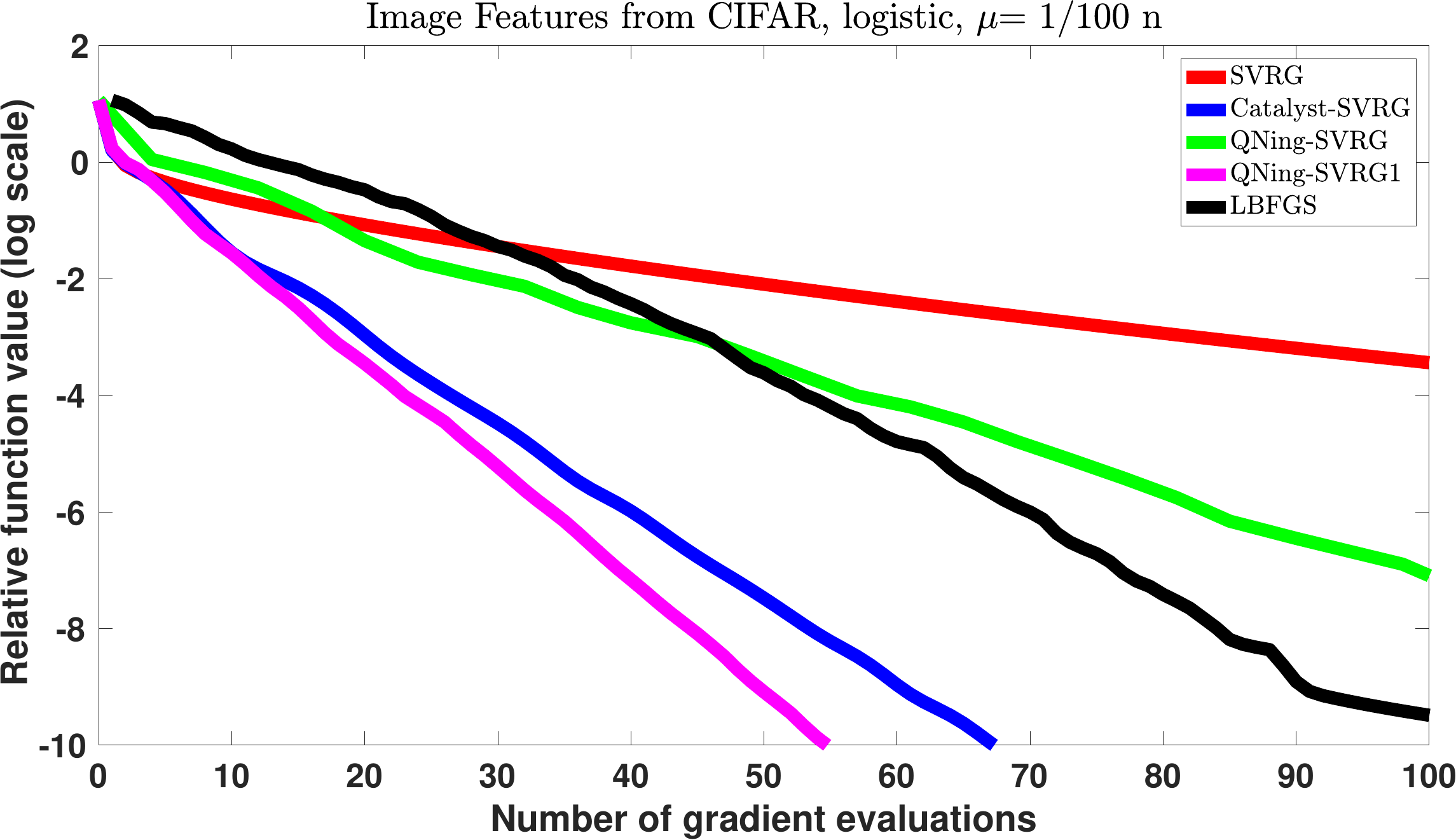} ~ 
   ~~\includegraphics[width=0.30\linewidth]{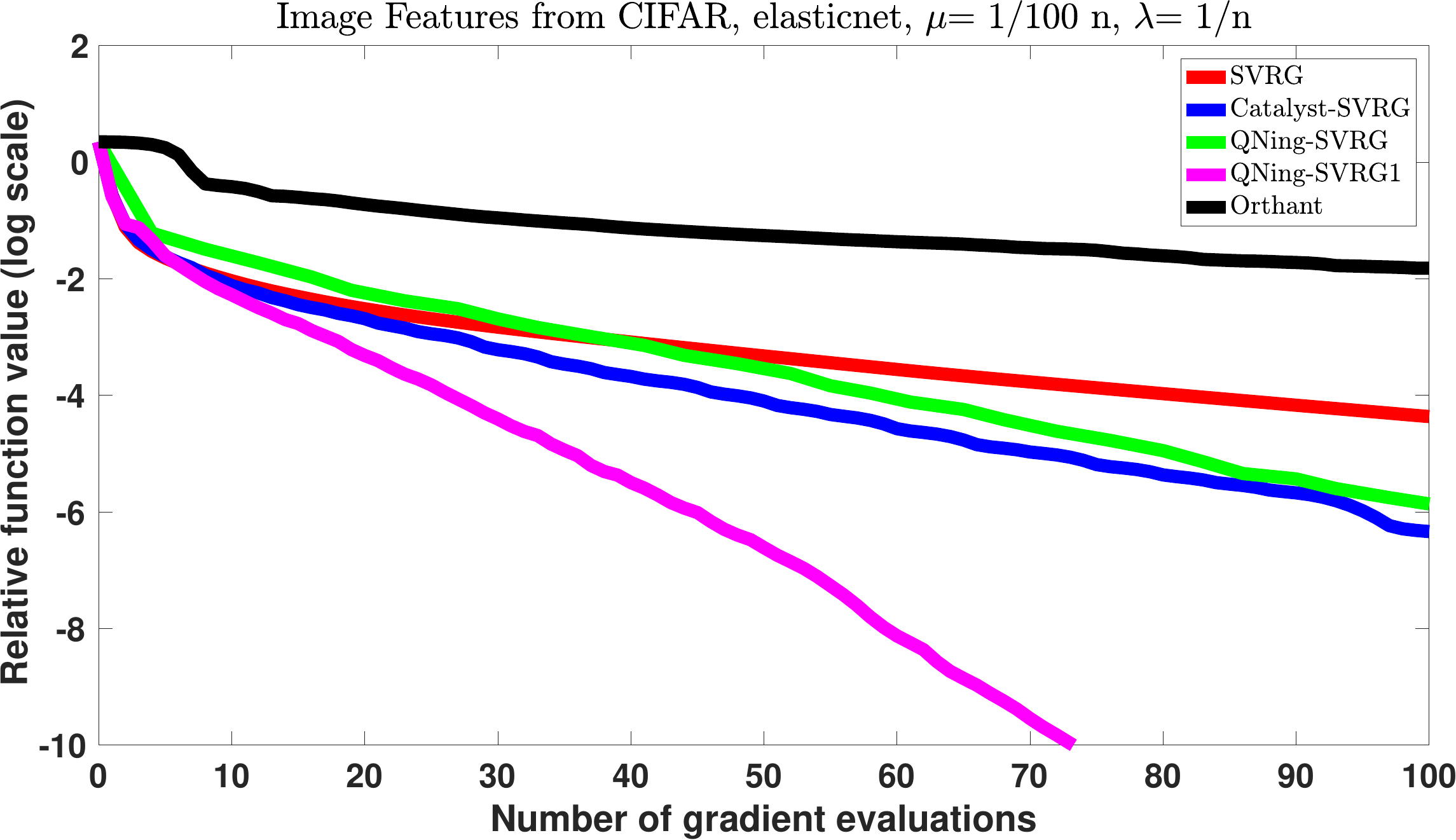} ~ 
   ~~\includegraphics[width=0.30\linewidth]{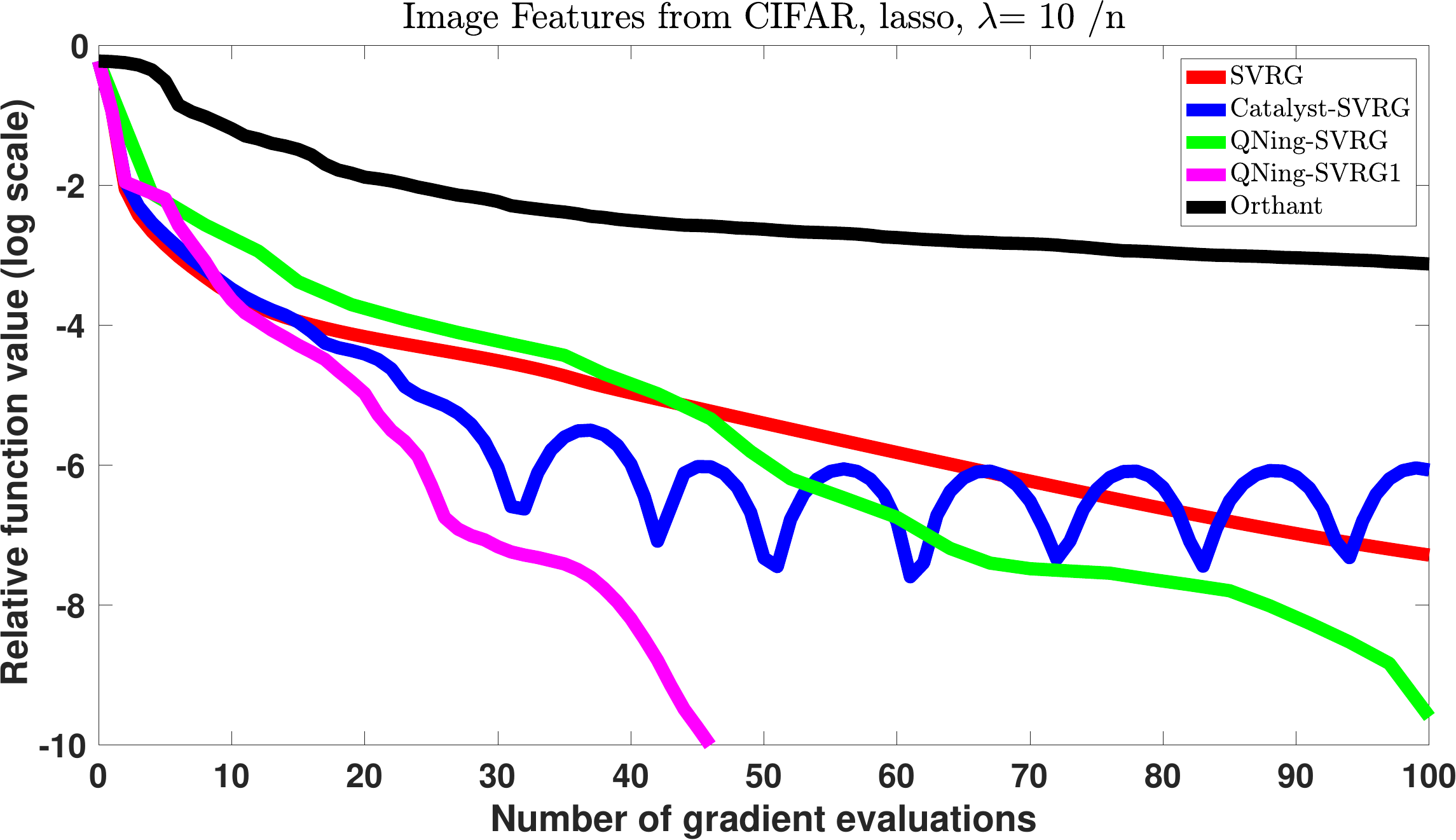} \\
   \caption{Experimental study of the performance of \qning-SVRG for
   minimizing large sums of functions. We plot the value~$F(x_k)/F^\star-1$ as
   a function of the number of gradient evaluations, on a logarithmic scale;
   the optimal value $F^\star$ is estimated with a duality gap.  }\label{fig:svrg}
\end{figure}

The result of the comparison is presented in Figure~\ref{fig:svrg} and
leads to the conclusions below, showing that \textbf{\qning-SVRG1} is a
safe heuristic, which never decreases the speed of the method \textbf{SVRG}:
\begin{itemize}
   \item \textbf{L-BFGS/Orthant} is less competitive than other approaches that
      exploit the sum structure of the objective, except on the dataset
      \textsf{real-sim}; the difference in performance with the SVRG-based
      approaches can be important (see dataset \textsf{alpha}).
   \item \textbf{\qning-SVRG1} is significantly faster than or on par with \textbf{SVRG} and \textbf{\qning-SVRG}. 
   \item \textbf{\qning-SVRG} is significantly faster than, or on par with, or only slightly slower than \textbf{SVRG}. 
   \item \textbf{\qning-SVRG1} is significantly faster, or on par with \textbf{Catalyst-SVRG}. This justifies our choice of~$\kappa$ which assumes ``a priori'' that L-BFGS performs as well as Nesterov's method.
\end{itemize}

So far, we have shown that applying QNing with SVRG provides a significant speedup compared to the original SVRG algorithm or other acceleration scheme such as Catalyst. Now we compare our algorithm to other variable metric approaches including Proximal L-BFGS \cite{lee2012proximal} and Stochastic L-BFGS \cite{slbfgs}:

\begin{itemize}
	\item \textbf{Proximal L-BFGS:} We apply the Matlab package PNOPT\footnote{available here \url{https://web.stanford.edu/group/SOL/software/pnopt}} implemented by \cite{lee2012proximal}. The sub-problems are solved by the default algorithm up to desired accuracy. We consider one sub-problem as one gradient evaluation in our plot, even though it often requires multiple passes.    
	\item \textbf{Stochastic L-BFGS} (for smooth objectives): We apply the Matlab package StochBFGS\footnote{available here \url{https://perso.telecom-paristech.fr/rgower/software.html}} implemented by \cite{slbfgs}. We consider the 'prev' variant which has the best practical performance.  
\end{itemize}

\begin{figure}[hbtp!]
\centering
   ~~\includegraphics[width=0.30\linewidth]{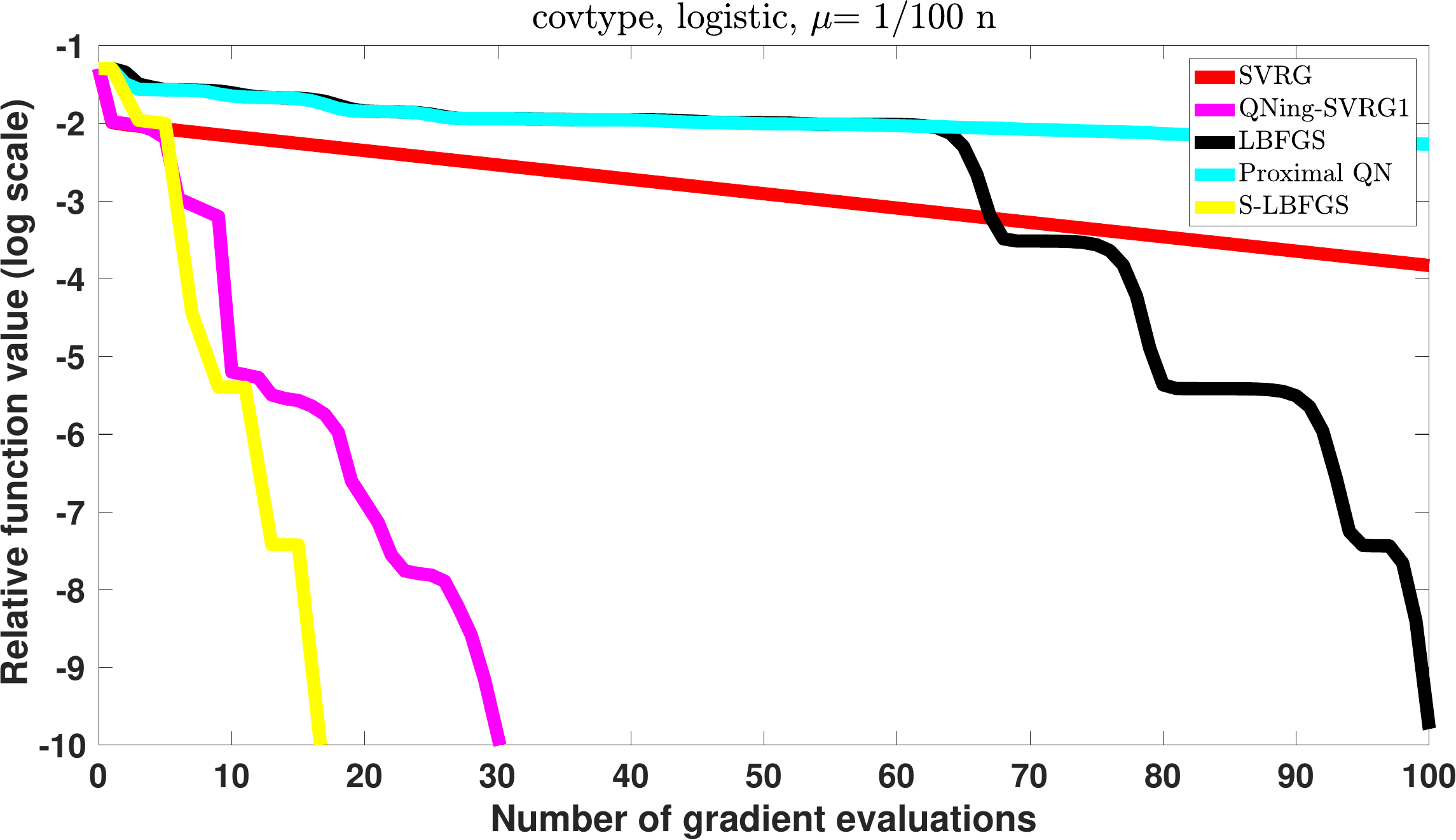} ~ 
   ~~\includegraphics[width=0.30\linewidth]{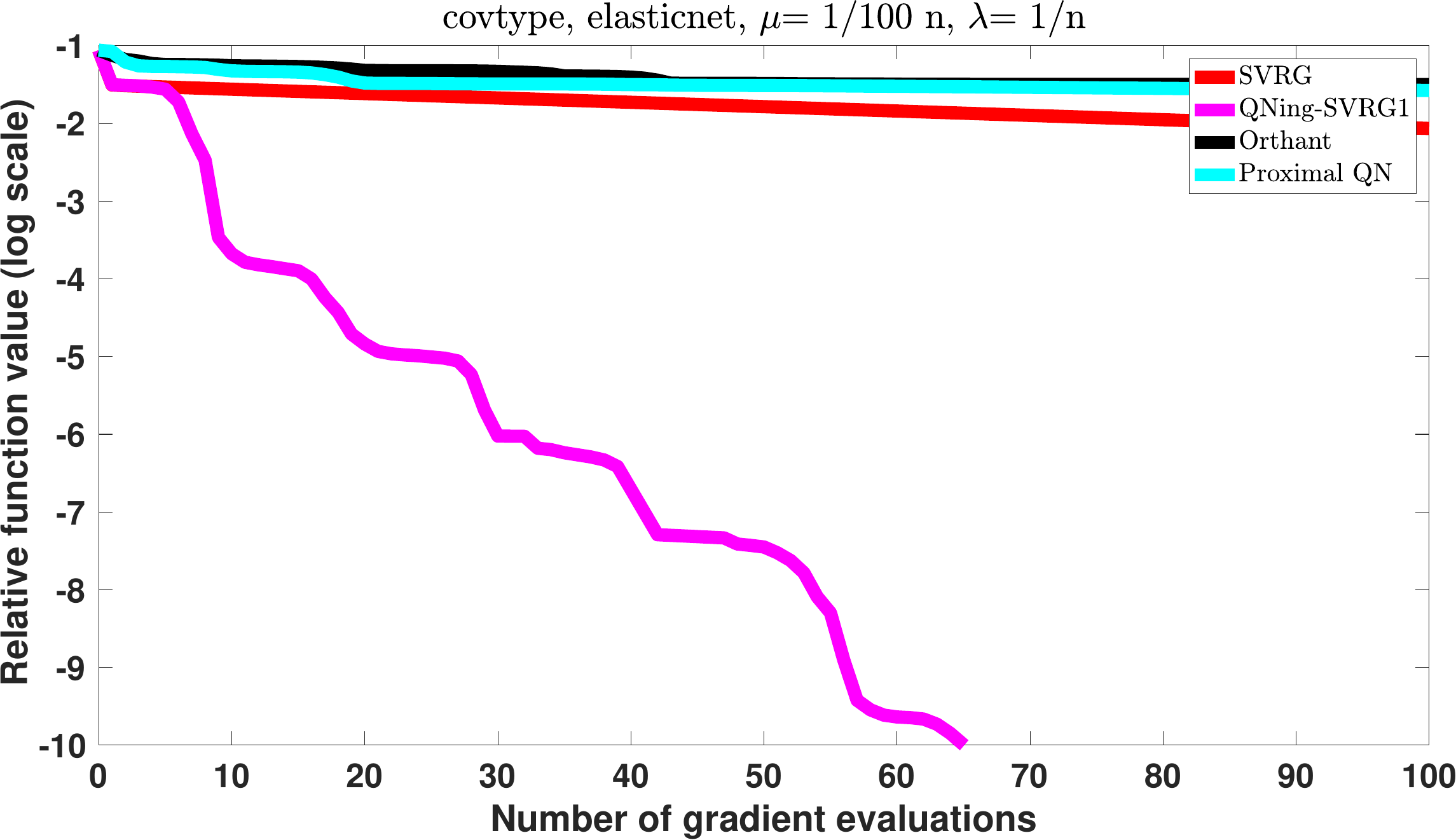} ~ 
   ~~\includegraphics[width=0.30\linewidth]{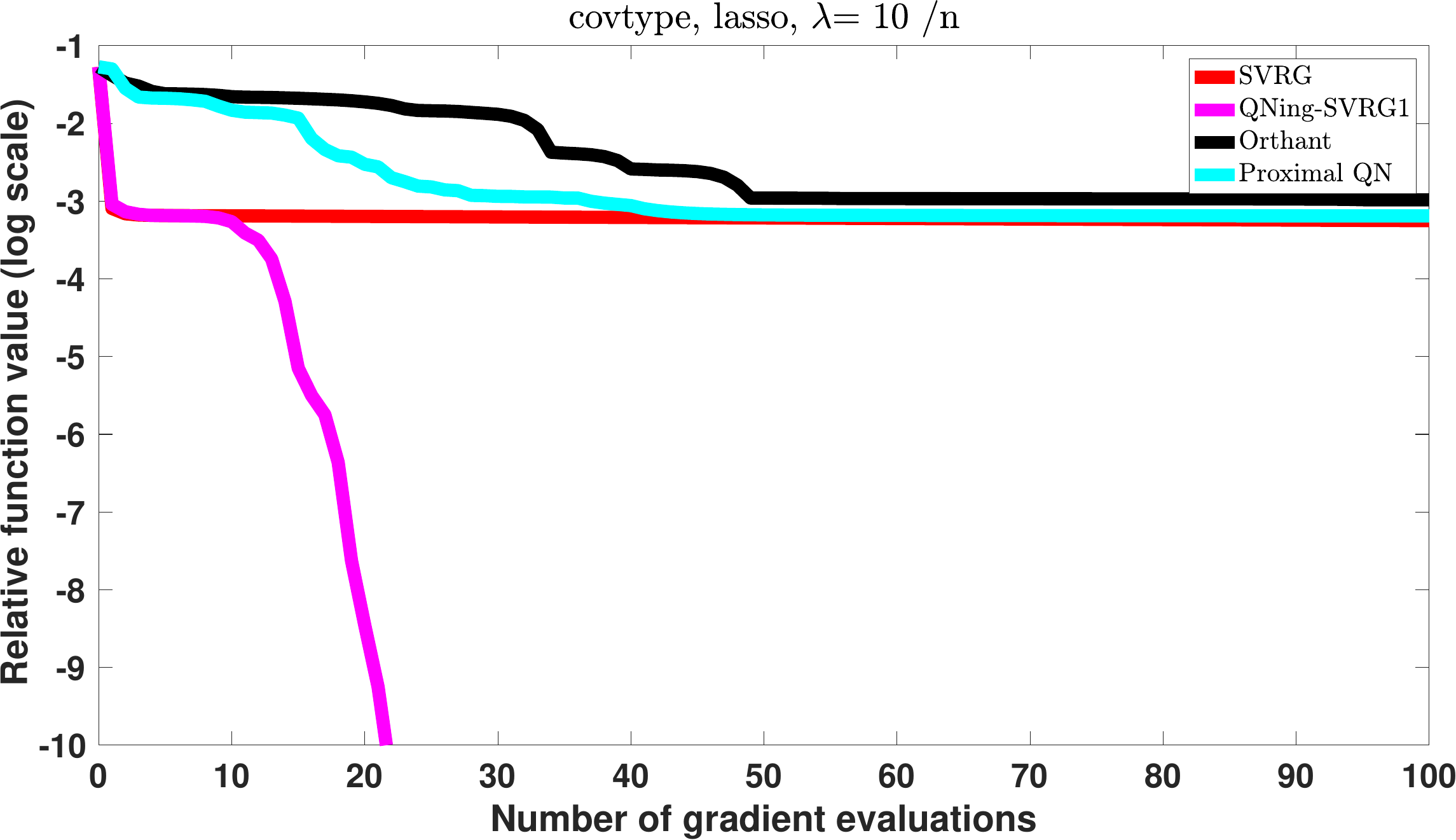} \\
   ~~\includegraphics[width=0.30\linewidth]{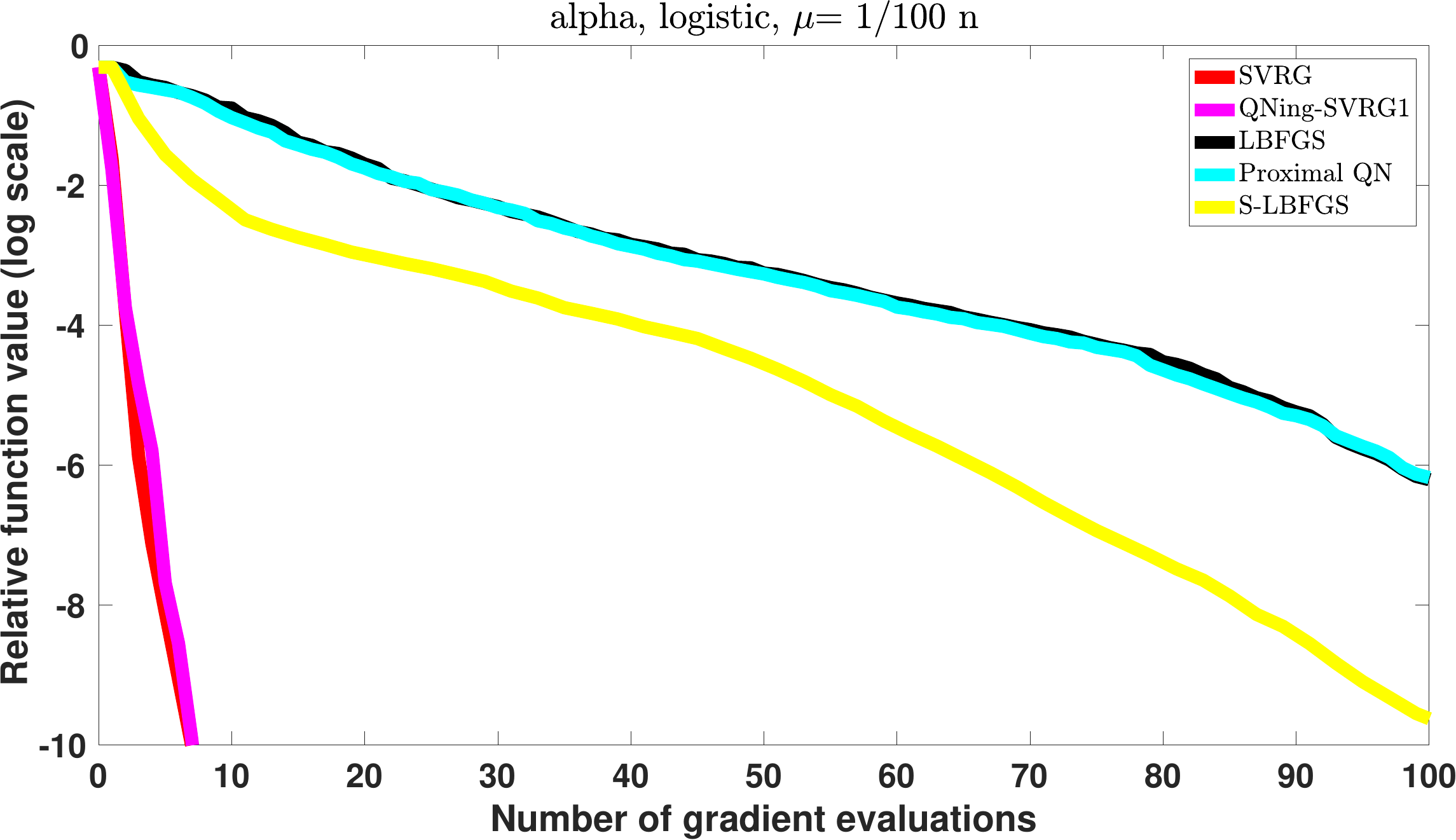} ~ 
   ~~\includegraphics[width=0.30\linewidth]{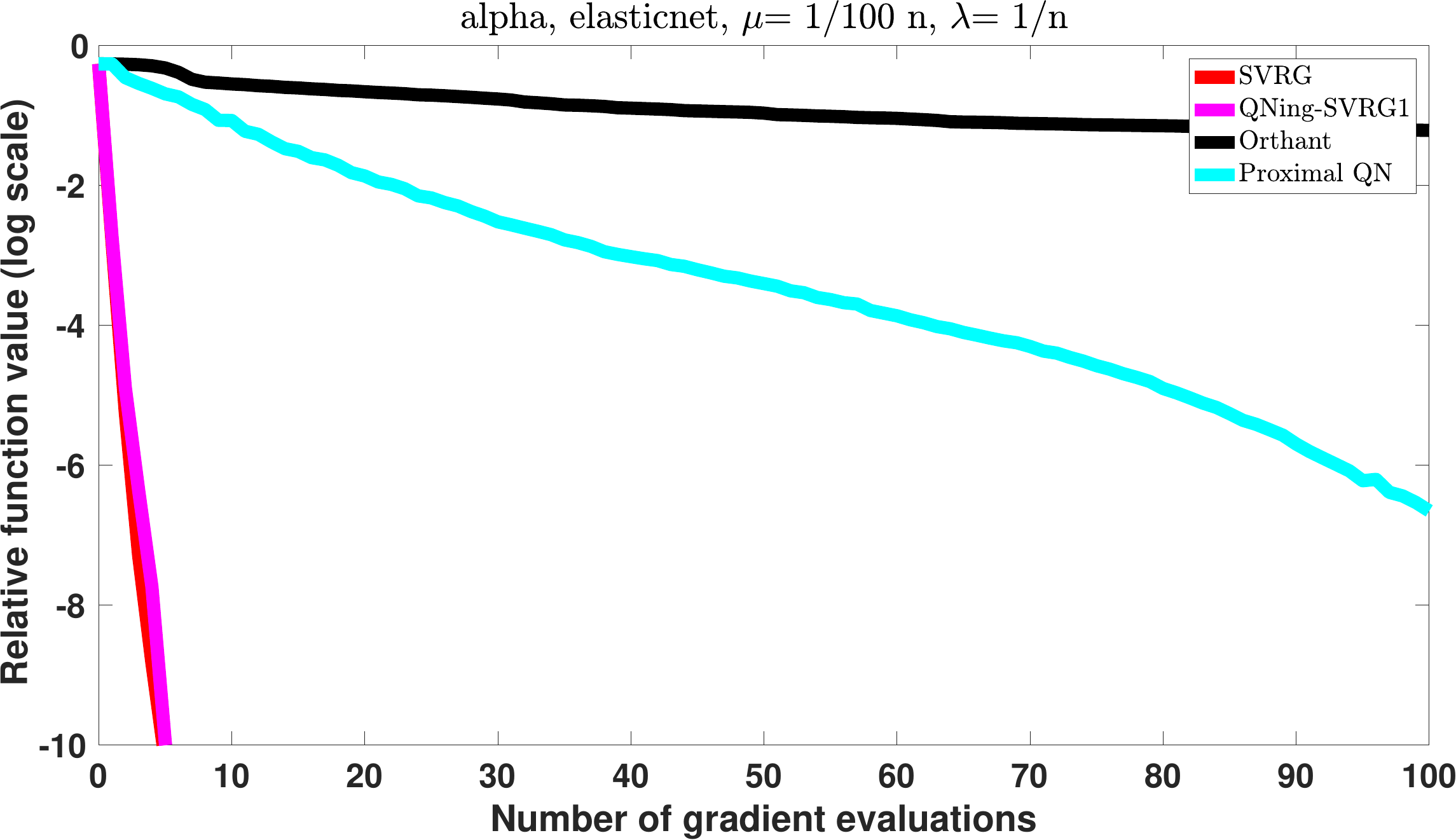} ~ 
   ~~\includegraphics[width=0.30\linewidth]{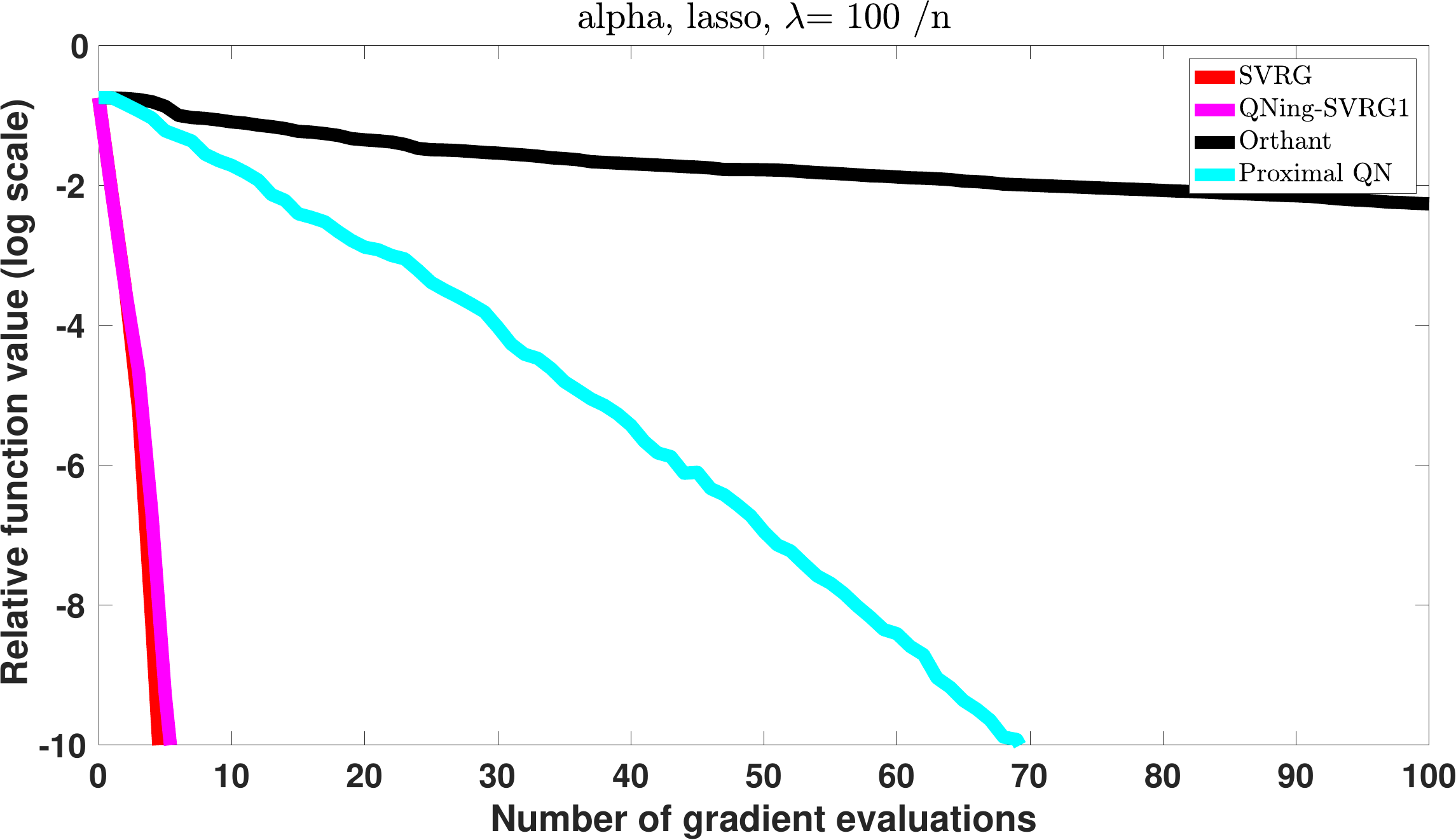} \\ 
   ~~\includegraphics[width=0.30\linewidth]{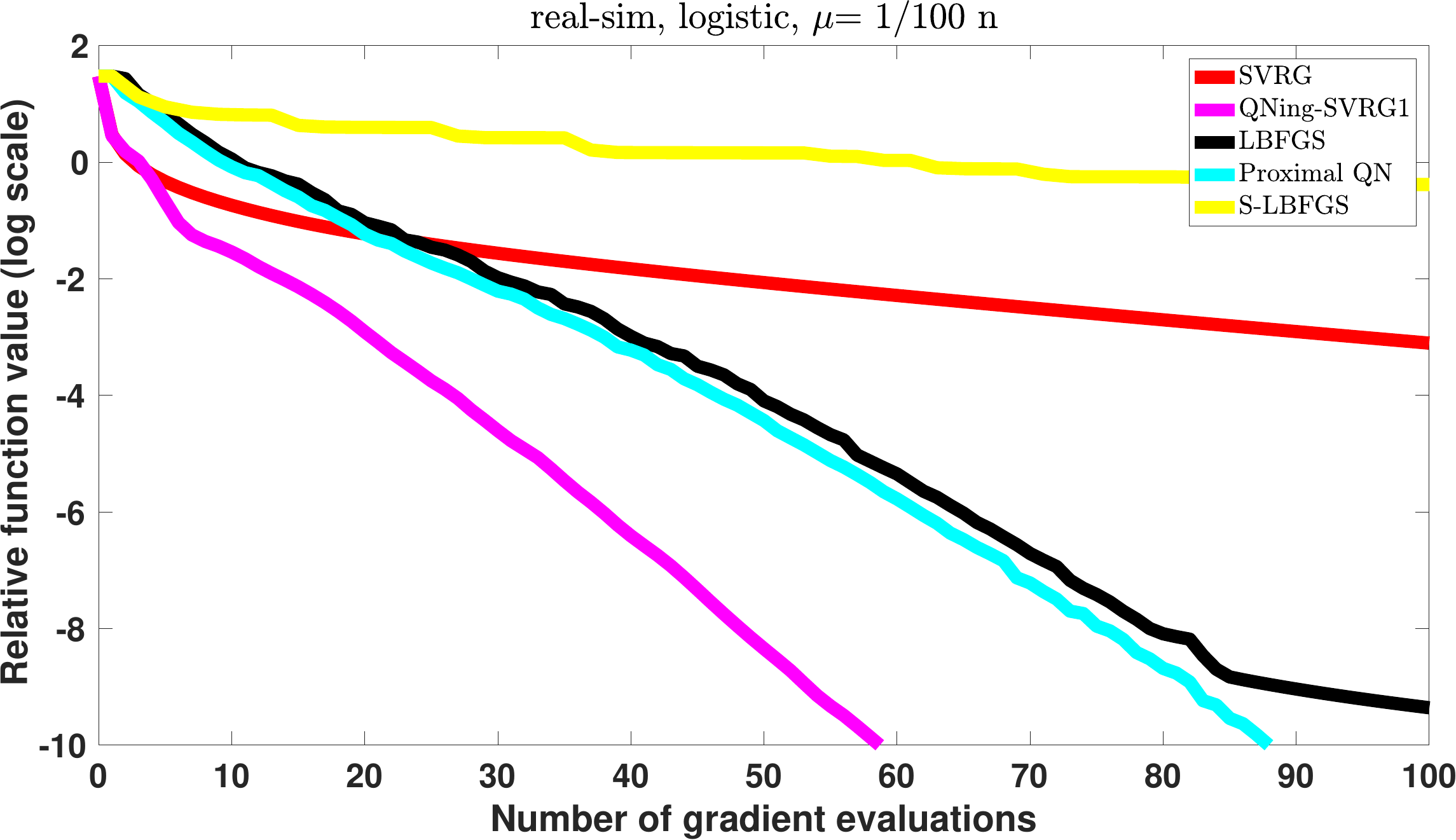} ~ 
   ~~\includegraphics[width=0.30\linewidth]{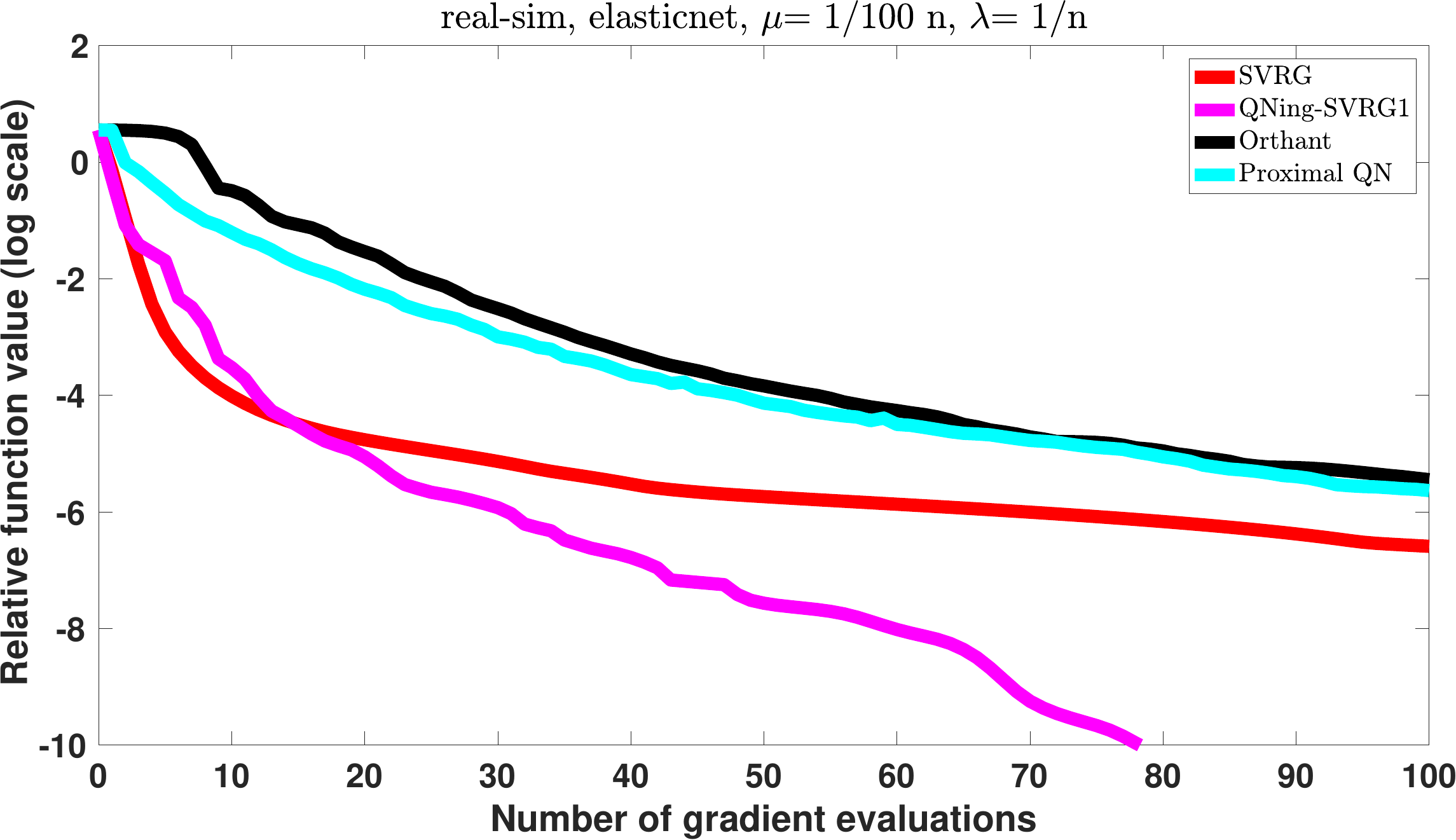} ~ 
   ~~\includegraphics[width=0.30\linewidth]{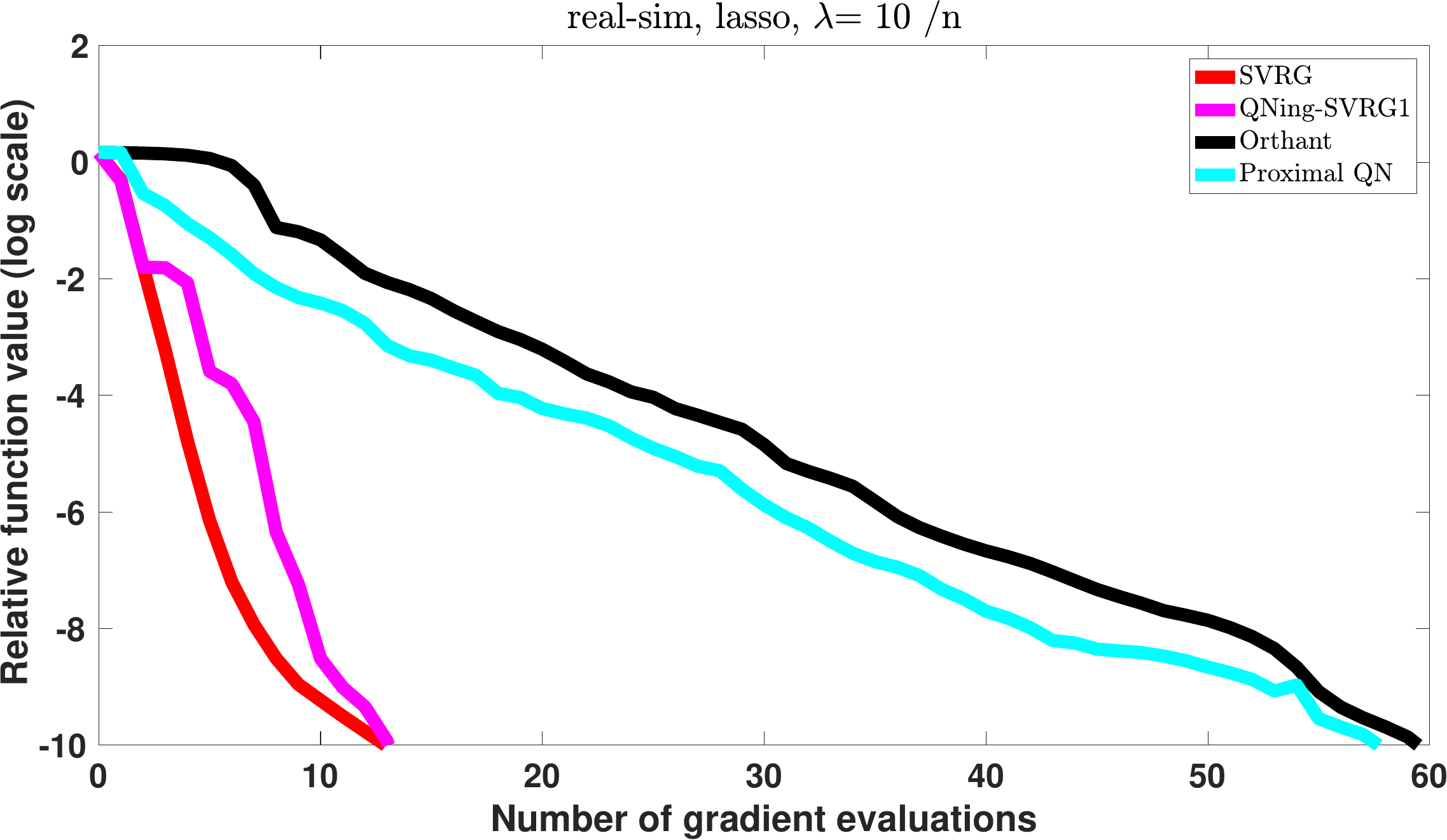} \\
   ~~\includegraphics[width=0.30\linewidth]{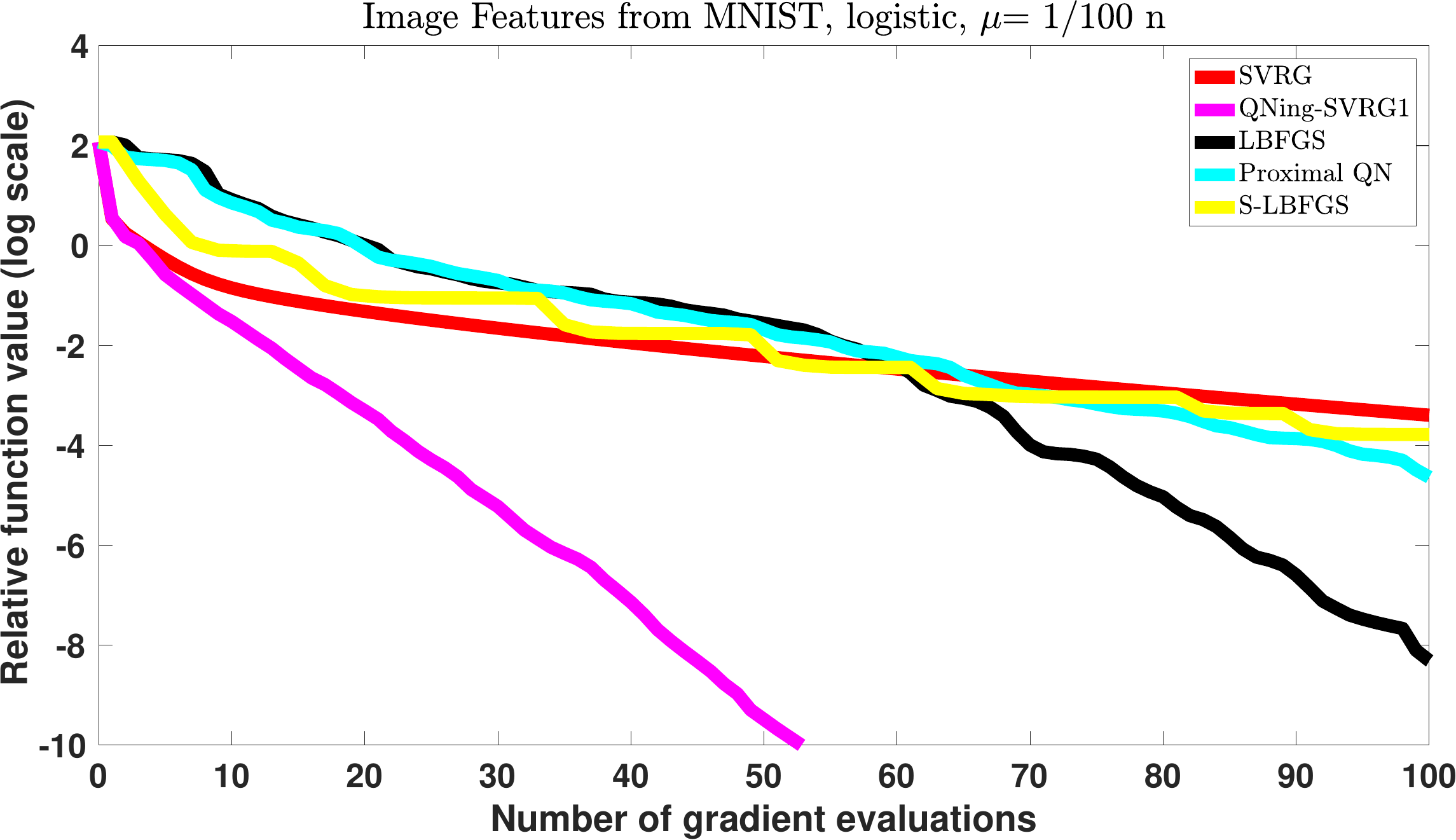} ~ 
   ~~\includegraphics[width=0.30\linewidth]{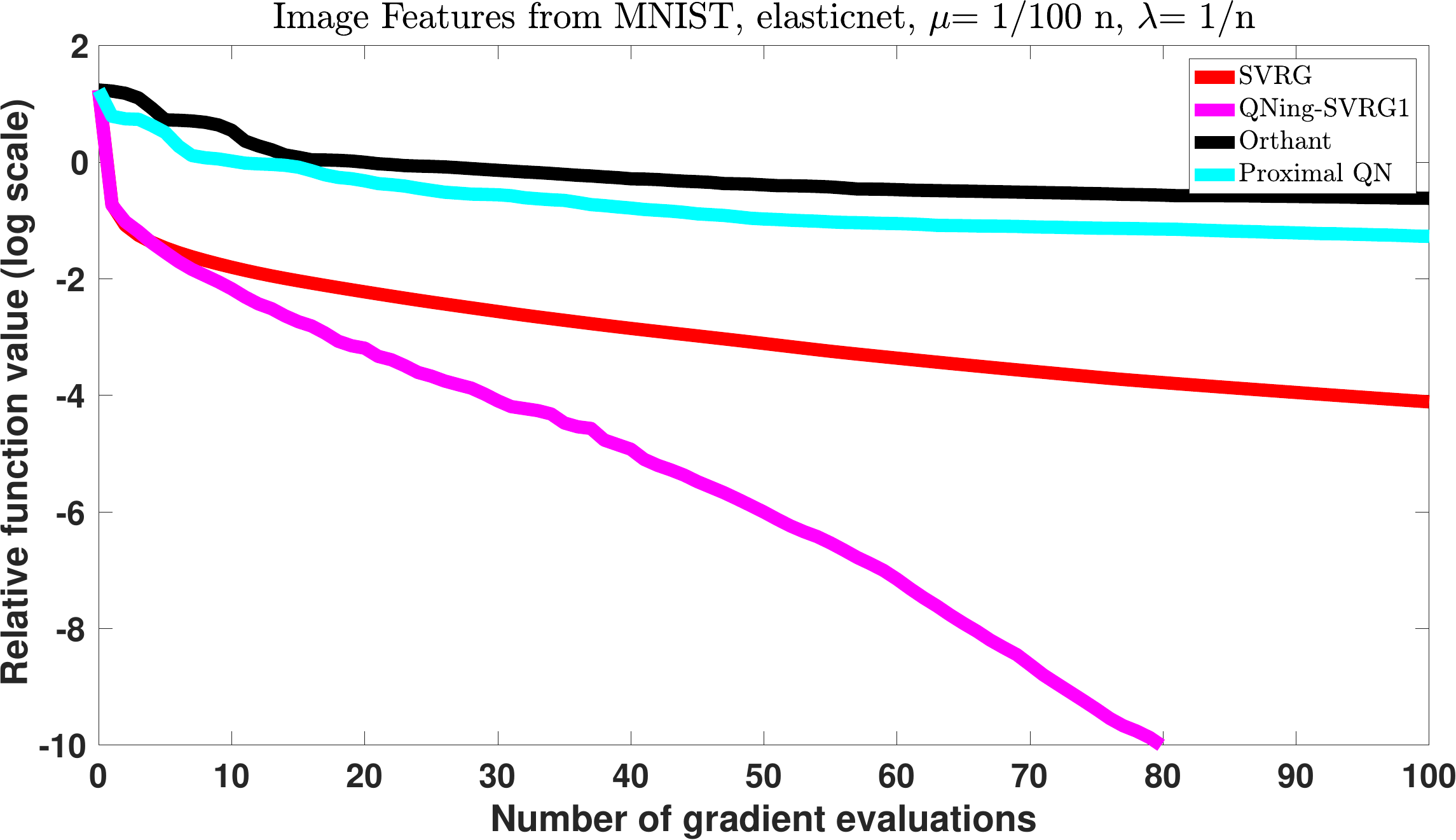} ~ 
   ~~\includegraphics[width=0.30\linewidth]{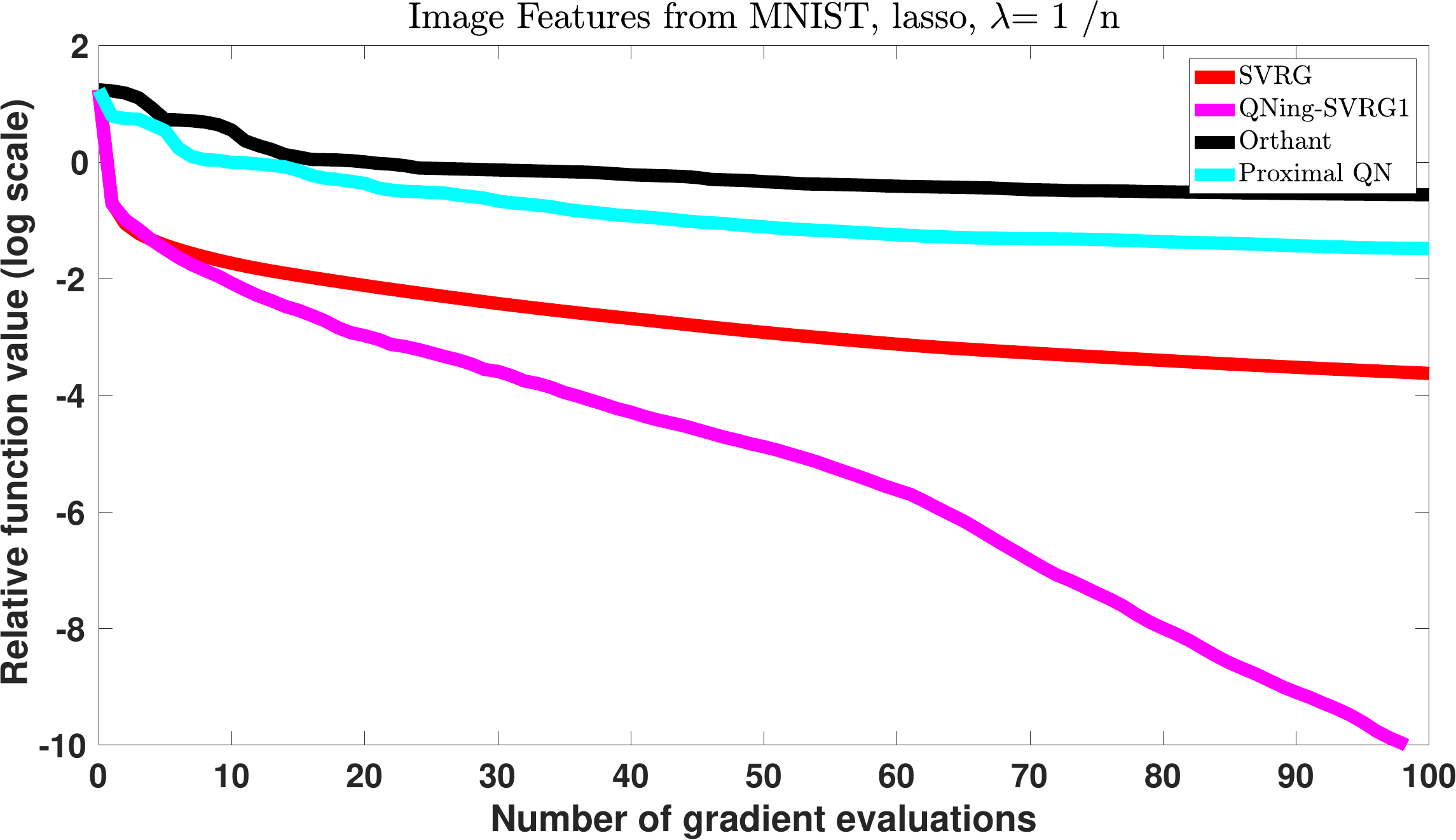} \\
   ~~\includegraphics[width=0.30\linewidth]{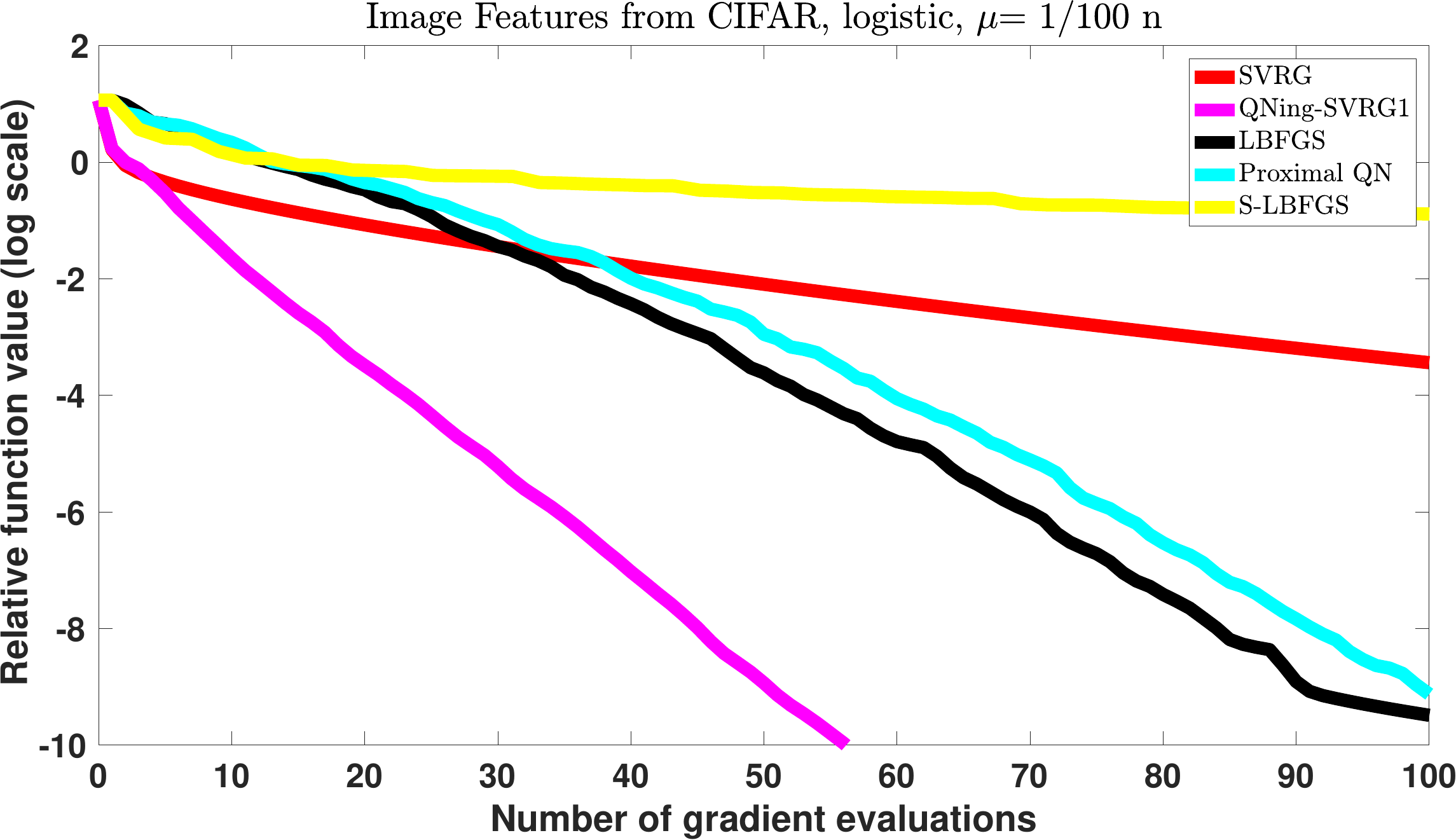} ~ 
   ~~\includegraphics[width=0.30\linewidth]{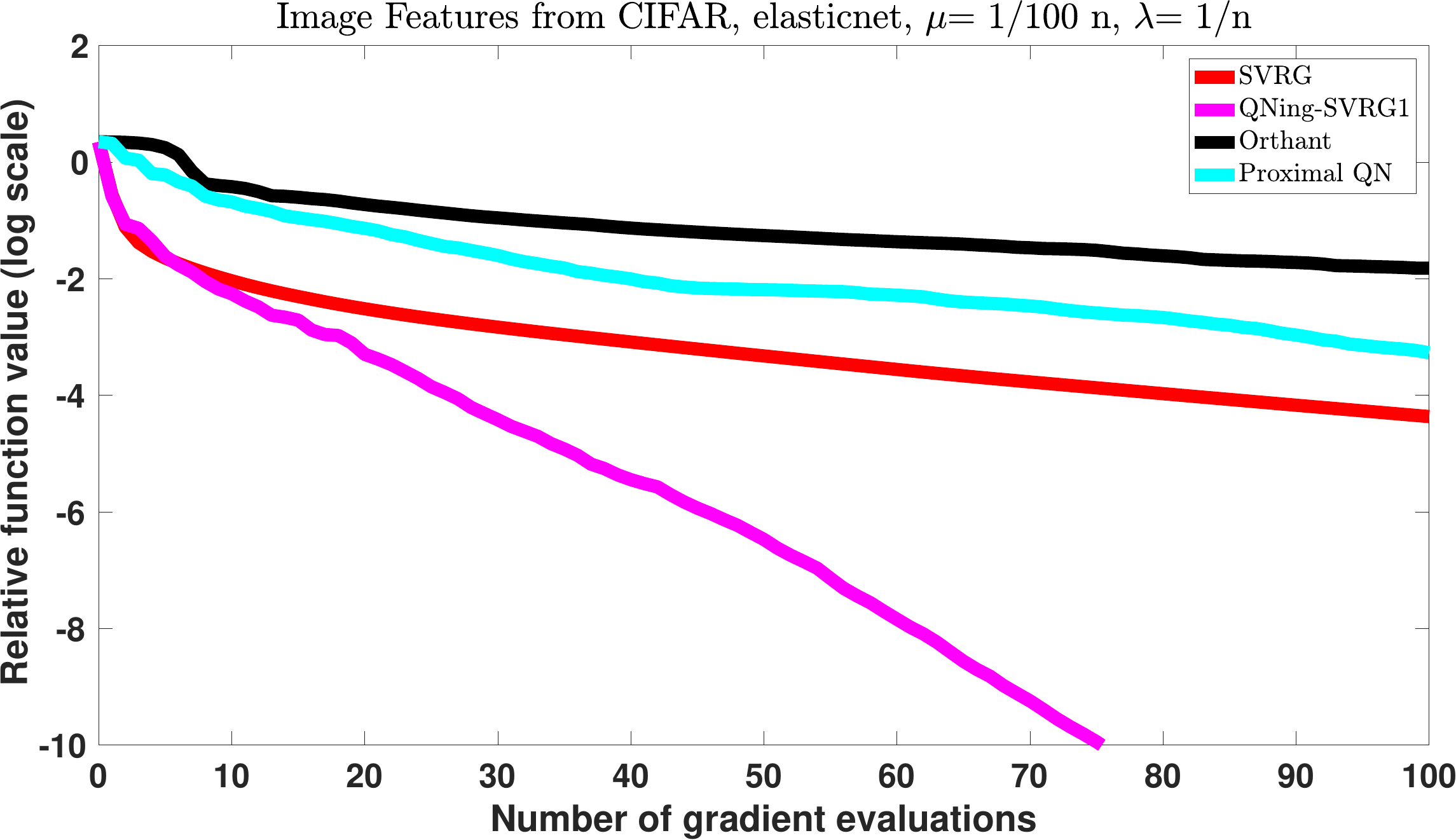} ~ 
   ~~\includegraphics[width=0.30\linewidth]{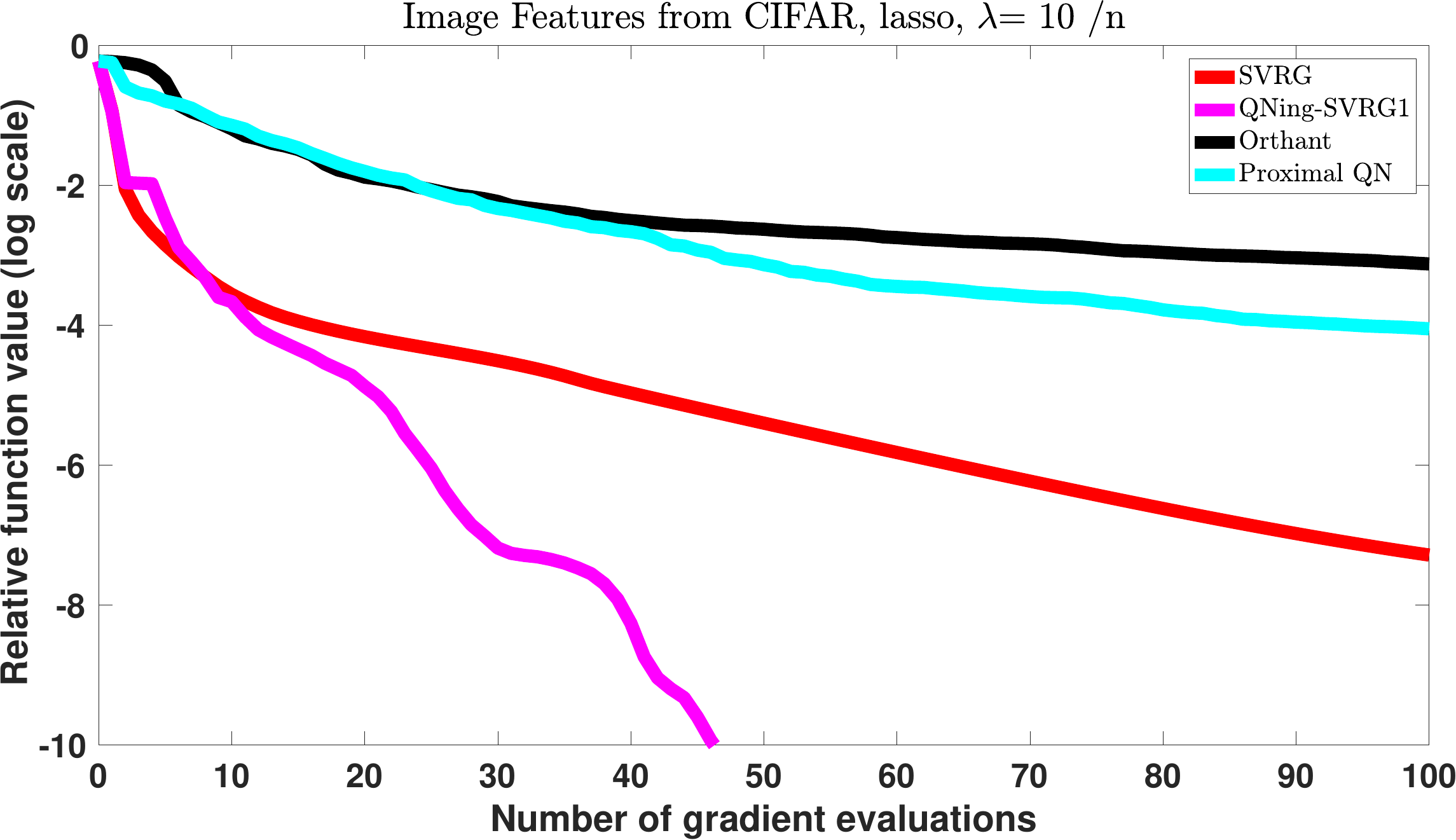} \\
   \caption{Comparison to proximal L-BFGS and Stochastic L-BFGS. We plot the value~$F(x_k)/F^\star-1$ as
   a function of the number of gradient evaluations, on a logarithmic scale;
   the optimal value $F^\star$ is estimated with a duality gap.}\label{fig:qning_vs_proxqn}
\end{figure}

The result of the comparison is presented in Figure~\ref{fig:qning_vs_proxqn} and we observe that \textbf{\qning-SVRG1} is significantly faster than Proximal L-BFGS and Stochastic L-BFGS:
\begin{itemize}
   \item \textbf{Proximal L-BFGS} often outperforms \textbf{Orthant}-based methods but it is less competitive than QNing.
   \item \textbf{Stochastic L-BFGS} is sensitive to parameters and data since the variable metric is based on stochastic information which may have high variance. It performs well on dataset \textsf{covtype} but becomes less competitive on other datasets. Moreover, it only applies to smooth problems.
\end{itemize}

The previous results are complemented by Appendix~\ref{appendix:iterations},
which also presents some comparison in terms of outer-loop iterations,
regardless of the cost of the inner-loop.

\subsection{\qning-ISTA and comparison with L-BFGS}\label{subsec:gd}
The previous experiments have included a comparison between L-BFGS and approaches that 
are able to exploit the sum structure of the objective.
It is then interesting to study the behavior of \qning~when
applied to a basic proximal gradient descent algorithm such as ISTA.
Specifically, we now consider
\begin{itemize}
   \item \textbf{GD/ISTA}: the classical proximal gradient descent algorithm ISTA~\cite{fista}
      with back-tracking line-search to automatically adjust the 
      Lipschitz constant of the gradient objective;
   \item \textbf{Acc-GD/FISTA}: the accelerated variant of ISTA from~\cite{fista}. 
   \item \textbf{\qning-ISTA}, and \textbf{\qning-ISTA1}, as in the previous section replacing SVRG by GD/ISTA.
\end{itemize}

\begin{figure}[hbtp!]
   \centering
    ~~\includegraphics[width=0.30\linewidth]{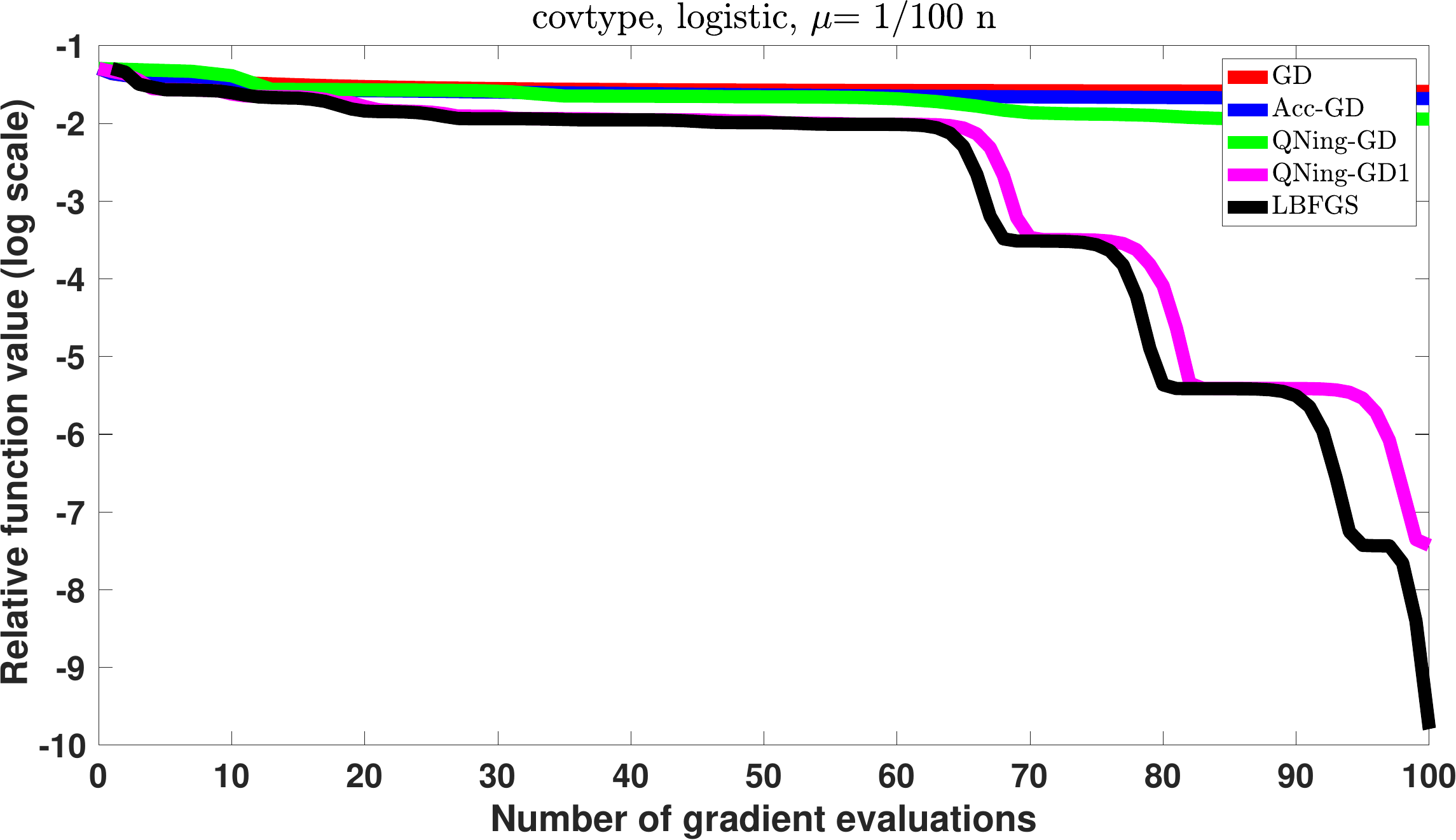} ~ 
   ~~\includegraphics[width=0.30\linewidth]{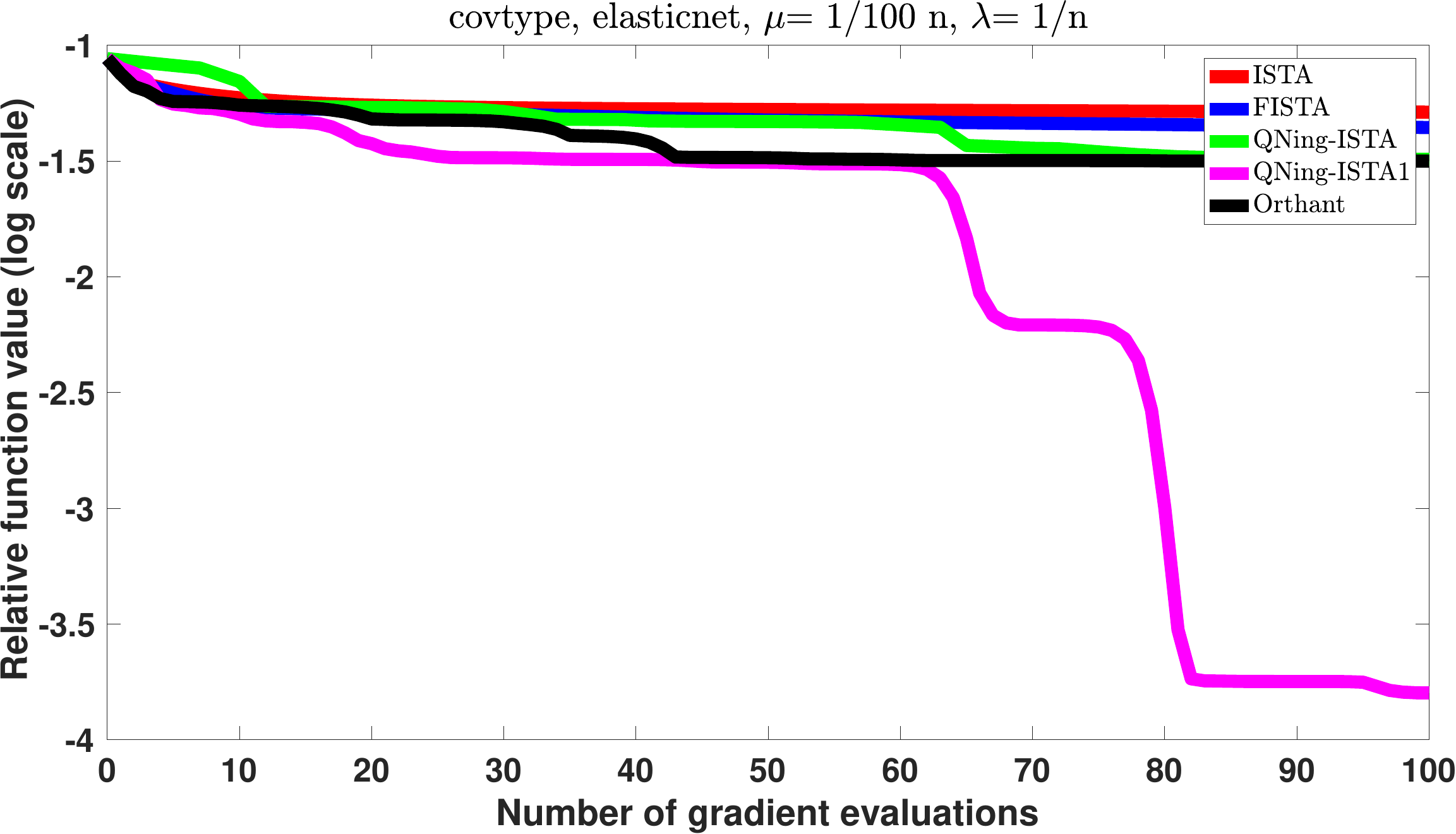} ~ 
   ~~\includegraphics[width=0.30\linewidth]{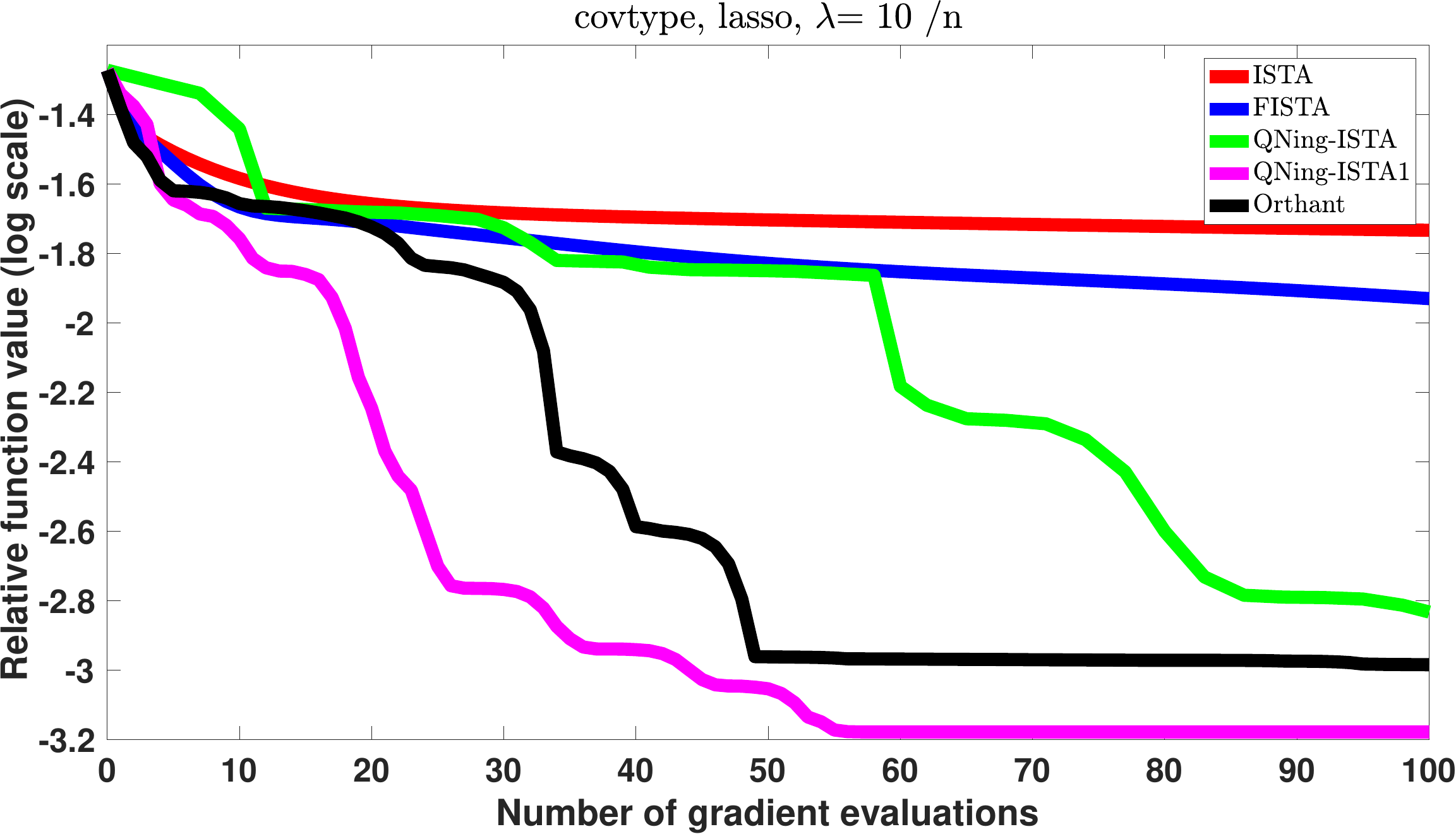} \\
   ~~\includegraphics[width=0.30\linewidth]{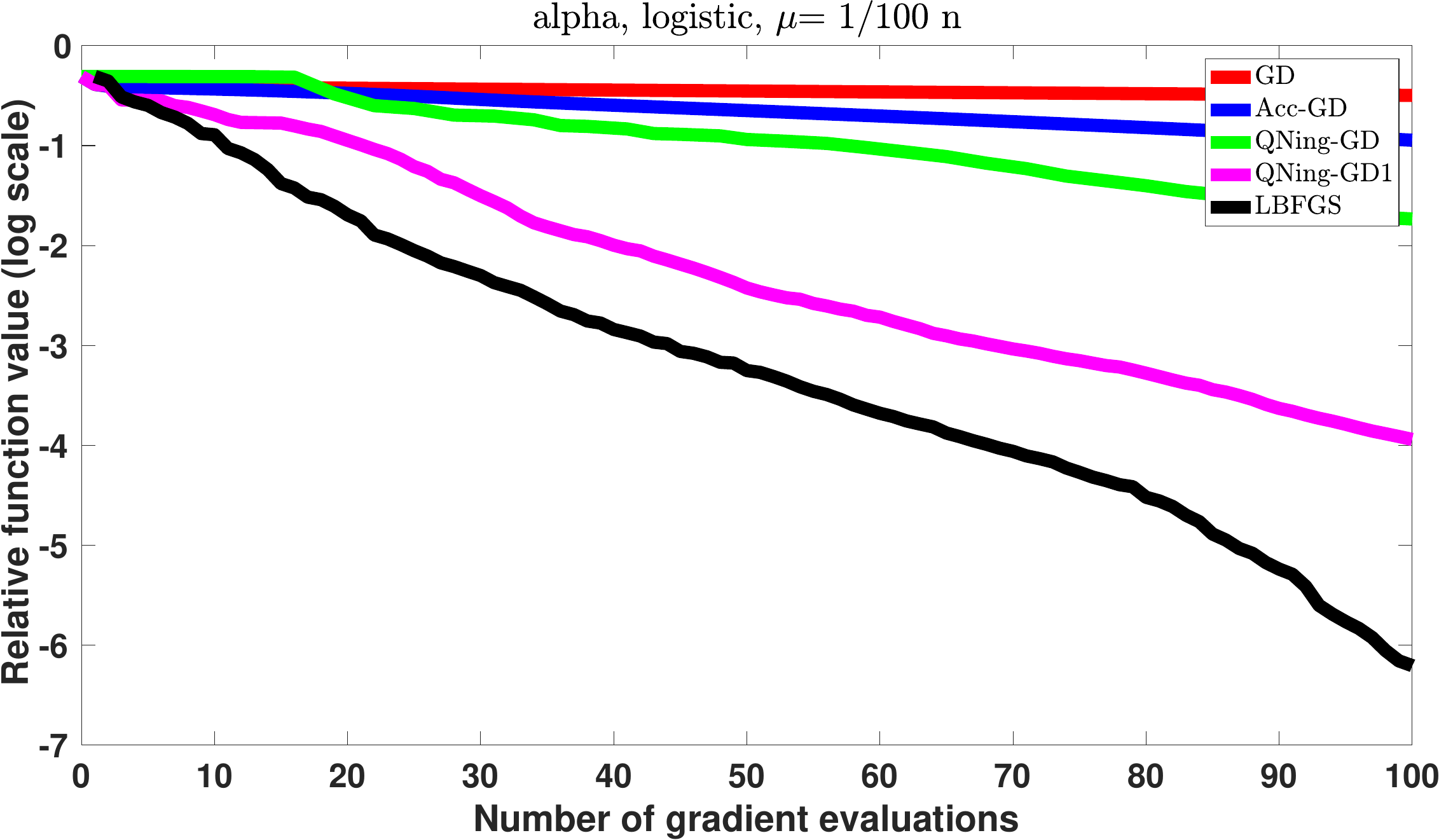} ~ 
   ~~\includegraphics[width=0.30\linewidth]{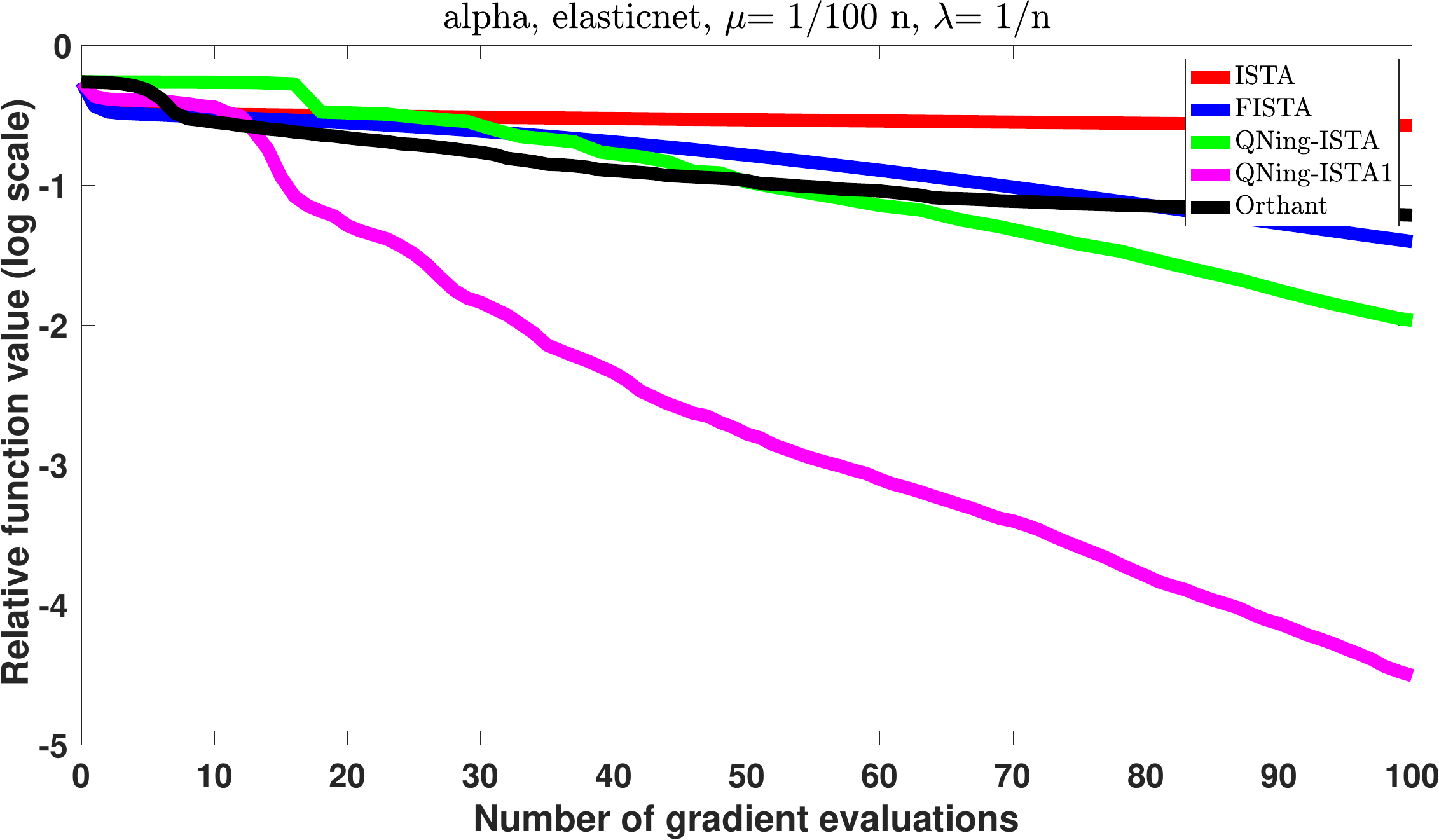} ~ 
   ~~\includegraphics[width=0.30\linewidth]{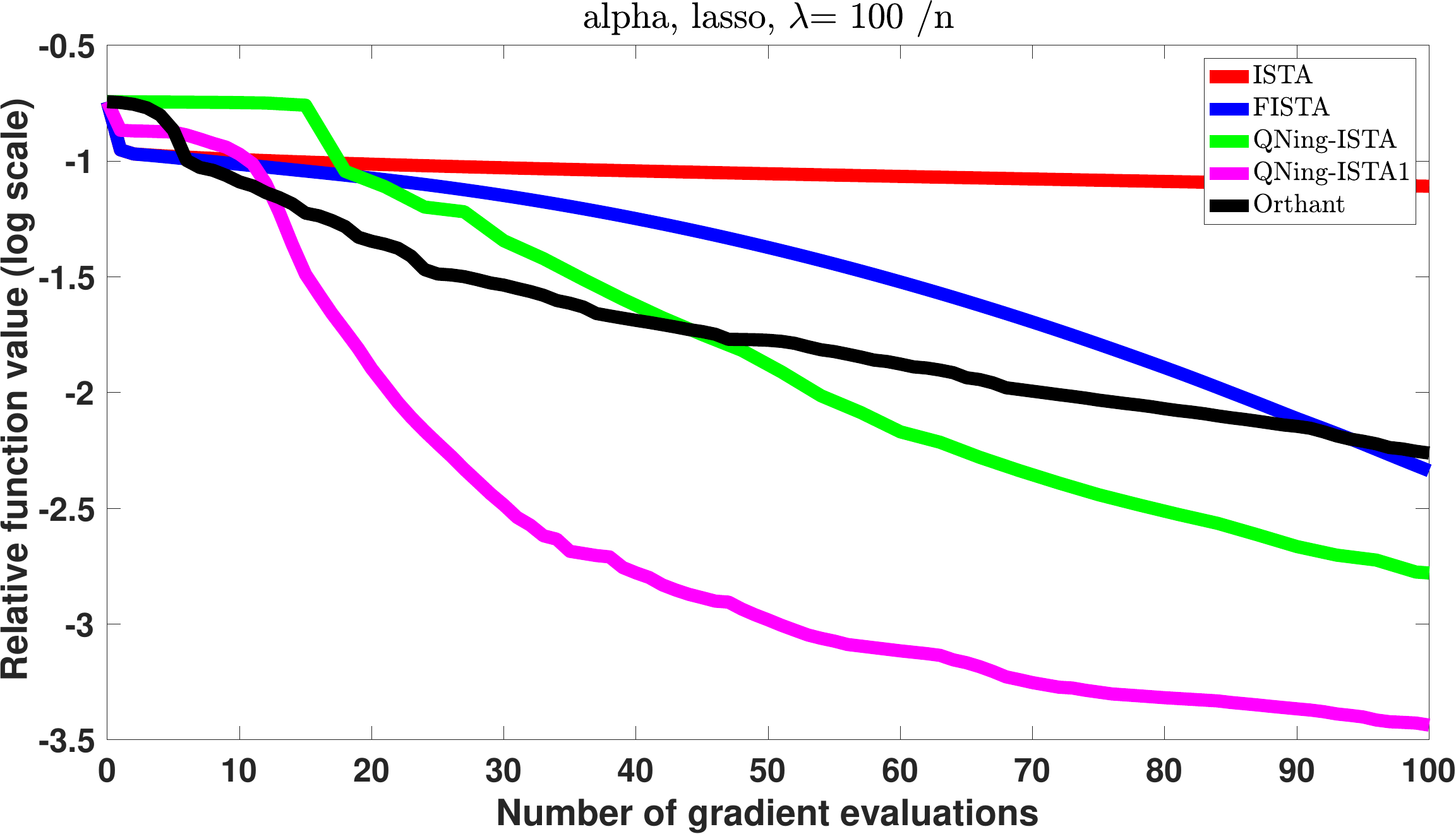} \\  
   ~~\includegraphics[width=0.30\linewidth]{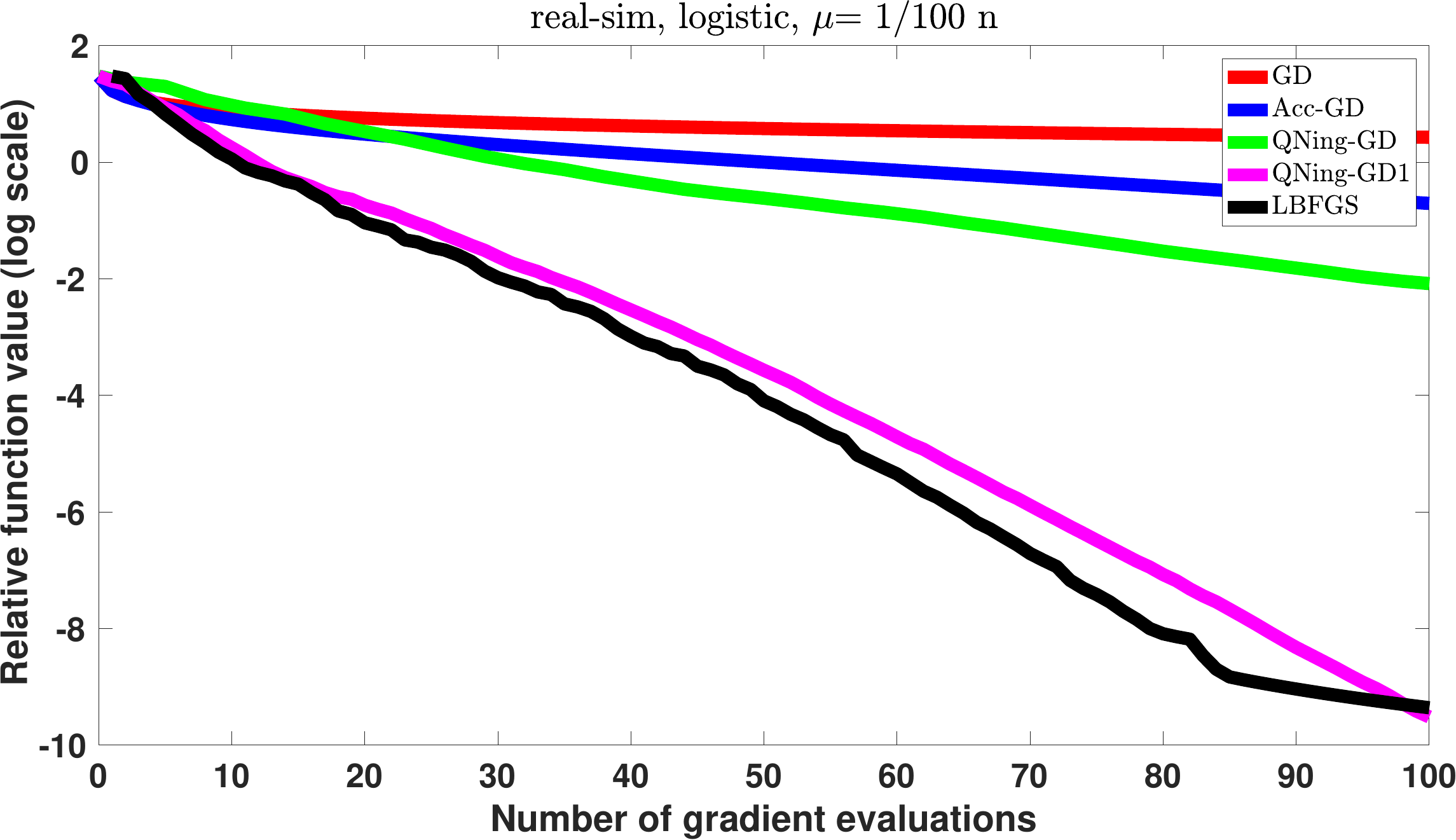} ~ 
   ~~\includegraphics[width=0.30\linewidth]{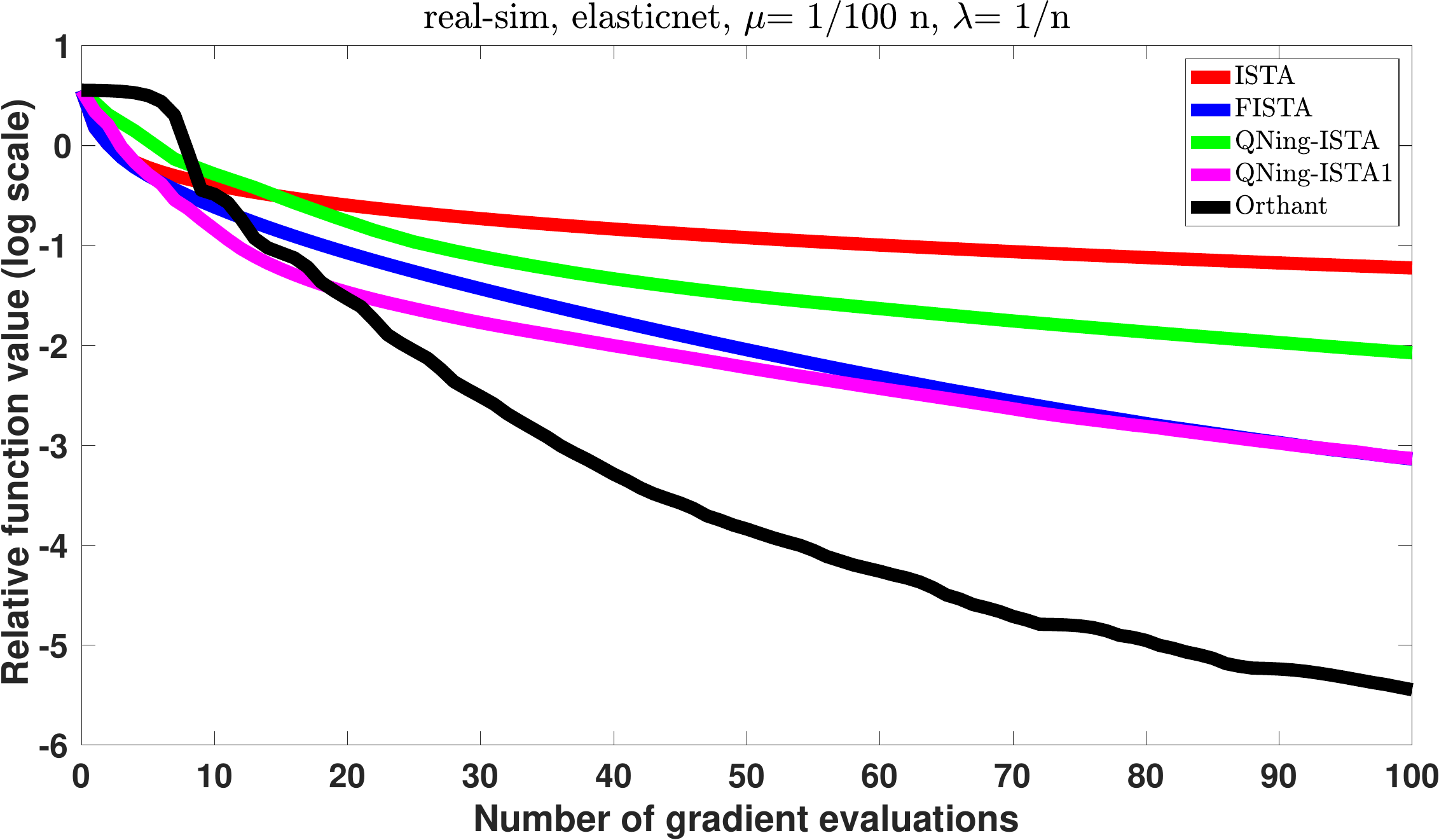} ~ 
   ~~\includegraphics[width=0.30\linewidth]{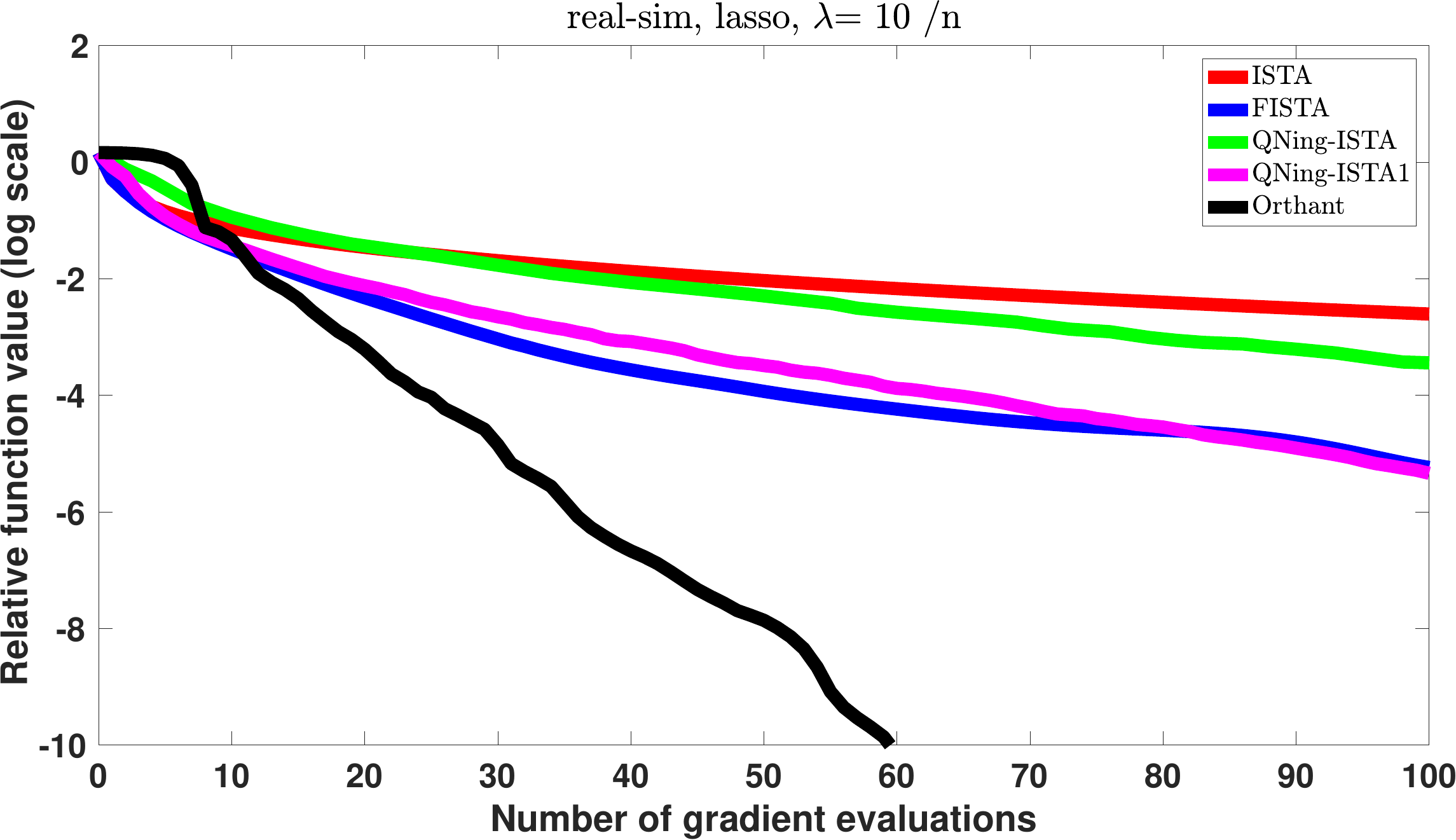} \\ 
   ~~\includegraphics[width=0.30\linewidth]{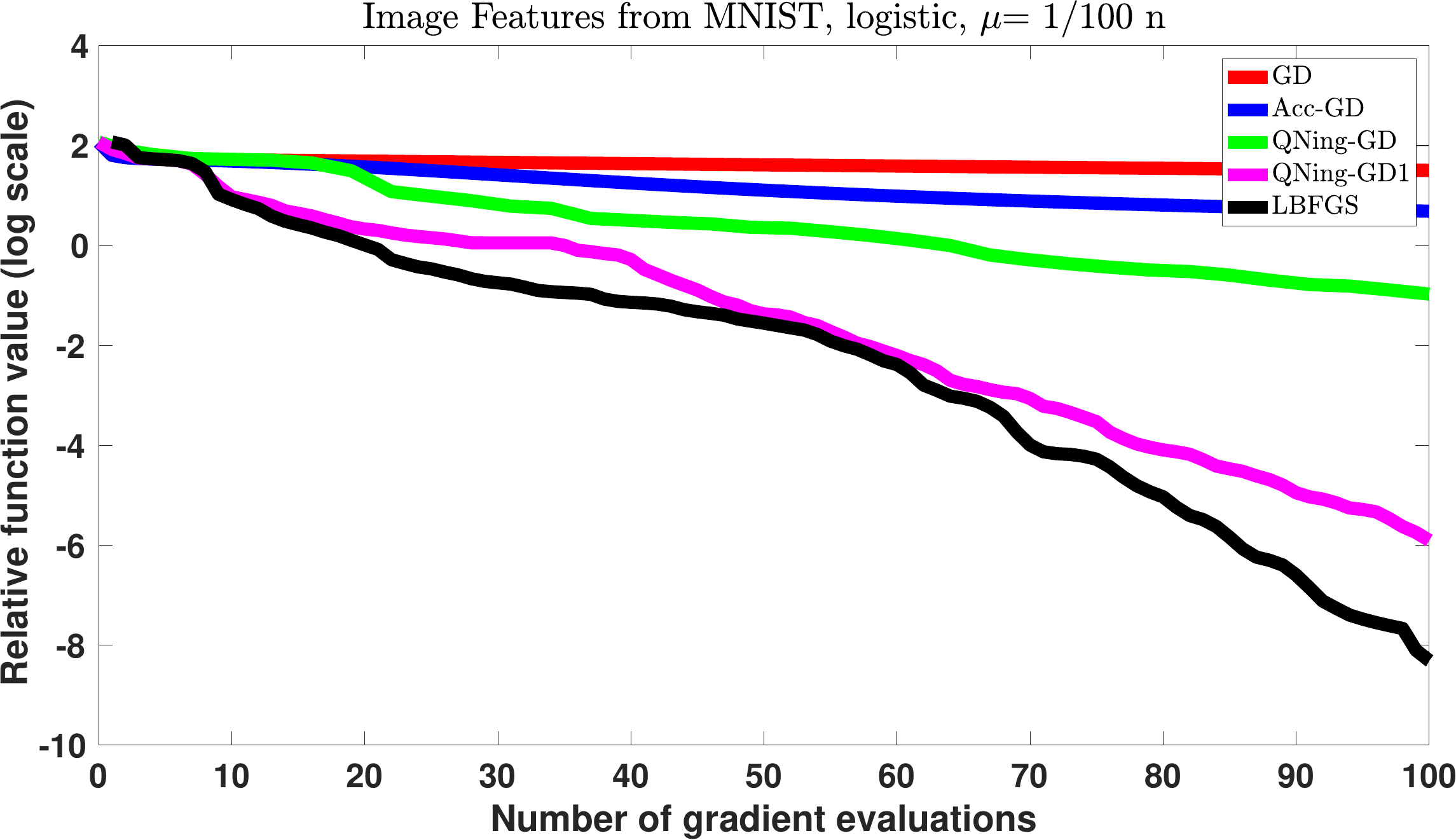} ~ 
   ~~\includegraphics[width=0.30\linewidth]{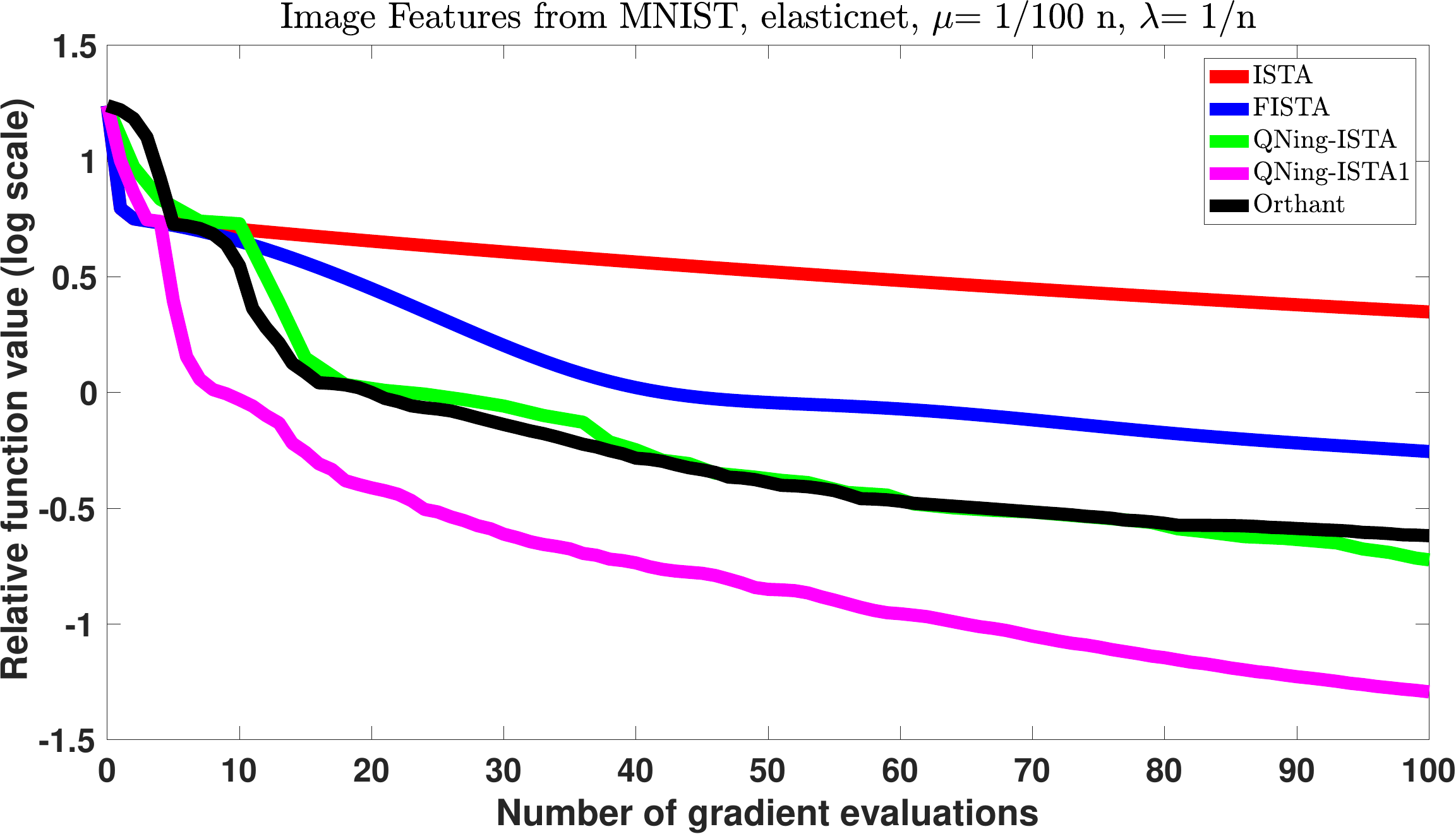} ~ 
   ~~\includegraphics[width=0.30\linewidth]{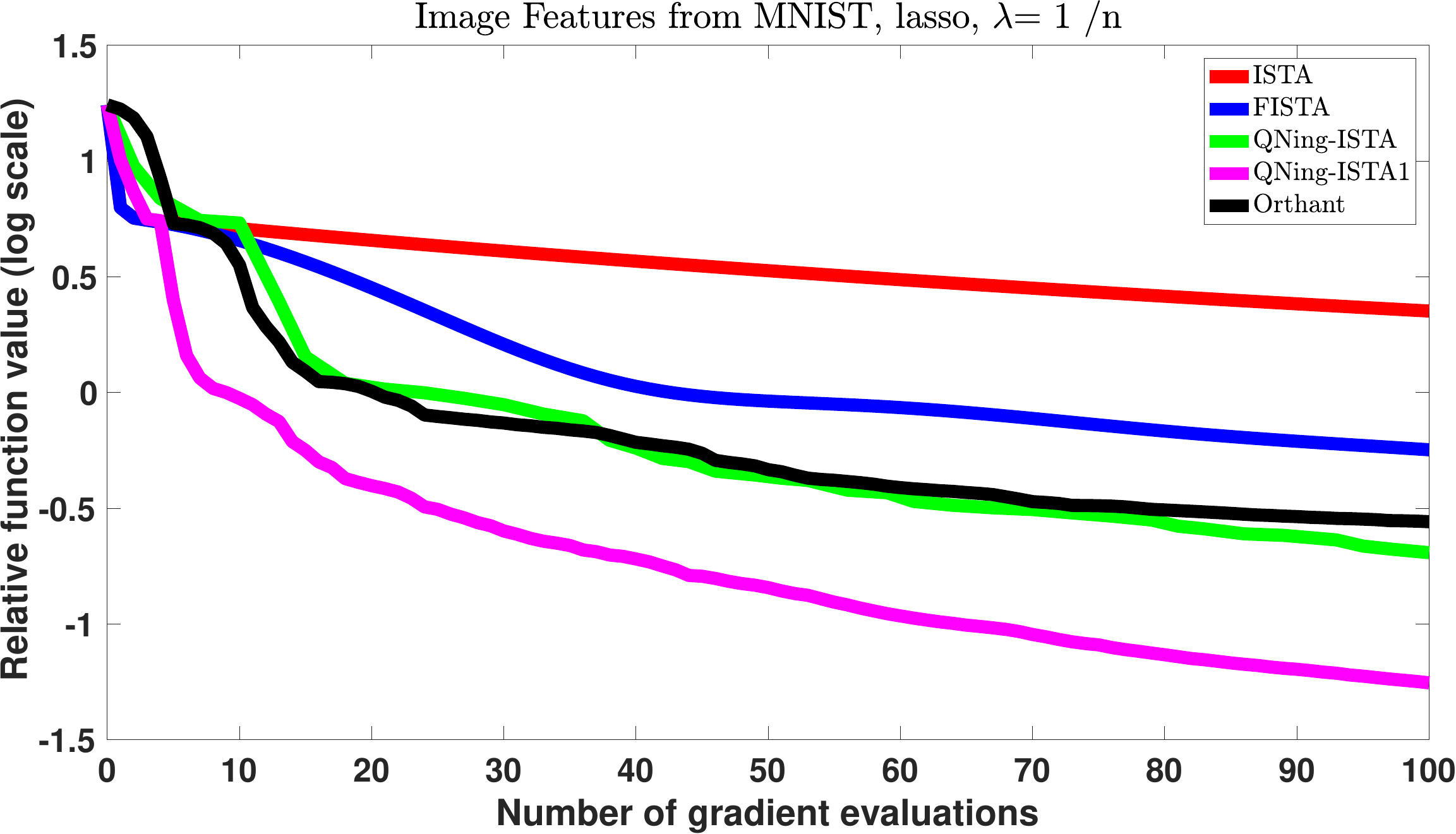} \\
   ~~\includegraphics[width=0.30\linewidth]{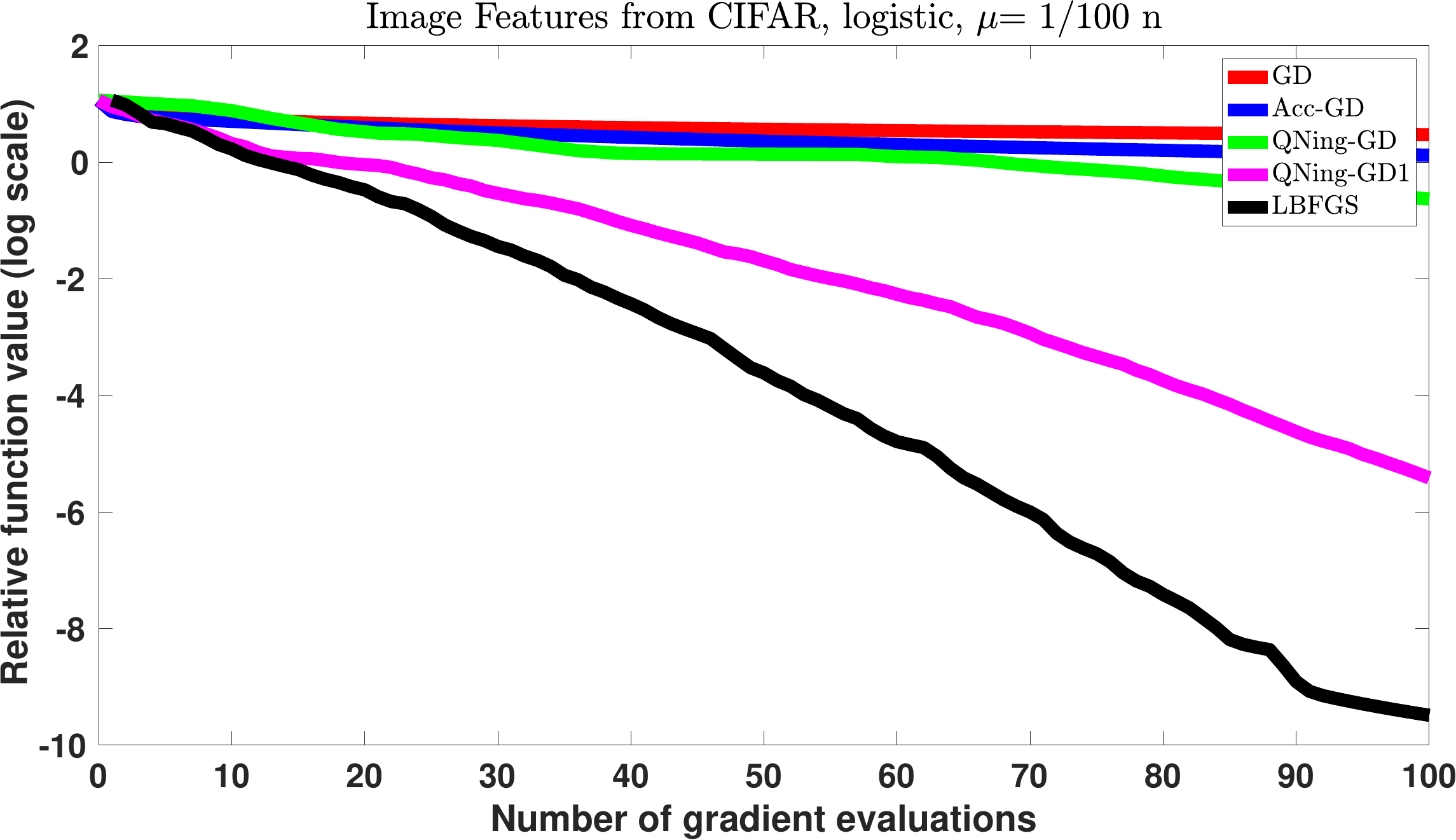} ~ 
   ~~\includegraphics[width=0.30\linewidth]{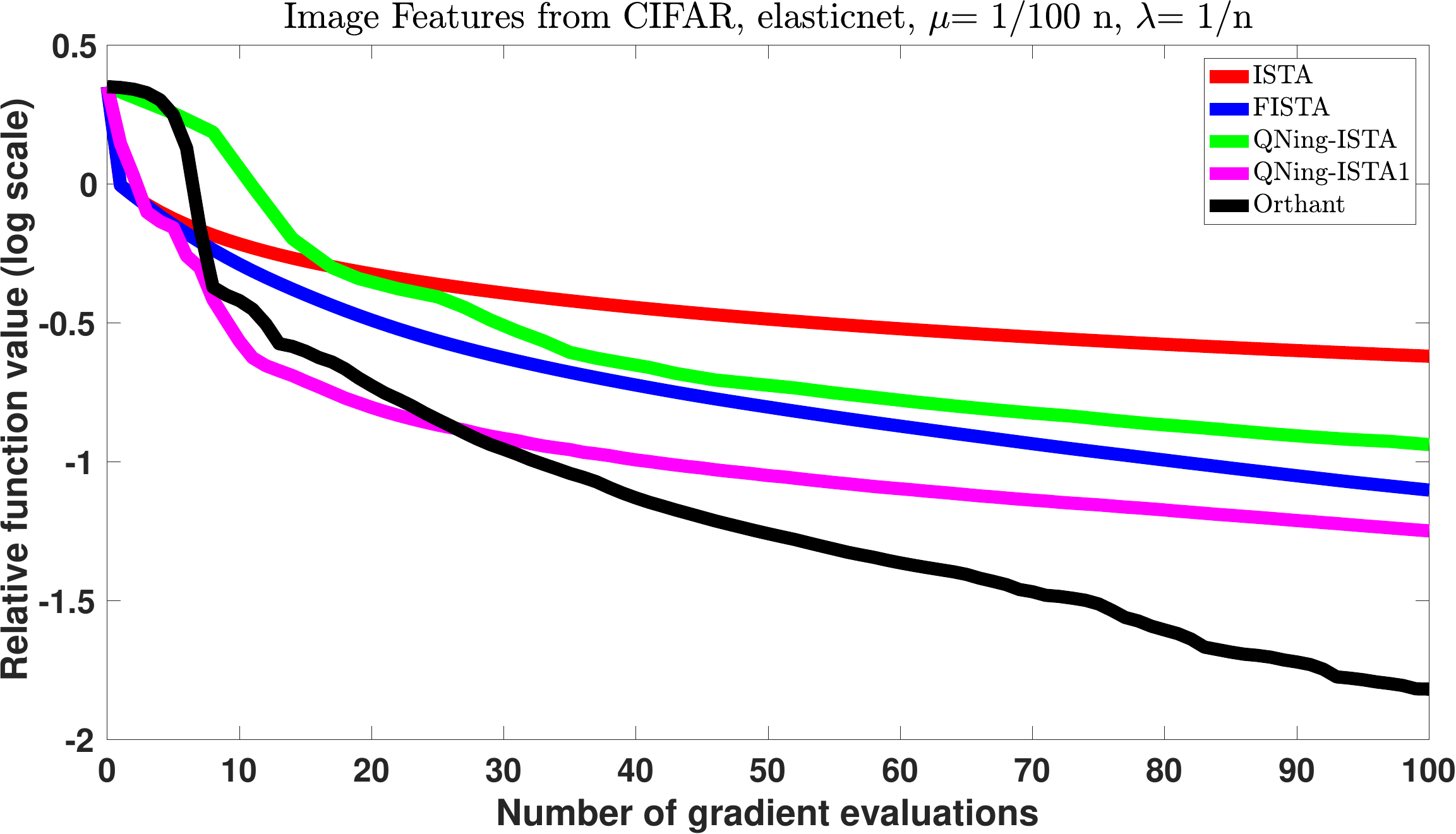} ~ 
   ~~\includegraphics[width=0.30\linewidth]{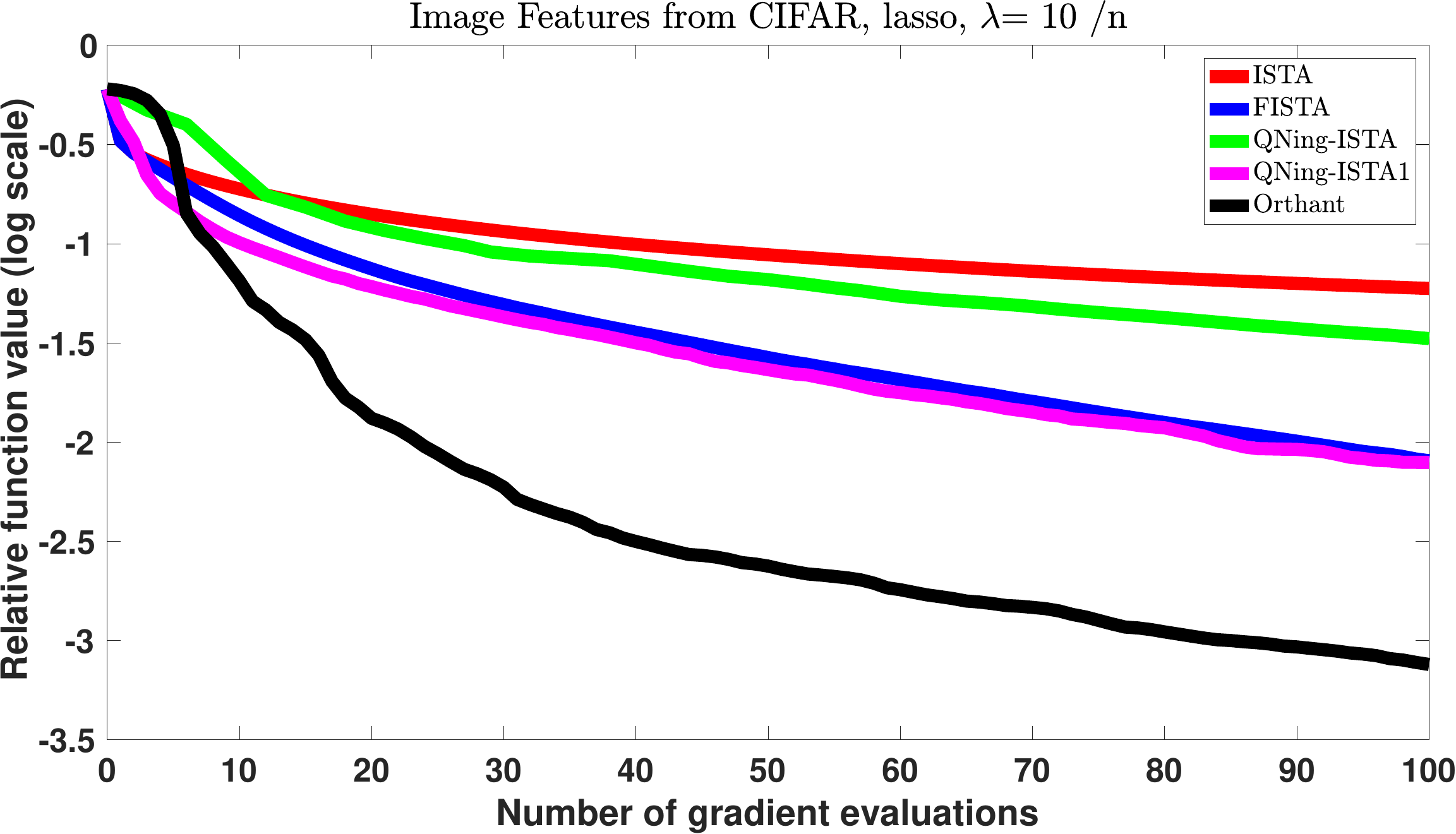} \\
   \caption{Experimental study of the performance of \qning-ISTA. We plot the value~$F(x_k)/F^\star-1$ as
   a function of the number of gradient evaluations, on a logarithmic scale;
   the optimal value $F^\star$ is estimated with a duality gap.  }\label{fig:gd mu=100}
\end{figure}
The results are reported in Figure~\ref{fig:gd mu=100} and lead to the following conclusions
\begin{itemize}
   \item \textbf{L-BFGS} is slightly better on average than \textbf{\qning-ISTA1} for smooth problems, which is not surprising since we use a state-of-the-art implementation with a well-calibrated line search.
   \item \textbf{\qning-ISTA1} is always significantly faster than \textbf{ISTA} and \textbf{\qning-ISTA}. 
   \item The \textbf{\qning-ISTA} approaches are significantly faster than \textbf{FISTA} in 12 cases out of 15.
   \item There is no clear conclusion regarding the performance of the \textbf{Orthant-wise} method vs other approaches. For three datasets, \textsf{covtype, alpha} and  \textsf{mnist}, \textbf{\qning-ISTA} is significantly better than \textbf{Orthant-wise}. However, on the other two datasets, the behavior is different \textbf{Orthant-wise} method outperforms \textbf{\qning-ISTA}. 
\end{itemize}

\subsection{Experimental study of hyper-parameters~$l$ and~$\kappa$}\label{subsec:param}
In this section, we study the influence of the limited memory parameter $l$ and
of the regularization parameter~$\kappa$ in \qning. More precisely, we
start with the parameter~$l$ and try the method \textbf{\qning-SVRG1} with
the values $l=1,2,5,10,20,100$. Note that all previous experiments were
conducted with $l=100$, which is the most expensive in terms of memory and
computational cost for the L-BFGS step. The results are presented in
Figure~\ref{fig:studyl}.  Interestingly, the experiment suggests that having a
large value for~$l$ is not necessarily the best choice, especially for composite problems where
the solution is sparse, where~$l=10$ seems to perform reasonably well.

\begin{figure}[hbtp!]
\centering
   ~~\includegraphics[width=0.30\linewidth]{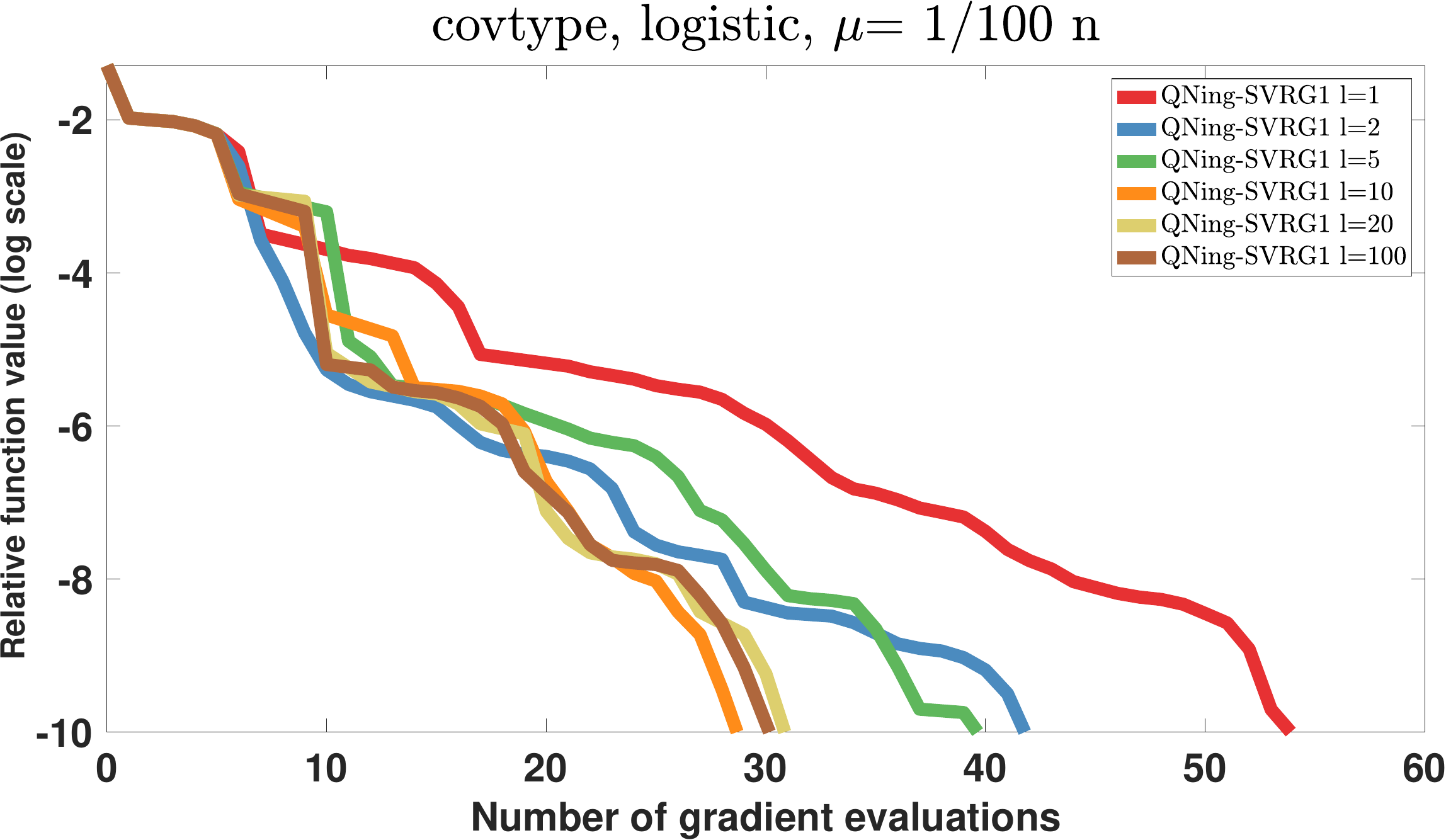} ~ 
   ~~\includegraphics[width=0.30\linewidth]{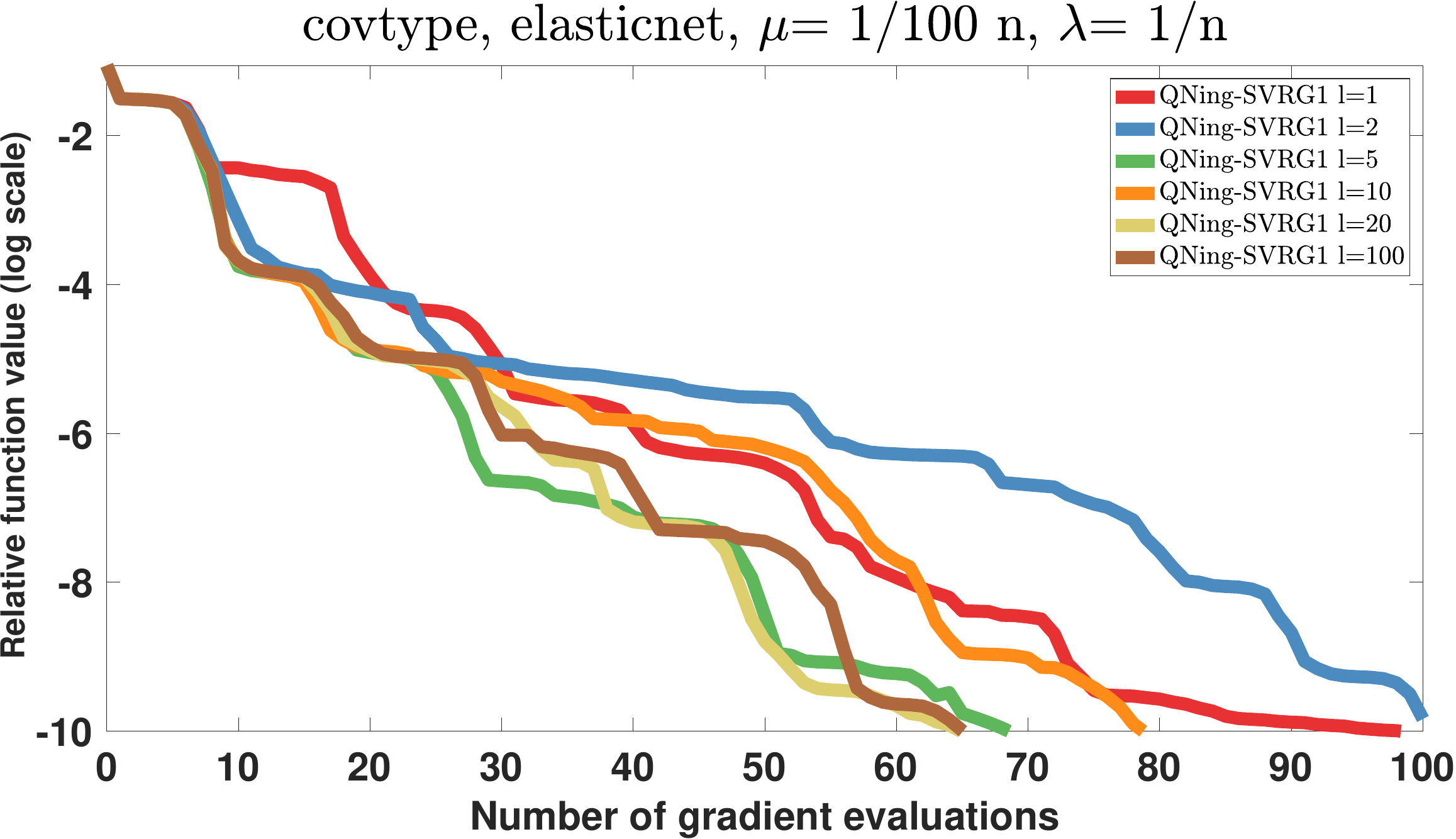} ~ 
   ~~\includegraphics[width=0.30\linewidth]{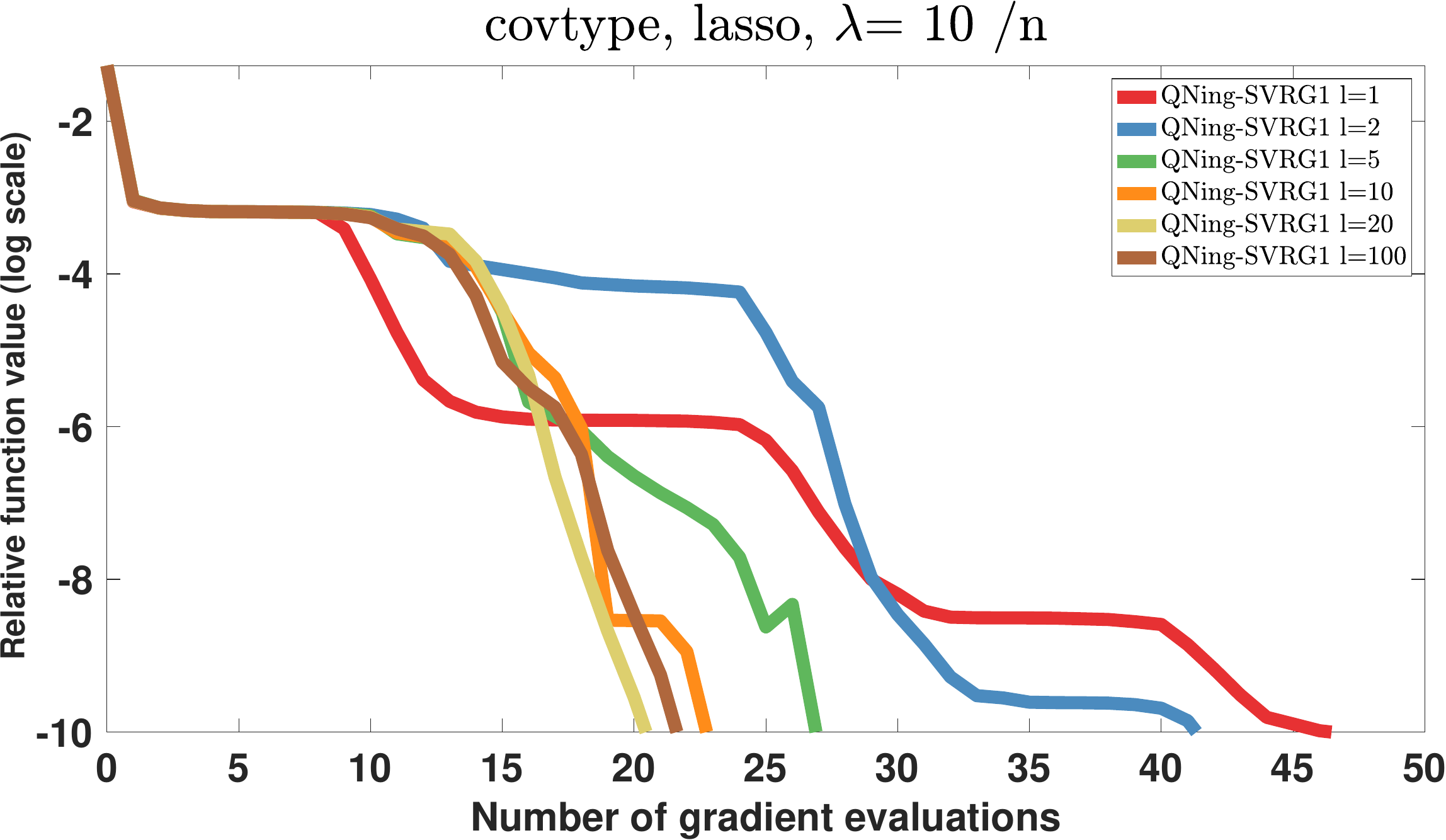}\\  
   ~~\includegraphics[width=0.30\linewidth]{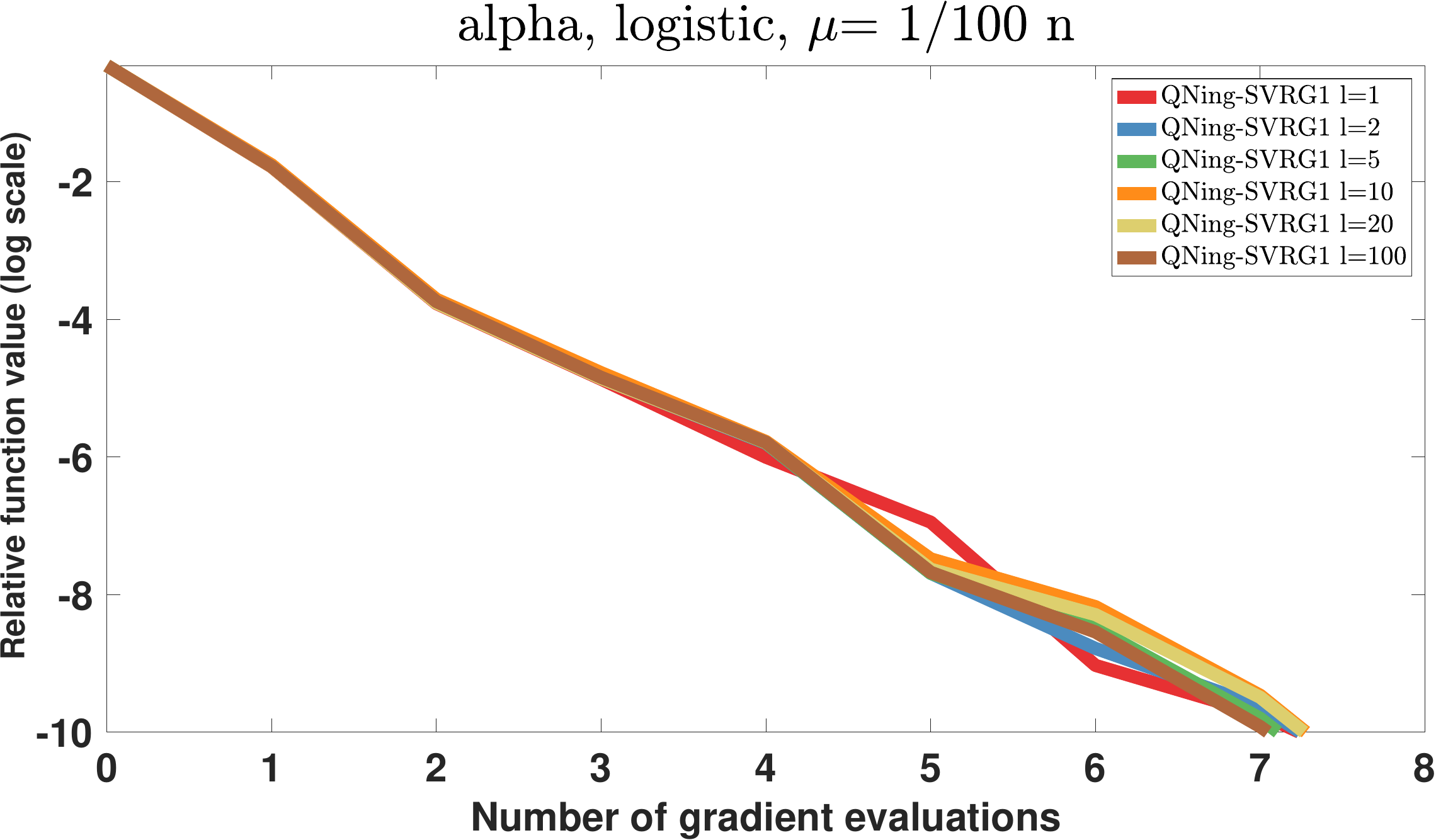} ~ 
   ~~\includegraphics[width=0.30\linewidth]{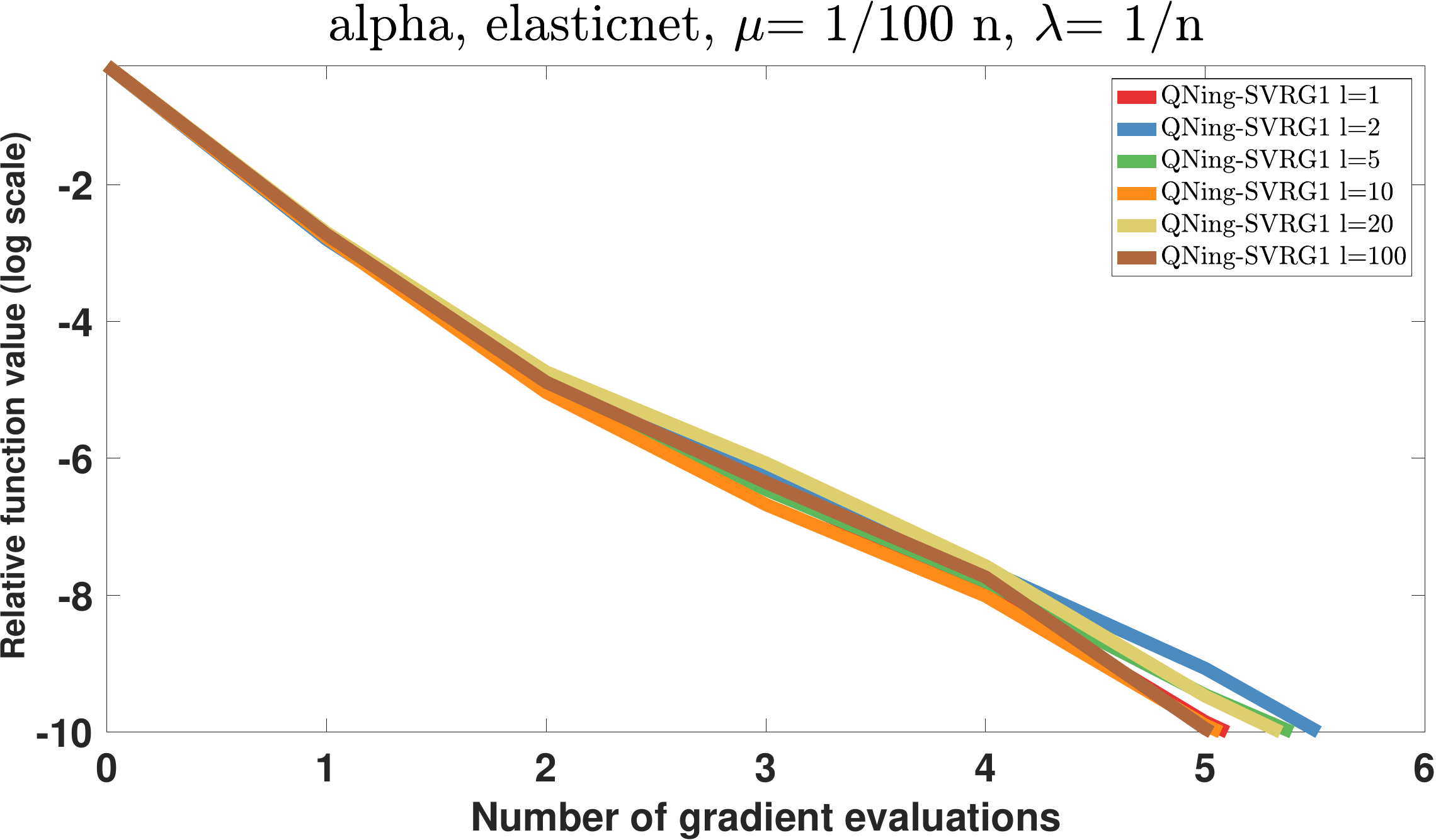} ~ 
   ~~\includegraphics[width=0.30\linewidth]{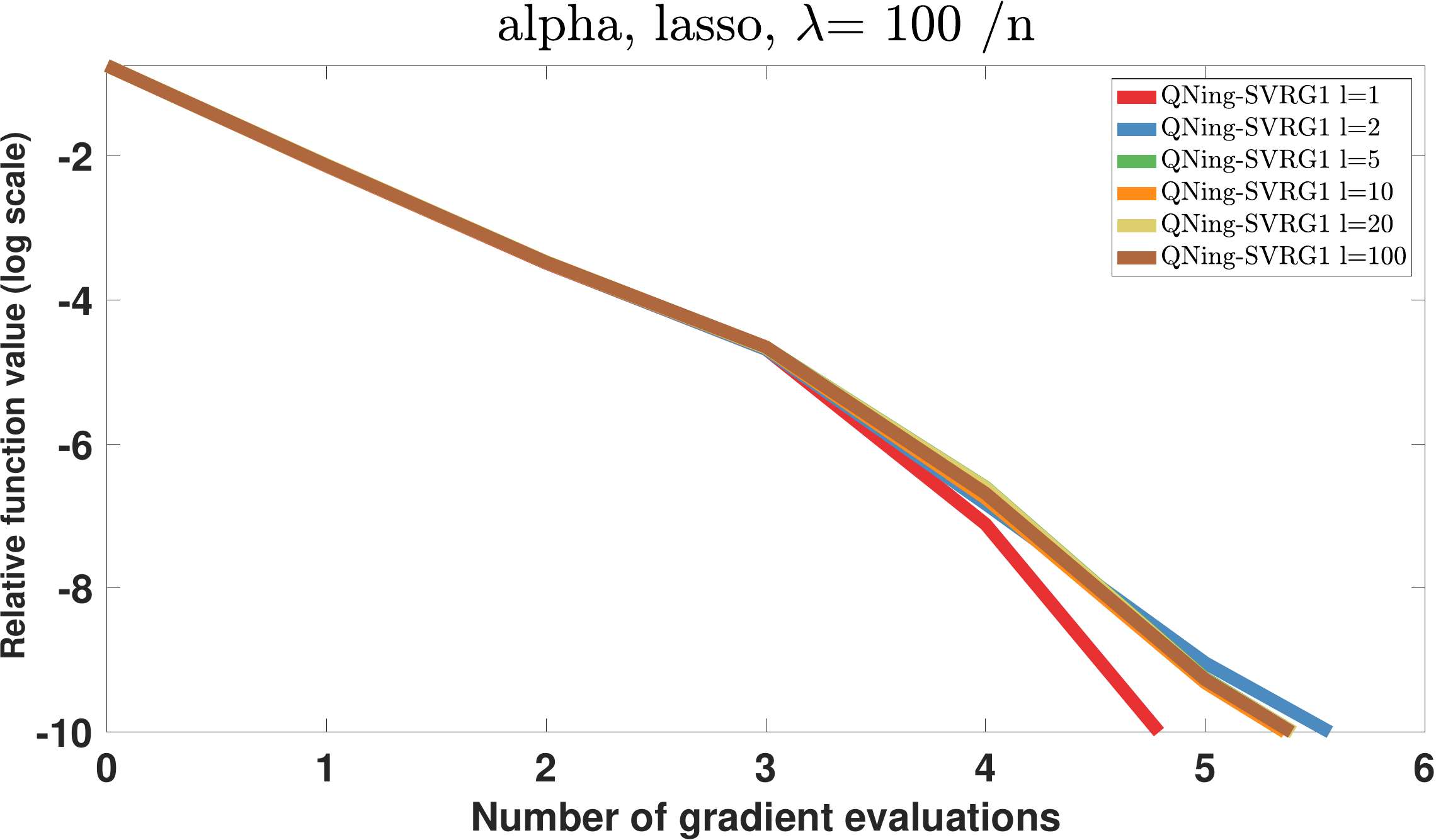}\\
   ~~\includegraphics[width=0.30\linewidth]{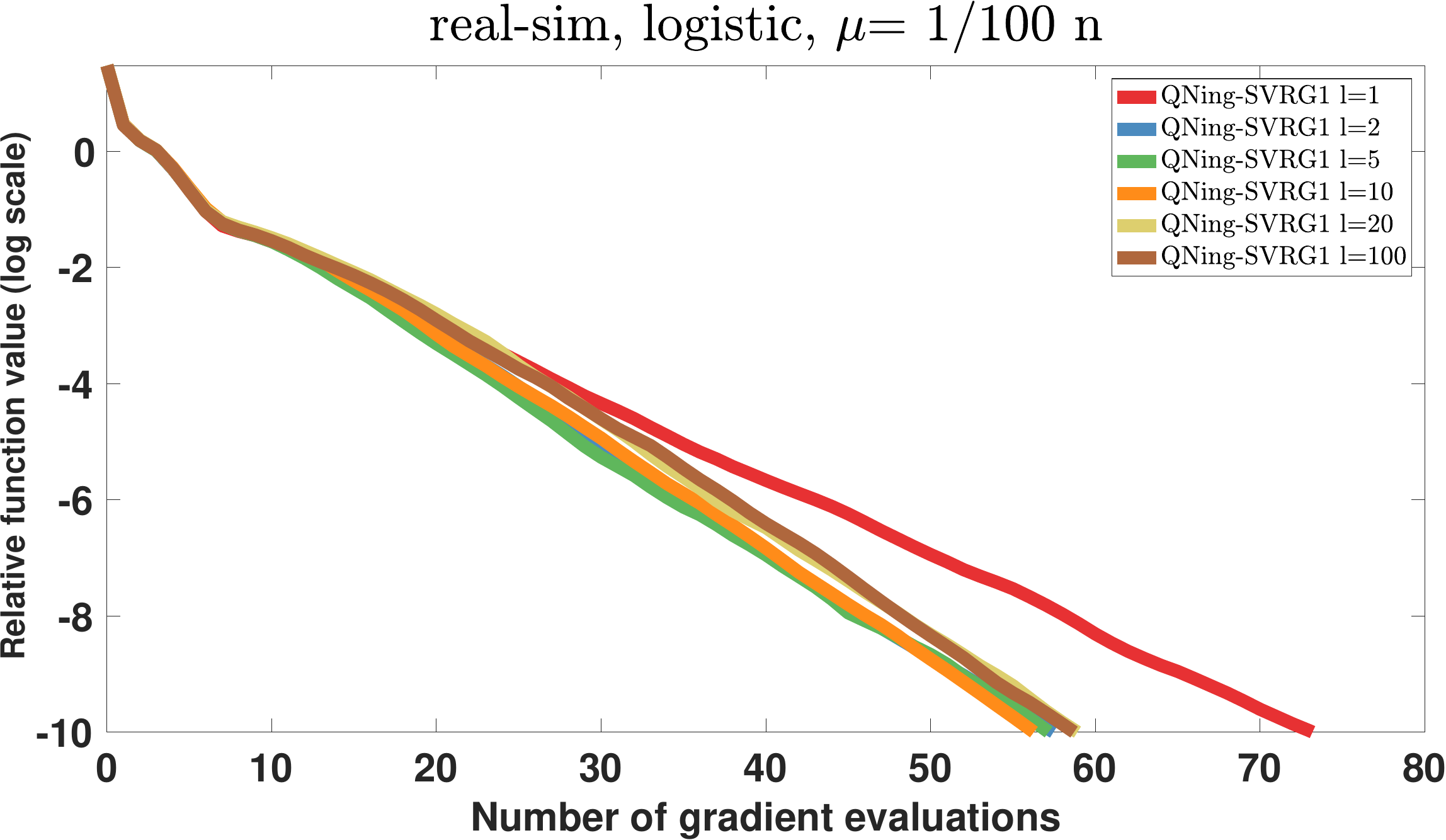} ~ 
   ~~\includegraphics[width=0.30\linewidth]{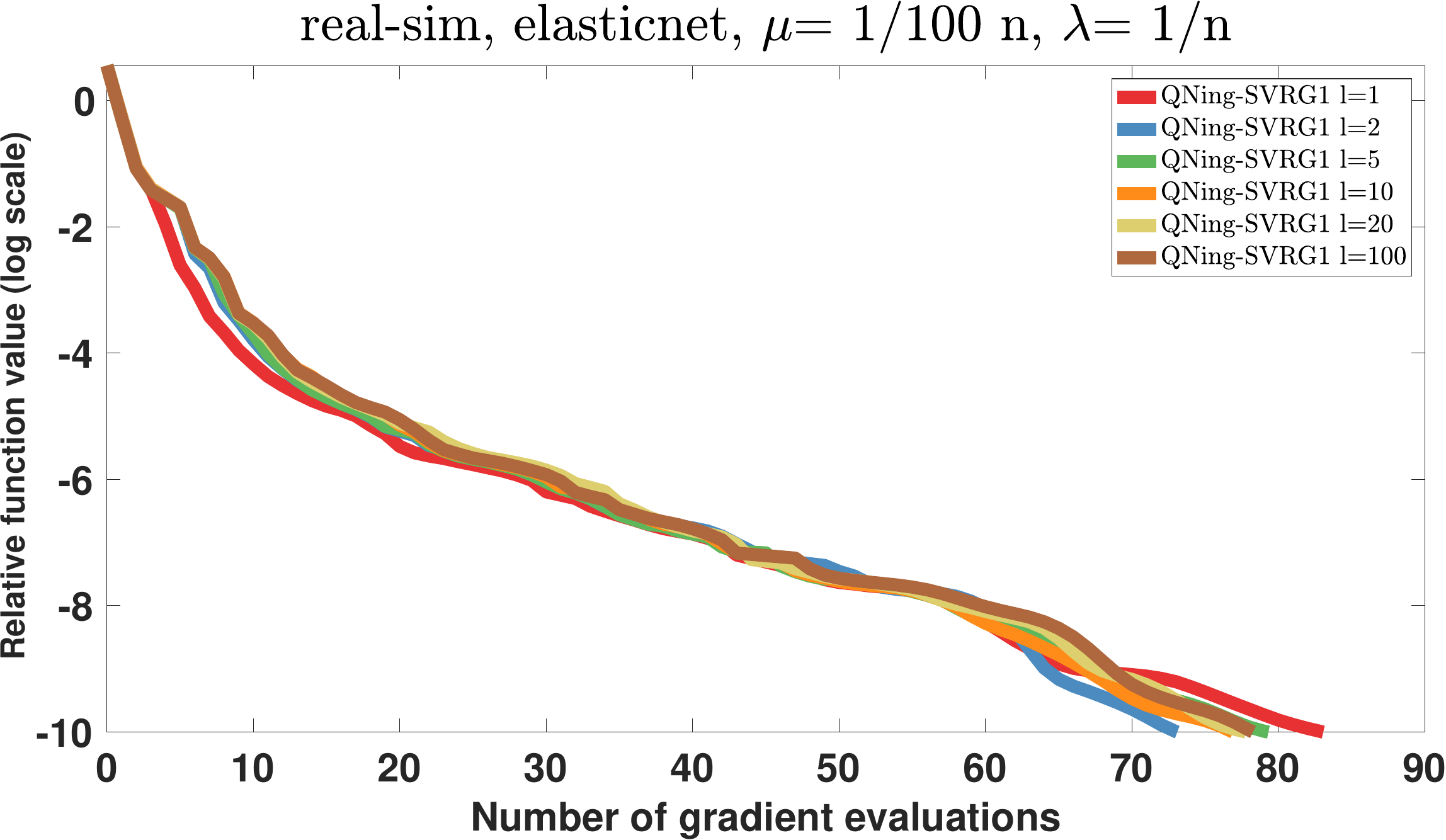} ~ 
   ~~\includegraphics[width=0.30\linewidth]{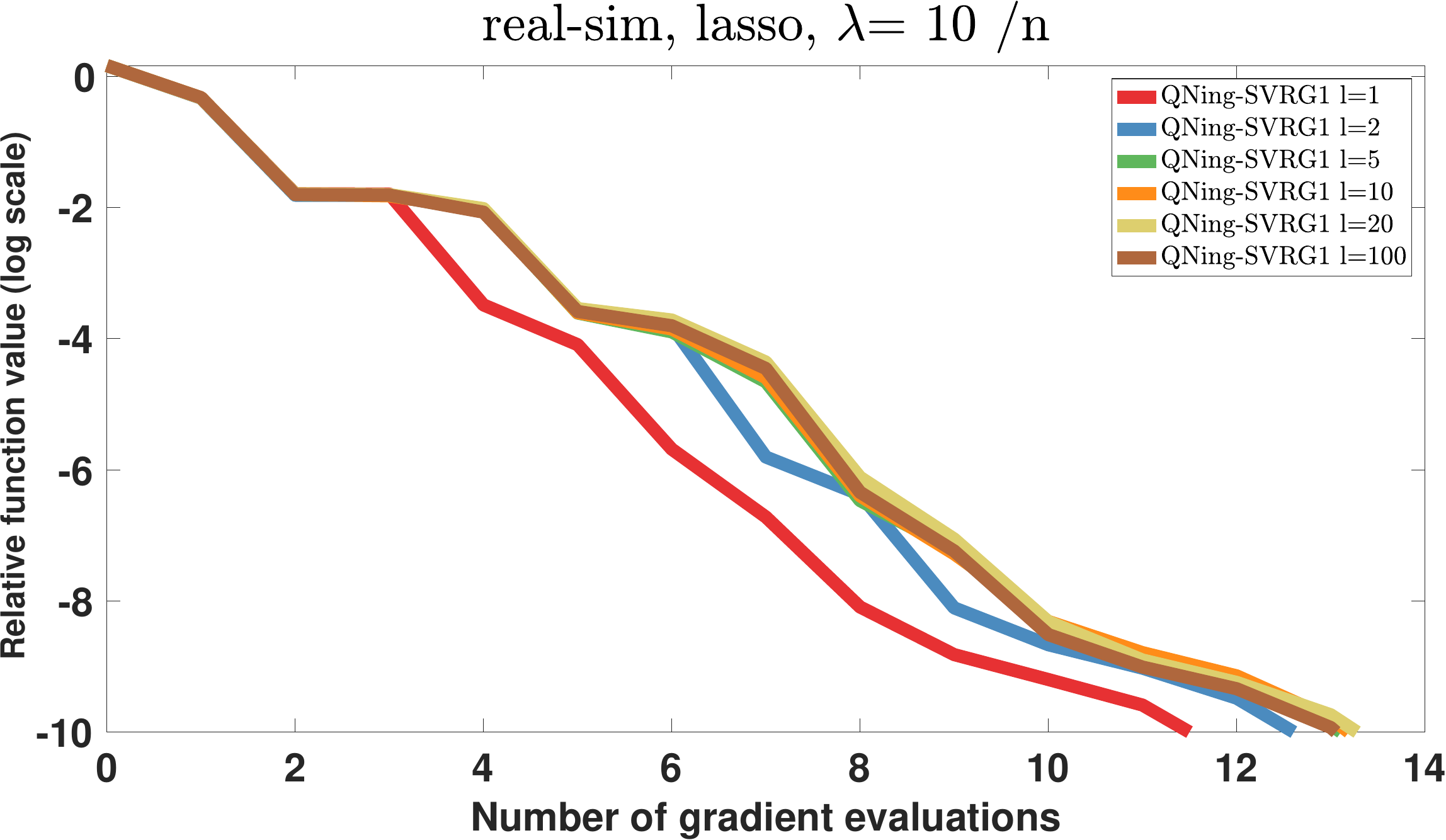} \\
   ~~\includegraphics[width=0.30\linewidth]{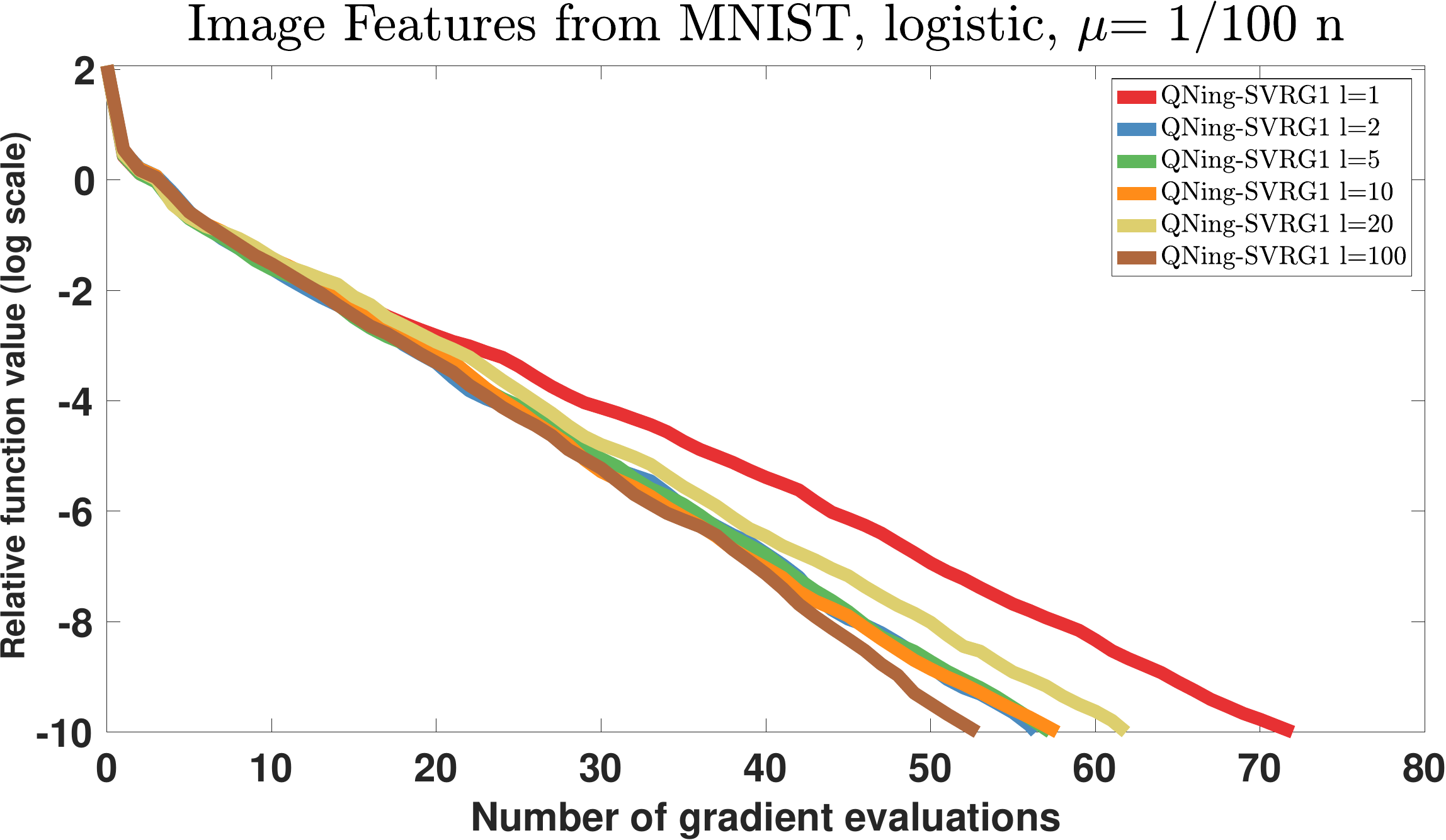} ~ 
   ~~\includegraphics[width=0.30\linewidth]{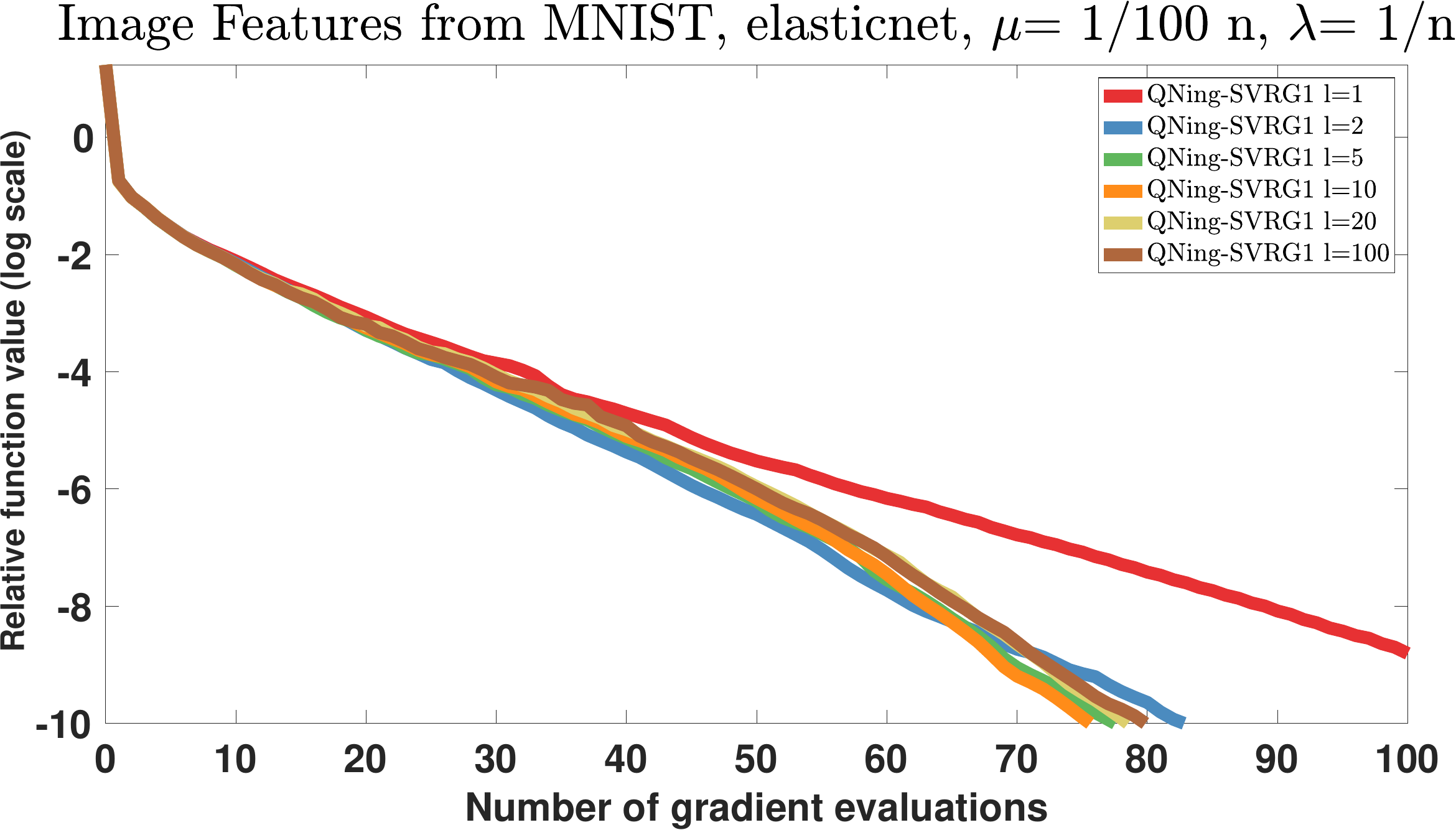} ~ 
   ~~\includegraphics[width=0.30\linewidth]{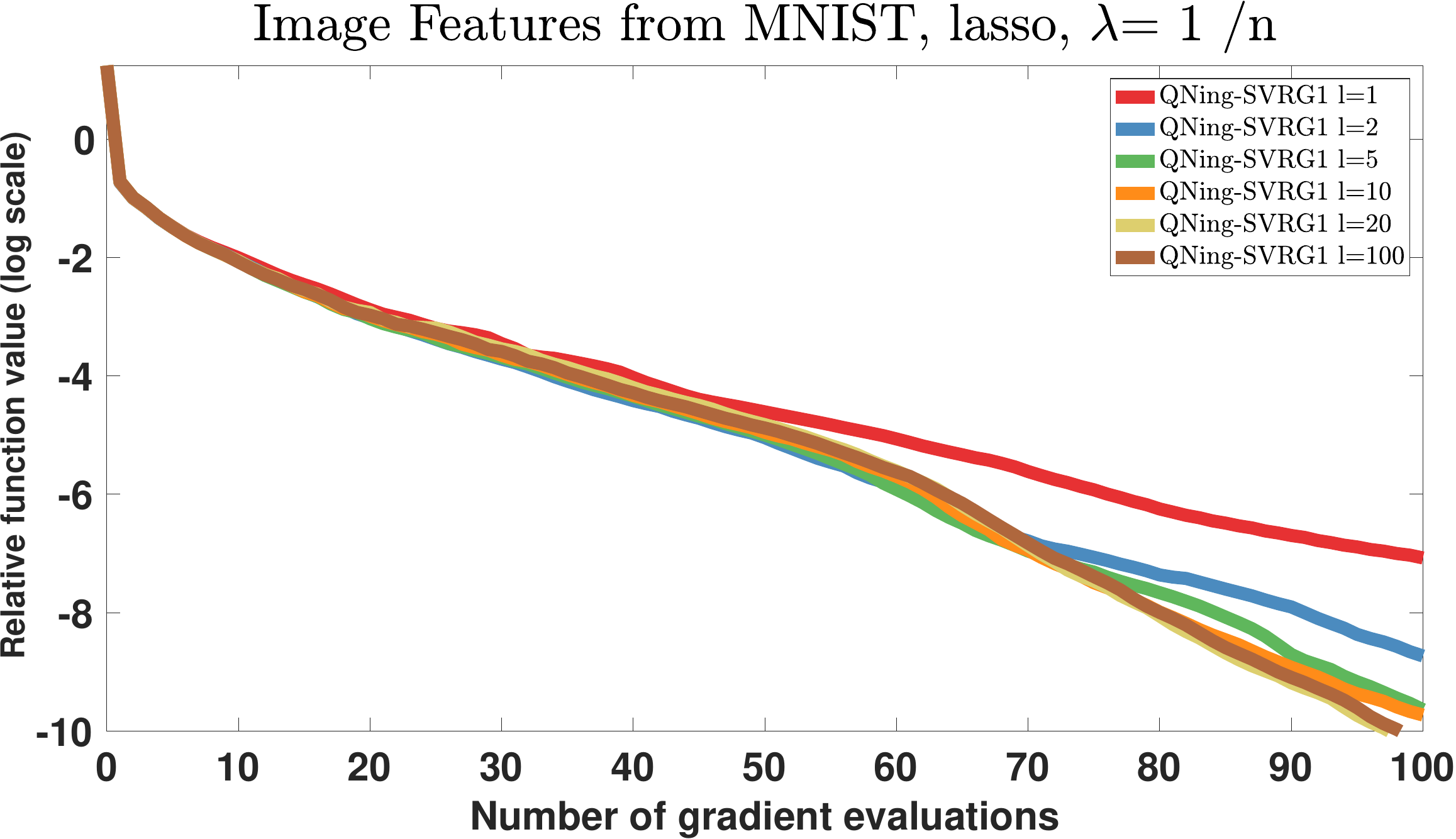}\\
   ~~\includegraphics[width=0.30\linewidth]{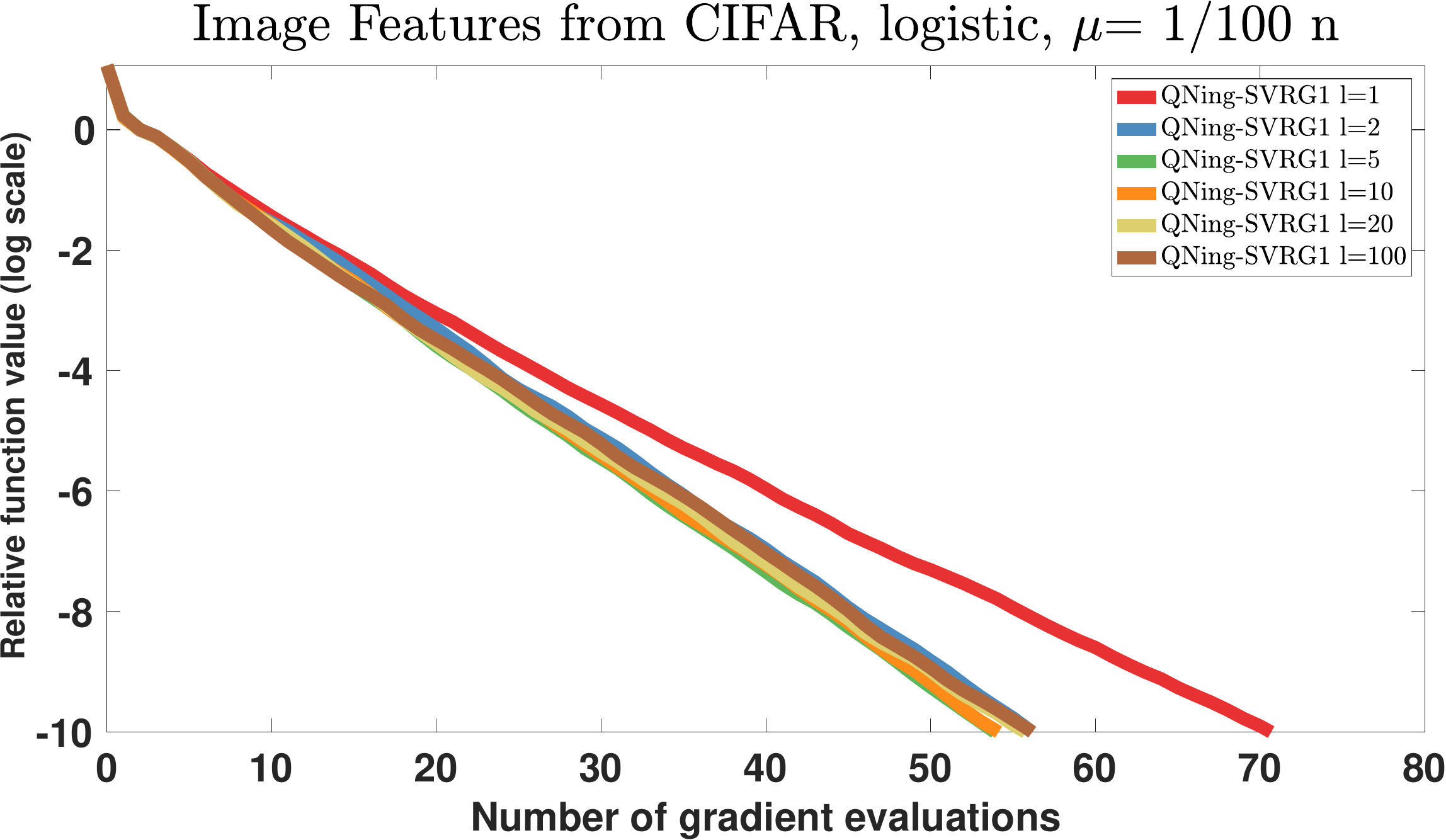} ~ 
   ~~\includegraphics[width=0.30\linewidth]{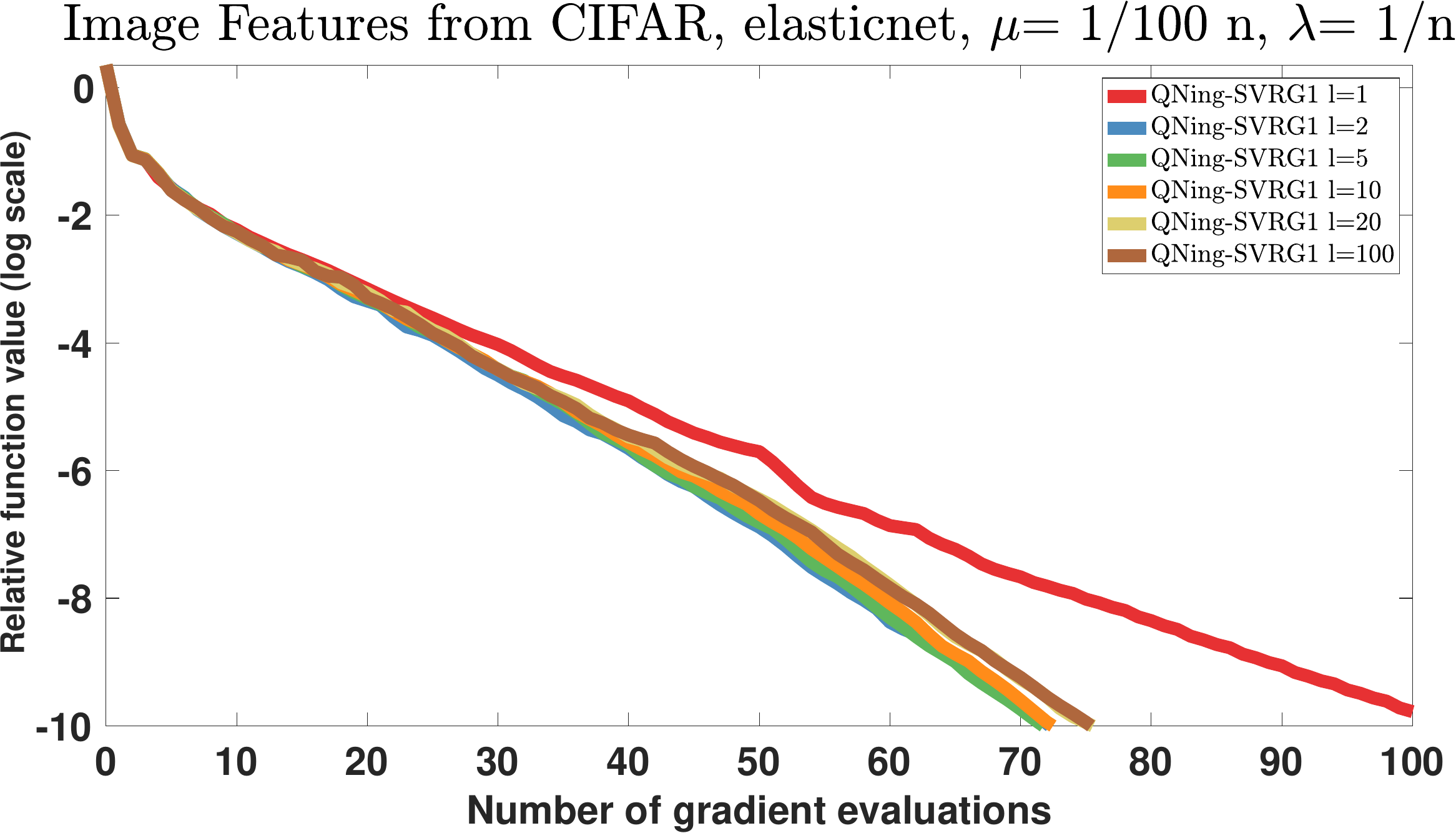} ~ 
   ~~\includegraphics[width=0.30\linewidth]{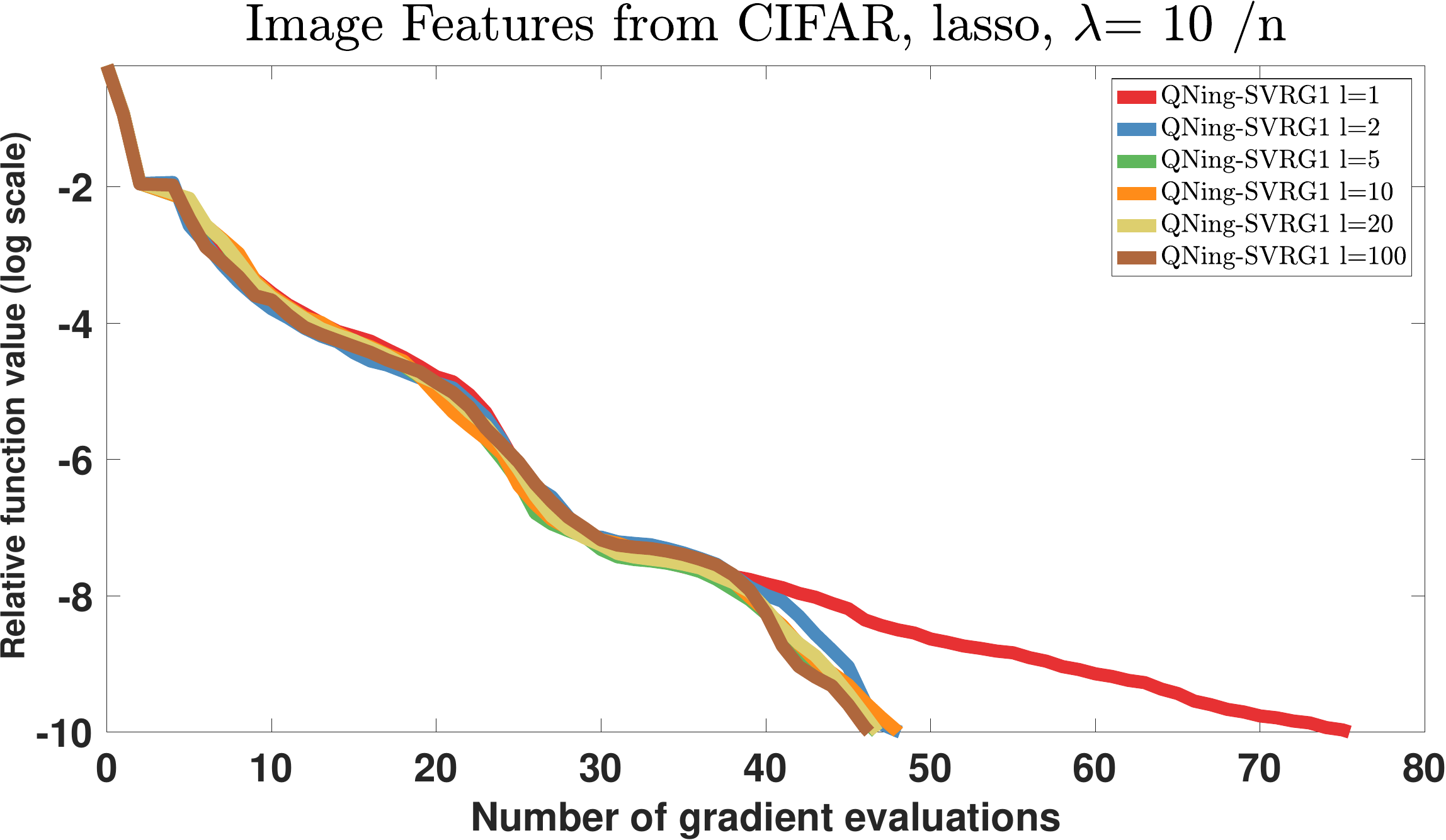} \\
   \caption{Experimental study of influence of the limited-memory parameter~$l$ for \qning-SVRG1. We plot the value~$F(x_k)/F^\star-1$ as
   a function of the number of gradient evaluations, on a logarithmic scale;
   the optimal value $F^\star$ is estimated with a duality gap.}\label{fig:studyl}
\end{figure}

The next experiment consists of studying the robustness of \qningsp to the
smoothing parameter~$\kappa$. We present in Figure~\ref{fig:kappa} an
experiment by trying the values $\kappa=10^i \kappa_0$, for
$i=-3,-2,\ldots,2,3$, where $\kappa_0$ is the default parameter that we used in the previous experiments.
The conclusion is clear: \qningsp clearly slows down when using a larger
smoothing parameter than~$\kappa_0$, but it is robust to small values
of~$\kappa$ (and in fact it even performs better for smaller values than
$\kappa_0$). 

\begin{figure}[hbtp!]
   \centering
   ~~\includegraphics[width=0.30\linewidth]{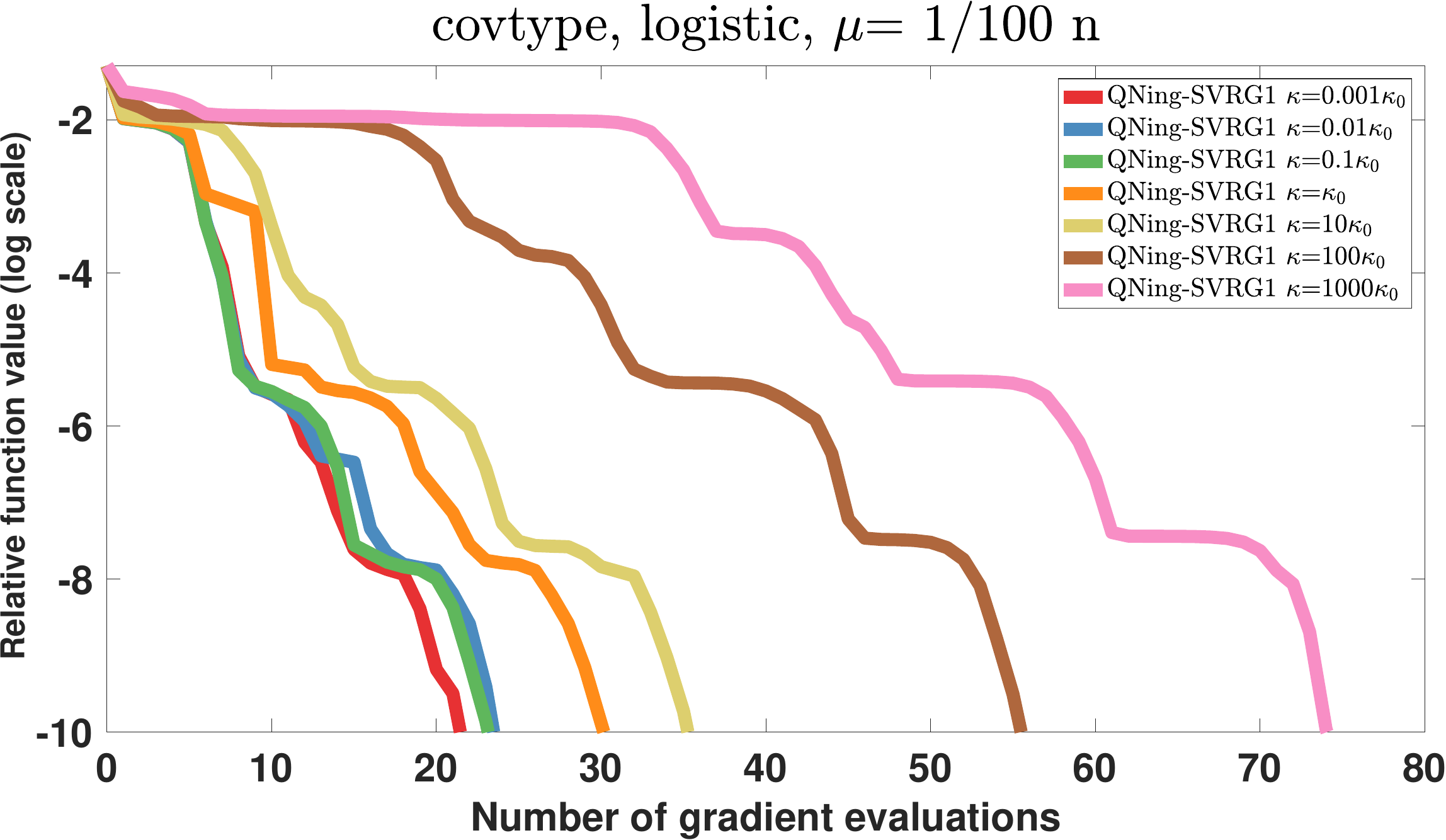} ~ 
   ~~\includegraphics[width=0.30\linewidth]{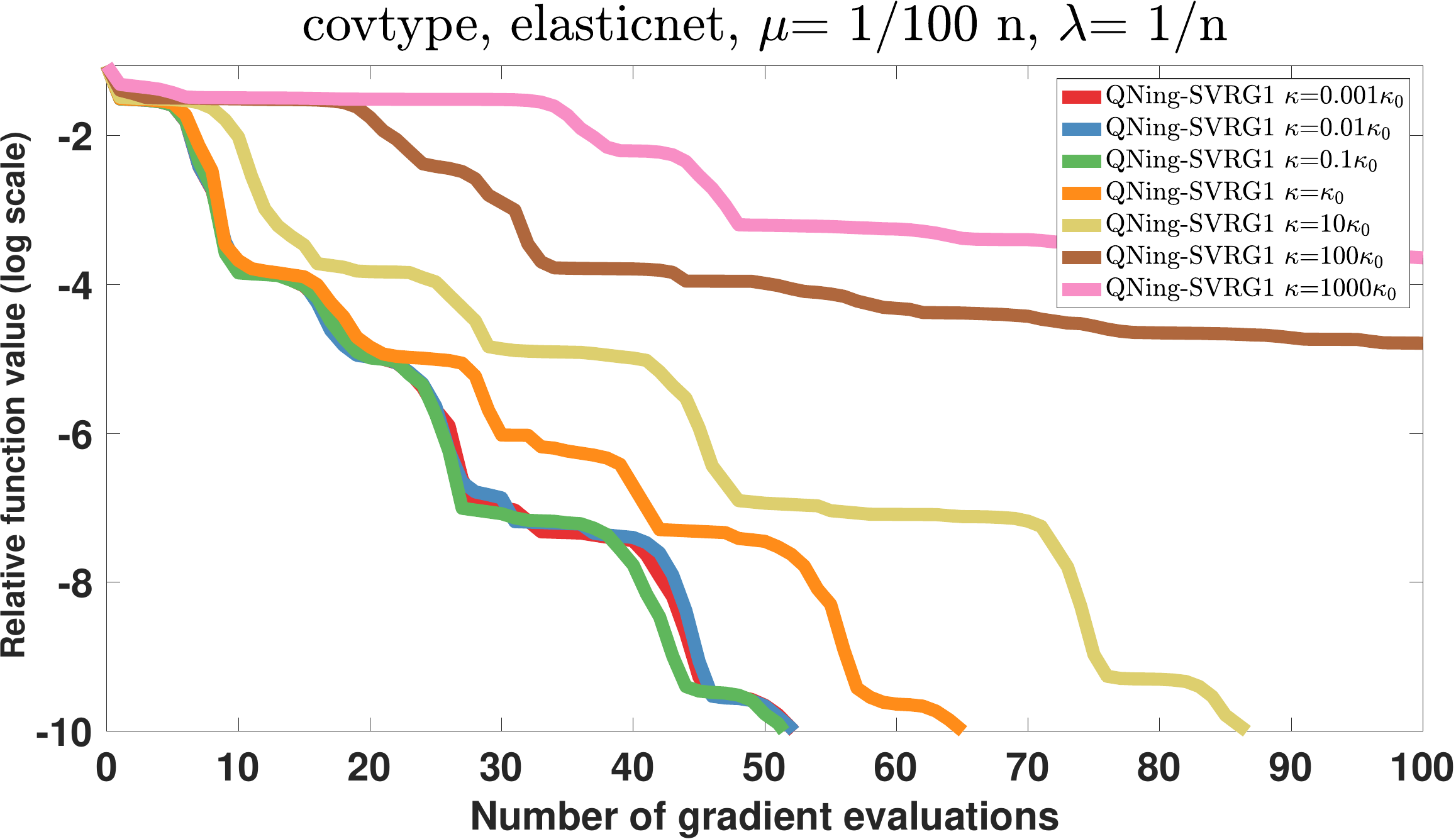} ~ 
   ~~\includegraphics[width=0.30\linewidth]{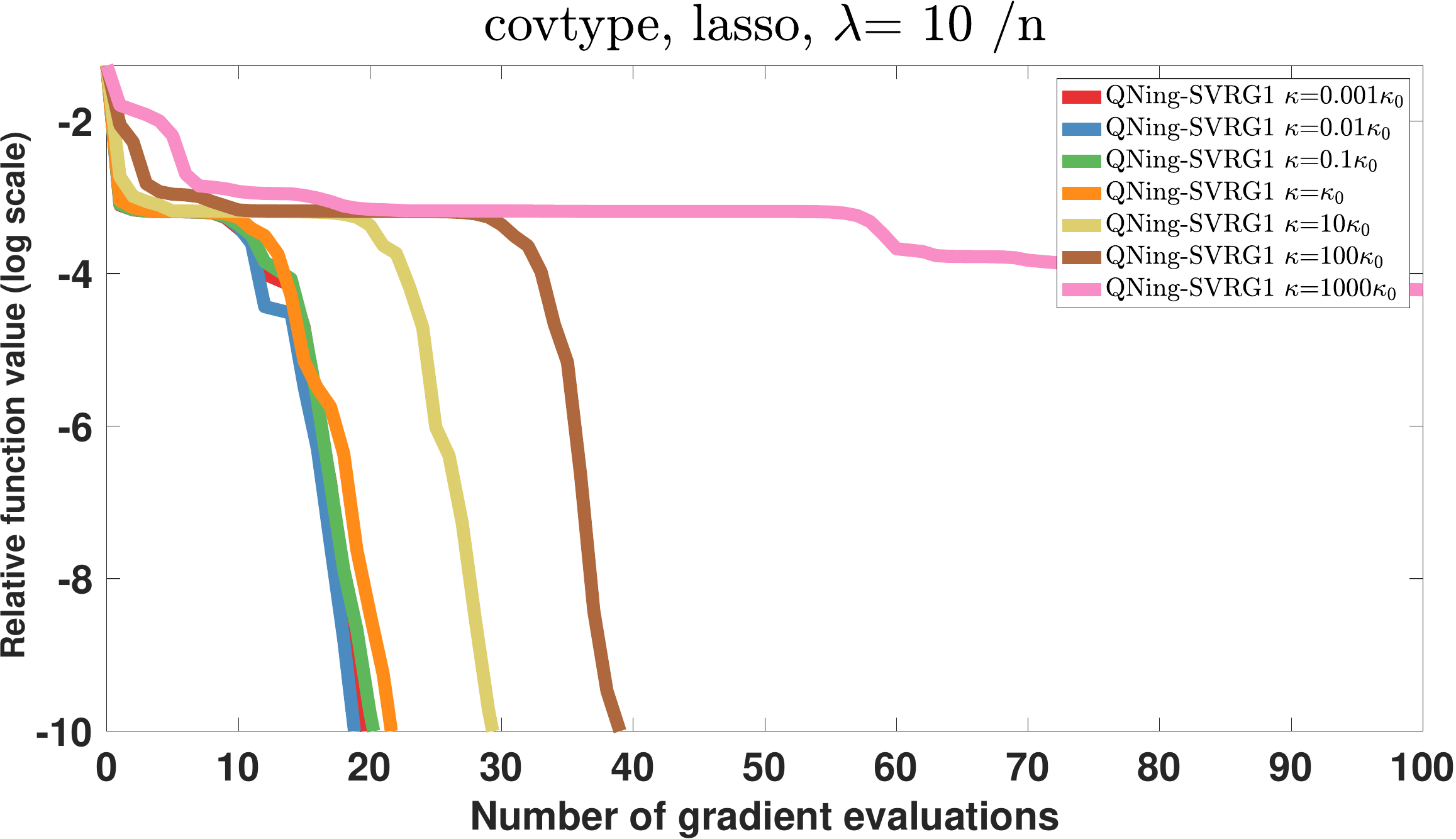} \\
   ~~\includegraphics[width=0.30\linewidth]{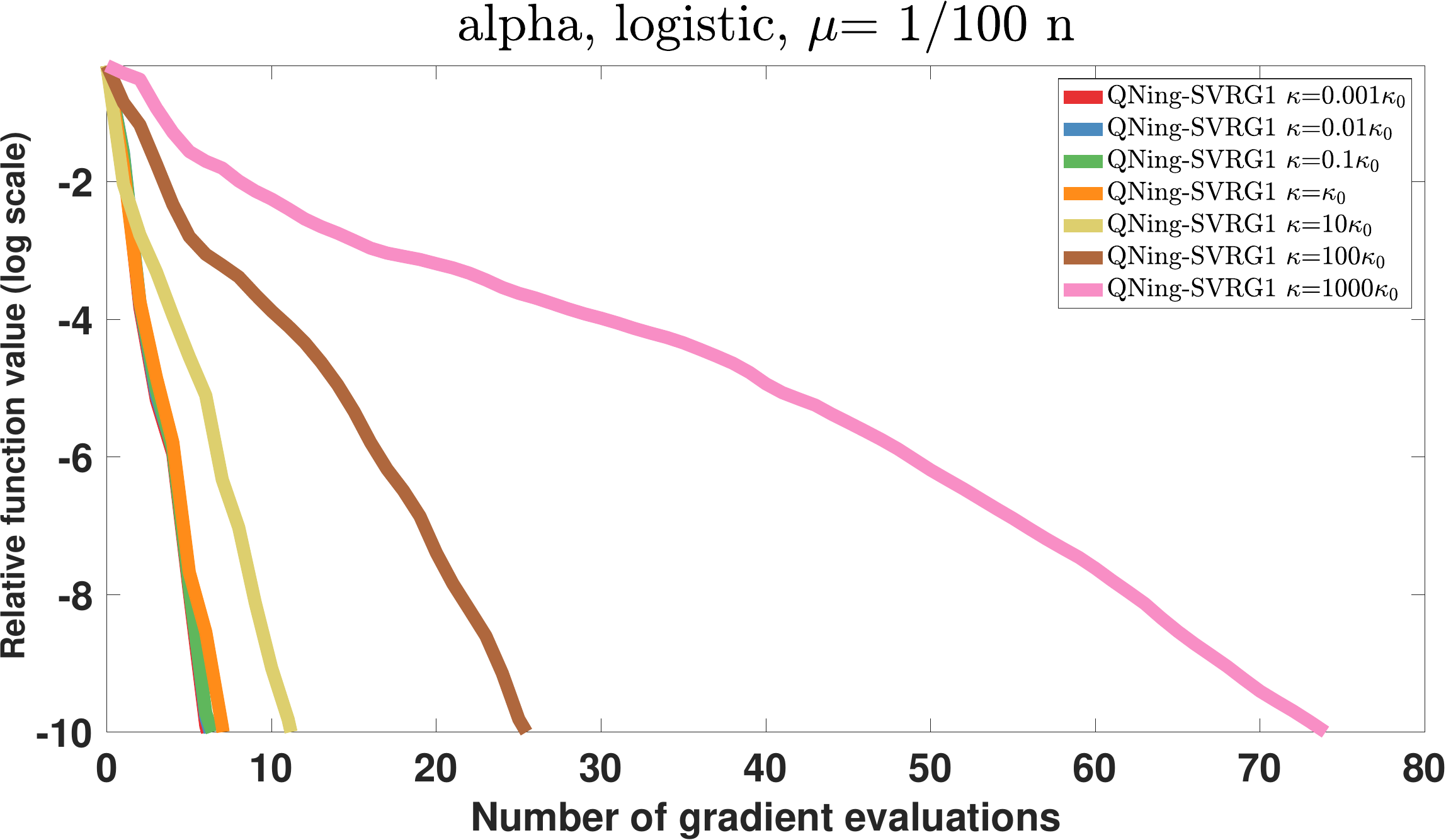} ~ 
   ~~\includegraphics[width=0.30\linewidth]{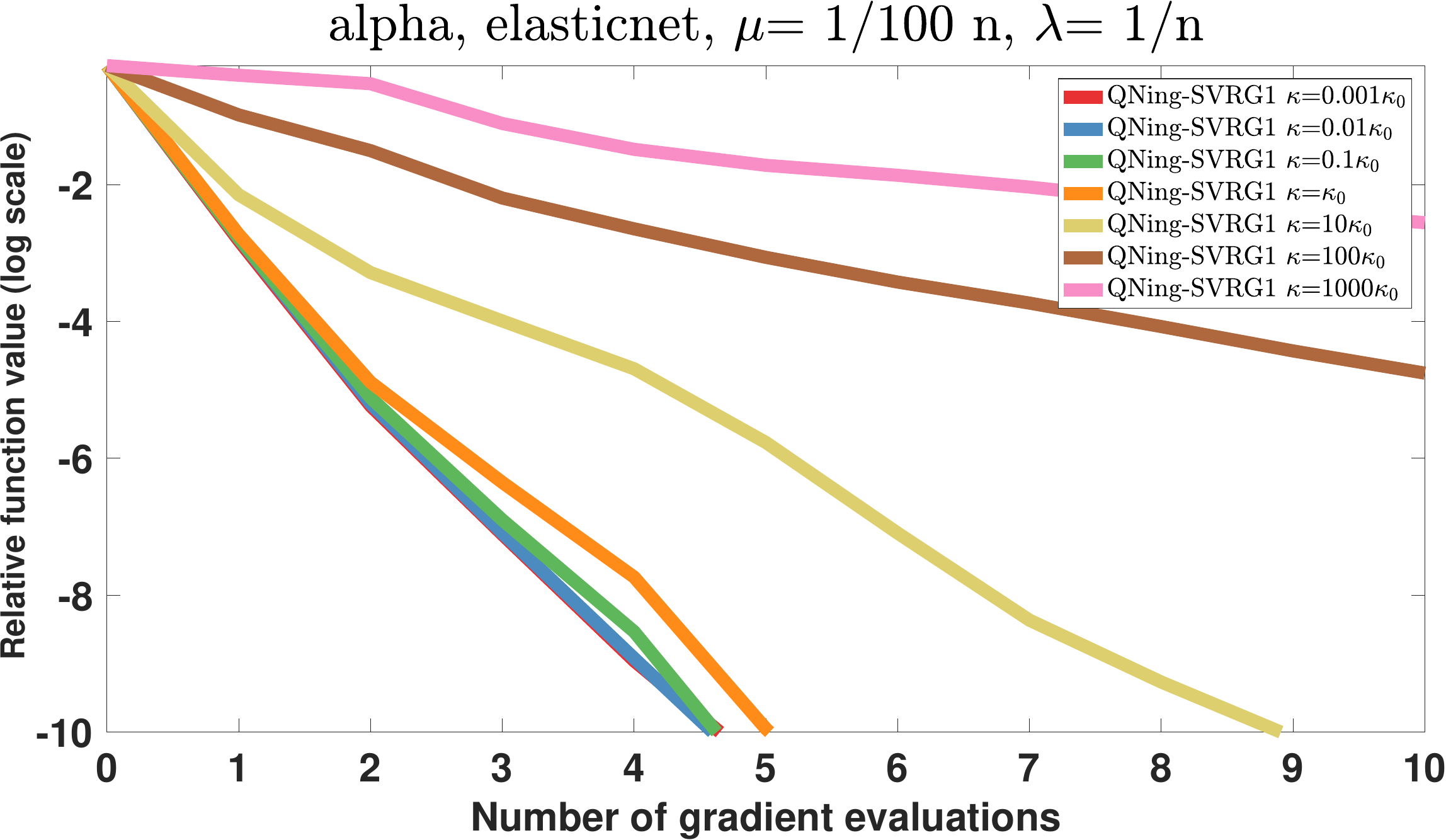} ~ 
   ~~\includegraphics[width=0.30\linewidth]{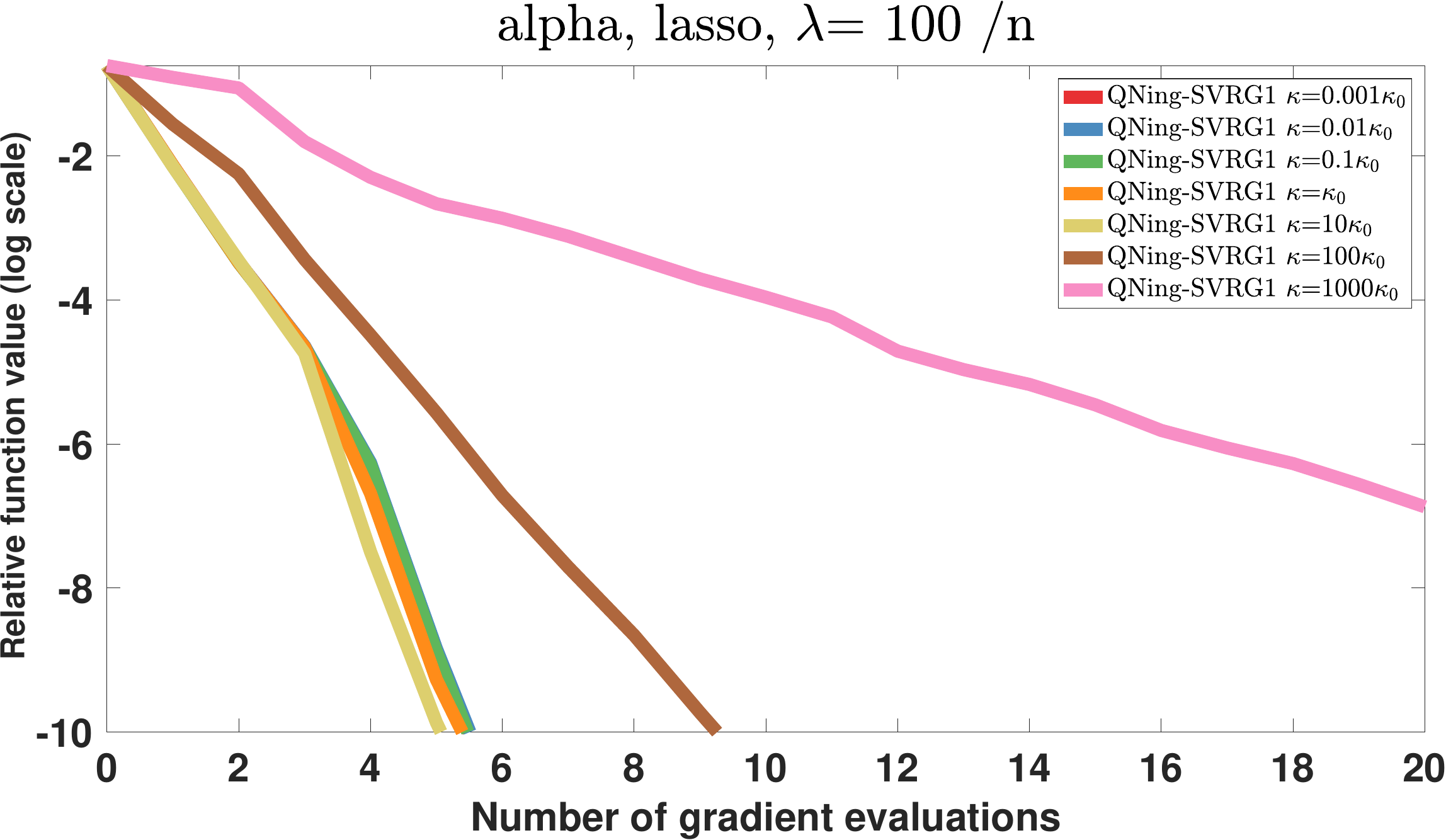}\\
   ~~\includegraphics[width=0.30\linewidth]{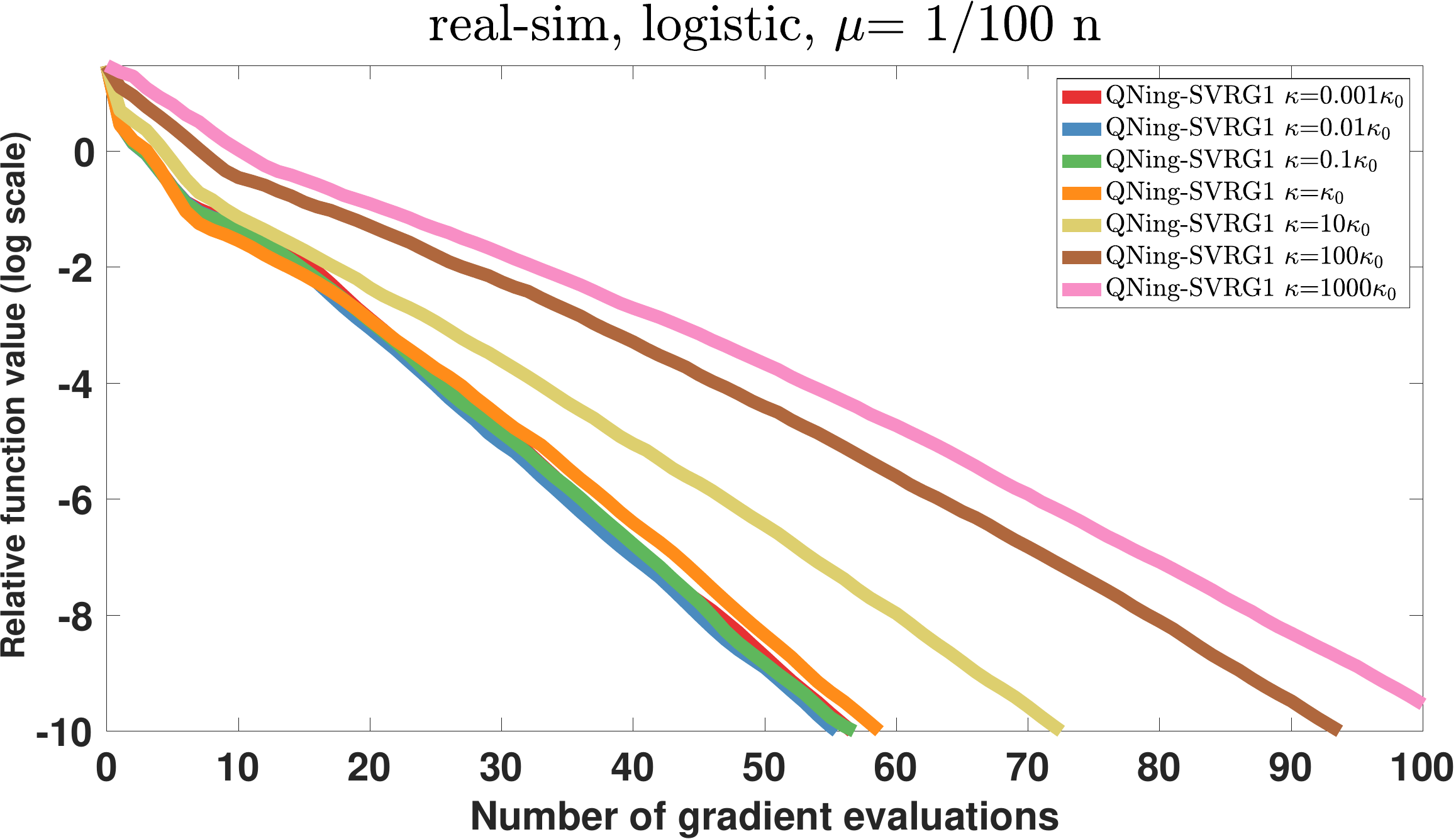} ~ 
   ~~\includegraphics[width=0.30\linewidth]{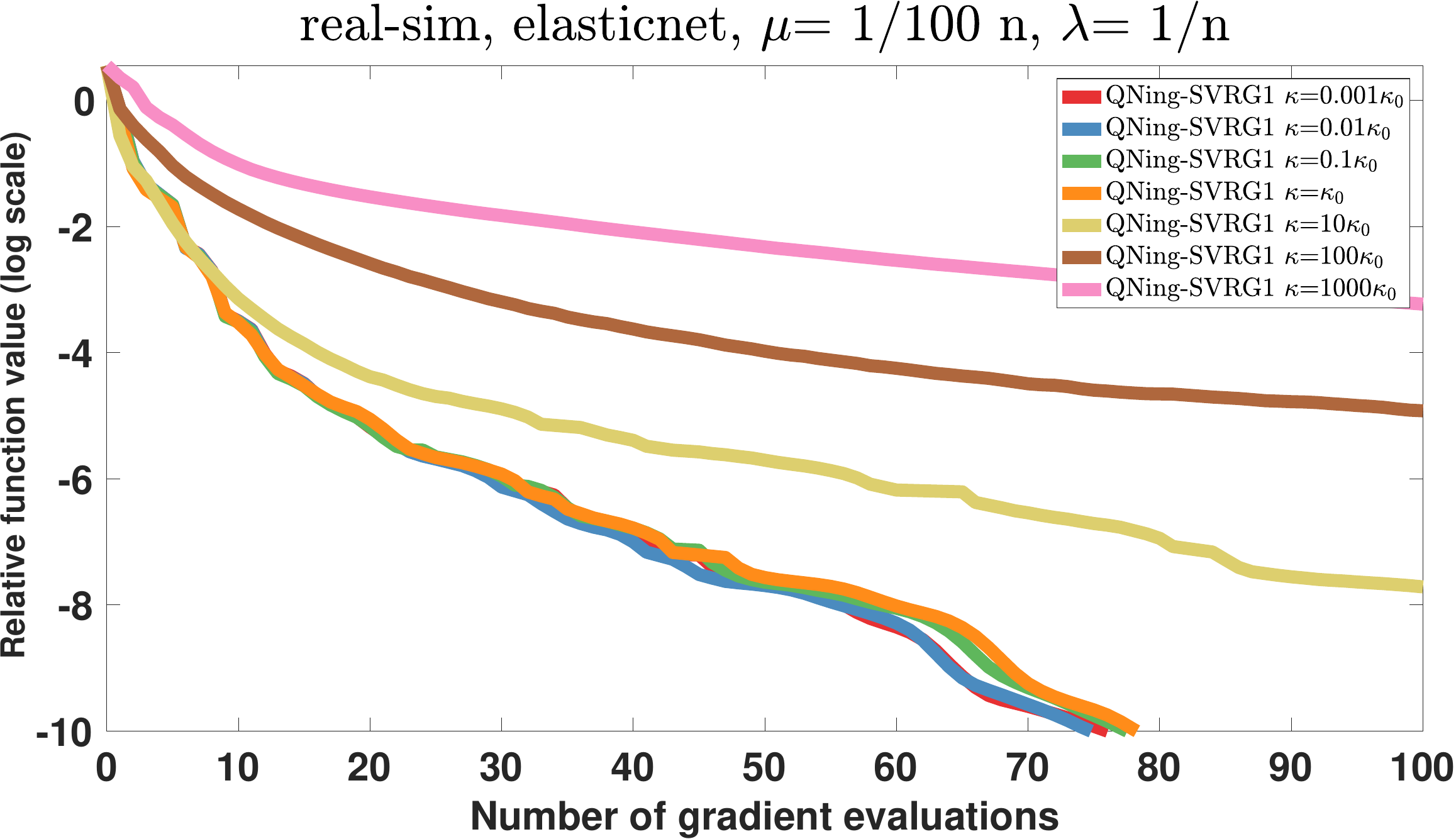} ~ 
   ~~\includegraphics[width=0.30\linewidth]{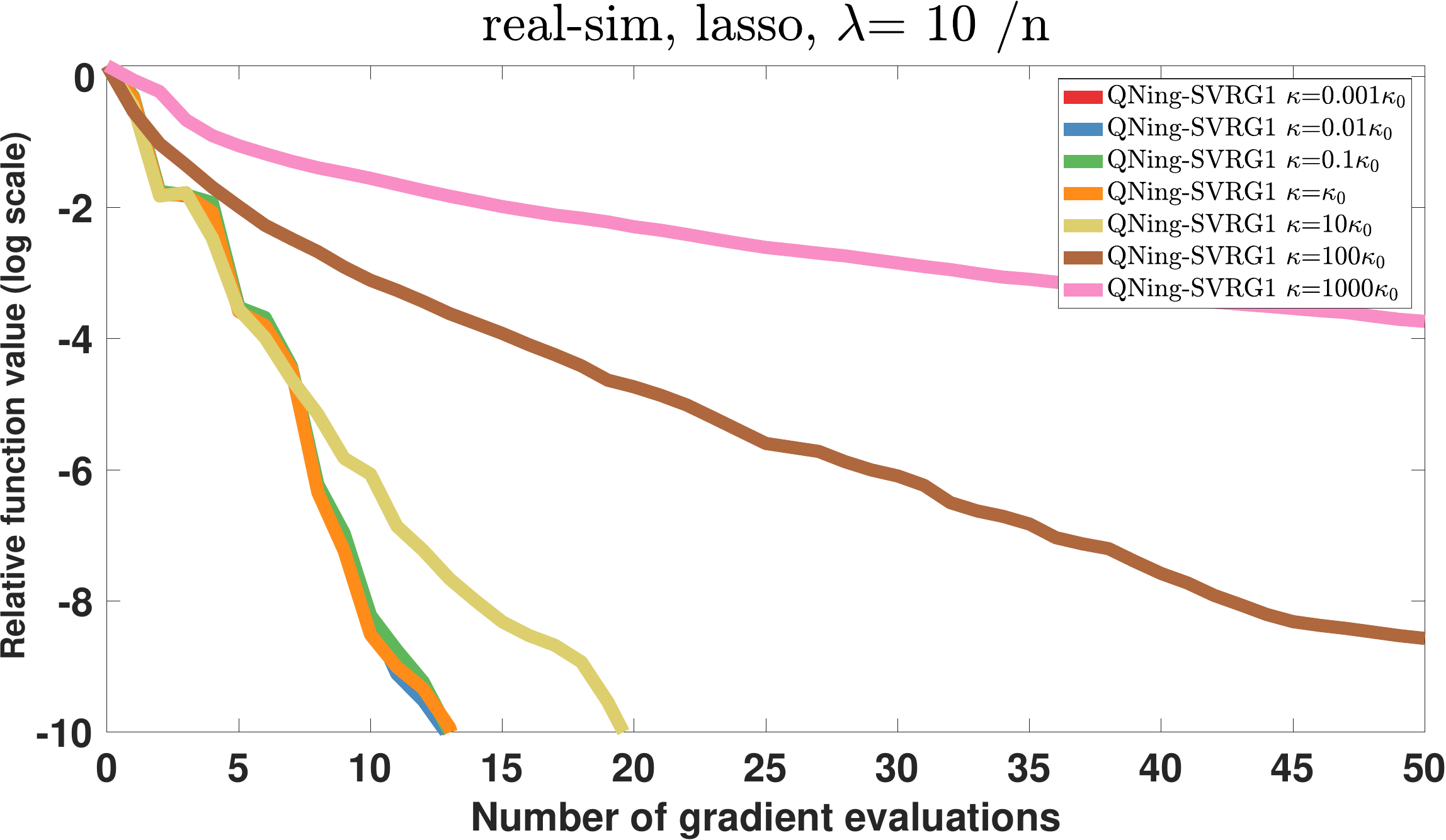}\\
   ~~\includegraphics[width=0.30\linewidth]{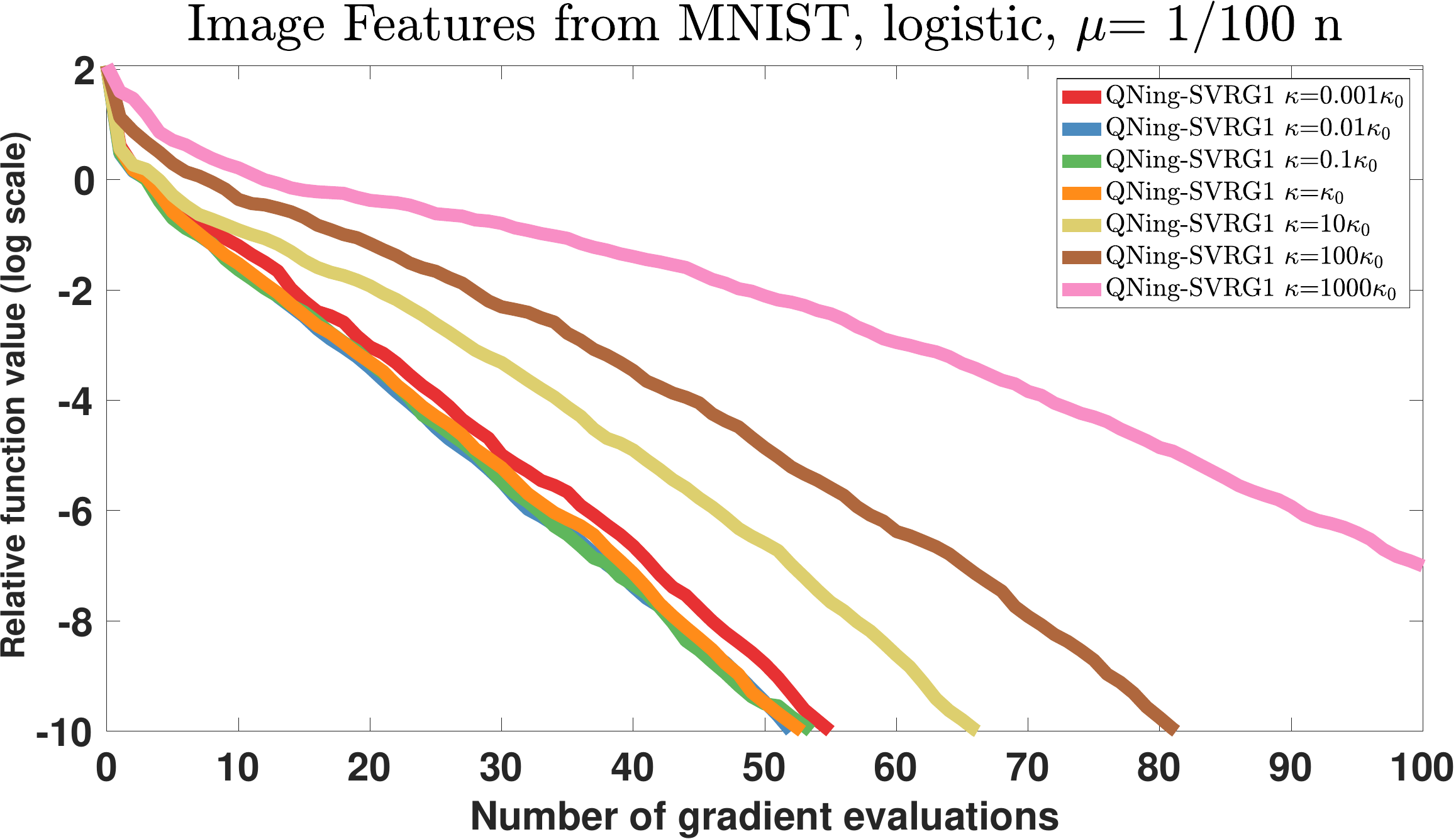} ~ 
   ~~\includegraphics[width=0.30\linewidth]{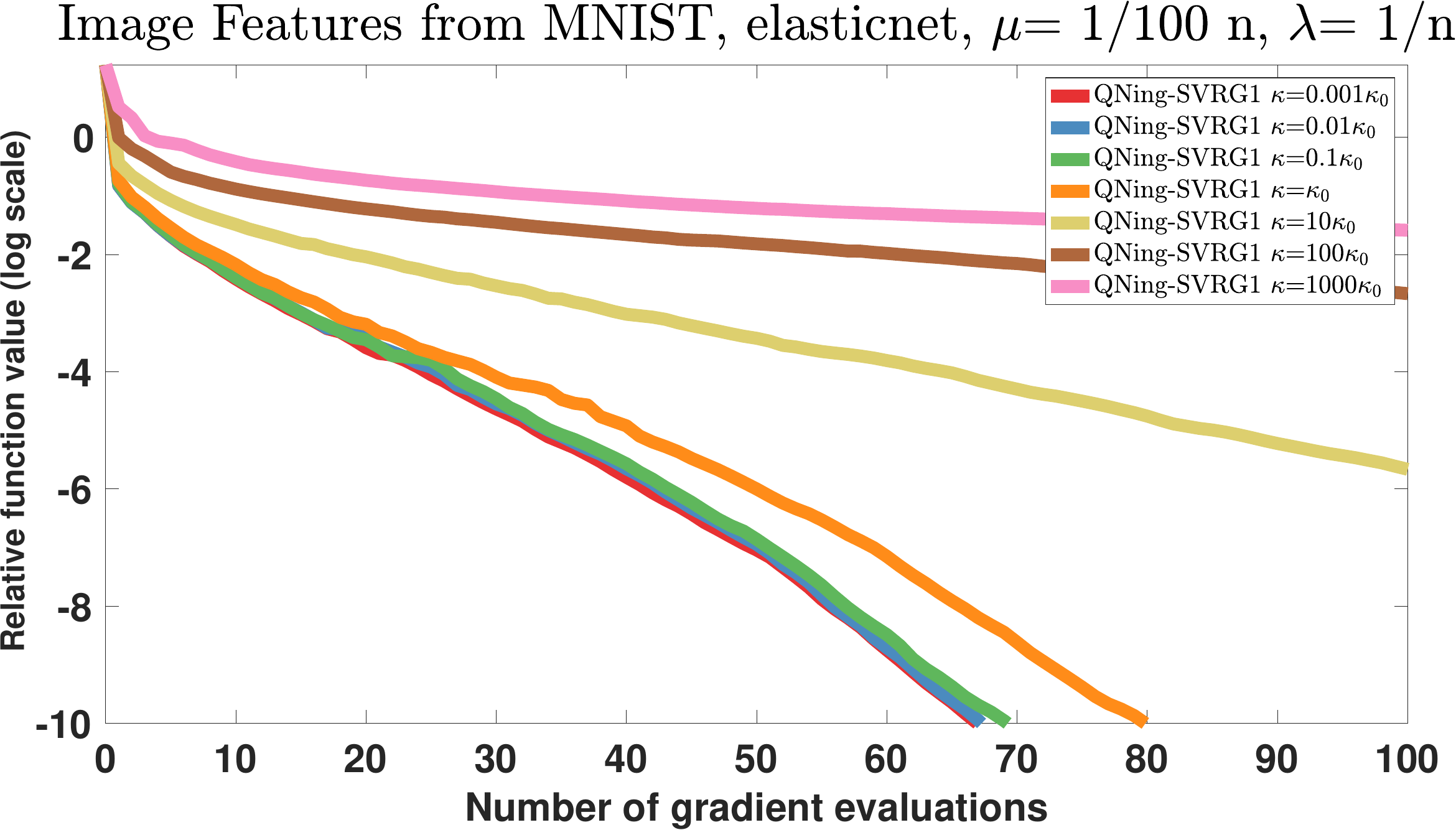} ~ 
   ~~\includegraphics[width=0.30\linewidth]{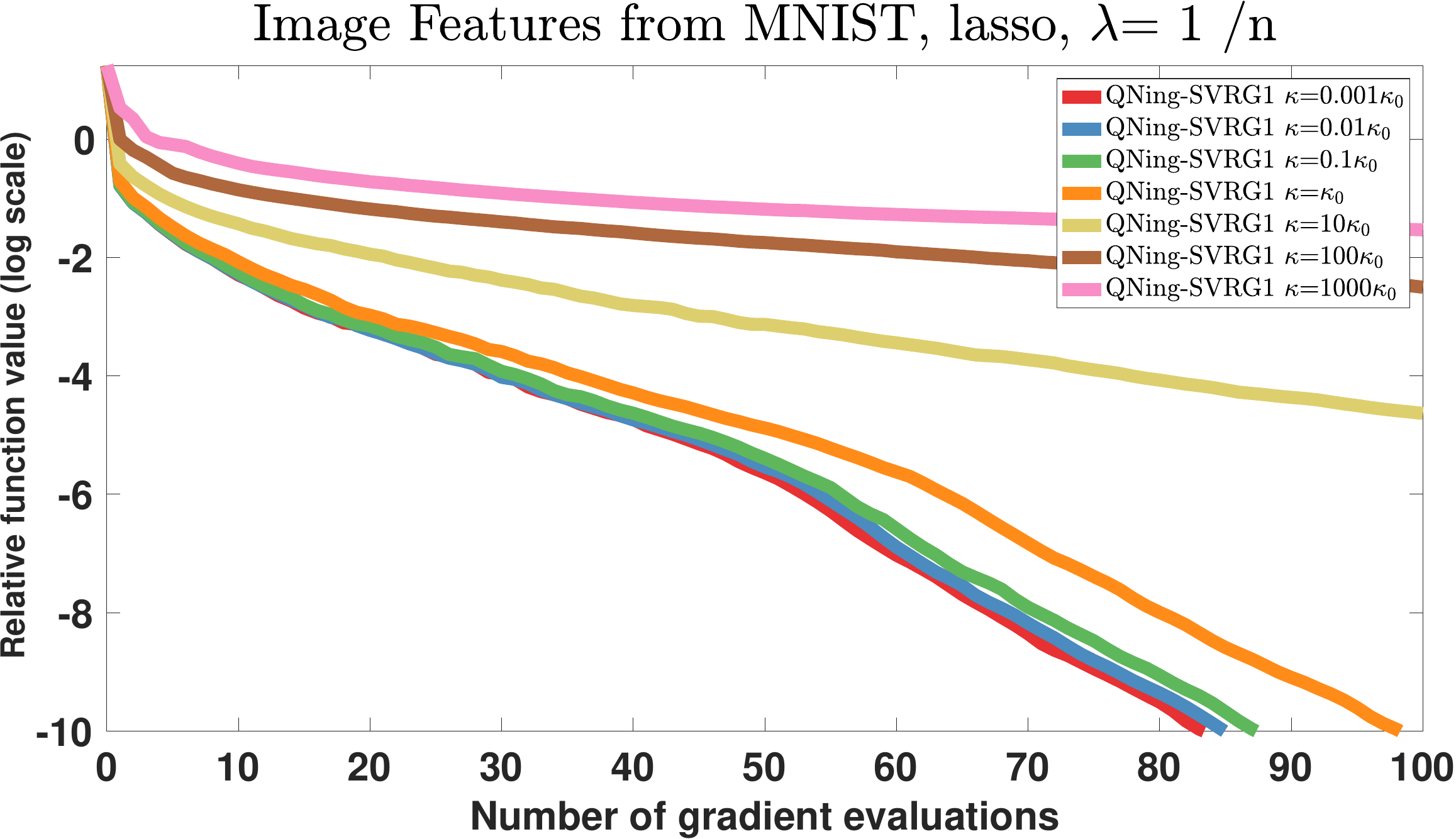}\\
   ~~\includegraphics[width=0.30\linewidth]{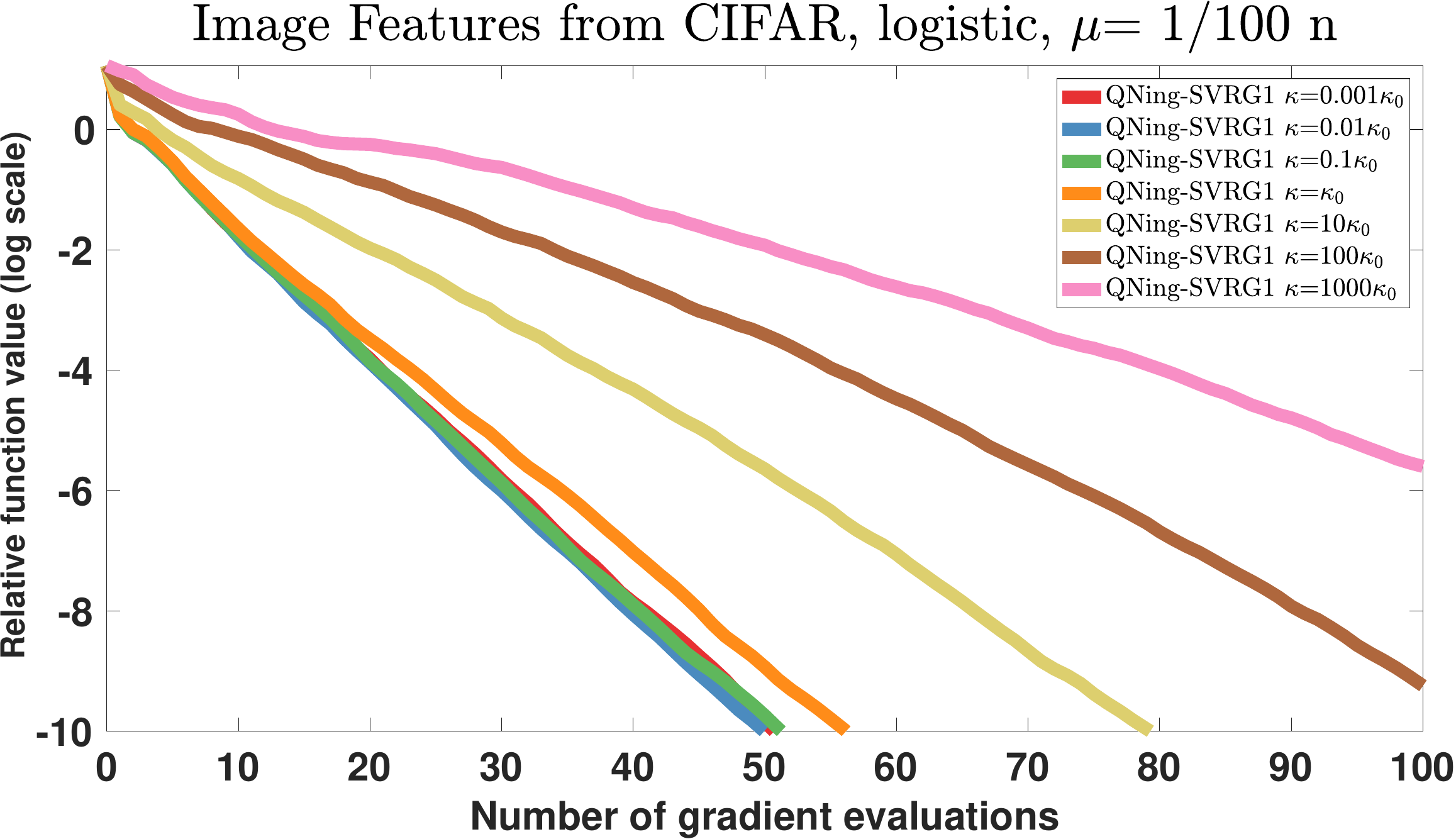} ~ 
   ~~\includegraphics[width=0.30\linewidth]{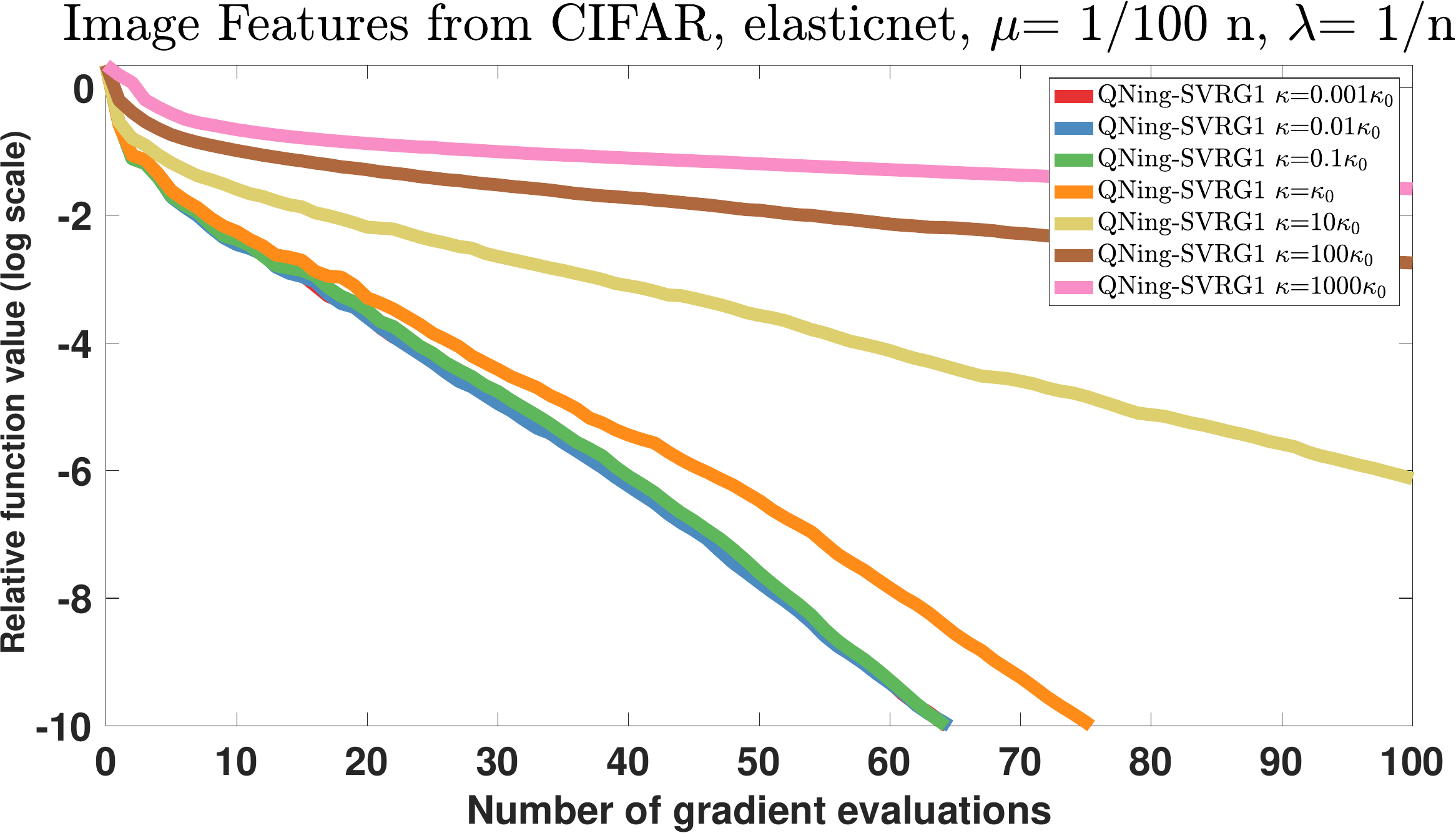} ~ 
   ~~\includegraphics[width=0.30\linewidth]{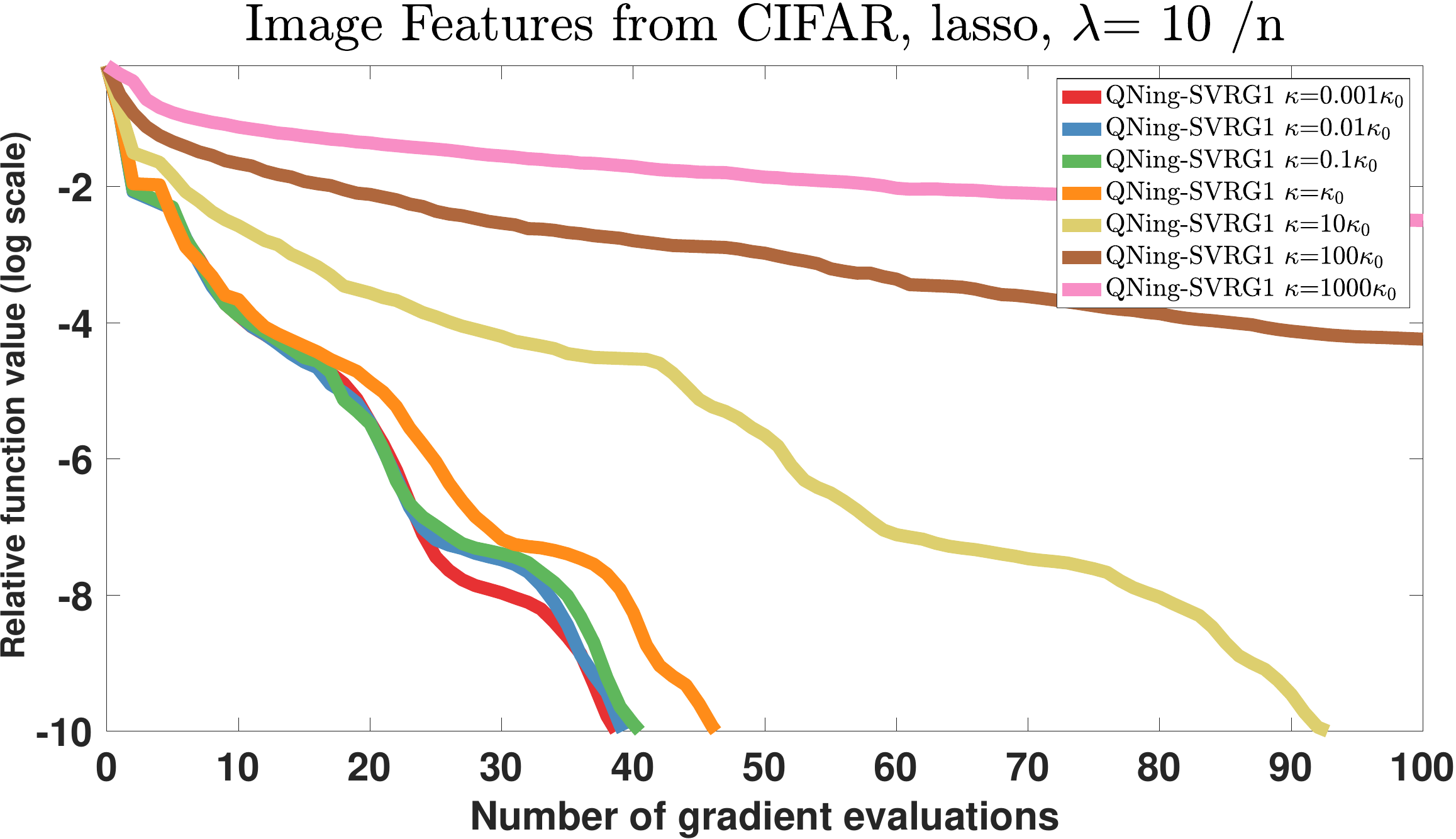} \\
   \caption{Experimental study of influence of the smoothing parameter~$\kappa$
   for \qning-SVRG1. $\kappa_0$ denotes the default choice used in the
   previous experiments. We plot the value~$F(x_k)/F^\star-1$ as a function of
   the number of gradient evaluations, on a logarithmic scale; the optimal
   value $F^\star$ is estimated with a duality gap.}\label{fig:kappa}
\end{figure}

\section{Discussions and concluding remarks}\label{sec:ccl}
A few questions naturally arise regarding the \qningsp scheme:
one may wonder whether or not our convergence rates may be improved, or if the Moreau envelope  could be replaced by another smoothing
technique. In this section, we discuss these two points and present concluding
remarks.

\subsection{Discussion of convergence rates}\label{subsec:rate}
In this paper, we have established the linear convergence of~\qningsp
for strongly convex objectives when sub-problems are solved with enough accuracy.
Since~\qningsp uses Quasi-Newton steps, one might have
expected a superlinear convergence rate as several Quasi-Newton algorithms
often enjoy~\cite{byrd1987global}. The situation is as follows. Consider the
BFGS algorithm (without limited memory), as shown
in~\cite{fukushima1999}, if the sequence $(\epsilon_k)_{k \geq 0}$ decreases
super-linearly, then, it is possible to design a scheme similar to \qningsp that indeed enjoys a super-linear
convergence rate. There is a major downside though: 
a super-linearly decreasing sequence $(\epsilon_k)_{k \geq 0}$ implies an exponentially growing number of
iterations in the inner-loops, which will degrade the global complexity of the algorithm. This issue makes the approach proposed in~\cite{fukushima1999} impractical.

Another potential strategy for obtaining a faster convergence rate consists in
interleaving a Nesterov-type extrapolation step in the~\qningsp algorithm.
Indeed, the convergence rate of~\qningsp scales linearly in the condition
number $\mu_F/L_F$, which suggests that a faster convergence rate could be
obtained using a Nesterov-type acceleration scheme. 
Empirically, we did not observe any benefit of such a strategy, probably
because of the pessimistic nature of the convergence rates that are typically
obtained for Quasi-Newton approaches based on L-BFGS. Obtaining a linear convergence
rate for an L-BFGS algorithm is still an important sanity check, but to the best of
our knowledge, the gap in performance between these worst-case rates and 
 practice has always been huge for this class of algorithms. 

\subsection{Other types of smoothing}\label{subsec:smoothing}
The Moreau envelope we considered is a particular instance of infimal
convolution smoothing~\cite{beck2012}, whose family also
encompasses the so-called Nesterov smoothing~\cite{beck2012}. Other ways to smooth a function
include randomization techniques~\cite{duchi2012} or specific
strategies tailored for the objective at hand. 

One of the main purposes of applying the Moreau envelope is to provide a 
better conditioning. As recalled in Proposition~\ref{propMY}, the gradient of
the smoothed function~$F$ is $\kappa-$Lipschitz continuous regardless of
whether the original function is continuously differentiable or not.
Furthermore, the conditioning of~$F$ is improved with respect to the original
function, with a condition number depending on the amount of smoothing. As
highlighted in~\cite{beck2012}, this property is also shared by other types of
infimal convolutions. Therefore,~\qningsp could potentially be
extended to such types of smoothing in place of the
Moreau envelope. A major advantage of our approach, though, is its
outstanding simplicity. 

\subsection{Concluding remarks}\label{subsec:ccl} 
To conclude, we have proposed a generic mechanism, \qning, to accelerate existing
first-order optimization algorithms with Quasi-Newton-type methods.
\qning's main features are the compatibility with variable metric update rule and composite optimization. Its ability of combining with incremental approaches makes it a promising tool for solving large-scale machine learning problems.
A few questions remain however open regarding the use of the method in a pure
stochastic optimization setting, and the gap in performance between worst-case
convergence analysis and practice is significant. We are planning to address
the first question about stochastic optimization in future work; the second question
is unfortunately difficult and is probably one of the main open questions in the literature about L-BFGS methods.

\subsection*{Acknowledgements}
The authors would like to thank the editor and the reviewers for their constructive and detailed comments. This work was supported by the ERC grant SOLARIS (number 714381), a grant from ANR (MACARON project ANR-14-CE23-0003-01), and the program ``Learning in Machines and Brains'' (CIFAR).

\bibliographystyle{abbrv}
\bibliography{bib}

\begin{thebibliography}{10}

\bibitem{andrew2007scalable}
G.~Andrew and J.~Gao.
\newblock Scalable training of {$L_1$}-regularized log-linear models.
\newblock In {\em International Conferences on Machine Learning (ICML)}, 2007.

\bibitem{fista}
A.~Beck and M.~Teboulle.
\newblock A fast iterative shrinkage-thresholding algorithm for linear inverse
  problems.
\newblock {\em SIAM Journal on Imaging Sciences}, 2(1):183--202, 2009.

\bibitem{beck2012}
A.~Beck and M.~Teboulle.
\newblock Smoothing and first order methods: A unified framework.
\newblock {\em SIAM Journal on Optimization}, 22(2):557--580, 2012.

\bibitem{becker2012quasi}
S.~Becker and J.~Fadili.
\newblock A quasi-newton proximal splitting method.
\newblock In {\em Proceedings of Advances in Neural Information Processing
  Systems (NIPS)}, pages 2618--2626, 2012.

\bibitem{bertsekas:2015}
D.~P. Bertsekas.
\newblock {\em Convex Optimization Algorithms}.
\newblock Athena Scientific, 2015.

\bibitem{Bonnans:2006}
J.-F. Bonnans, J.~C. Gilbert, C.~Lemar{\'e}chal, and C.~A. Sagastiz\'{a}bal.
\newblock {\em Numerical Optimization: Theoretical and Practical Aspects}.
\newblock Springer, 2006.

\bibitem{Burke2000}
J.~Burke and M.~Qian.
\newblock On the superlinear convergence of the variable metric proximal point
  algorithm using {B}royden and {BFGS} matrix secant updating.
\newblock {\em Mathematical Programming}, 88(1):157--181, 2000.

\bibitem{sqn}
R.~Byrd, S.~Hansen, J.~Nocedal, and Y.~Singer.
\newblock A stochastic quasi-{N}ewton method for large-scale optimization.
\newblock {\em SIAM Journal on Optimization}, 26(2):1008--1031, 2016.

\bibitem{byrd}
R.~H. Byrd, G.~M. Chin, J.~Nocedal, and Y.~Wu.
\newblock Sample size selection in optimization methods for machine learning.
\newblock {\em Mathematical Programming}, 134(1):127--155, 2012.

\bibitem{successiveapprox}
R.~H. {Byrd}, J.~{Nocedal}, and F.~{Oztoprak}.
\newblock An inexact successive quadratic approximation method for {L-1}
  regularized optimization.
\newblock {\em Mathematical Programming}, 157(2):375--396, 2015.

\bibitem{byrd1987global}
R.~H. Byrd, J.~Nocedal, and Y.-X. Yuan.
\newblock Global convergence of a case of quasi-{N}ewton methods on convex
  problems.
\newblock {\em SIAM Journal on Numerical Analysis}, 24(5):1171--1190, 1987.

\bibitem{cartis2017global}
C.~Cartis and K.~Scheinberg.
\newblock Global convergence rate analysis of unconstrained optimization
  methods based on probabilistic models.
\newblock {\em Mathematical Programming}, 169:337--375, 2018.

\bibitem{chang2011libsvm}
C.~Chang and C.~Lin.
\newblock Libsvm: a library for support vector machines.
\newblock {\em ACM transactions on intelligent systems and technology (TIST)},
  2(3):27, 2011.

\bibitem{fukushima1999}
X.~Chen and M.~Fukushima.
\newblock Proximal quasi-{N}ewton methods for nondifferentiable convex
  optimization.
\newblock {\em Mathematical Programming}, 85(2):313--334, 1999.

\bibitem{saga}
A.~Defazio, F.~Bach, and S.~Lacoste-Julien.
\newblock Saga: A fast incremental gradient method with support for
  non-strongly convex composite objectives.
\newblock In {\em Proceedings of Advances in Neural Information Processing
  Systems (NIPS)}, 2014.

\bibitem{finito}
A.~Defazio, J.~Domke, and T.~S. Caetano.
\newblock Finito: A faster, permutable incremental gradient method for big data
  problems.
\newblock In {\em International Conferences on Machine Learning (ICML)}, 2014.

\bibitem{dennis1974characterization}
J.~E. Dennis and J.~J. Mor{\'e}.
\newblock A characterization of superlinear convergence and its application to
  quasi-newton methods.
\newblock {\em Mathematics of computation}, 28(126):549--560, 1974.

\bibitem{duchi2012}
J.~C. Duchi, P.~L. Bartlett, and M.~J. Wainwright.
\newblock Randomized smoothing for stochastic optimization.
\newblock {\em SIAM Journal on Optimization}, 22(2):674--701, 2012.

\bibitem{elad2010}
M.~Elad.
\newblock {\em Sparse and Redundant Representations}.
\newblock Springer, 2010.

\bibitem{friedlander2012hybrid}
M.~P. Friedlander and M.~Schmidt.
\newblock Hybrid deterministic-stochastic methods for data fitting.
\newblock {\em SIAM Journal on Scientific Computing}, 34(3):A1380--A1405, 2012.

\bibitem{frostig}
R.~Frostig, R.~Ge, S.~M. Kakade, and A.~Sidford.
\newblock Un-regularizing: approximate proximal point and faster stochastic
  algorithms for empirical risk minimization.
\newblock In {\em International Conferences on Machine Learning (ICML)}, 2015.

\bibitem{fuentes2012}
M.~Fuentes, J.~Malick, and C.~Lemar{\'e}chal.
\newblock {Descentwise inexact proximal algorithms for smooth optimization}.
\newblock {\em {Computational Optimization and Applications}}, 53(3):755--769,
  2012.

\bibitem{fukushima1996globally}
M.~Fukushima and L.~Qi.
\newblock A globally and superlinearly convergent algorithm for nonsmooth
  convex minimization.
\newblock {\em SIAM Journal on Optimization}, 6(4):1106--1120, 1996.

\bibitem{ghadimi:lan:2015}
S.~{Ghadimi}, G.~{Lan}, and H.~{Zhang}.
\newblock {Generalized Uniformly Optimal Methods for Nonlinear Programming}.
\newblock {\em arxiv:1508.07384}, 2015.

\bibitem{ghanbari2016proximal}
H.~Ghanbari and K.~Scheinberg.
\newblock Proximal quasi-newton methods for convex optimization.
\newblock {\em arXiv:1607.03081}, 2016.

\bibitem{richtarik2016}
R.~M. Gower, D.~Goldfarb, and P.~Richt{\'a}rik.
\newblock Stochastic block {BFGS}: Squeezing more curvature out of data.
\newblock In {\em International Conferences on Machine Learning (ICML)}, 2016.

\bibitem{ppa}
O.~G{\"u}ler.
\newblock New proximal point algorithms for convex minimization.
\newblock {\em SIAM Journal on Optimization}, 2(4):649--664, 1992.

\bibitem{hiriart1996convex}
J.-B. Hiriart-Urruty and C.~Lemar{\'e}chal.
\newblock {\em Convex Analysis and Minimization Algorithms I}.
\newblock Springer, 1996.

\bibitem{hiriart_urruty_lemarechal_1993ii}
J.-B. Hiriart-Urruty and C.~Lemar\'echal.
\newblock {\em Convex analysis and minimization algorithms. {II}.}
\newblock Springer, 1996.

\bibitem{lee2018inexact}
C.~Lee and S.~J. Wright.
\newblock Inexact successive quadratic approximation for regularized
  optimization.
\newblock {\em arXiv:1803.01298}, 2018.

\bibitem{lee2012proximal}
J.~Lee, Y.~Sun, and M.~Saunders.
\newblock Proximal {N}ewton-type methods for convex optimization.
\newblock In {\em Proceedings of Advances in Neural Information Processing
  Systems (NIPS)}, 2012.

\bibitem{lemarechal1997practical}
C.~Lemar{\'e}chal and C.~Sagastiz{\'a}bal.
\newblock Practical aspects of the {Moreau--Yosida} regularization: Theoretical
  preliminaries.
\newblock {\em SIAM Journal on Optimization}, 7(2):367--385, 1997.

\bibitem{catalyst}
H.~Lin, J.~Mairal, and Z.~Harchaoui.
\newblock A universal catalyst for first-order optimization.
\newblock In {\em Proceedings of Advances in Neural Information Processing
  Systems (NIPS)}, 2015.

\bibitem{catalyst2}
H.~Lin, J.~Mairal, and Z.~Harchaoui.
\newblock Catalyst acceleration for first-order convex optimization: from
  theory to practice.
\newblock {\em Journal of Machine Learning Research (JMLR)}, 18(212):1--54,
  2018.

\bibitem{liu1989limited}
D.~C. Liu and J.~Nocedal.
\newblock On the limited memory {BFGS} method for large scale optimization.
\newblock {\em Mathematical Programming}, 45(1-3):503--528, 1989.

\bibitem{liu2017inexact}
X.~Liu, C.-J. Hsieh, J.~D. Lee, and Y.~Sun.
\newblock An inexact subsampled proximal newton-type method for large-scale
  machine learning.
\newblock {\em arXiv:1708.08552}, 2017.

\bibitem{mairal2010sparse}
J.~Mairal.
\newblock {\em Sparse coding for machine learning, image processing and
  computer vision}.
\newblock PhD thesis, Cachan, Ecole normale sup{\'e}rieure, 2010.

\bibitem{miso}
J.~Mairal.
\newblock Incremental majorization-minimization optimization with application
  to large-scale machine learning.
\newblock {\em SIAM Journal on Optimization}, 25(2):829--855, 2015.

\bibitem{mairal2016end}
J.~Mairal.
\newblock End-to-end kernel learning with supervised convolutional kernel
  networks.
\newblock In {\em Proceedings of Advances in Neural Information Processing
  Systems (NIPS)}, 2016.

\bibitem{mairal2014sparse}
J.~Mairal, F.~Bach, and J.~Ponce.
\newblock Sparse modeling for image and vision processing.
\newblock {\em Foundations and Trends in Computer Graphics and Vision},
  8(2-3):85--283, 2014.

\bibitem{mifflin1996quasi}
R.~Mifflin.
\newblock A quasi-second-order proximal bundle algorithm.
\newblock {\em Mathematical Programming}, 73(1):51--72, 1996.

\bibitem{mokhtari2015global}
A.~Mokhtari and A.~Ribeiro.
\newblock Global convergence of online limited memory bfgs.
\newblock {\em Journal of Machine Learning Research (JMLR)}, 16(1):3151--3181,
  2015.

\bibitem{moreau1962fonctions}
J.-J. Moreau.
\newblock Fonctions convexes duales et points proximaux dans un espace
  hilbertien.
\newblock {\em CR Acad. Sci. Paris S{\'e}r. A Math}, 255:2897--2899, 1962.

\bibitem{slbfgs}
P.~Moritz, R.~Nishihara, and M.~I. Jordan.
\newblock A linearly-convergent stochastic {L-BFGS} algorithm.
\newblock In {\em International Conference on Artificial Intelligence and
  Statistics (AISTATS)}, 2016.

\bibitem{nesterov1983}
Y.~Nesterov.
\newblock A method of solving a convex programming problem with convergence
  rate {$O$(1/$k^2$)}.
\newblock {\em Soviet Mathematics Doklady}, 27(2):372--376, 1983.

\bibitem{nesterov}
Y.~Nesterov.
\newblock {\em Introductory Lectures on Convex Optimization: A Basic Course}.
\newblock Springer, 2004.

\bibitem{nesterov2005smooth}
Y.~Nesterov.
\newblock Smooth minimization of non-smooth functions.
\newblock {\em Mathematical programming}, 103(1):127--152, 2005.

\bibitem{nesterov2012}
Y.~Nesterov.
\newblock Efficiency of coordinate descent methods on huge-scale optimization
  problems.
\newblock {\em SIAM Journal on Optimization}, 22(2):341--362, 2012.

\bibitem{nesterov2013gradient}
Y.~Nesterov.
\newblock Gradient methods for minimizing composite functions.
\newblock {\em Mathematical Programming}, 140(1):125--161, 2013.

\bibitem{lbfgs}
J.~Nocedal.
\newblock Updating quasi-{N}ewton matrices with limited storage.
\newblock {\em Mathematics of Computation}, 35(151):773--782, 1980.

\bibitem{nocedalbook}
J.~Nocedal and S.~Wright.
\newblock {\em Numerical optimization}.
\newblock Springer Science \& Business Media, 2006.

\bibitem{razaviyayn2}
M.~Razaviyayn, M.~Hong, and Z.-Q. Luo.
\newblock A unified convergence analysis of block successive minimization
  methods for nonsmooth optimization.
\newblock {\em SIAM Journal on Optimization}, 23(2):1126--1153, 2013.

\bibitem{richtarik2014}
P.~Richt{\'a}rik and M.~Tak{\'a}{\v{c}}.
\newblock Iteration complexity of randomized block-coordinate descent methods
  for minimizing a composite function.
\newblock {\em Mathematical Programming}, 144(1-2):1--38, 2014.

\bibitem{rockafellarppa}
R.~T. Rockafellar.
\newblock Monotone operators and the proximal point algorithm.
\newblock {\em SIAM Journal on Control and Optimization}, 14(5):877--898, 1976.

\bibitem{scheinberg2014practical}
K.~Scheinberg and X.~Tang.
\newblock Practical inexact proximal quasi-{N}ewton method with global
  complexity analysis.
\newblock {\em Mathematical Programming}, 160(1):495--529, 2016.

\bibitem{schmidt2011}
M.~Schmidt, D.~Kim, and S.~Sra.
\newblock {\em Projected Newton-type methods in machine learning}, pages
  305--330.
\newblock MIT Press, 2011.

\bibitem{sag}
M.~Schmidt, N.~L. Roux, and F.~Bach.
\newblock Minimizing finite sums with the stochastic average gradient.
\newblock {\em Mathematical Programming}, 160(1):83--112, 2017.

\bibitem{schraudolph2007stochastic}
N.~N. Schraudolph, J.~Yu, and S.~G{\"u}nter.
\newblock A stochastic quasi-newton method for online convex optimization.
\newblock In {\em International Conference on Artificial Intelligence and
  Statistics (AISTATS)}, 2007.

\bibitem{accsdca}
S.~Shalev-Shwartz and T.~Zhang.
\newblock Accelerated proximal stochastic dual coordinate ascent for
  regularized loss minimization.
\newblock {\em Mathematical Programming}, 155(1):105--145, 2016.

\bibitem{stella2017forward}
L.~Stella, A.~Themelis, and P.~Patrinos.
\newblock Forward--backward quasi-newton methods for nonsmooth optimization
  problems.
\newblock {\em Computational Optimization and Applications}, 67(3):443--487,
  2017.

\bibitem{proxsvrg}
L.~Xiao and T.~Zhang.
\newblock A proximal stochastic gradient method with progressive variance
  reduction.
\newblock {\em {SIAM} Journal on Optimization}, 24(4):2057--2075, 2014.

\bibitem{yosida}
K.~Yosida.
\newblock Functional analysis.
\newblock {\em Berlin-Heidelberg}, 1980.

\bibitem{yu2008quasi}
J.~Yu, S.~Vishwanathan, S.~G{\"u}nter, and N.~N. Schraudolph.
\newblock A quasi-{N}ewton approach to non-smooth convex optimization.
\newblock In {\em International Conferences on Machine Learning (ICML)}, 2008.

\bibitem{zou2005regularization}
H.~Zou and T.~Hastie.
\newblock Regularization and variable selection via the elastic net.
\newblock {\em Journal of the Royal Statistical Society: Series B (Statistical
  Methodology)}, 67(2):301--320, 2005.

\end{thebibliography}

\appendix
\section{Proof of Proposition~\ref{prop:convex}} \label{app:prop6}
First, we show that the Moreau envelope $F$ inherits the bounded level set property from $f$. 
\begin{definition}
	We say that a convex function $f$ has bounded level sets if $f$ attains its minimum at $x^*$ in $\R^d$ and for any $x$,  there exists~$R_x > 0$ such that
   $$ \forall y \in \Real^d \st f(y) \leq f(x)~~~\text{then}~~~ \Vert y-x^* \Vert \leq R_x.$$
\end{definition}
\begin{lemma}
	If $f$ has bounded level sets, then its Moreau envelope $F$ has bounded level sets as well.
\end{lemma}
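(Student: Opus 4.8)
The plan is to exploit the exact decomposition of the Moreau envelope together with the defining optimality of the proximal point. Recall that by definition of $p(y)$ we have the identity $F(y) = f(p(y)) + \frac{\kappa}{2}\Vert p(y)-y \Vert^2$, and that by Proposition~\ref{propMY} the functions $f$ and $F$ share the same minimal value $F^\star = f^\star$ and the same minimizer $x^\star$. Since $F$ is moreover continuously differentiable and attains its minimum at $x^\star$, the first requirement of the bounded level set definition holds automatically for $F$; it remains only to produce, for each fixed $x$, a radius $R'_x$ controlling the set $\{y : F(y) \le F(x)\}$. First I would fix $x$, take any $y$ with $F(y) \le F(x)$, and split via the triangle inequality
\begin{equation*}
\Vert y - x^\star \Vert \leq \Vert y - p(y) \Vert + \Vert p(y) - x^\star \Vert,
\end{equation*}
bounding the two pieces separately.

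For the first piece, I would use the decomposition directly: since $f(p(y)) \ge f^\star = F^\star$ and $F(y) \le F(x)$, we get $\frac{\kappa}{2}\Vert y-p(y)\Vert^2 = F(y) - f(p(y)) \le F(x) - f^\star$, hence $\Vert y - p(y)\Vert \le \sqrt{2(F(x)-f^\star)/\kappa}$, which is finite because $F(x) \ge F^\star = f^\star$.

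For the second piece, I would invoke the bounded level set property of $f$ itself. The key observation, which avoids any complication about whether the level $F(x)$ is attained, is the elementary bound $F(x) \le f(x)$ obtained by evaluating the infimal convolution in~(\ref{definMY}) at $z=x$. Consequently $f(p(y)) \le F(y) \le F(x) \le f(x)$, so $p(y)$ lies in the sublevel set of $f$ defined by the reference point $x$; the assumption on $f$ then yields $\Vert p(y) - x^\star \Vert \le R_x$. Combining the two bounds gives $\Vert y - x^\star \Vert \le R_x + \sqrt{2(F(x)-f^\star)/\kappa} =: R'_x$, a finite quantity depending only on $x$, which is exactly the desired conclusion.

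There is no genuine obstacle here; the only subtlety worth flagging is that the radius supplied by the hypothesis on $f$ is indexed by a reference point rather than an arbitrary level, so one must arrange that the relevant reference point has $f$-value at least $F(x)$. This is precisely what the inequality $F(x)\le f(x)$ secures, letting us reuse the same point $x$ as reference; the rest is the triangle inequality and the defining identity of $F$.
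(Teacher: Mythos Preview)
Your proof is correct and follows essentially the same route as the paper: both split $\Vert y-x^\star\Vert$ via the triangle inequality into $\Vert y-p(y)\Vert$ and $\Vert p(y)-x^\star\Vert$, bounding the first via the quadratic term in $F(y)=f(p(y))+\tfrac{\kappa}{2}\Vert p(y)-y\Vert^2$ and the second via the level-set assumption on $f$ together with $f(p(y))\le F(y)\le F(x)\le f(x)$. The only cosmetic differences are that the paper phrases the argument contrapositively (with a case split) and uses $f(x)-f^\star$ in the radius where you use the slightly sharper $F(x)-f^\star$.
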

\begin{proof}
	First, from Proposition~\ref{propMY}, the minimum of $F$ is attained at~$x^*$. Next,	 we reformulate the bounded level set property by contraposition: for any $x$,  there exists~$R_x > 0$ such that 
	$$ \forall y \in \Real^d \st \Vert y-x^* \Vert > R_x ~~~\text{then}~~~  f(y) > f(x).$$
	Given $x$ in $\R^d$, we show that 
	$$ \forall y \in \Real^d \st \Vert y-x^* \Vert >  \sqrt{\frac{2(f(x)-f^*)} {\kappa}}+R_x ~~~\text{then}~~~  F(y) > F(x).$$
	Let $y$ satisfies the above inequality, by definition, 
	\begin{equation*}
		F(y) = f(p(y))+ \frac{\kappa}{2} \Vert p(y) - y\Vert^2. 
	\end{equation*}
	From the triangle inequality, 
	$$\Vert y -p(y) \Vert+ \Vert p(y)- x^* \Vert \geq \Vert y-x^* \Vert>  \sqrt{\frac{2(f(x)-f^*)}{\kappa}} +R_x . $$
	Then either $\Vert y -p(y) \Vert > \sqrt{\frac{2(f(x)-f^*)}{\kappa}}$ or $\Vert p(y)- x^* \Vert >R_x$.
	\begin{itemize}
		\item If $\Vert y -p(y) \Vert > \sqrt{\frac{2(f(x)-f^*)}{\kappa}}$, then 
 	\begin{equation*}
		F(y) = f(p(y))+ \frac{\kappa}{2} \Vert p(y) - y\Vert^2 > f(p(y))+ f(x)-f^* \geq f(x) \geq F(x).
	\end{equation*}
	\item If $\Vert p(y)- x^* \Vert >R_x$, then 
	 \begin{equation*}
		F(y) = f(p(y))+ \frac{\kappa}{2} \Vert p(y) - y\Vert^2 \geq f(p(y))> f(x) \geq F(x).
	\end{equation*}
	\end{itemize}
	This completes the proof. 
\end{proof}
We are now in shape to prove the proposition.
\begin{proof}
From (\ref{eq:gd}), we have 
\begin{equation*}
	F(x_{k+1}) \leq F(x_k) - \frac{1}{32\kappa} \Vert \nabla F(x_k) \Vert^2.
\end{equation*}	
Thus $F(x_k)$ is decreasing. From the bounded level set property of $F$, there exists $R>0$ such that $\Vert x_k -x^*\Vert \leq R$ for any $k$. By the convexity of $F$, we have 
\begin{equation*}
	F(x_k) -F^* \leq \langle \nabla F(x_k), x_k -x^* \rangle \leq \Vert \nabla F(x_k) \Vert \Vert x_k -x^* \Vert  \leq R \Vert \nabla F(x_k) \Vert.
\end{equation*}
Therefore, 
\begin{align*}
	F(x_{k+1})-F^* & \leq F(x_k)-F^* - \frac{1}{32\kappa} \Vert \nabla F(x_k) \Vert^2 \\
	&  \leq F(x_k)-F^* -  \frac{(F(x_k)-F^*)^2}{32\kappa R^2}.
\end{align*}
Let us define $r_k \defin f(x_k)-f^*$. Thus, 
 $$ \frac{1}{r_{k+1}} \geq \frac{1}{r_k(1 - \frac{r_k}{32\kappa R^2})} \geq \frac{1}{r_k} \left (1+ \frac{r_k}{32\kappa R^2} \right ) 
    = \frac{1}{r_k} + \frac{1}{32\kappa R^2}.$$
Then, after exploiting the telescoping sum,
$$ \frac{1}{r_{k+1}} \geq \frac{1}{r_0} + \frac{k+1}{32\kappa R^2}  \geq \frac{k+1}{32\kappa R^2}. $$

\end{proof}

\section{Proof of Lemma ~\ref{lem:unit stepsize}}
\begin{proof}
Let us denote $\delta_k = - B_k^{-1}g_k$ and let the subproblems solved to accuracy $\varepsilon_k \leq \frac{c}{\kappa} \Vert g_k \Vert^2$. We show that when $c \leq \frac{\mu^2}{128 (\mu+\kappa)^2}$, the following two inequalities hold:
\begin{equation}\label{eq:o(gk)}
	F(x_k+\delta_k) \leq F(x_k) - \frac{3}{8\kappa} \Vert g_k \Vert^2 + o(\Vert g_k \Vert^2),
\end{equation}
and
\begin{equation}\label{eq:F_k+1}
	F_{k+1} \leq F(x_k+\delta_k) + \frac{1}{16\kappa} \Vert g_k \Vert^2 + o(\Vert g_k \Vert^2).
\end{equation}
Then summing up the above inequalities yields 
\begin{align*}
	F_{k+1} & \leq F(x_k) - \frac{5}{16\kappa} \Vert g_k \Vert^2 +o(\Vert g_k \Vert^2) \\
	& \leq F_k - \frac{1}{4\kappa} \Vert g_k \Vert^2,
\end{align*}
where the last inequality holds since $F(x_k) \leq F_k$ and $o(\Vert g_k \Vert^2) \leq \frac{1}{4\kappa} \Vert g_k \Vert^2$ when $k$ is large enough. This is the desired descent condition (\ref{suffdescent}). 

We first prove (\ref{eq:o(gk)}) which relies on the somoothness and Lipschitz Hessian assumption of $F$. More concretely,
\begin{align*}
F(x_k + \delta_k)-F(x_k) & \leq  \nabla F(x_k)^T \delta_k + \frac{1}{2}\delta_k^T \nabla^2 F(x_k) \delta_k + \frac{L_2}{6} \Vert \delta_k \Vert^3 \\
	& = (\nabla F(x_k)-g_k)^T \delta_k + g_k^T \delta_k + \frac{1}{2}\delta_k^T (\nabla^2 F(x_k)- B_k) \delta_k + \underbrace{\frac{1}{2}\delta_k^T B_k \delta_k}_{= -\frac{1}{2} g_k^T\delta_k} + \frac{L_2}{6} \Vert \delta_k \Vert^3 \\
	& = \underbrace{\frac{1}{2} g_k^T\delta_k}_{E_1}+ \underbrace{(\nabla F(x_k)-g_k)^T \delta_k}_{E_2} + \underbrace{\frac{1}{2}\delta_k^T (\nabla^2 F(x_k)- B_k) \delta_k}_{E_3} + \underbrace{\frac{L_2}{6} \Vert \delta_k \Vert^3}_{E_4} .\\
\end{align*}
We are going upper bound each term one by one. First,
\begin{align*}
	E_1 & = \frac{1}{2} g_k^T\delta_k = -\frac{1}{2} g_k B_k^{-1}g_k \\
	    & = -\frac{1}{2} g_k \nabla^2 F(x^*)^{-1}g_k -\frac{1}{2} g_k (B_k^{-1}-\nabla^2 F(x^*)^{-1}) g_k \\
	    & \leq -\frac{1}{2\kappa} \Vert g_k \Vert^2 + o(\Vert g_k \Vert^2),
\end{align*}  	
where the last inequality uses (\ref{eq:DM1}) and the $\kappa$-smoothness of $F$ which implies $\nabla^2 F(x^*) \preceq \kappa I$. Second, 
\begin{align*}
E_2 & = (\nabla F(x_k)-g_k)^T \delta_k \leq \Vert \nabla F(x_k)-g_k \Vert \Vert \delta_k \Vert \\
	& \leq \sqrt{2c} \Vert g_k \Vert \Vert B_k^{-1} g_k\Vert \quad \text{ (from (\ref{ineqgrad}))}\\
	& \leq \sqrt{2c} \Vert g_k \Vert \left [\Vert \nabla^2 F(x^*)^{-1} g_k\Vert+ \Vert \left (B_k^{-1}- \nabla^2 F(x^*)^{-1} \right )  g_k\Vert \right ] \\
	& \leq \sqrt{2c} \frac{1}{\mu_F} \Vert g_k \Vert^2 + o(\Vert g_k \Vert^2)\\
	& = \frac{1}{8 \kappa} \Vert g_k \Vert^2 + o(\Vert g_k \Vert^2). 
\end{align*}
Third,
\begin{align*}
	E_3 & = \frac{1}{2}\delta_k^T (\nabla^2 F(x_k)- B_k) \delta_k \leq \frac{1}{2} \Vert \delta_k \Vert \Vert (\nabla^2 F(x_k)- B_k) \delta_k \Vert \\
		& \leq \frac{1}{2} \Vert \delta_k \Vert \left ( \Vert (\nabla^2 F(x_k) - \nabla^2 F(x^*)) \delta_k  \Vert + \Vert (\nabla^2 F(x^*) - B_k)\delta_k \Vert \right ) \\
		& \leq \frac{L_2}{2} \Vert x_k -x^* \Vert \Vert \delta_k \Vert^2 + \Vert \nabla^2 F(x^*) \Vert \Vert (B_k^{-1} - \nabla^2 F(x^*)^{-1} )g_k \Vert \\
		& = o(\Vert g_k \Vert^2),
\end{align*}
where the last line comes from (\ref{eq:DM1}) and the fact that $\Vert x_k - x^* \Vert \rightarrow 0$. Last, since  
$$  \delta_k = \underbrace{- \nabla^2 F(x^*)^{-1}g_k}_{=O(\Vert g_k \Vert)} +  \underbrace{(\nabla^2 F(x^*)^{-1} - B_k^{-1})g_k}_{=o(\Vert g_k \Vert) \text{ by Dennis-Mor\'e condition}} $$
and $\Vert g_k \Vert \rightarrow 0$, we have 
\begin{align*}
	E_4 = \frac{L_2}{6} \Vert \delta_k \Vert^3 = o(\Vert g_k \Vert^2).
\end{align*}
Summing up above four inequalities yields (\ref{eq:o(gk)}). 
Next, we prove the other desired inequality (\ref{eq:F_k+1}). The main effort is to bound $\Vert g_{k+1} \Vert$ by a constant factor times $\Vert g_k \Vert$. From the inexactness of the subproblem, we have 
\begin{align*}
	F_{k+1} \leq F(x_{k+1}) + \frac{c}{\kappa} \Vert g_{k+1} \Vert^2 \leq F(x_{k+1}) + \frac{2c}{(1-4c)\kappa} \Vert \nabla F(x_{k+1}) \Vert^2
\end{align*}
Moreover,
\begin{align*}
  & \nabla F(x_{k+1})- \nabla F(x_k) - \nabla^2 F(x^*) (x_{k+1}-x_k) \\
= & \int_{0}^1 \left ( \nabla^2 F(x_k+ \tau(x_{k+1}-x_k) ) - \nabla^2 F(x^*) \right ) (x_{k+1}-x_k) \text{d} \tau \\
\end{align*}
Therefore,
$$ \Vert \nabla F(x_{k+1})- \nabla F(x_k) - \nabla^2 F(x^*) (x_{k+1}-x_k) \Vert \leq \max \left \{ \Vert x_k-x^*\Vert, \Vert x_{k+1}-x^* \Vert \right \} \Vert x_{k+1}-x_k \Vert = o(\Vert g_k \Vert).$$
\begin{align*}
	\Vert \nabla F(x_{k+1}) \Vert & \leq \Vert \nabla F(x_k) + \nabla^2 F(x^*) (x_{k+1}-x_k) \Vert + o(\Vert g_k \Vert)\\
	& \leq \Vert \nabla F(x_k) -g_k \Vert + \underbrace{\Vert g_k+ \nabla^2 F(x^*) (x_{k+1}-x_k) \Vert}_{=o(\Vert g_k \Vert) \text{ by Dennis-Mor\'e condition}} + o(\Vert g_k \Vert) \\
	& \leq \sqrt{2c} \Vert g_k \Vert + o(\Vert g_k \Vert). \\
\end{align*}
As a result,
\begin{align*}
	F_{k+1} & \leq F(x_{k+1}) + \frac{4c^2}{(1-4c)\kappa} \Vert \nabla g_k \Vert^2 + o(\Vert g_k \Vert^2) \\
	& \leq F(x_{k+1}) + \frac{1}{16\kappa} \Vert \nabla g_k \Vert^2 + o(\Vert g_k \Vert^2)  \quad \text{ when } \quad c\leq \frac{1}{16}.
\end{align*}
This completes the proof.
\end{proof}

\section{Additional experiments}
In this section, we provide additional experimental results including experimental comparisons in
terms of outer loop iterations, and an empirical study regarding the choice of the unit step size $\eta_k=1$.

\subsection{Comparisons in terms of outer-loop iterations} \label{appendix:iterations}
In the main paper, we have used the number of gradient evaluations as a natural
measure of complexity. Here, we also provide a comparison in terms of
outer-loop iterations, which does not take  into account the complexity of
solving the sub-problems. While interesting, the comparison artificially gives
an advantage to the stopping criteria (\ref{eq:stop condition}) since achieving
it usually requires multiple passes. 

\begin{figure}[hbtp!]
   \centering
   ~~\includegraphics[width=0.30\linewidth]{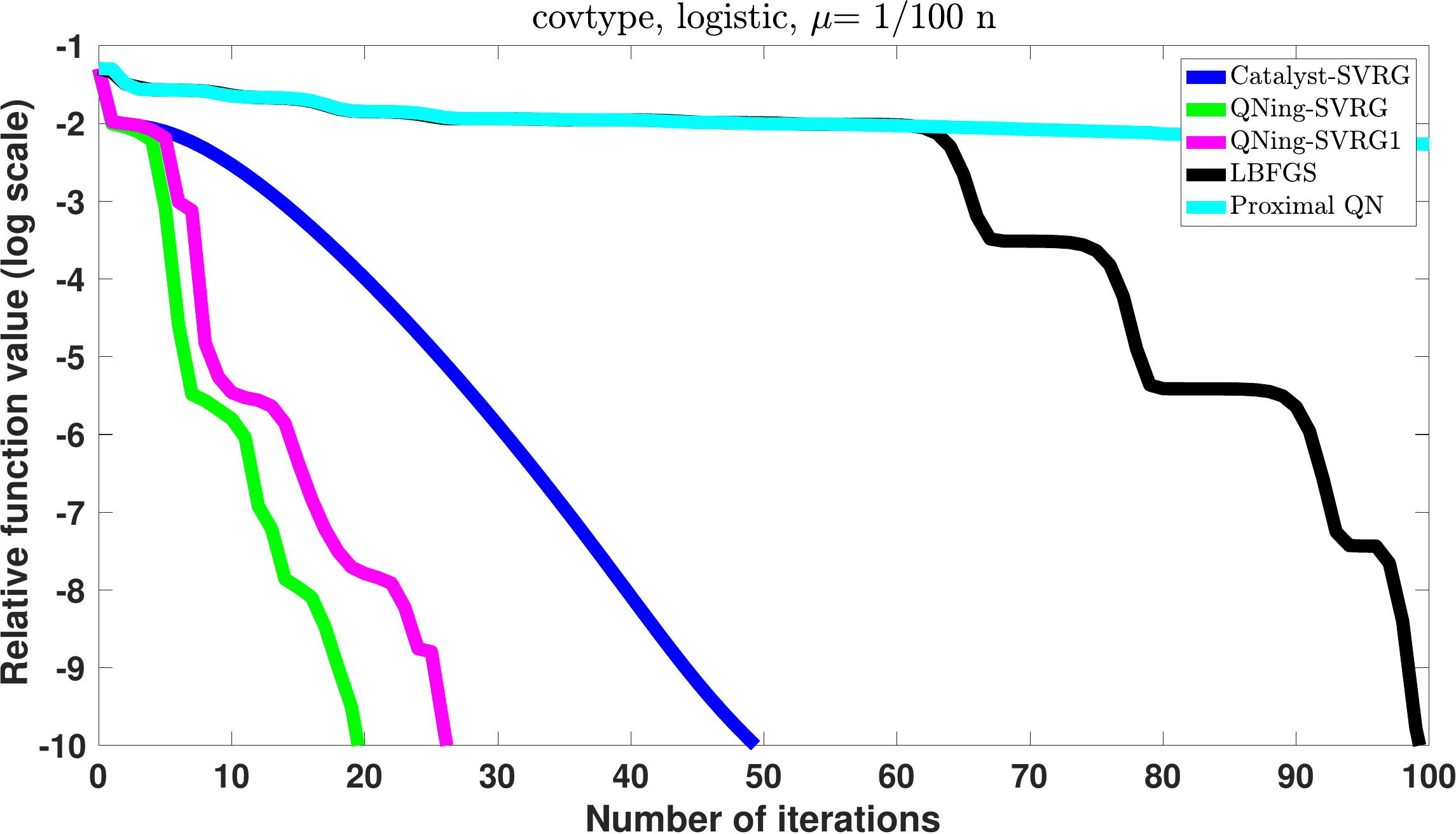} ~ 
   ~~\includegraphics[width=0.30\linewidth]{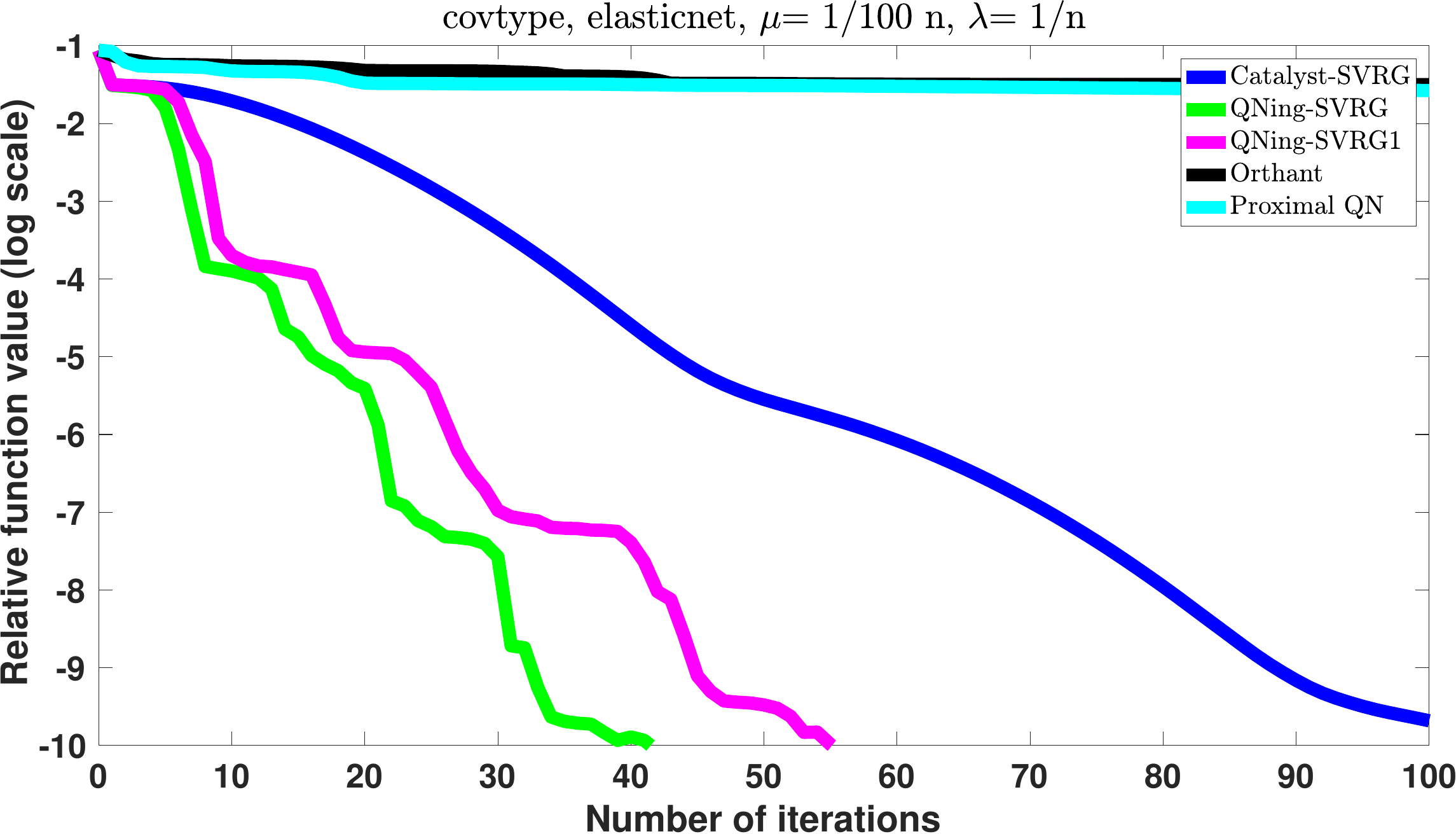} ~ 
   ~~\includegraphics[width=0.30\linewidth]{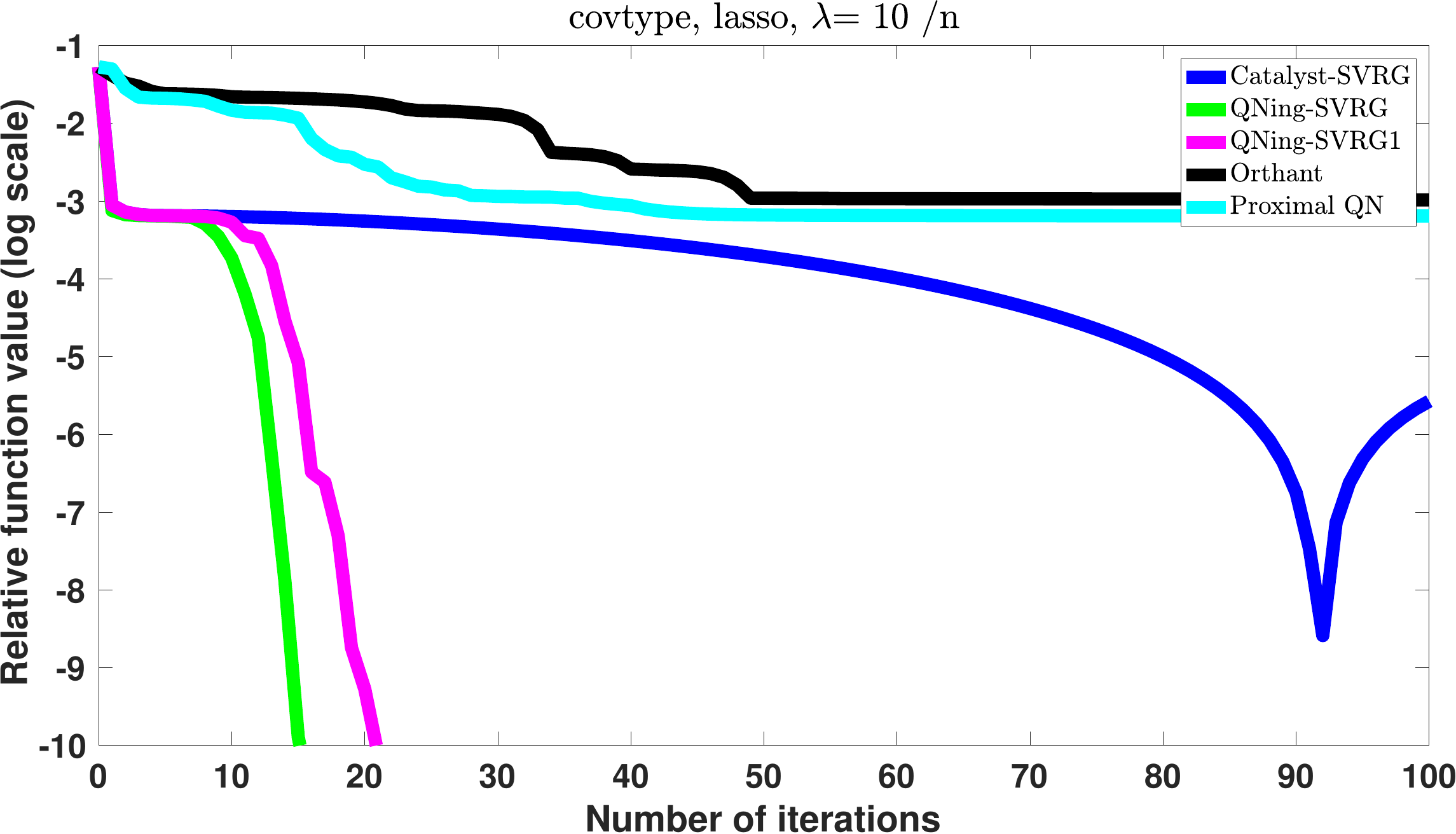} \\
   ~~\includegraphics[width=0.30\linewidth]{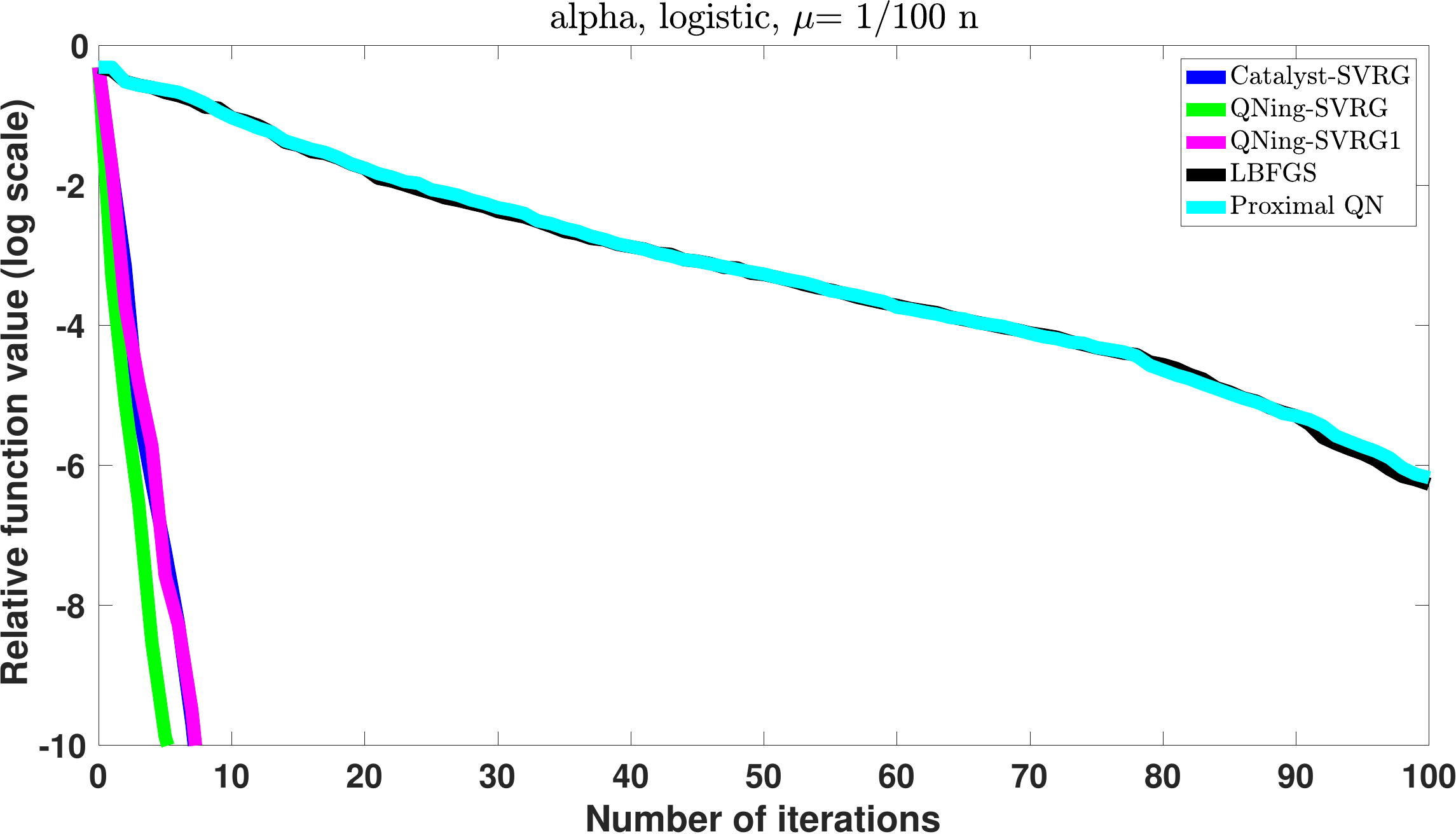} ~ 
   ~~\includegraphics[width=0.30\linewidth]{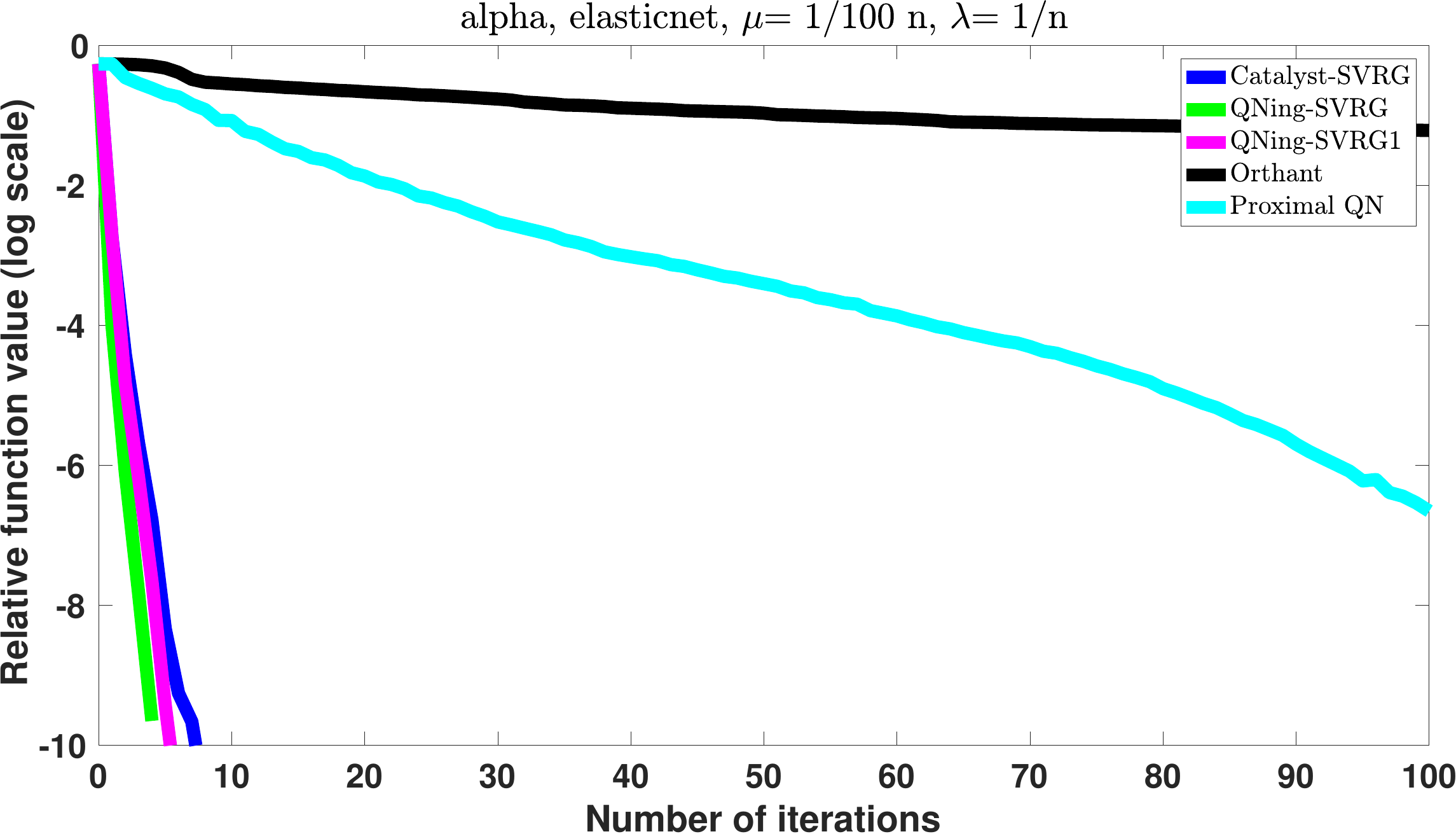} ~ 
   ~~\includegraphics[width=0.30\linewidth]{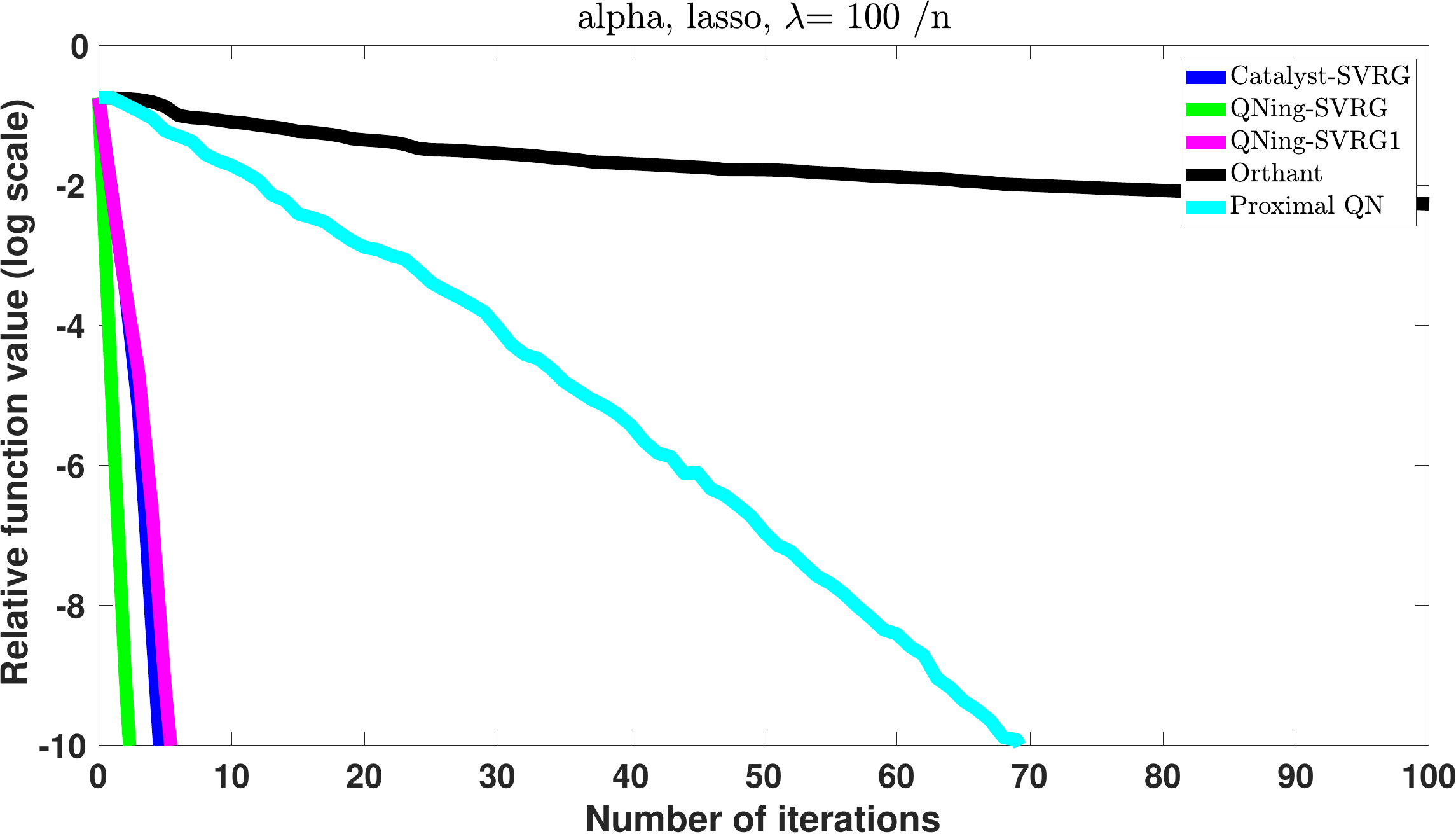}\\
   ~~\includegraphics[width=0.30\linewidth]{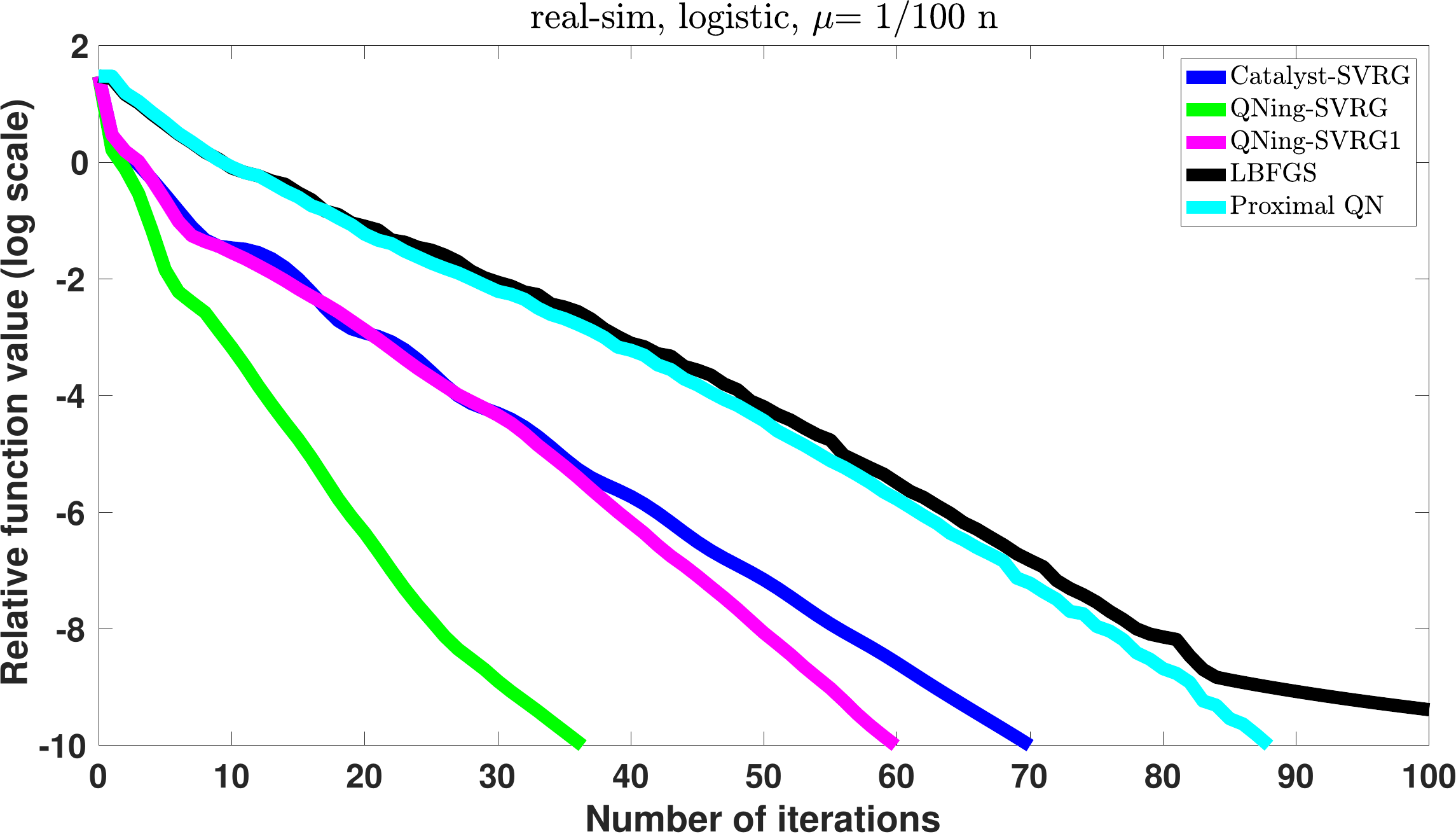} ~ 
   ~~\includegraphics[width=0.30\linewidth]{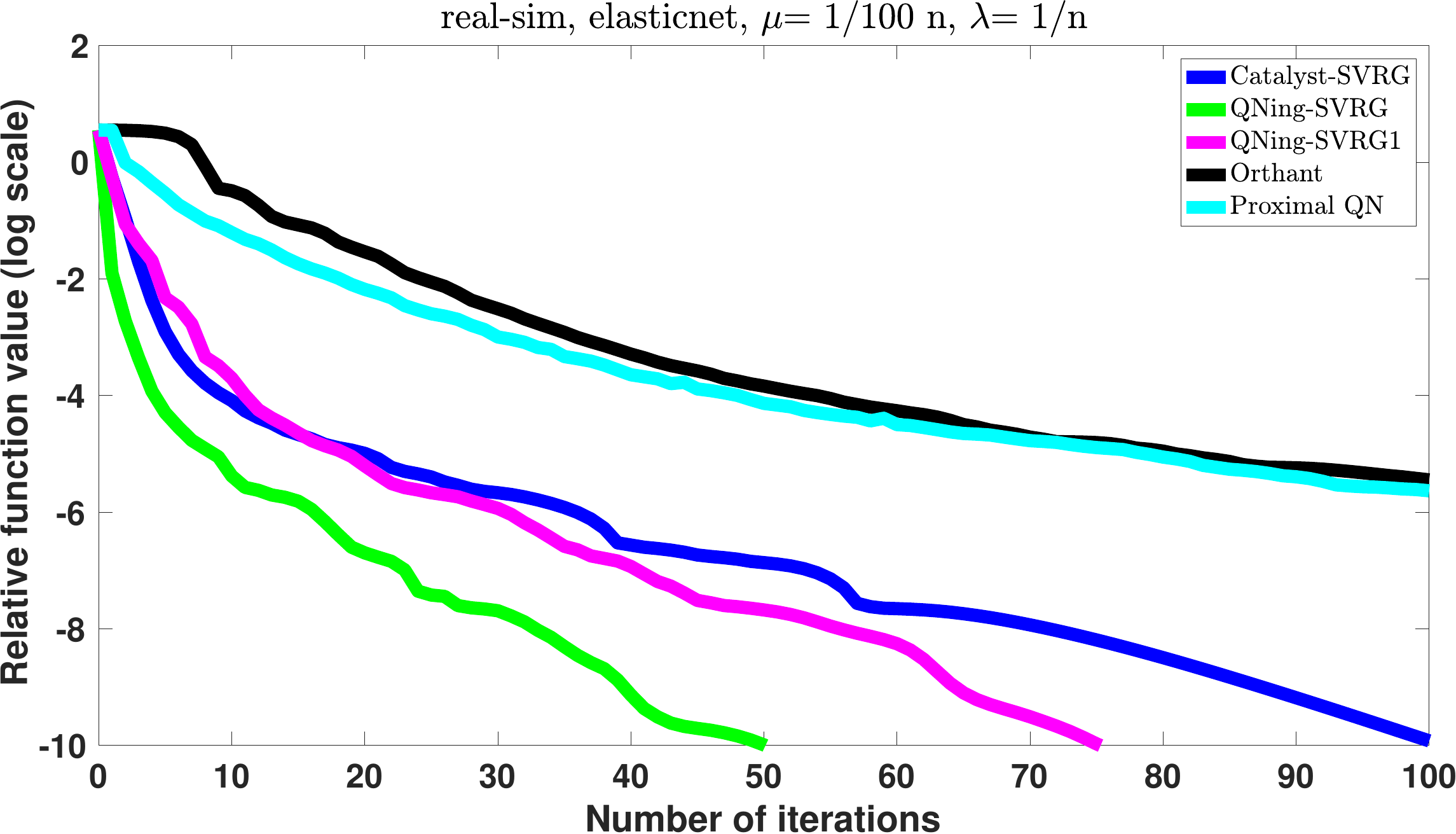} ~ 
   ~~\includegraphics[width=0.30\linewidth]{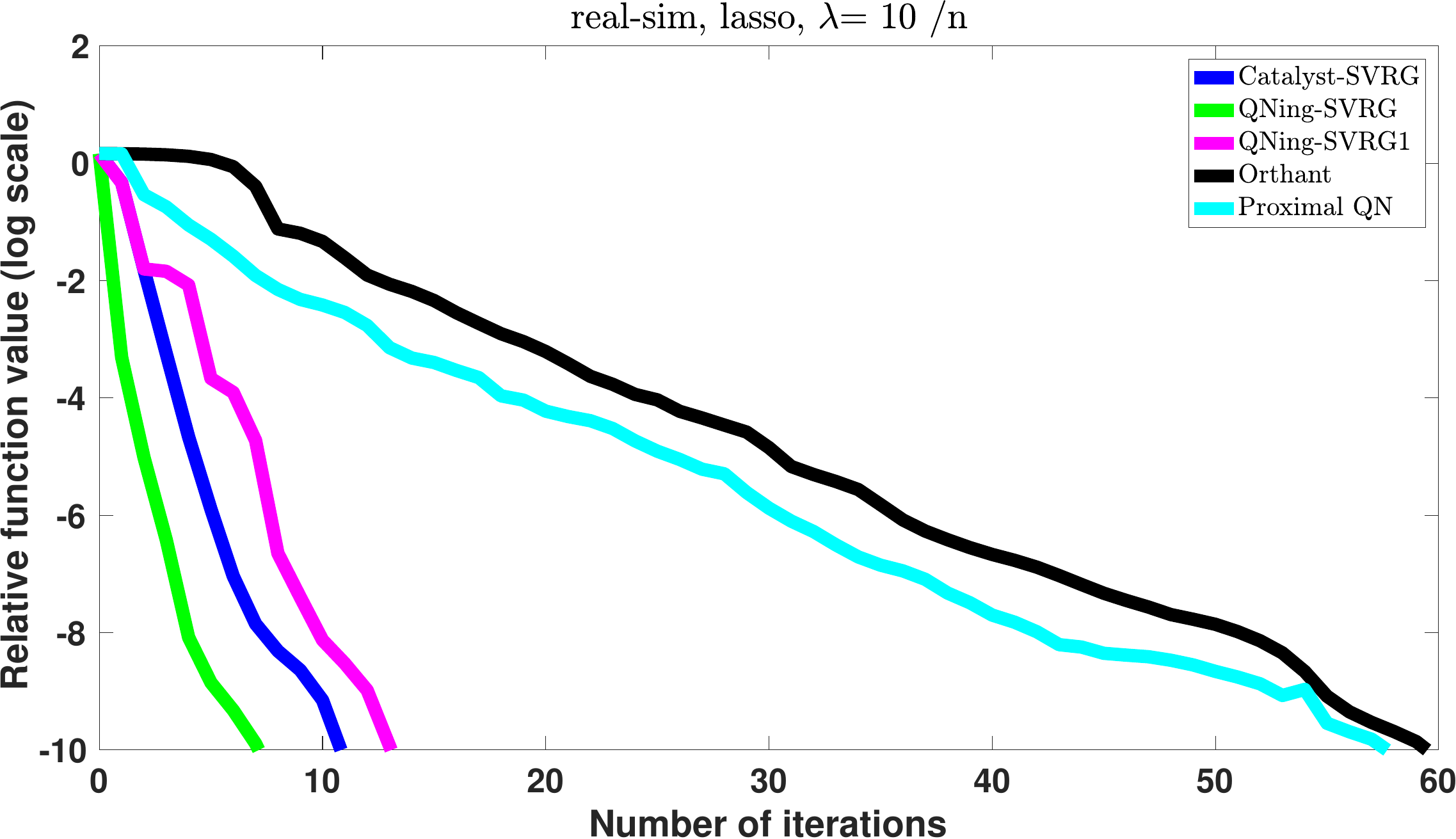} \\ 
   ~~\includegraphics[width=0.30\linewidth]{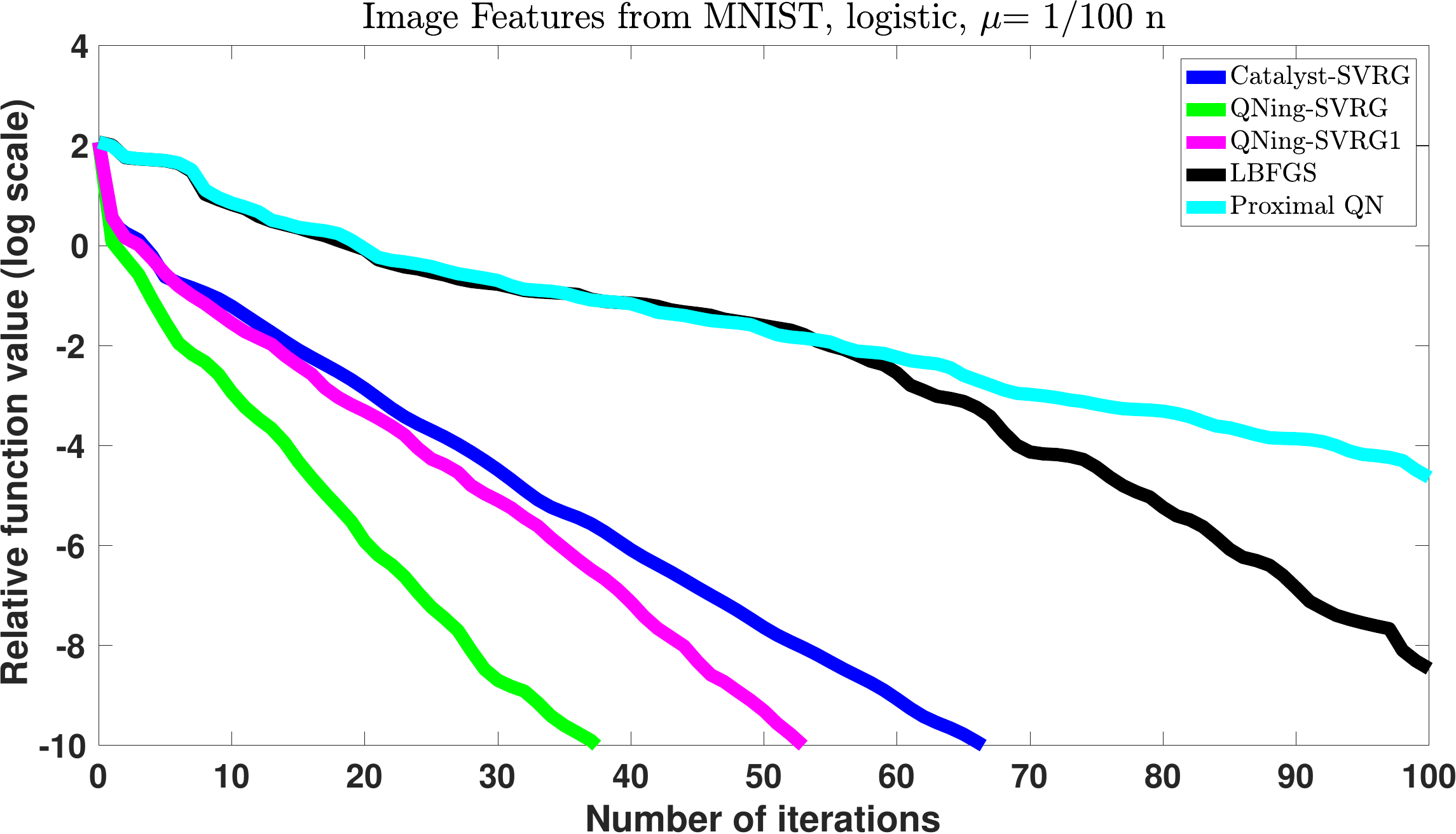} ~ 
   ~~\includegraphics[width=0.30\linewidth]{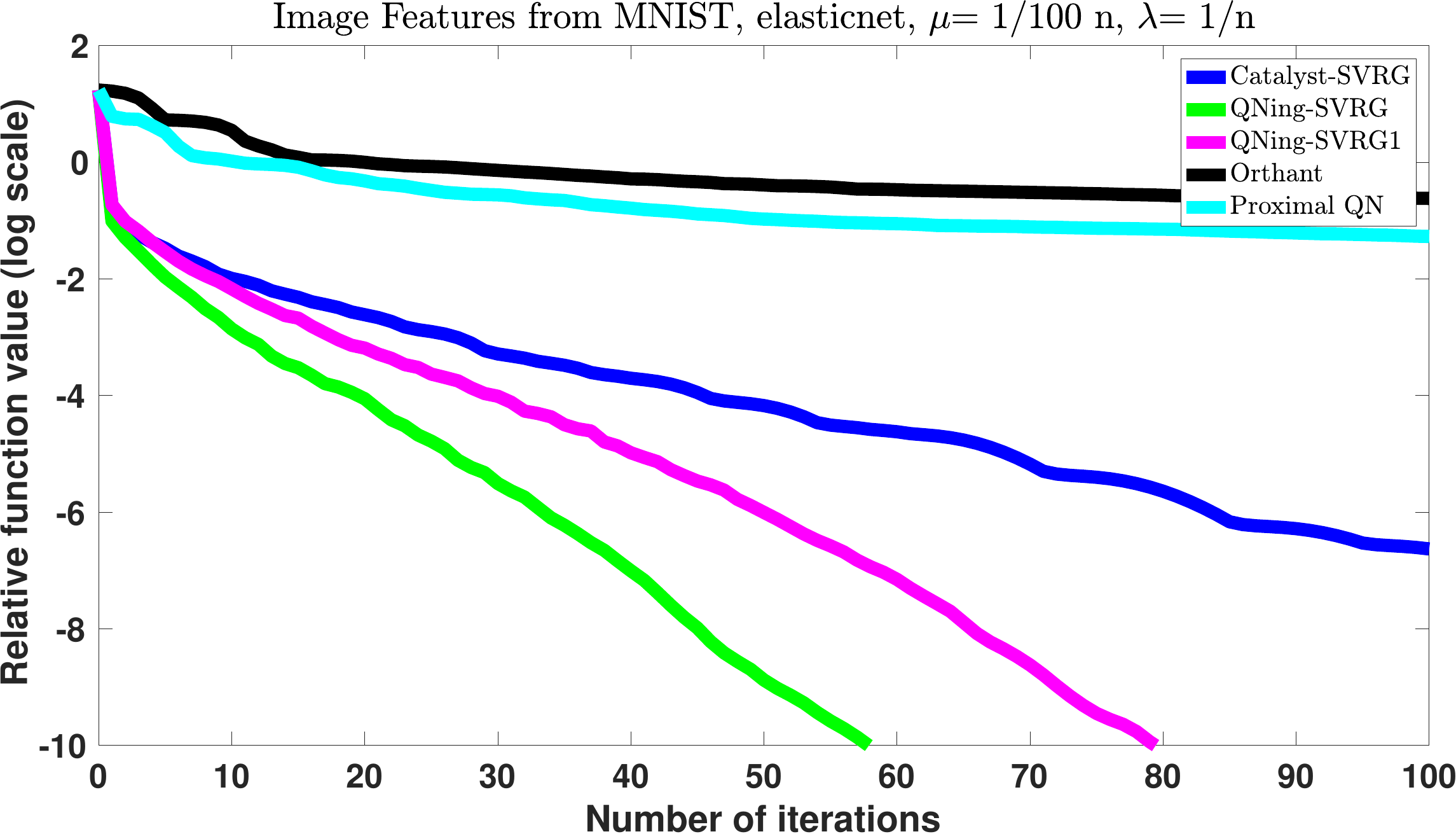} ~ 
   ~~\includegraphics[width=0.30\linewidth]{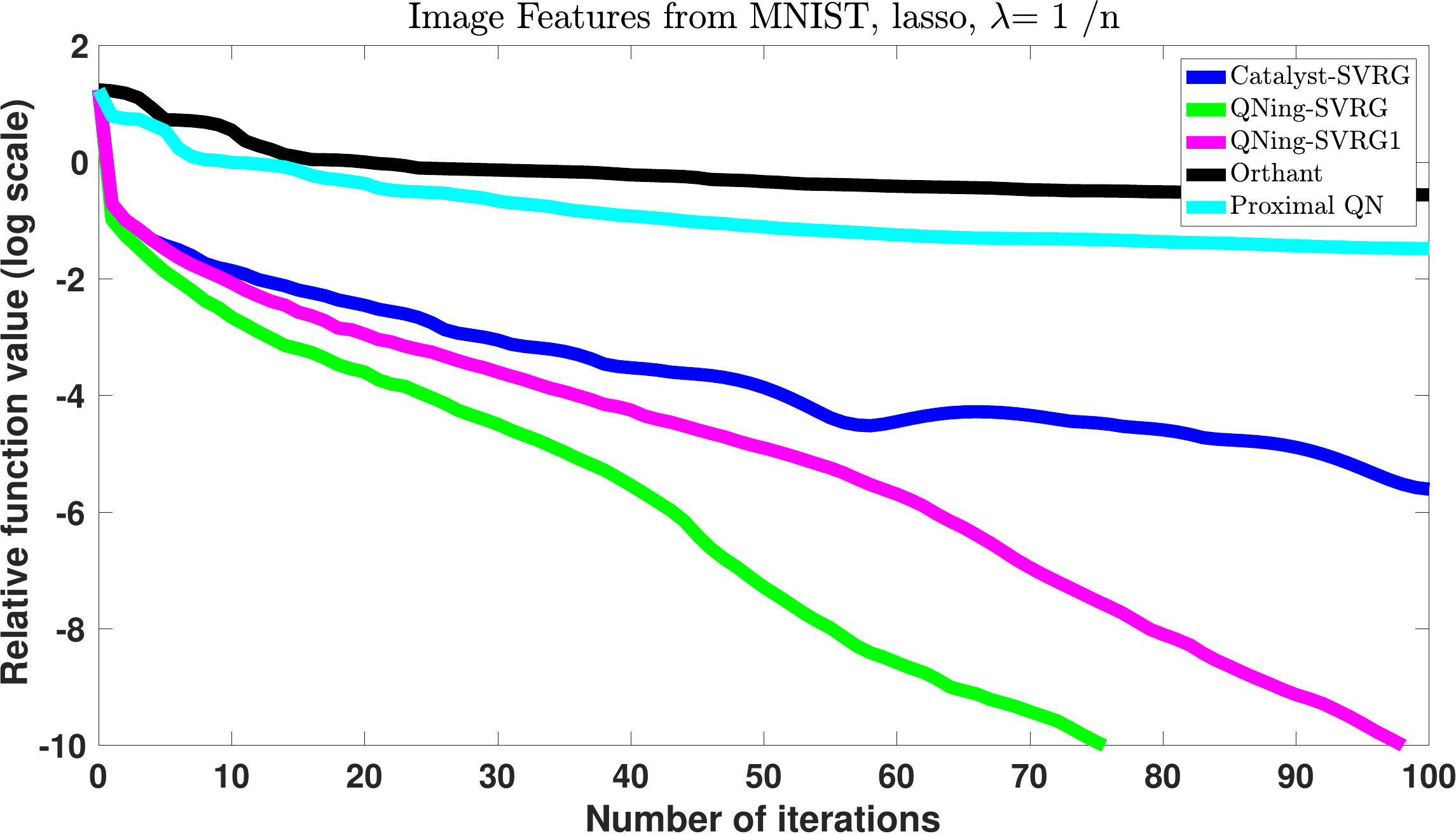} \\
   ~~\includegraphics[width=0.30\linewidth]{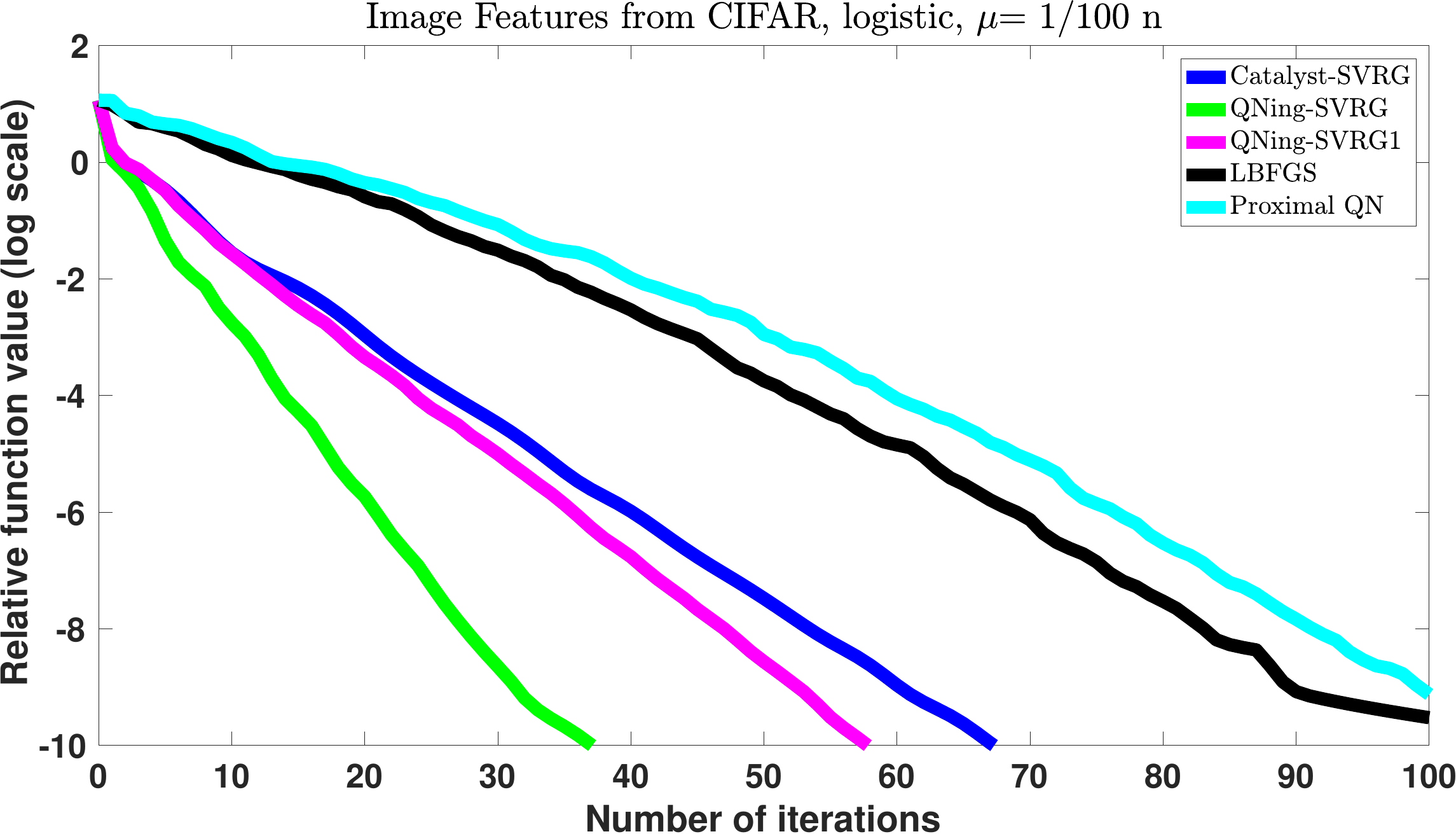} ~ 
   ~~\includegraphics[width=0.30\linewidth]{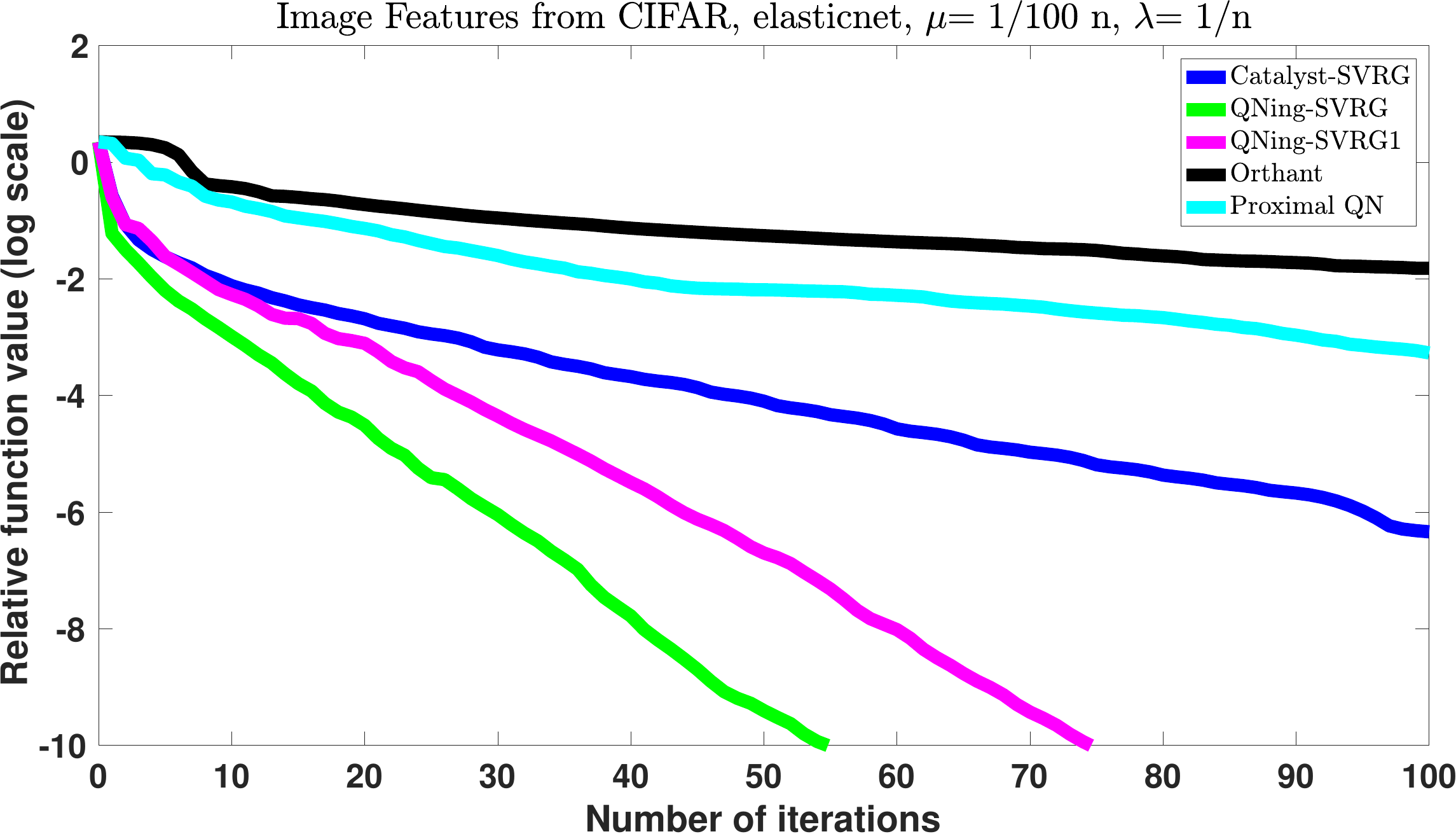} ~ 
   ~~\includegraphics[width=0.30\linewidth]{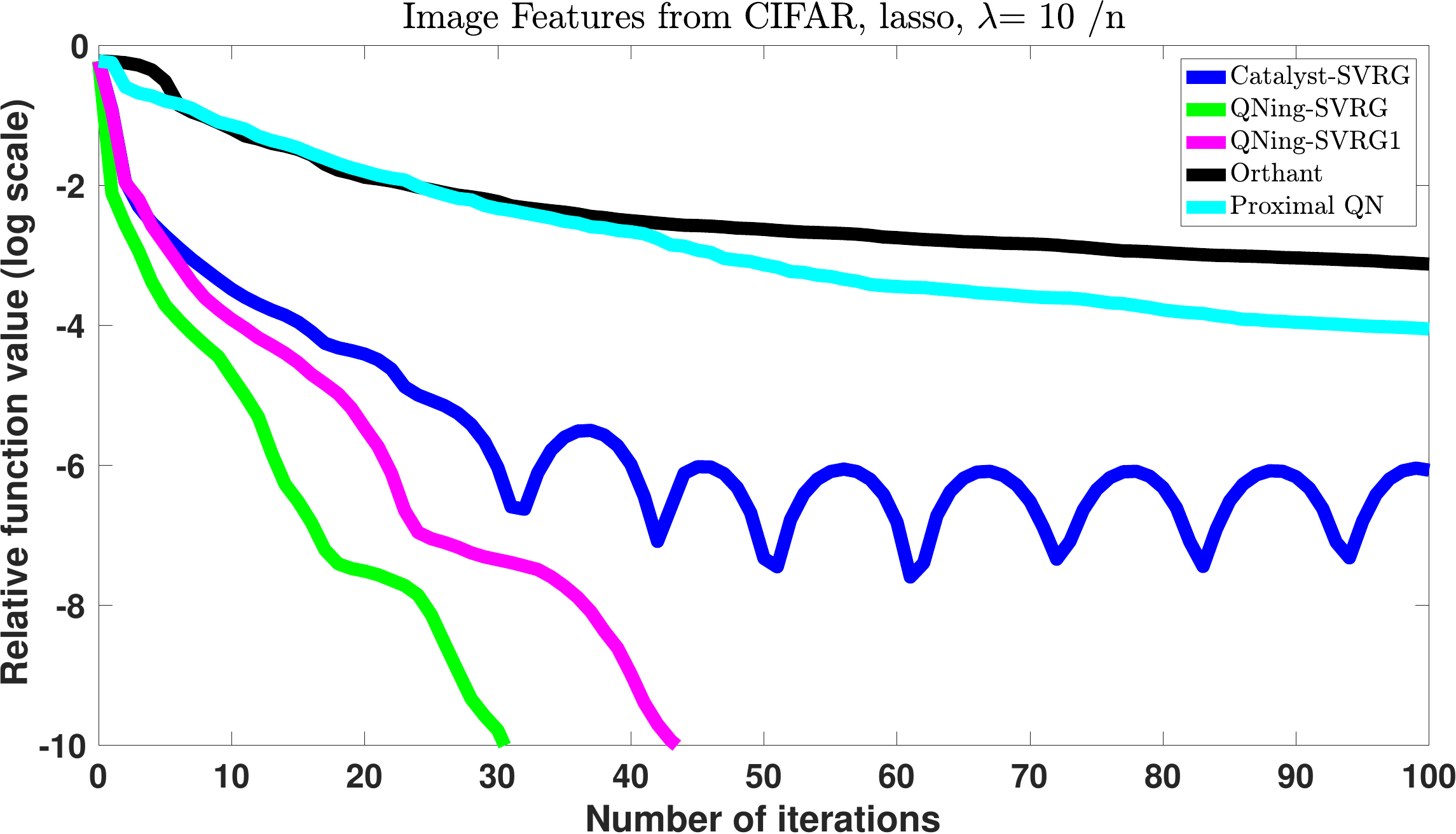} \\
   \caption{Experimental study of the performance of \qning-SVRG respect to the number of outer iterations.
   }\label{fig:svrg_it}
\end{figure}

The result of the comparison is presented in Figure~\ref{fig:svrg_it}. We observe that the theoretical grounded variant \qning-SVRG always outperform the one-pass heuristic \qning-SVRG1. This is not surprising since the sub-problems are solved more accurately in the theoretical grounded variant. However, once we take the complexity of the sub-problems into account, \qning-SVRG never outperforms \qning-SVRG1. This suggests that it is not beneficial to solve the sub-problem up to high accuracy as long as the algorithm converge. 

\subsection{Empirical frequency of choosing the unit stepsize} \label{subsec:unit}
In this section, we evaluate how often the unit stepsize is taken in the line search. When the unit stepsize is taken, the variable metric step provides sufficient decrease, which is the key for acceleration. The statistics of \qning-SVRG1 (one-pass variant) and \qning-SVRG (the sub-problems are solved until the stopping criteria (\ref{eq:stop condition}) is satisfed ) are given in Table~\ref{tab:qning1} and Table~\ref{tab:qning}, respectively. As we can see, for most of the iterations~($>90\%$), the unit stepsize is taken.

\begin{table}[htbp!]
\captionsetup{width=.8\textwidth}
\centering
  \caption{Relative frequency of picking the unit stepsize $\eta_k =1$ of QNing-SVRG1}
    \begin{tabular}{c|cc|cc|cc}
    \toprule
    \multicolumn{1}{c}{} & \multicolumn{2}{c}{\textbf{Logistic}} & \multicolumn{2}{c}{\textbf{Elastic-net}} & \multicolumn{2}{c}{\textbf{Lasso}} \\
    \midrule
    covtype & 24/27  & 89\% & 54/56   & 96\% & 19/21  & 90\% \\
    alpha   & 8/8   & 100\% & 6/6   & 100\% & 6/6   & 100\% \\
    real-sim & 60/60   & 100\% & 71/76    & 93\% & 14/14   & 100\% \\
    mnist & 53/53 & 100\% & 80/80  & 100\% & 100/100    & 100\% \\
    cifar-10 & 58/58 & 100\% & 75/75 & 100\% & 42/44    & 95\% \\
    \bottomrule
    
    \end{tabular}%
    \bigskip
    \caption*{The first column is in the form $N/D$, where $N$ is the number of times over the iterations the unit stepsize was picked and $D$ is the total number of iterations. The total number of iterations $D$ varies a lot since we stop our algorithm as soon as the relative function gap is smaller than $10^{-10}$ or the maximum number of iterations $100$ is reached. It implicitly indicates how easy the problem is.}
  \label{tab:qning1}%
\end{table}

\begin{table}[htbp!]
\captionsetup{width=.8\textwidth}
  \centering
  \caption{Relative frequency of picking the unit stepsize $\eta_k =1$ of QNing-SVRG}
    \begin{tabular}{c|cc|cc|cc}
    \toprule
    \multicolumn{1}{c}{} & \multicolumn{2}{c}{\textbf{Logistic}} & \multicolumn{2}{c}{\textbf{Elastic-net}} & \multicolumn{2}{c}{\textbf{Lasso}} \\
    \midrule
    covtype & 18/20  & 90\% & 23/25   & 92\% & 16/16  & 100\% \\
    alpha   & 6/6   & 100\% & 4/4   & 100\% &  3/3  & 100\% \\
    real-sim & 27/27 & 100\% & 20/23  & 87\% & 8/8  & 100\% \\
    mnist & 27/27 & 100\% & 28/28  & 100\% & 28/28  & 100\% \\
    cifar-10 & 25/25 & 100\% & 29/29 & 100\% & 31/31  & 100\% \\
    \bottomrule
    \end{tabular}%
        \bigskip
    \caption*{The setting are the same as in Table~\ref{tab:qning1}.}
  \label{tab:qning}%
\end{table}

\end{document}